\documentclass{article} %
\usepackage[left=1in,bottom=1in,top=1in,right=1in]{geometry}
\usepackage{microtype}
\usepackage{graphicx}
\usepackage{booktabs} %
\usepackage{natbib}

\usepackage{minitoc}

\usepackage{subcaption}

\usepackage{algorithm}
\usepackage{algorithmic}
\usepackage{sidecap}

\usepackage{soul}
\setlength{\parindent}{0em}
\setlength{\parskip}{0.8em}

\usepackage{amsmath}
\usepackage{amssymb}
\usepackage{mathtools}
\usepackage{amsthm}
\usepackage{bbm}

\usepackage{ifthen}

\usepackage{graphicx}
\usepackage{subcaption}

\usepackage{varioref}
\usepackage{hyperref}
\usepackage{xcolor}
\hypersetup{
    colorlinks,
    linkcolor={red!50!black},
    citecolor={blue!50!black},
    urlcolor={blue!80!black}
}
\usepackage[capitalize,noabbrev]{cleveref}

\theoremstyle{plain}
\newtheorem{theorem}{Theorem}[section]
\newtheorem{proposition}[theorem]{Proposition}
\newtheorem{claim}[theorem]{Claim}

\newtheorem{lemma}[theorem]{Lemma}
\newtheorem{corollary}[theorem]{Corollary}
\theoremstyle{definition}
\newtheorem{definition}[theorem]{Definition}

\theoremstyle{remark}

\usepackage{enumitem}

\usepackage{amsmath,amsfonts,bm}

\def\eqref#1{equation~\ref{#1}}

\def\1{\bm{1}}

\def\ve{{\bm{e}}}

\def\vv{{\bm{v}}}
\def\vw{{\bm{w}}}
\def\vx{{\bm{x}}}

\DeclareMathAlphabet{\mathsfit}{\encodingdefault}{\sfdefault}{m}{sl}
\SetMathAlphabet{\mathsfit}{bold}{\encodingdefault}{\sfdefault}{bx}{n}

\def\gG{{\mathcal{G}}}

\def\gN{{\mathcal{N}}}

\def\gP{{\mathcal{P}}}

\def\gR{{\mathcal{R}}}

\def\gX{{\mathcal{X}}}

\def\sR{{\mathbb{R}}}

\newcommand{\softmax}{\mathrm{softmax}}

\DeclareMathOperator{\sign}{sign}

\newtheorem*{proposition*}{Proposition}

\newcounter{resultcounter}
\newenvironment{result}[1][]
    {\refstepcounter{resultcounter}
     \par\medskip\noindent
     \textbf{(R\theresultcounter) #1} 
     \rmfamily}
    {\medskip}

\crefname{result}{R}{R}
\Crefname{result}{R}{R}

\crefalias{resultcounter}{result}

\newcommand{\resultformat}[1]{\textbf{R#1}}

\crefformat{result}{(\resultformat{#1})}
\Crefformat{result}{(\resultformat{#1})}

\title{Progressive distillation induces an implicit curriculum}

\author{%
	Abhishek Panigrahi$^{*, \dagger}$ \qquad Bingbin Liu\footnote{Equal Contribution} $^{,\alpha}$ \qquad Sadhika Malladi$^{\dagger}$  \\ Andrej Risteski$^{\alpha}$ \qquad Surbhi Goel $^{\beta}$ \\ 
	$^{\dagger}$ Princeton University \qquad $^{\alpha}$ Carnegie Mellon University \qquad $^{\beta}$ University of Pennsylvania  \\ 
	\texttt{\{ap34,smalladi\}@cs.princeton.edu}, \texttt{\{bingbinl,aristesk\}@cs.cmu.edu}, \\\texttt{surbhig@cis.upenn.edu} \\
}

\begin{document}
\maketitle

\doparttoc %
\faketableofcontents %
\part{} %

\vspace{-6em}
\begin{centering}
\section*{Abstract}
\end{centering}
Knowledge distillation leverages a teacher model to improve the training of a student model.
A persistent challenge is that a better teacher does not always yield a better student, to which a common mitigation is to use additional supervision from several ``intermediate'' teachers.
One empirically validated variant of this principle is \emph{progressive distillation}, where the student learns from successive intermediate checkpoints of the teacher.
Using sparse parity as a sandbox, we identify an \textit{implicit curriculum} as one mechanism through which progressive distillation accelerates the student's learning.
This curriculum is available only through the intermediate checkpoints but not the final converged one, and imparts both empirical acceleration and a provable sample complexity benefit to the student.
We then extend our investigation to  Transformers trained on probabilistic context-free grammars (PCFGs) and real-world pre-training datasets (Wikipedia and Books).
Through probing the teacher model, we identify an analogous implicit curriculum where the model progressively learns features that capture longer context. 
Our theoretical and empirical findings on sparse parity, complemented by empirical observations on more complex tasks, highlight the benefit of progressive distillation via implicit curriculum across setups.

\newcommand{\usevspace}{0}

\def \subTeacher {\mathcal{T}} %
\def \subStudent {\mathcal{S}} %
\def \teacher {f_{\subTeacher}}
\def \student {f_{\subStudent}}
\def \nTeachers {{N}}

\def \teacherckpt#1 {f_{\subTeacher_{#1}}}
\def \studentckpt#1 {f_{\subStudent_{#1}}}

\def \prob {p}
\def \margin {{ s }}
\def \nspan {\text{span}}

\def \TeacherSet { C_{\mathcal{T}} }
\def \TimeSet {{ \mathcal{D} }}

\def \loss {L}
\def \DistillLoss {L_{\text{distill}}}
\def \KLloss {\ell_{\text{DL}}}
\def \CEloss {\ell}  %
\def \softmax {\text{softmax}}
\def \TV {\text{TV}}

\def \numCls {C}
\def \nonterminals {\gN}
\def \nonterminal {n} %
\def \rules {\gR}
\def \probability {\gP}
\def\Tau{{\rm T}}
\def \tree {\Tau}

\def \cfg#1{\texttt{cfg3#1}}

\newcommand{\uniform}{\mathcal{U}}
\newcommand{\naturals}{\mathbb{N}}
\newcommand{\featureset}{\mathbb{S}}
\newcommand{\variableset}{\mathbb{T}}

\newcommand{\score}{\mathrm{score}}

\newcommand{\features}{\mathcal{T}}
\newcommand{\binarize}{\mathrm{binarize}}
\newcommand{\featuresize}{k}
\newcommand{\node}{i}

\newcommand{\numlabels}{K}

\def \dims {d}
\def \seqlen {h}
\def \maskrate {p}
\def \maskset {\mathcal{M}}

\def \support {\mathbf{S}}
\def \seq {\mathbf{x}}
\def \seqind {x}
\def \vocab {v}
\def \level {\ell}

\def \depth {D}
\def \hid {m}

\def \loss {L}

\def \boundarysubset#1 {\mathcal{C}^{(#1)}}

\def \err {\tau_g}

\def\abs#1{\left| #1 \right|}
\def\norm#1{\left\| #1 \right\|}

\newcommand{\firststagelen}{T_1}
\newcommand{\secondstagelen}{T_2}

\newcommand{\infproject}{P_{\infty}}

\newcommand{\mask}{[mask]}
\newcommand{\ngram}{n}
\newcommand{\robust}{M_{\text{robust}}}
\newcommand{\close}{M_{\text{close}}}

\def\studentparam#1{\Tilde{#1}}
\def\teacherparam#1{#1}

\newcommand\todos[1]{\textcolor{orange}{#1}}

\section{Introduction}

As the cost of training state-of-the-art models grows rapidly~\citep{hoffmann2022chinchilla}, there is increased interest in using knowledge distillation~\citep{hinton2015distilling} to leverage existing capable models to train new models more efficiently and effectively.
Knowledge distillation is an effective technique to train smaller vision~\citep{jia2021learning,touvron21a,yu2022unified,Lin2023CVPR} and language models~\citep{sanh2019distilbert,gunasekar2023textbooks,touvron2023llama, reid2024gemini} that permit faster inference with comparable performance.
However, one curiously persistent phenomenon is that a better teacher does not always yield a stronger student.
Prior works~\citep{mirzadeh2019improved,jin2019knowledge,jafari2021annealing,harutyunyan2022supervision, anil2018large} hypothesized that this is due to a 
capability gap between the teacher and the student.
As such, they proposed \emph{progressive distillation}, where the student is incrementally supervised by increasingly capable teachers.
This technique has yielded strong empirical performance.
One recent example is the training of Gemini-1.5 Flash from Gemini-1.5 Pro~\citep{reid2024gemini,team2024gemma}:
Gemini-1.5 Flash achieves $95\%$ of Gemini-1.5 Pro's performance on average and outperforms Gemini-1.0 Pro on $41$ out of $50$ text-based long-context benchmarks, while being substantially smaller.
However, little is understood about progressive distillation in terms of the optimization or generalization benefits, compared to directly learning from the data or the final teacher checkpoint (i.e., \emph{one-shot distillation}).

Most prior work hypothesizes that progressive distillation enables better generalization~\citep{mirzadeh2019improved,jafari2021annealing,harutyunyan2022supervision}. 
In contrast, we identify a novel mechanism by which progressive distillation helps a student by \emph{accelerating its optimization} (\Cref{fig:comparison_across_methods}). We define optimization acceleration as achieving improved performance with fewer training steps or samples. %
In this paper, we  use fresh training samples in each training step; hence we use training steps and samples interchangeably to measure the optimization speed.

We study two tasks where learning the right features is believed to be important 
and show that the intermediate checkpoints provide signal towards these features.
The first is learning sparse parity (\Cref{def:sparse_parity}), which is a commonly studied setting to understand the feature learning dynamics of neural networks. %
The second is learning probabilistic context-free grammars (PCFGs), which we use as a sandbox for capturing certain aspects of language modeling.
Theory and extensive experiments in these settings support the following claims.
\begin{enumerate}[leftmargin=*]
    \item \textbf{Progressive distillation accelerates student learning.} Our experiments in multiple settings demonstrate that progressive distillation 
    accelerates training compared to standard one-shot distillation and learning  from the data directly (\Cref{fig:comparison_across_methods}).
    More specifically, for sparse parity, progressive distillation can train a smaller MLP (or Transformer) at the same speed as a larger MLP (or Transformer).
    For PCFGs, progressive distillation improves the accuracy of a smaller BERT model~\citep{devlin2018bert} at masked prediction.
    Finally, we verify our findings on more realistic setups of training BERT on Wikipedia and Books dataset.

    \item \textbf{An implicit curriculum drives faster learning.}
    We demonstrate theoretically and empirically that acceleration comes from an \emph{implicit curriculum} of easy-to-learn subtasks provided by intermediate teacher checkpoints, which is not available from the final teacher checkpoint.
    For sparse parity, the easy-to-learn subtasks provide supervision for the coordinates which constitute  
the support of the sparse parity (\cref{sec:parity}). 
    As a consequence, we show progressive distillation provably improves the sample complexity for sparse parity over one-shot distillation or learning directly from data (\Cref{thm:sample_complexity_informal}).
    For PCFGs, the implicit curriculum is defined in terms of learning features that increasingly capture larger $\ngram$-gram contexts. 
    Our results also provide guidance on how to select the intermediate teachers used during progressive distillation.
\end{enumerate}

\begin{figure}
    \centering
    \begin{subfigure}[t]{\textwidth}
        \includegraphics[width=\textwidth]{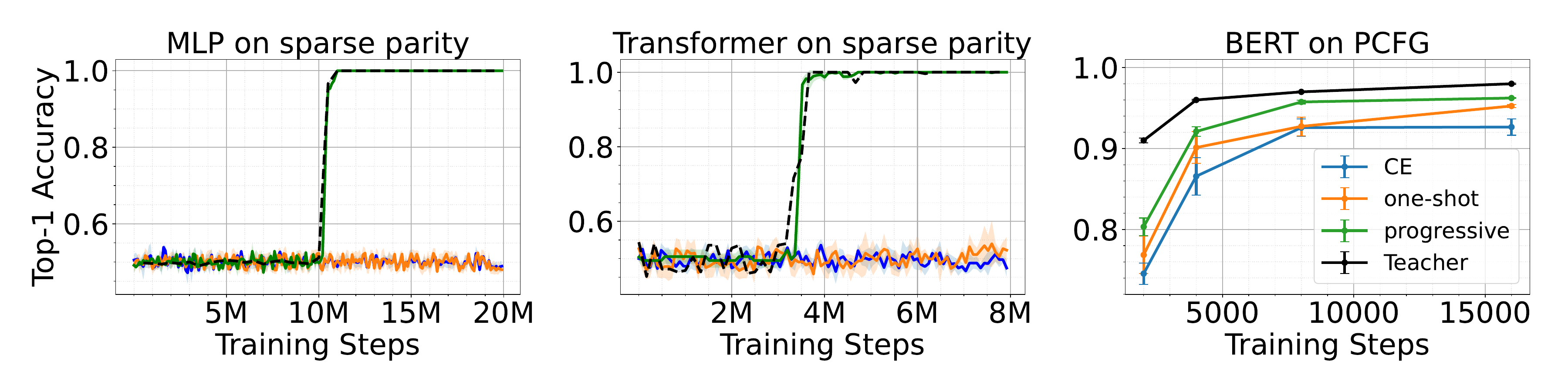}
    \end{subfigure}
    \caption{\textbf{Progressive distillation accelerates training.
    }
    \textit{Left}: MLP on $(100,6)$-sparse parity (\Cref{def:sparse_parity}), with width-50k teachers and width-100 students.
    Progressive distillation checkpoints are at 100k-step intervals, and one-shot checkpoint uses the final (20M-step) checkpoint.
    \textit{Middle}: Transformer on $(100,6)$-sparse parity, with 32-head teachers and 4-head students.
    Progressive distillation checkpoints are at 10k-step intervals, and the one-shot checkpoint is at 250k steps.
    \textit{Right}: Transformers on PCFG (\Cref{sec:pcfg_probing}), with 32-head teachers and 8-head students using BERT-style masked prediction.
    Progressive distillation uses $8$ intermediate checkpoints.
    }
    \label{fig:comparison_across_methods}
\end{figure}

\noindent\textbf{Related works}\footnote{We defer a detailed discussion of related work to \Cref{app:related_work}.}.
\looseness-1 One persistent surprise in knowledge distillation is that stronger teachers do not always lead to stronger students.
Prior works have speculated that an overly large ``teacher-student gap'' is the cause,
and accordingly proposed to bridge this gap by introducing supervision of intermediate difficulty~\citep{mirzadeh2019improved,cho2019efficacy,harutyunyan2022supervision,jafari2021annealing}.
\citet{mirzadeh2019improved} used multi-step distillation involving models of intermediate sizes, and \citet{shi2021follow} proposed to directly inject teacher supervision into the student's trajectory using an approximation of mirror descent.
Most related to our work, \citet{harutyunyan2022supervision} analyzed distillation for extremely wide networks and found it helpful to learn from the intermediate checkpoints of the teacher, a strategy also adopted by \citet{jin2019knowledge}.
They speculated that this is because neural networks learn progressively complex functions during training~\citep{KalimerisKNEYBZ19}.
In contrast to their focus on the generalization ability of the student, we study the optimization dynamics of distillation.

It is worth noting that there is also a rich body of work on understanding standard (one-shot) distillation, mostly regarding regularization effects.
In particular, \cite{menon2021statistical} shows that learning from the teacher leads to a tighter generalization bound when the teacher is closer to the Bayes distribution over the class labels.
However, such Bayes perspective cannot explain the training acceleration in the feature learning tasks considered in this work, whose the Bayes distributions are delta masses and hence are the same as the one-hot labels themselves.
Our results fill this gap by providing an orthogonal view of implicit curriculum.

The benefit of curriculum on sparse parity has also been explored in \cite{abbe2024provable}, where the curriculum also helps identify the support.
The difference though is that their curriculum is defined by explicitly altering the distribution over the inputs, whereas our curriculum shows up implicitly in the teacher supervision.
Moreover, our implicit curriculum emphasizes that a properly chosen intermediate checkpoint, while having a worse accuracy than the final checkpoint, can lead to a better-performing student.
This can be seen as a plausible mechanism for \textit{weak-to-strong generalization}~\citep{burns2023weaktostrong}.

\noindent\textbf{Outline.}
\Cref{subsec:distill_defs} describes the distillation strategies.
\Cref{sec:parity} introduces the implicit curriculum with a case study on sparse parity, presenting both empirical evidence and a provable benefit in sample complexity.
\Cref{sec:pcfg_probing} continues the empirical investigations on PCFG, and extends the observations to BERT's
training on Wikipedia and Books dataset.
Finally, \Cref{sec:discussion} discusses open directions.

\section{Preliminaries}
\label{subsec:distill_defs}
We now outline the distillation strategies considered in this paper and their empirical instantiation.
For ease of exposition, we discuss one-dimensional label 
classification tasks here and generalize to sequence-to-sequence functions in \cref{sec:pcfg_probing}.
Denote the teacher and student models operating on input domain $\gX$ as $\teacher:\gX\to\sR^\numCls$ and $\student:\gX\to\sR^\numCls$, respectively.
The outputs of a model $f$ are logits that are transformed into a probability distribution over $\numCls$ classes using a softmax function with temperature $\tau$, denoted as $\prob(x; \tau) := \softmax(f(x)/\tau)$.
We will use $\prob_{\subTeacher}$, $\prob_{\subStudent}$ to denote the probability distributions of the teacher and the student, and will omit the subscript to denote a generic model.
When $\tau=1$, we omit $\tau$ from the notation for brevity.
Following~\cite{zheng2024knowledge}, we set $\tau=1$ for the student and vary the temperature of the teacher.

We compare two loss functions: $\CEloss$, where the student $\student$ learns only from ground-truth labels
, and $\KLloss$ %
, where the student $\student$ is supervised only with the logits of some teacher $\teacher$.\footnote{We note that prior papers generally use a combination of these objectives, but we use supervision from one source in order to isolate its effects on distillation.}
\begin{align}
\label{eq:loss}
    \CEloss(x, y; \student) &= \text{KL}(\ve_y \| \prob_{\subStudent}(x)), \\
    \KLloss(x; \student, \teacher) &= \text{KL}(\prob_{\subTeacher}(x; \tau) \| \prob_{\subStudent}(x)), 
\end{align}
where $\ve_y$ is a one-hot vector whose $y^{th}$ entry is 1.
We consider two strategies for choosing the teacher.
The first is \textit{one-shot distillation}, where the student learns from a fixed $\teacher$ throughout the training, and the teacher is chosen as the final converged checkpoint.
The second is \textit{progressive distillation}, where the student learns from multiple intermediate checkpoints of the teacher's training run:
\begin{definition}[$(\TeacherSet, \TimeSet)$-progressive distillation] \label{def:general_progressive_NTset}
    Given a set of teacher checkpoints $\TeacherSet = \{\teacherckpt{i} \}$ and a set of training durations $\TimeSet$,
    the student is trained with the logits of teacher checkpoint $\teacherckpt{i} $ for training length $\TimeSet_i$
    with $i \in [\abs{\TeacherSet}] := \{1, \cdots, \abs{\TeacherSet}\}$. 
\end{definition}
To simplify the presentation,
the main paper tests a specific type of progressive distillation schemes, where $\TeacherSet$ contains $N$ equally-spaced checkpoints and the student is trained on each one for $T$ steps:
\begin{definition}[$(\nTeachers, T)$-progressive distillation] 
\label{def:general_progressive}
    $\TeacherSet$ contains $\nTeachers-1$ equally-spaced intermediate teacher checkpoints and the final teacher checkpoint.
    The student is trained with each checkpoint for $T$ training steps. After $\nTeachers T$ steps, the student is trained with the final teacher checkpoint.
\end{definition}

To study the effect of each teacher checkpoint,
we will also consider an extreme version of progressive distillation with $N=2$,
where the student uses one intermediate teacher checkpoint.

\paragraph{Choice of temperature.}
\label{paragraph:temp_main}
    We set $\tau=10^{-4}$ for sparse parity and PCFG experiments (\Cref{sec:pcfg_probing}) where the vocabulary size is smaller than 5, and $\tau=10^{-20}$ for natural language experiments (\Cref{sec:real_data}) whose vocabulary size is 30k.\footnote{\Cref{fig:L1F6_compare_temp} provides a comparison in temperature choices for sparse parity learning.}
    Using such a small temperature makes the teacher's outputs close to one-hot labels.
    This removes potential regularization effects due to the softness of the labels~\citep{yuan2020revisiting} which would otherwise be a confounding factor.
    Moreover, the supervision with nearly one-hot labels is more representative of the setting where the student learns directly from the teacher's \textit{generations} instead of the logits.
    This method, often described as generating synthetic data in the language modeling setting, has generally yielded small yet highly performant students \citep{gunasekar2023textbooks,liu2024best}.
    For one-shot distillation, we report the best-performing temperature among $\tau=1, 10^{-4}$ in the main paper and defer other results to \Cref{app:pcfg_temperature}.

\section{The implicit curriculum: a case study with sparse parity} 
\label{sec:parity}

To elucidate the mechanism by which distillation accelerates training, we first focus on the well-studied task of learning sparse parity.\footnote{We also experiment with a hierarchical generalization of sparse parity, which is deferred to \Cref{sec:hierarchical}.
}
Sparse parity is a commonly used sandbox for understanding neural network optimization in the presence of feature learning~\citep{BarakEGKMZ22,bhattamishra2022simplicity,morwani2023feature,edelman2023pareto,abbe2024provable}.
\begin{definition}[$(\dims, \featuresize)$-sparse parity task]
    Let $\support \subset [\dims]$ denote a fixed set of coordinates, with $|\support| = \featuresize$ and $\featuresize<\dims$.
	Then, the sparse parity task is defined for any input $\seq \in \{\pm 1\}^\dims$, whose label is computed as $y=1$ if $\prod_{i\in \support} \seq_i > 0$ and $2$ otherwise.
\label{def:sparse_parity}
\end{definition}

We train the teacher and student models using $2$-label classification, where $\teacher$ and $\student$ return logits in $\mathbb{R}^2$. 
The teacher and the student have the same number of layers but different sizes.
We vary the model width for MLP, and vary the number of attention heads for Transformer, with a fixed per-head dimension.
These choices not only affect the parameter counts, but also govern the learning speed\footnote{In terms of the number of samples or the number of training steps, which coincide in our experiments as we use freshly sampled batches.}.

\noindent\textbf{Why can larger models learn faster?}
A natural way to learn sparse parity with gradient descent involves first identifying the support $\support$ and subsequently computing the product of variables in the support (i.e., $\prod_{i\in \support}\seq_i$).
Empirically, the two stages of learning manifest as a long plateau period in the model's accuracy, followed by a sharp phase transition (\Cref{fig:comparison_across_methods}, left and middle).
The search for the support is what makes learning problem difficult, as it depends on the input dimension $\dims$ rather than the support size~\citep{abbe2023sgd,BarakEGKMZ22}.
The benefit of increasing the width or the number of heads comes from providing more ``parallel search queries.''
For MLP, prior work has shown that increasing the width accelerates training~\citep{edelman2023pareto}, which we also observe in \cref{fig:6parity_compare_sizes} (left) in appendix.
For Transformers though, we find that increasing the number of attention heads is the most effective for improving the convergence speed, as opposed to increasing the per-head dimension or the MLP width.
A detailed comparison is provided in \cref{sec:parity_transformer} (\cref{fig:L1F6_2layer_CE_vary_sizes}).
Given this finding, we will vary the number of attention heads between the teacher and the student, while keeping the per-head dimension fixed.
The number of heads hence directly controls the parameter count.
This choice also aligns with the practice in open-sourced models such as the Llama series~\citep{touvron2023llama}.

In the following, we first empirically verify that carefully chosen intermediate teacher checkpoints constitute an \emph{implicit curriculum} for the student to learn from. 
Then, we show that this curriculum provably improves the speed of learning in the student by improving its training sample efficiency.

\subsection{Accelerating learning with the implicit degree curriculum}
\label{sec:low_degree}

The difficulty of the search problem suggests that we can accelerate student learning by providing direct supervision for what the support is~\citep{abbe2023sgd}.
We show that supplying the intermediate signal from a bigger teacher model accelerates the search process for the smaller model, as described by the following set of results.\footnote{We will mark our results with \textbf{(Ri)} throughout the paper for easy reference.}

\begin{result}[Intermediate teacher checkpoints constitute an implicit degree curriculum.]
\label{res:implicit_degree}
We provide empirical evidence that the supervision from intermediate teacher checkpoints serves as an implicit curriculum supplying strong signals for certain degree-1 monomials, which require fewer samples to learn.
In \Cref{fig:interm_ckpt_6feat}, we report the correlation between degree-1 monomials and the prediction of the teacher logits at various checkpoints.
The correlation for each monomial $x_j, j\in [\dims]$ is computed as $|\mathbb{E}_{\vx, y} ([\prob_{\subTeacher} (\seq)]_1 \cdot \seqind_j)|$ at each checkpoint $\teacher$.
Here $[\prob_{\subTeacher} (\seq)]_1$ refers to the first output dimension of $\teacher$, which corresponds to $p(y=1) = p(\prod_{i \in \support} \seq_i > 0) = 1 - p(y=2)$ (recall \Cref{def:sparse_parity}).
We take the absolute value as we are only concerned with the magnitude of the correlation.
Importantly, these strong correlations emerge when the teacher learns the sparse parity task (i.e., during the phase transition) but diminish with continued training.
\end{result}

Note that the monomials need not be strictly degree-1.
While our theory (\cref{sec:theory}) will only focus on degree-1 monomials for the sake of mathematical analysis,
low-degree polynomials can still provide acceleration, which we also observe in practice (see \cref{{fig:interm_ckpt_6feat_behavior_lr0.01}} in the Appendix for such an example).
This transient low-degree supervision, available only through intermediate teacher checkpoints, may explain the superior performance of progressive distillation over one-shot distillation (\Cref{fig:comparison_across_methods}).
We will confirm the provable sample complexity benefit of this implicit low degree curriculum in \Cref{sec:theory}.
The importance of the implicit curriculum is further strengthened by the superior performance of $(2, T)$-progressive distillation:

\begin{figure*}[tbp]%
    \centering
    \begin{subfigure}{\textwidth}
    \centering
    \includegraphics[width=\textwidth]{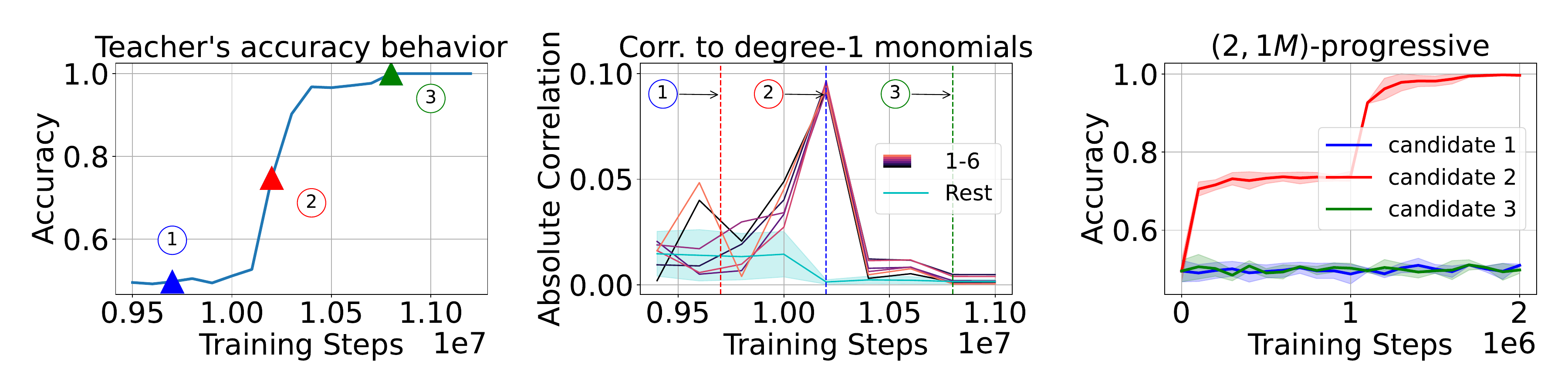}
    \end{subfigure}\hfill
    \caption{\looseness-1
    \textbf{Implicit curriculum for $(100, 6)$-sparse parity.} 
    We compare \textbf{3 candidate intermediate checkpoints}, labeled as \textcircled{1}, \textcircled{2}, \textcircled{3}, corresponding to 9.7M, 10.2M, and 10.8M steps, or the beginning, middle, and end of the teacher's phase transition.
    \textit{Left}: Teacher's accuracy throughout training.
    \textit{Middle}: During the phase transition, $\teacher$ is much more strongly correlated with in-support variables ($x_1, \cdots, x_6$ in this case) than with off-support variables.
    \textit{Right}: Only candidate \textcircled{2} (i.e., during phase transition) enables $(2, 1M)$-progressive distillation to reach $100\%$ accuracy.
    We use width-50k teachers and width-100 students;
    \cref{fig:interm_ckpt_6feat_width100}
    shows similar results for width-1000 students.}
    \label{fig:interm_ckpt_6feat}
\end{figure*}

\begin{result}[Progressive distillation with a single intermediate checkpoint can outperform one-shot distillation.]
\label{res:prog_2shot}
\looseness-1We consider the extreme version of progressive distillation where only a single intermediate checkpoint is used (in addition to the final checkpoint).
\Cref{fig:interm_ckpt_6feat} shows the result for $(2, 1\text{M})$-progressive distillation.
We consider 3 candidates for the intermediate teacher checkpoint, occurring respectively at the beginning, middle or the end of the teacher's phase transition.
Our result demonstrates that the checkpoint selection is crucial, where only the checkpoint during the phase transition is useful in accelerating training.\footnote{We show similar results for width-1000 students (\cref{fig:interm_ckpt_6feat_width100}) and transformers (\Cref{fig:L1F6_2layer_distill_headDim8_with2shot}).}
This provides further evidence that the implicit degree curriculum is the key to faster training via progressive distillation.
\end{result}

\looseness-1More complex tasks may require more intermediate checkpoints, which we discuss in more depth in \cref{sec:hierarchical}.
Nevertheless, we find that progressive distillation can be run efficiently and effectively across tasks, and a small number of intermediate teacher checkpoints often suffice to accelerate training provided that the checkpoints are properly selected.

\subsection{The low-degree curriculum reduces sample complexity}
\label{sec:theory}

We now formalize the benefits of progressive distillation for $(\dims, \featuresize)$-sparse parity in terms of sample complexity.
For the sake of mathematical analysis, we take the student $\student$ and the teacher $\teacher$ models to be $1$-hidden-layer MLPs with ReLU activations and scalar outputs.
Further, the labels $y$ are given as $\pm 1$, where $1$ (or $-1$) corresponds to the class dimension $1$ (or $2$) in \cref{def:sparse_parity}.

Following previous works~\citep{BarakEGKMZ22,abbe2023sgd,edelman2023pareto}, we analyze a simplified two-stage training procedure and train the model using the hinge loss: $\loss_\alpha(\seq, y; \student, \teacher) = \alpha \max(0, 1-\student (\seq) y) + (1-\alpha) \max(0, 1-\student (\seq) \teacher (\seq))$.

Let's first recall the hardness of learning sparse parity.\footnote{A more detailed discussion is provided in \cref{app:related_work}.}
For simplicity, we consider the case of MLPs of width $\Tilde{\mathcal{O}}(2^\featuresize)$ trained using online SGD.
When learning from data alone, statistical query (SQ, \citet{kearns1998efficient}) lower bound shows that learning the support for a $(\dims, \featuresize)$-sparse parity requires $\Omega(\dims^{\featuresize-1})$
samples~\citep{abbe2023sgd,edelman2023pareto}.
We will show that although this lower bound also applies to one-shot distillation from a strong teacher, it can be circumvented when learning from the implicit low-degree curriculum identified in the previous section.

Specifically, we compare the sample complexity of one-shot distillation and $(2,T)$-progressive distillation (\Cref{subsec:distill_defs}).
Both strategies use a well-trained final checkpoint with an error of $\mathcal{O}(\epsilon)$ error for an arbitrarily small $\epsilon > 0$. 

Progressive distillation additionally uses the teacher's intermediate checkpoint after its first phase of training, where we can provably show its predictions to have correlations at least $\Omega(1/\featuresize)$ to the monomials $x_i, \forall i\in \support$.
That is, progressive distillation first learns from the intermediate checkpoint and then switches to the final checkpoint, whereas one-shot distillation learns directly from the final checkpoint.

\begin{result}[Progressive distillation reduces sample complexity.]
\label{res:prog_sample}
We formally demonstrate the sample complexity benefit of progressive distillation.
\end{result}
\begin{theorem}[Informal version of \cref{thm:sample_complexity}]

    Consider learning $(\dims, \featuresize)$-sparse parity with a student model of size $\studentparam{m} = \Tilde{\Theta}(2^\featuresize)$, where $\Tilde{\cdot}$ hides polylog factors in $\dims,\featuresize$.
    Suppose the teacher has a loss $\mathcal{O}(\epsilon)$ for some small $\epsilon > 0$. 
    Then, the total sample complexity needed for the student to reach $\epsilon$-loss using progressive distillation with 2 checkpoints is $\Tilde{\Theta}(2^{\featuresize} \dims^2  \epsilon^{-2} + \featuresize^3)$. However, one-shot distillation requires at least $\Omega(\dims^{\featuresize-1}, \epsilon^{-2})$ samples.

    \label{thm:sample_complexity_informal}
\end{theorem}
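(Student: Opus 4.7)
The plan is to decompose the analysis of progressive distillation into two stages mirroring its two teacher checkpoints, and to separately establish a statistical-query-style lower bound for one-shot distillation. For the first stage (intermediate checkpoint), I would exploit the $\Omega(1/\featuresize)$ correlation between the intermediate teacher's output and each in-support variable $\seqind_i$, $i \in \support$, which is the implicit curriculum identified in Result R1. Consider a first-layer neuron of the student with weight $w \in \R^\dims$. The population gradient of the hinge distillation loss with respect to coordinate $j$ decomposes into a signal term whose magnitude scales with $|\E[\teacher(\seq)\,\seqind_j]|$: this is $\Omega(1/\featuresize)$ for $j \in \support$ and essentially $0$ for $j \notin \support$, since the intermediate teacher's predictions are close to a function of $\seq_\support$ alone. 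Concentrating the empirical gradient then reduces to separating an $\Omega(1/\featuresize)$ signal from $O(1/\sqrt{T_1})$ noise, so $T_1 = \tilde{\Theta}(\featuresize^3)$ samples suffice to polarize the student's first-layer weights onto the support and produce a representation that depends on $\seq$ only through $\seq_\support$. This accounts for the $\featuresize^3$ term in the bound.

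For the second stage (final checkpoint), the student effectively depends on $\seq$ only through its $\featuresize$ support coordinates, up to small residual components controlled by the Stage 1 concentration. The remaining task reduces to learning a $\featuresize$-bit parity with an MLP of width $\tilde{\Theta}(2^\featuresize)$, which is expressive enough to $\epsilon$-approximate $\prod_{i\in\support}\seqind_i$ with bounded second-layer norm. Supervising via hinge loss against the well-trained final teacher (whose outputs agree with the true label with probability $1-O(\epsilon)$), an online SGD regret bound combined with an online-to-batch conversion yields $\epsilon$-loss in $\tilde{O}(2^\featuresize \dims^2 \epsilon^{-2})$ samples; the $\dims^2$ factor comes from controlling the norm of the $\dims$-dimensional first-layer representation that SGD still operates on. Summing the two stages gives the $\tilde{\Theta}(2^\featuresize \dims^2 \epsilon^{-2} + \featuresize^3)$ upper bound.

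For the one-shot lower bound, I would observe that a teacher with loss $O(\epsilon)$ is statistically indistinguishable from an oracle for the true parity up to an $O(\epsilon)$ perturbation, so any one-shot distillation procedure can be simulated by an SQ algorithm against the true parity. Invoking the SQ-dimension lower bound of \citet{kearns1998efficient} for $(\dims,\featuresize)$-parities, together with a standard martingale argument converting SQ queries into online SGD samples, then forces $\Omega(\dims^{\featuresize-1})$ samples, matching the one-shot lower bound in the statement.

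The main obstacle will be the transition between the two stages: making precise that the first-layer representation produced by Stage 1 is polarized strongly enough on $\support$ and decays fast enough off $\support$ for Stage 2 to genuinely behave as $\featuresize$-dimensional parity learning. If residual off-support mass survives into Stage 2, the subsequent SGD dynamics could re-introduce the exponential dependence on $\dims$ that the SQ lower bound predicts in the absence of a curriculum, and the sought separation between the two regimes would collapse.
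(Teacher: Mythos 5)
Your two-stage plan for the upper bound is essentially the paper's own argument: the intermediate checkpoint's $\Omega(1/\featuresize)$ in-support versus $o(1/\featuresize)$ off-support degree-1 correlations create a gap in the student's first-layer population gradients that survives concentration with $\tilde{O}(\featuresize^2)$--$\tilde{O}(\featuresize^3)$ samples (the paper additionally has to handle the ReLU indicator, so the relevant quantity is the Majority-weighted correlation $\E[\teacher^{(1)}(\seq)\,\mathrm{Maj}(\seq)\,\seqind_j]$ rather than $\E[\teacher^{(1)}(\seq)\,\seqind_j]$ itself, but this is a technical refinement of the same idea), and the second stage is the Barak et al.\ convexity/regret analysis of the outer layer giving the $\tilde{O}(2^{\featuresize}\dims^2\epsilon^{-2})$ term. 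So the upper-bound portion is correct and follows the same route.

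The genuine gap is in your one-shot lower bound. You argue that a teacher with loss $\mathcal{O}(\epsilon)$ is "statistically indistinguishable from an oracle for the true parity up to an $\mathcal{O}(\epsilon)$ perturbation," hence one-shot distillation can be simulated by an SQ algorithm against the clean parity, forcing $\Omega(\dims^{\featuresize-1})$ samples unconditionally. This overclaims: the $\mathcal{O}(\epsilon)$ deviation of the converged teacher is not adversarial noise but is exactly where residual support information can leak — the paper shows (via its \cref{cor:phase2}) that the final checkpoint retains correlations of magnitude up to $\mathcal{O}(\epsilon)$ with the individual support variables, and a one-shot student can in principle exploit these with only $\Omega(\epsilon^{-2})$ samples (enough to push the gradient-estimation noise $\err$ below $\epsilon$). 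When $\epsilon^{-2} \ll \dims^{\featuresize-1}$ this route beats your claimed bound, so the SQ simulation against the clean parity is only valid for query tolerances exceeding $\epsilon$, i.e.\ for sample sizes below $\epsilon^{-2}$. The correct statement is therefore the minimum of the two routes — either pay $\Omega(\epsilon^{-2})$ to harvest the leaked degree-1 signal, or pay $\Omega(\dims^{\featuresize-1})$ to learn the parity from the labels via the SQ/width--optimization trade-off — which is precisely how the paper obtains its $\Omega(\dims^{\min(2c,\featuresize-1)})$ bound (with $\epsilon = \mathcal{O}(\dims^{-c})$), matching the $\Omega(\min(\dims^{\featuresize-1},\epsilon^{-2}))$ form in the informal statement. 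Your closing worry about residual off-support mass between the stages is handled in the paper exactly as in Barak et al.: the off-support weights are quantitatively $\mathcal{O}(1/(\featuresize\dims))$-small after stage one, which is enough for the second-stage regret bound and does not reintroduce the $\dims^{\featuresize}$ dependence.
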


\begin{proof}[Proof sketch]
We track the training behavior of the teacher model during its two-phase training.
We show that at the end of the first phase, the teacher's predictions will have $\Omega(1/\featuresize)$ correlations to degree-1 monomials $x_i, \forall i\in \support$. 
In contrast, the correlations are smaller for degree-1 monomials $x_i, \forall i\notin \support$.
Hence, the teacher's predictions can be written as $ \sum_{i \in \support} c_i \seqind_i + \sum_{i \notin \support} c_i  \seqind_i$, plus additional higher degree odd polynomials which can be controlled, with $\abs{c_i} \ge \Omega(1/\featuresize)$ for $i \in \support$, and $\abs{c_i} = o(1/\featuresize \dims)$, if $i \notin \support$. 

When training on the predictions from this intermediate teacher checkpoints, the correlation gap between in- and off-support degree-1 monomials will be reflected in the gradients of the student's weights. Namely, there is a $\Omega(1/\featuresize)$ gap between the support and non-support coordinates in the weight gradients.
This gap allows the coordinates $i \in \support$ in the student's weights to grow quickly with only $\mathcal{O}(\featuresize^2 \log (\studentparam{m}))$ samples. 

On the other hand, for a teacher that has loss $\mathcal{O}(\epsilon)$, a similar argument can show that the separation gap between the correlations of the teacher's predictions to degree-1 monomials on support and outside support can be at most $\mathcal{O}(\epsilon)$. So, harnessing this gap will require a sample size of at least $\Omega(\epsilon^{-2})$ by concentration inequalities. Learning directly from the labels will require $\Omega(\dims^{\featuresize-1})$ samples from the SQ lower bound as discussed above. This gives the sample complexity differences between one-shot and progressive distillation. The full proof is provided in \Cref{app:theory}.

\end{proof}

    \looseness-1\textit{Remark.} One gap between our theory and experiments is that our analysis applies to large-batch SGD with small gradient noise, whereas the experiments use online SGD with batch size 1.
    Bridging this gap, such as by adapting the analyses in~\citet{abbe2023sgd} on Gaussian data, is an interesting future direction.

\section{Implicit curriculum with PCFGs and Natural language}
\label{sec:pcfg_probing}

In this section, we empirically show that an implicit curriculum emerges generally, both when learning on probabilistic context-free grammars (PCFGs) and when performing natural language modeling tasks on the Wikipedia and Books datasets. We focus on BERT models~\citep{devlin2018bert}\footnote{See \cref{sec:bert_primer} for a primer on BERT.}, and discuss experiments on GPT-2~\citep{radford2019language} in \cref{sec:gpt_autoregressive}.

\begin{figure}[tbp]
  \begin{minipage}[c]{0.3\textwidth}
    \caption{An example of a PCFG tree $\tree(\seq)$ that generates $\seq=$``The cat ran away''.
    ``The cat'' is an example of level-2 span, and ``cat'' is as a boundary token for the spans of both the level-1 non-terminal \texttt{Noun} and the level-2 non-terminal \texttt{Noun Phrase}.}
    \label{fig:parse_example}
  \end{minipage}
  \hfill
  \begin{minipage}[c]{0.67\textwidth}
    \includegraphics[width=\textwidth]{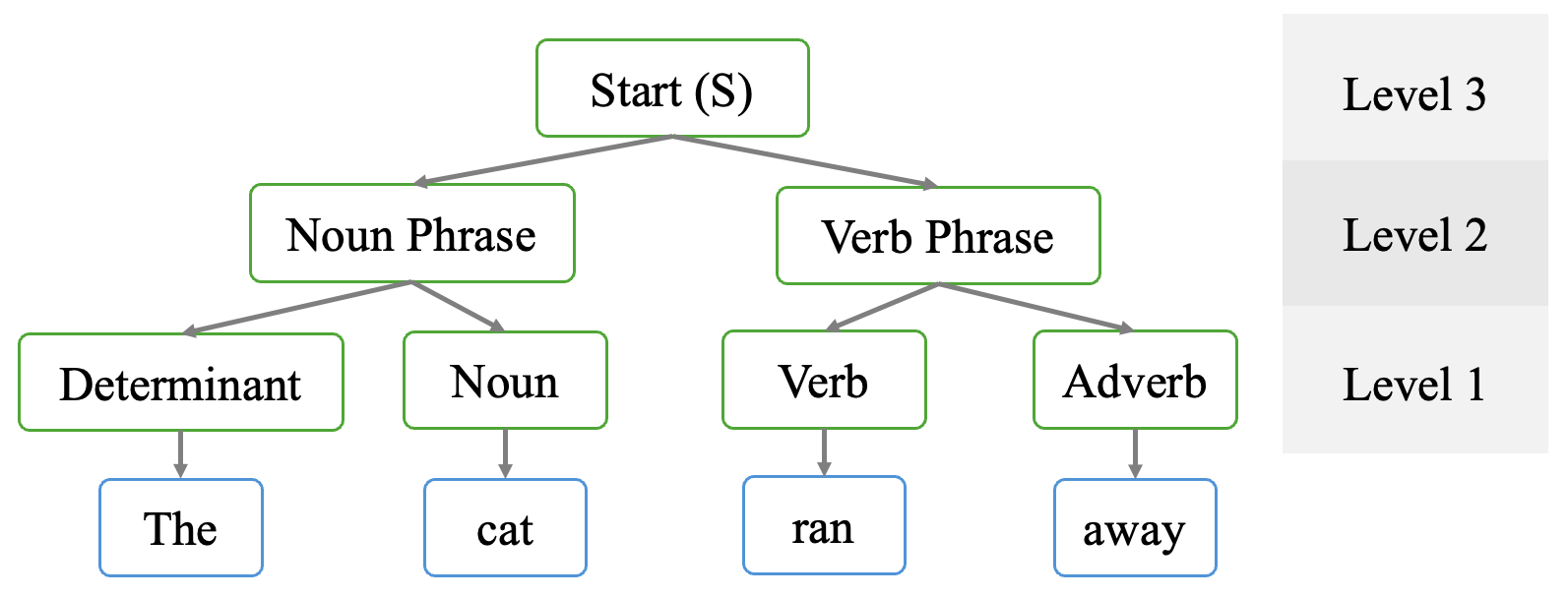}
  \end{minipage}
\end{figure}

\noindent\textbf{The masked prediction task.}
Our experiments will be based on BERT models trained to perform masked prediction, which requires filling in masked-out tokens in an input sequence and excels at feature learning in natural languages~\citep{hewitt2019probe,tenney2019bert,li2022RAGsurvey}.
\begin{definition}[Masked prediction task with mask rate $\maskrate$] \label{def:mask_task}
    Let $\vocab$ denote the vocabulary that contains a special token $\mask$, and let $\seqlen$ denote an arbitrary sequence length.
    Given a sequence $\seq \in \vocab^{\seqlen}$, sample a set of masked positions $\maskset \in [h]$ following $P(i\in\maskset) = \maskrate$, $\forall i \in [\seqlen]$.
    Create a masked input $\seq_{\setminus \maskset}$ from $\seq$ by replacing tokens at positions in $\maskset$ with $\mask$, a random token from $\gX$, or kept unchanged with probabilities $80\%, 10\%, 10\%$ respectively. Then, the masked prediction objective is the cross-entropy of the model's predictions  at positions $i \in \maskset$ on input $\seq_{\setminus \maskset}$.
\end{definition}

Since we are performing sequence-to-sequence modeling,
we need to generalize the definition of the teacher $\teacher$ and student $\student$ from \cref{subsec:distill_defs} accordingly, denoted as $\teacher:\vocab^{\seqlen} \to \sR^{\seqlen \times \numCls}$ and $\student:\vocab^{\seqlen} \to\sR^{\seqlen \times \numCls}$.
We will use $\prob_{\subTeacher}^{(i)}(\seq;\tau) := \softmax([\teacher(\seq)]_i/\tau)$ to denote the teacher's output distribution on the $i_{th}$ position;
similarly for $\prob_{\subStudent}^{(i)}$.
As before, we omit $\tau$ when $\tau=1$.
We use the following loss functions for the masked prediction task (\cref{def:mask_task}):
\begin{align}
    \CEloss(\seq; \student) &= \mathbb{E}_{\mathcal{M}} \frac{1}{|\mathcal{M}|} \sum_{i \in \mathcal{M}} \text{KL}(\ve_{x_i} \| \prob_{\subStudent}^{(i)}(\seq_{\setminus \maskset})),
    \label{eq:masked_pred}
    \\
    \KLloss(\seq; \student, \teacher) &= \mathbb{E}_{\mathcal{M}} \frac{1}{|\mathcal{M}|} \sum_{i \in \mathcal{M}} \text{KL}(\prob_{\subTeacher}^{(i)}(\seq_{\setminus \maskset}; \tau) \| \prob_{\subStudent}^{(i)}(\seq_{\setminus \maskset})),
    \label{eq:masked_pred_distill}
\end{align}
where $\ve_y$ is a one-hot vector whose $y_{th}$ entry is 1.

We train BERT models with $\CEloss$, $\KLloss$
and report the average top-1 accuracy on the masked tokens.
As discussed in \cref{sec:low_degree},
the teacher and student have the same depth ($4$ layers) but differ in the number of attention heads, with 32 heads for the teacher and 8 heads for the student.
Each attention head has dimension 8, so the teacher has width $256$ and the student has width $64$. All hyperparameter details are in \cref{sec:app_pcfg_hyperparam}.

\subsection{\texorpdfstring{$n$}{n}-gram curriculum in PCFGs} \label{sec:curriculum_pcfg}

\looseness-1We first consider probabilistic context free grammars (PCFGs), which are commonly used to emulate the structure of natural language and thus provide mechanistic insights into language models~\citep{zhao2023transformers,allen-zhu2023towards}.
A PCFG generates sentences following a tree structure;
\Cref{fig:parse_example}  shows an example for the sentence ``\emph{The cat ran away}.''
More precisely, a PCFG $\gG = (\nonterminals, \rules, \probability, \vocab)$ is defined by a set of non-terminals $\nonterminals$, rules $\rules$ over the non-terminals, a probability distribution $\probability$ over $\rules$, and a vocabulary (terminals) $\vocab$.
A sentence $\seq$ is associated with a generation tree $\tree(\seq)$, whose intermediate nodes are non-terminals in $\nonterminals$, leaf nodes are terminals in $\vocab$, and edges are defined by rules sampled from $\rules$ according to $\probability$.
A formal definition of PCFG is provided in \Cref{sec:pcfg_def}.
Our choices of PCFGs are taken from \citet{allen-zhu2023towards}, where all leaves in the same tree have the same distance to the root.
Experiments in the main paper are based on the PCFG \cfg{b} generated by depth-7 trees, and results on other PCFGs are deferred to \cref{label:synth_pcfg}.

\subsubsection{Progress measures of implicit curriculum}
\label{sec:prog_measures}

\looseness-1Unlike our experiments on parity, what constitutes as feature is less straightforward for PCFG.
We will use three progress measures to quantify the implicit curriculum for masked language modeling on PCFGs, based on $\ngram$-gram statistics and non-terminal prediction. 

\looseness-1Measures that use $\ngram$-gram statistics will measure the dependence of the model's predictions on tokens in the neighboring contexts, defined as follows:
\begin{definition}[$\ngram$-gram neighboring context]
\label{def:ngram}
    For a $\seqlen$-length sentence $\vx \in \vocab^{\seqlen}$ and for $i\in[h]$, we define the \textit{$\ngram$-gram neighboring context} around the $i^{th}$ token as the set of tokens at positions within $(n-1)/2$ distance from $i$, 
    denote as $\ngram\text{-gram}(i) := \{j: \max(i - \lceil (\ngram - 1)/2  \rceil, 0) \leq j \leq \min(i + \lfloor (\ngram - 1)/2  \rfloor, \seqlen)\}$.
\end{definition}
In the example of \Cref{fig:parse_example}, for the word ``cat'', its $3$-gram neighboring context consists of words ``The'' and ``ran'', and its $5$-gram neighboring context additionally includes the word ``away.''
The choice of $\ngram$-grams is inspired by results in \citet{zhao2023transformers}, which show that
a BERT model can solve masked prediction by implementing a dynamic programming algorithm that builds hierarchically on increasingly larger $\ngram$-gram neighboring context spans (\cref{def:ngram}).
A model that primarily uses short $\ngram$-gram neighboring context will be largely affected if the tokens within the context are perturbed during evaluation.
This motivates us to consider two $\ngram$-gram based measures.

\noindent\textbf{Measure 1: Robustness to removing $\ngram$-gram context.}
Our first progress measure of feature learning checks how the model's prediction changes when the $\ngram$-gram context is present or absent.
For each masked position $i$, we measure the total variation (TV) distance between the probability distributions when masking out only the current token, and when masking out all the tokens in $\ngram\text{-gram}(i)$, i.e. the neighboring $\ngram$-gram context centered at $i$.
Recall that $\seq_{\setminus \maskset}$ denotes a masked version of $\seq$ with masked set $\maskset$ (\Cref{def:mask_task}), and that $\prob^{(i)}$ denotes a model's output probability distribution at the $i_{th}$ position.
Then, our first measure is defined as
\begin{align}
    \robust(f, \seq, i, \ngram) &=  \TV(\prob^{(i)}(\seq_{\setminus\{i\}}), \prob^{(i)}(\seq_{\setminus \ngram\text{-gram}(i)})).
    \label{eq:loss_robust}
\end{align}
We report median of $\robust(f, \seq, i, \ngram)$ over randomly sampled $\seq$ and $i$ \footnote{Our observations stay the same for other percentiles.}. A larger $\robust(f, \seq, i, \ngram)$ indicates that the model heavily depends on neighboring $\ngram$-gram context tokens for the masked prediction. %

\begin{figure*}[t]%
    \centering
    \makebox[\textwidth][c]
    {\includegraphics[width=\textwidth]{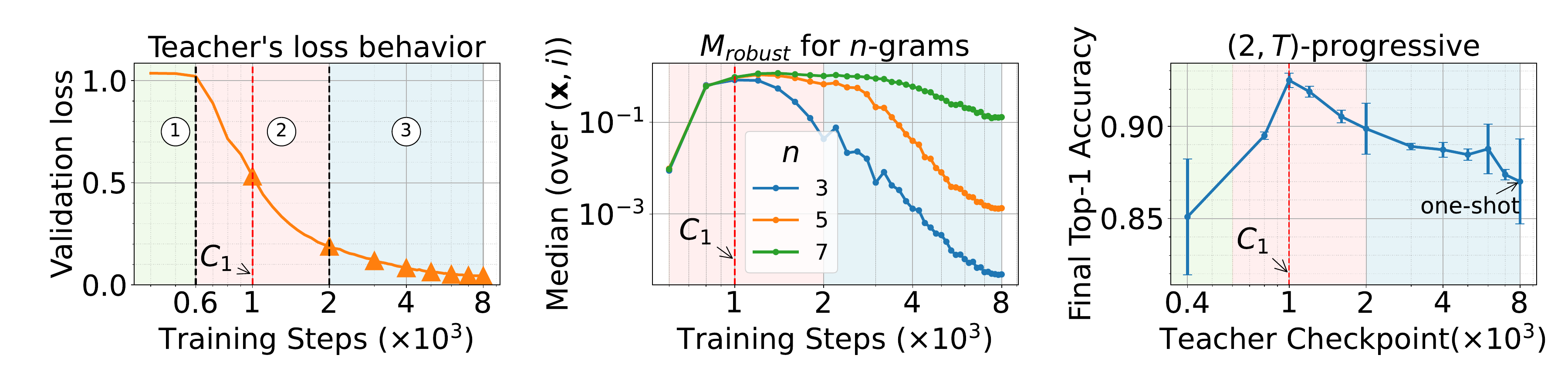}}
    \caption{\looseness-1 BERT on the PCFG \cfg{b}.
    \textit{Left}: A 32-head teacher's loss exhibits three distinct phases: \textcircled{1} an initial phase with little change, \textcircled{2} a middle phase with a rapid drop, and \textcircled{3} a final plateauing phase until the end of training.
    The triangles mark the selected checkpoints for progressive distillation, with the first teacher checkpoint (denoted by $C_1$) located at the middle of phase \textcircled{2}.
    \textit{Middle}: $\robust$ across training, which peaks at $C_1$. The model gets more robust to shorter $\ngram$-gram perturbation as training progresses.
    The median is taken over the input sequences.
    \textit{Right}: A 8-head student's final accuracy with $(2, T)$-progressive distillation after $4000$ total training steps. The $x$-axis marks the choice of the first teacher checkpoint. $T$ is grid-searched over $\{500, 1000, 2000\}$.  The best performance is obtained by choosing $C_1$.
    Although results in the plots are for a single training run of the teacher, similar behaviors occur robustly across random seeds.
    \vspace{-0.5em}
    }
    \label{fig:bert_cfg3b_0.3_teacher}
\end{figure*}

\noindent\textbf{Measure 2: Closeness between full and $\ngram$-gram predictions.}
Our second progress measure examines the change in predictions when the model is given the full sequence versus only a local $\ngram$-gram window:
\begin{align}
    \close(f, \seq, i, \ngram) &= \TV(\prob^{(i)}(\seq_{\setminus\{i\}}), \prob^{(i)}(\seq_{\ngram\text{-gram}(i)\setminus\{i\}})),
    \label{eq:loss_close}
\end{align}
where $\seq_{\ngram\text{-gram}(i) \setminus \{i\}}$ denotes the $\ngram$-gram context centered at position $i$, minus the position $i$ itself.
We report median of $\close(f, \seq, i, \ngram)$ over randomly sampled $\seq$ and $i$.
A large $\close(f, \seq, i, \ngram)$ indicates that the model utilizes contexts outside a $\ngram$-gram window in its predictions.

\noindent\textbf{Measure 3: Non-terminal prediction.}
Finally, we also measure how well the model outputs encode the features of the underlying PCFG by checking the accuracy at predicting non-terminals~\citep{allen2023physics}.
The predictions are given by a linear classifier on top of the output embeddings.
\begin{definition}[PCFG non-terminal prediction task]
\label{def:pcfg_task}
    Define the \textit{span} of a non-terminal $\nonterminal$ as the set of terminals within the subtree rooted at $\nonterminal$, denoted by $\nspan(\nonterminal)$.
    The (right) \textit{boundary} of $\nspan(\nonterminal)$ refers to the rightmost position within $\nspan(\nonterminal)$.
    We say a non-terminal is of \textit{level} $i$ if it is at distance $i$ from the root.
    Then, the \textit{level-$i$ non-terminal prediction task} aims to predict $\nonterminal^{(i)}$ at the boundary of $\nonterminal^{(i)}$.
\end{definition}
As an example, in \cref{fig:parse_example}, the \textit{level-$2$} \textit{non-terminal prediction task} aims to predict the non-terminals \texttt{Noun Phrase} and \texttt{Verb Phrase} at words ``cat" and ``away'' respectively.
More details are provided in \cref{sec:probing_details}.

\subsubsection{Empirical verification of the \texorpdfstring{$\ngram$}{n}-gram curriculum}
\vspace{-0.25em}

\looseness-1Similar to \cref{sec:low_degree}, we will start with examining the training dynamics of the teacher model.
We observe a phase transition period akin to that of sparse parity, during which we identify an inflection point concerning $\robust$ and $\close$.
This inflection point proves to be a crucial intermediate checkpoint.
We then demonstrate that progressive distillation improves feature learning in the student model, substantiated by the three measures defined in \Cref{sec:prog_measures}.

\looseness-1For training dynamics, we observe $3$ distinct phases of training in the teacher's loss (\cref{fig:bert_cfg3b_0.3_teacher} left):
1) an initial phase where the loss doesn't change much for the first $~5\%$ of training; 2) a rapid loss drop phase in the next $\approx 20\%$ of training; and 3) a final phase of slow loss drop till end of training.
In particular, the rapid loss drop phase is reminiscent of the phase transition in sparse parity (\cref{sec:parity}).
Moreover, we identify an inflection point (marked by $C_1$) during the second phase:
before the inflection point, the robust loss $\robust$ increases (\cref{fig:bert_cfg3b_0.3_teacher} middle), and the loss $\close$ stays high (\cref{fig:curr_teacher_cfg3b} left);
after the inflection point, both $\robust$ and $\close$ start to drop rapidly, suggesting that the model learns to utilize longer contexts as opposed to short neighboring $\ngram$-grams.

\begin{result}[The inflection point is best for $(2, T)$-progressive distillation.]
\label{result:inflection}
    We study the importance of each teacher checkpoint by comparing the performance of $(2,T)$-progressive distillation, where the student learns from a single intermediate checkpoint in addition to the final checkpoint.
The value of $T$ is grid-searched (more details in \cref{sec:app_pcfg_hyperparam}). 
For the choice of the intermediate checkpoint,
\Cref{fig:bert_cfg3b_0.3_teacher} shows that the best intermediate checkpoint is the one at the inflection point (at 1000 training steps), which we denote as $C_1$.
Note that at the inflection point, the teacher has the highest reliance on shorter $\ngram$-grams (e.g. for $\ngram=3$), which are analogous to the low-degree monomials in \Cref{sec:parity} and serves as intermediate tasks that are likely easier to learn.
Hence, $C_1$ being the optimal checkpoint choice further strengthens our hypothesis that an implicit curriculum is the key to the acceleration enabled by progressive distillation.
\end{result}

\looseness-1Following \cref{result:inflection},
we will choose the checkpoints for progressive distillation at training steps that are multiples of that of $C_1$, i.e. at steps $\{i \times 10^3\}_{i=1}^{8}$.
As shown in \cref{fig:comparison_across_methods} (right), progressive distillation helps the student learn faster than both one-shot distillation and cross entropy training.
Furthermore, progressive distillation leads to improved feature learning.

\begin{result}[Progressive distillation improves feature learning on PCFG.]
\label{res:pcfg_prog}
Progressive distillation improves over one-shot or no distillation over all 3 measures mentioned in \cref{sec:prog_measures}.
As shown in \Cref{fig:bert_ablations}, progressive distillation makes the student better utilize long contexts rather than local $\ngram$-gram windows, evidenced by a lower $\robust$ and $\close$.
The student can also better predict the non-terminals, suggesting a better structural learning of the underlying PCFG.

\end{result}

\begin{figure*}[t]
    \centering
    \begin{subfigure}{\textwidth}
    \centering
    \includegraphics[width=\textwidth]{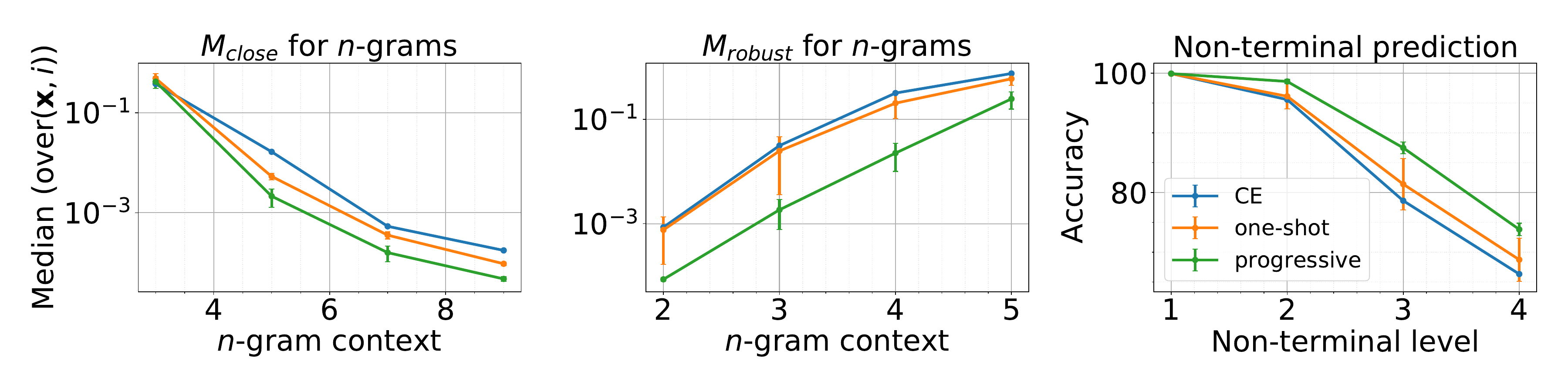}
    \label{fig:ablate_probing_NT}
    \end{subfigure}
    \caption{\looseness-1 Comparisons on a $8$-attention head BERT model.
    (Left) $\close$ for different $\ngram$-grams. Progressive distillation has a lower $\close$ with longer $\ngram$-gram context.
    (Middle) $\robust$ for different $\ngram$-grams. Progressive distillation has a lower $\robust$ for all $\ngram$-gram contexts.
    (Right) Probe performance to predict the non-terminals (NTs) (\cref{def:pcfg_task}). Progressive distilled student performs better when probed for higher level non-terminals in its contextual embeddings.
    \vspace{-0.5em}
    }
    \label{fig:bert_ablations}
\end{figure*}

\subsection{Beyond synthetic setups: implicit curriculum in natural languages}
\label{sec:real_data}

\looseness-1We conduct experiments on BERT training \citep{devlin2018bert} on Wikipedia and Books (details in \cref{app:wiki_books}). %
The teacher and student both have 12 layers, with 12 and 4 attention heads per-layer respectively.
Each attention head is of dimension 64, corresponding to a width-768 teacher and a width-256 student.
Similar to PCFG, the teacher's loss exhibits 3 distinct phases (\cref{fig:bert_wiki_teacher} left), with an inflection point marking the change in $\robust$ (\cref{fig:bert_wiki_teacher} middle).
The inflection point can hence provide an implicit curriculum towards easier-to-learn local $\ngram$-grams.
Finally, progressive distillation helps the student achieve better accuracy at masked language prediction (\cref{fig:bert_wiki_teacher} right).

\looseness-1\textit{Connections to related works.}
Our results align with those of~\cite{chen2023sudden}, who observed a phase transition in loss when training BERT on real-world language data corresponding to the model learning syntax rules of language. 
Comparable findings were also reported in a concurrent work on matrix completion~\citep{gopalani2024transformers}.
For auto-regressive models, prior work has discussed the emergence of $\ngram$-gram induction heads which indicate phases in which the model learns to perform in-context learning~\citep{akyurek2024context,quirke2023training,olsson2022context}.
We observe similar behavior for PCFGs and Wikipedia datasets and quantify the phase change using $\ngram$-gram context dependence. We take a step further and leverage the phase transitions to accelerate the training of a smaller student model.

\begin{figure*}[t]%
    \centering
    \begin{subfigure}{\textwidth}
    \centering
    \includegraphics[width=\textwidth]{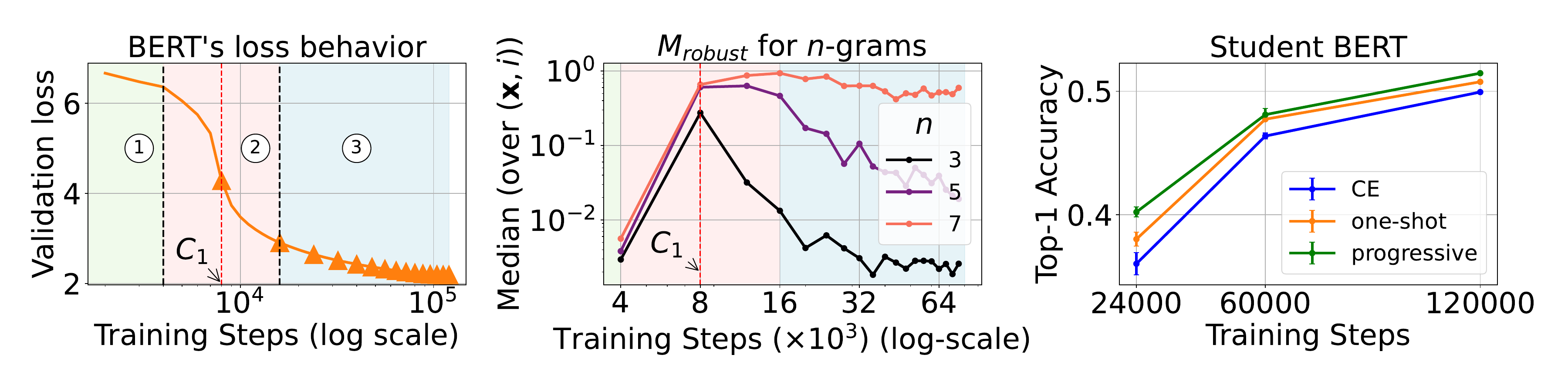}
    \end{subfigure}\hfill
    \caption{\looseness-1 BERT on Wikipedia and Books. %
    Left to right: (a) Similar to our experiments on PCFG (\cref{fig:bert_cfg3b_0.3_teacher}), we observe three distinct phases in the loss behavior of 12-head teacher. The rapid loss drop phase signifies a transition phase for the model. The triangles mark the selected checkpoints for progressive distillation, with the first teacher checkpoint roughly picked in the middle of the second phase ($C_1$). (b) We observe $\robust$ peaks at $C_1$, and the model gets more robust to  shorter $\ngram$-gram context masking, as training progresses. (c) A $4$-head student achieves better top-$1$ accuracy on masked prediction objective with progressive distillation.
    \vspace{-0.5em}
    }
    \label{fig:bert_wiki_teacher}
\end{figure*}

\vspace{-0.5em}
\section{Discussions}
\label{sec:discussion}

\looseness-1We have shown that progressive distillation can improve the student's feature learning via an implicit curriculum provided by the intermediate checkpoints. We discuss limitations and potential future directions below, and provide preliminary results for some of them in the appendix (see \Cref{app:overview}).

\noindent\textbf{Impact of temperature.}
The teacher temperature $\tau$ is an important hyperparameter in knowledge distillation, where varying $\tau$ can sometimes lead to a greater performance gain than changing the distillation method~\citep{touvron21a,harutyunyan2022supervision}.
Our results are consistent with these prior findings.
However, our experiments use limited temperature choices, i.e. the default ($\tau=1.0$) and low temperature ($\tau=10^{-4}$ or $10^{-20}$).
A more precise understanding of temperature, especially its impact on optimization, is an interesting direction for future work.

\noindent\textbf{Distillation via generations.}
Another related distillation setting is training smaller (language) models using the generations of larger models,
which has been shown to greatly improve various abilities~\citep{liu2024best,yue2023mammoth,yu2023metamath,luo2023wizardmath,chaudhary2023code,taori2023alpaca,zheng2023judging,liu2024augmenting,agarwal2024policy,mitra2024orca}.
There are two differences between our experiments and these generation-based approaches.
First, the supervision in our experiments are distributions (over classes or the vocabulary), while generations are samples from distributions.
Our experiments with a low or zero temperature provide positive evidence towards bridging this gap, but the precise effect remains to be explored.
More importantly, given an input, there is a unique supervision in our settings, whereas there could be multiple generations given by multiple steps of unrolling of the teacher.
Extending our framework to these generative setting will be an important direction for future work.

\section*{Acknowledgement}

AP and SM acknowledge funding from NSF, ONR, Simons Foundation, and DARPA. AR and BL are supported in part by NSF awards IIS-2211907, CCF-2238523, IIS-2403275, an Amazon Research Award, a Google Research Scholar Award, and an OpenAI Superalignment Fast Grant.
BL also acknowledges the support of the Tang Family Presidential Fellowship. SG was supported in part by a Microsoft Accelerating Foundation Models Grant and an OpenAI Superalignment Fast Grant.

\newpage

\bibliography{references}

\begin{thebibliography}{84}
\providecommand{\natexlab}[1]{#1}
\providecommand{\url}[1]{\texttt{#1}}
\expandafter\ifx\csname urlstyle\endcsname\relax
  \providecommand{\doi}[1]{doi: #1}\else
  \providecommand{\doi}{doi: \begingroup \urlstyle{rm}\Url}\fi

\bibitem[Abbe et~al.(2023)Abbe, Boix-Adserà, and Misiakiewicz]{abbe2023sgd}
E.~Abbe, Enric Boix-Adserà, and Theodor Misiakiewicz.
\newblock Sgd learning on neural networks: leap complexity and saddle-to-saddle dynamics.
\newblock \emph{Annual Conference Computational Learning Theory}, 2023.
\newblock \doi{10.48550/arXiv.2302.11055}.

\bibitem[Abbe et~al.(2022)Abbe, Boix-Adsera, and Misiakiewicz]{abbe2022mergedstaircase}
Emmanuel Abbe, Enric Boix-Adsera, and Theodor Misiakiewicz.
\newblock The merged-staircase property: a necessary and nearly sufficient condition for sgd learning of sparse functions on two-layer neural networks.
\newblock \emph{arXiv preprint arXiv: 2202.08658}, 2022.

\bibitem[Abbe et~al.(2024)Abbe, Cornacchia, and Lotfi]{abbe2024provable}
Emmanuel Abbe, Elisabetta Cornacchia, and Aryo Lotfi.
\newblock Provable advantage of curriculum learning on parity targets with mixed inputs.
\newblock \emph{Advances in Neural Information Processing Systems}, 36, 2024.

\bibitem[Agarwal et~al.(2024)Agarwal, Vieillard, Zhou, Stanczyk, Garea, Geist, and Bachem]{agarwal2024policy}
Rishabh Agarwal, Nino Vieillard, Yongchao Zhou, Piotr Stanczyk, Sabela~Ramos Garea, Matthieu Geist, and Olivier Bachem.
\newblock On-policy distillation of language models: Learning from self-generated mistakes.
\newblock In \emph{The Twelfth International Conference on Learning Representations}, 2024.

\bibitem[Aky{\"u}rek et~al.(2024)Aky{\"u}rek, Wang, Kim, and Andreas]{akyurek2024context}
Ekin Aky{\"u}rek, Bailin Wang, Yoon Kim, and Jacob Andreas.
\newblock In-context language learning: {A}rchitectures and algorithms.
\newblock \emph{arXiv preprint arXiv:2401.12973}, 2024.

\bibitem[Allen-Zhu \& Li(2023{\natexlab{a}})Allen-Zhu and Li]{allen-zhu2023towards}
Zeyuan Allen-Zhu and Yuanzhi Li.
\newblock Towards understanding ensemble, knowledge distillation and self-distillation in deep learning.
\newblock In \emph{The Eleventh International Conference on Learning Representations}, 2023{\natexlab{a}}.
\newblock URL \url{https://openreview.net/forum?id=Uuf2q9TfXGA}.

\bibitem[Allen-Zhu \& Li(2023{\natexlab{b}})Allen-Zhu and Li]{allen2023physics}
Zeyuan Allen-Zhu and Yuanzhi Li.
\newblock Physics of language models: Part 1, context-free grammar.
\newblock \emph{arXiv preprint arXiv:2305.13673}, 2023{\natexlab{b}}.

\bibitem[Anil et~al.(2018)Anil, Pereyra, Passos, Ormándi, Dahl, and Hinton]{anil2018large}
Rohan Anil, Gabriel Pereyra, Alexandre Passos, Róbert Ormándi, George~E. Dahl, and Geoffrey~E. Hinton.
\newblock Large scale distributed neural network training through online distillation.
\newblock \emph{International Conference on Learning Representations}, 2018.

\bibitem[Barak et~al.(2022)Barak, Edelman, Goel, Kakade, Malach, and Zhang]{BarakEGKMZ22}
Boaz Barak, Benjamin~L. Edelman, Surbhi Goel, Sham~M. Kakade, Eran Malach, and Cyril Zhang.
\newblock Hidden progress in deep learning: {SGD} learns parities near the computational limit.
\newblock In \emph{NeurIPS}, 2022.
\newblock URL \url{http://papers.nips.cc/paper\_files/paper/2022/hash/884baf65392170763b27c914087bde01-Abstract-Conference.html}.

\bibitem[Bhattamishra et~al.(2022)Bhattamishra, Patel, Kanade, and Blunsom]{bhattamishra2022simplicity}
S.~Bhattamishra, Arkil Patel, Varun Kanade, and Phil Blunsom.
\newblock Simplicity bias in transformers and their ability to learn sparse boolean functions.
\newblock \emph{Annual Meeting of the Association for Computational Linguistics}, 2022.
\newblock \doi{10.48550/arXiv.2211.12316}.

\bibitem[Bordelon et~al.(2024{\natexlab{a}})Bordelon, Atanasov, and Pehlevan]{BordelonAP24}
Blake Bordelon, Alexander~B. Atanasov, and Cengiz Pehlevan.
\newblock A dynamical model of neural scaling laws.
\newblock In \emph{Forty-first International Conference on Machine Learning, {ICML} 2024, Vienna, Austria, July 21-27, 2024}. OpenReview.net, 2024{\natexlab{a}}.
\newblock URL \url{https://openreview.net/forum?id=nbOY1OmtRc}.

\bibitem[Bordelon et~al.(2024{\natexlab{b}})Bordelon, Chaudhry, and Pehlevan]{bordelon2024infinite}
Blake Bordelon, Hamza~Tahir Chaudhry, and Cengiz Pehlevan.
\newblock Infinite limits of multi-head transformer dynamics.
\newblock \emph{arXiv preprint arXiv: 2405.15712}, 2024{\natexlab{b}}.

\bibitem[Burns et~al.(2023)Burns, Izmailov, Kirchner, Baker, Gao, Aschenbrenner, Chen, Ecoffet, Joglekar, Leike, Sutskever, and Wu]{burns2023weaktostrong}
Collin Burns, Pavel Izmailov, Jan~Hendrik Kirchner, Bowen Baker, Leo Gao, Leopold Aschenbrenner, Yining Chen, Adrien Ecoffet, Manas Joglekar, Jan Leike, Ilya Sutskever, and Jeff Wu.
\newblock Weak-to-strong generalization: Eliciting strong capabilities with weak supervision.
\newblock \emph{arXiv preprint arXiv: 2312.09390}, 2023.

\bibitem[Chaudhary(2023)]{chaudhary2023code}
Sahil Chaudhary.
\newblock Code alpaca: An instruction-following llama model for code generation.
\newblock \emph{GitHub repository}, 2023.

\bibitem[Chen et~al.(2023)Chen, Schwartz-Ziv, Cho, Leavitt, and Saphra]{chen2023sudden}
Angelica Chen, Ravid Schwartz-Ziv, Kyunghyun Cho, Matthew~L Leavitt, and Naomi Saphra.
\newblock Sudden drops in the loss: Syntax acquisition, phase transitions, and simplicity bias in mlms.
\newblock \emph{arXiv preprint arXiv:2309.07311}, 2023.

\bibitem[Chizat \& Bach(2020)Chizat and Bach]{chizat20a}
L\'ena\"ic Chizat and Francis Bach.
\newblock Implicit bias of gradient descent for wide two-layer neural networks trained with the logistic loss.
\newblock In Jacob Abernethy and Shivani Agarwal (eds.), \emph{Proceedings of Thirty Third Conference on Learning Theory}, volume 125 of \emph{Proceedings of Machine Learning Research}, pp.\  1305--1338. PMLR, 09-12 Jul 2020.
\newblock URL \url{https://proceedings.mlr.press/v125/chizat20a.html}.

\bibitem[Chizat \& Bach(2018)Chizat and Bach]{ChizatB18}
L{\'{e}}na{\"{\i}}c Chizat and Francis~R. Bach.
\newblock On the global convergence of gradient descent for over-parameterized models using optimal transport.
\newblock In Samy Bengio, Hanna~M. Wallach, Hugo Larochelle, Kristen Grauman, Nicol{\`{o}} Cesa{-}Bianchi, and Roman Garnett (eds.), \emph{Advances in Neural Information Processing Systems 31: Annual Conference on Neural Information Processing Systems 2018, NeurIPS 2018, December 3-8, 2018, Montr{\'{e}}al, Canada}, pp.\  3040--3050, 2018.
\newblock URL \url{https://proceedings.neurips.cc/paper/2018/hash/a1afc58c6ca9540d057299ec3016d726-Abstract.html}.

\bibitem[Cho \& Hariharan(2019)Cho and Hariharan]{cho2019efficacy}
Jang~Hyun Cho and Bharath Hariharan.
\newblock On the efficacy of knowledge distillation.
\newblock \emph{arXiv preprint arXiv: 1910.01348}, 2019.

\bibitem[Damian et~al.(2024{\natexlab{a}})Damian, Nichani, Ge, and Lee]{damian2024smoothing}
Alex Damian, Eshaan Nichani, Rong Ge, and Jason~D Lee.
\newblock Smoothing the landscape boosts the signal for sgd: Optimal sample complexity for learning single index models.
\newblock \emph{Advances in Neural Information Processing Systems}, 36, 2024{\natexlab{a}}.

\bibitem[Damian et~al.(2024{\natexlab{b}})Damian, Pillaud-Vivien, Lee, and Bruna]{damian2024computationalstatistical}
Alex Damian, Loucas Pillaud-Vivien, Jason~D. Lee, and Joan Bruna.
\newblock Computational-statistical gaps in gaussian single-index models.
\newblock \emph{Annual Conference Computational Learning Theory}, 2024{\natexlab{b}}.
\newblock \doi{10.48550/arXiv.2403.05529}.

\bibitem[Dao et~al.(2021)Dao, Kamath, Syrgkanis, and Mackey]{dao2021knowledge}
Tri Dao, Govinda~M Kamath, Vasilis Syrgkanis, and Lester Mackey.
\newblock Knowledge distillation as semiparametric inference, 2021.

\bibitem[Devlin et~al.(2018)Devlin, Chang, Lee, and Toutanova]{devlin2018bert}
Jacob Devlin, Ming-Wei Chang, Kenton Lee, and Kristina Toutanova.
\newblock Bert: Pre-training of deep bidirectional transformers for language understanding.
\newblock \emph{arXiv preprint arXiv:1810.04805}, 2018.

\bibitem[Dong et~al.(2021)Dong, Cordonnier, and Loukas]{DongCL21}
Yihe Dong, Jean{-}Baptiste Cordonnier, and Andreas Loukas.
\newblock Attention is not all you need: pure attention loses rank doubly exponentially with depth.
\newblock In Marina Meila and Tong Zhang (eds.), \emph{Proceedings of the 38th International Conference on Machine Learning, {ICML} 2021, 18-24 July 2021, Virtual Event}, volume 139 of \emph{Proceedings of Machine Learning Research}, pp.\  2793--2803. PMLR, 2021.
\newblock URL \url{http://proceedings.mlr.press/v139/dong21a.html}.

\bibitem[Du \& Hu(2019)Du and Hu]{du2019width}
Simon Du and Wei Hu.
\newblock Width provably matters in optimization for deep linear neural networks.
\newblock In \emph{International Conference on Machine Learning}, pp.\  1655--1664. PMLR, 2019.

\bibitem[Edelman et~al.(2023)Edelman, Goel, Kakade, Malach, and Zhang]{edelman2023pareto}
Benjamin~L Edelman, Surbhi Goel, Sham Kakade, Eran Malach, and Cyril Zhang.
\newblock Pareto frontiers in neural feature learning: Data, compute, width, and luck.
\newblock \emph{arXiv preprint arXiv:2309.03800}, 2023.

\bibitem[Ghorbani et~al.(2019)Ghorbani, Mei, Misiakiewicz, and Montanari]{ghorbani2019linearized}
Behrooz Ghorbani, Song Mei, Theodor Misiakiewicz, and Andrea Montanari.
\newblock Linearized two-layers neural networks in high dimension.
\newblock \emph{arXiv preprint arXiv: 1904.12191}, 2019.

\bibitem[Glasgow(2024)]{Glasgow24}
Margalit Glasgow.
\newblock {SGD} finds then tunes features in two-layer neural networks with near-optimal sample complexity: {A} case study in the {XOR} problem.
\newblock In \emph{The Twelfth International Conference on Learning Representations, {ICLR} 2024, Vienna, Austria, May 7-11, 2024}. OpenReview.net, 2024.
\newblock URL \url{https://openreview.net/forum?id=HgOJlxzB16}.

\bibitem[Gopalani et~al.(2024)Gopalani, Lubana, and Hu]{gopalani2024transformers}
Pulkit Gopalani, Ekdeep~Singh Lubana, and Wei Hu.
\newblock How do transformers fill in the blanks? a case study on matrix completion.
\newblock In \emph{ICML 2024 Workshop on Mechanistic Interpretability}, 2024.
\newblock URL \url{https://openreview.net/forum?id=knrYGCXAfK}.

\bibitem[Gunasekar et~al.(2023)Gunasekar, Zhang, Aneja, Mendes, Giorno, Gopi, Javaheripi, Kauffmann, de~Rosa, Saarikivi, Salim, Shah, Behl, Wang, Bubeck, Eldan, Kalai, Lee, and Li]{gunasekar2023textbooks}
Suriya Gunasekar, Yi~Zhang, Jyoti Aneja, Caio César~Teodoro Mendes, Allie~Del Giorno, Sivakanth Gopi, Mojan Javaheripi, Piero Kauffmann, Gustavo de~Rosa, Olli Saarikivi, Adil Salim, Shital Shah, Harkirat~Singh Behl, Xin Wang, Sébastien Bubeck, Ronen Eldan, Adam~Tauman Kalai, Yin~Tat Lee, and Yuanzhi Li.
\newblock Textbooks are all you need.
\newblock \emph{arXiv preprint arXiv: 2306.11644}, 2023.

\bibitem[Harutyunyan et~al.(2022)Harutyunyan, Rawat, Menon, Kim, and Kumar]{harutyunyan2022supervision}
Hrayr Harutyunyan, Ankit~Singh Rawat, Aditya~Krishna Menon, Seungyeon Kim, and Sanjiv Kumar.
\newblock Supervision complexity and its role in knowledge distillation.
\newblock In \emph{The Eleventh International Conference on Learning Representations}, 2022.

\bibitem[Hewitt \& Manning(2019)Hewitt and Manning]{hewitt2019probe}
John Hewitt and Christopher~D. Manning.
\newblock {A} structural probe for finding syntax in word representations.
\newblock In Jill Burstein, Christy Doran, and Thamar Solorio (eds.), \emph{Proceedings of the 2019 Conference of the North {A}merican Chapter of the Association for Computational Linguistics: Human Language Technologies, Volume 1 (Long and Short Papers)}, pp.\  4129--4138, Minneapolis, Minnesota, June 2019. Association for Computational Linguistics.
\newblock \doi{10.18653/v1/N19-1419}.
\newblock URL \url{https://aclanthology.org/N19-1419}.

\bibitem[Hinton et~al.(2015)Hinton, Vinyals, and Dean]{hinton2015distilling}
Geoffrey Hinton, Oriol Vinyals, and Jeff Dean.
\newblock Distilling the knowledge in a neural network.
\newblock \emph{arXiv preprint arXiv: 1503.02531}, 2015.

\bibitem[Hoffmann et~al.(2022)Hoffmann, Borgeaud, Mensch, Buchatskaya, Cai, Rutherford, de~Las~Casas, Hendricks, Welbl, Clark, Hennigan, Noland, Millican, van~den Driessche, Damoc, Guy, Osindero, Simonyan, Elsen, Rae, Vinyals, and Sifre]{hoffmann2022chinchilla}
Jordan Hoffmann, Sebastian Borgeaud, Arthur Mensch, Elena Buchatskaya, Trevor Cai, Eliza Rutherford, Diego de~Las~Casas, Lisa~Anne Hendricks, Johannes Welbl, Aidan Clark, Tom Hennigan, Eric Noland, Katie Millican, George van~den Driessche, Bogdan Damoc, Aurelia Guy, Simon Osindero, Karen Simonyan, Erich Elsen, Jack~W. Rae, Oriol Vinyals, and Laurent Sifre.
\newblock Training compute-optimal large language models.
\newblock \emph{arXiv preprint arXiv: 2203.15556}, 2022.

\bibitem[Hron et~al.(2020)Hron, Bahri, Sohl-Dickstein, and Novak]{hron2020infinite}
Jiri Hron, Yasaman Bahri, Jascha Sohl-Dickstein, and Roman Novak.
\newblock Infinite attention: Nngp and ntk for deep attention networks.
\newblock \emph{arXiv preprint arXiv: 2006.10540}, 2020.

\bibitem[Hsu(2021)]{Hsu21}
Daniel Hsu.
\newblock Dimension lower bounds for linear approaches to function approximation.
\newblock 2021.
\newblock URL \url{https://www.cs.columbia.edu/~djhsu/papers/dimension-argument.pdf}.

\bibitem[Jafari et~al.(2021)Jafari, Rezagholizadeh, Sharma, and Ghodsi]{jafari2021annealing}
A.~Jafari, Mehdi Rezagholizadeh, Pranav Sharma, and A.~Ghodsi.
\newblock Annealing knowledge distillation.
\newblock \emph{Conference of the European Chapter of the Association for Computational Linguistics}, 2021.
\newblock \doi{10.18653/v1/2021.eacl-main.212}.

\bibitem[Jia et~al.(2021)Jia, Han, Wang, Tang, Guo, Zhang, and Tao]{jia2021learning}
Ding Jia, Kai Han, Yunhe Wang, Yehui Tang, Jianyuan Guo, Chao Zhang, and D.~Tao.
\newblock Learning efficient vision transformers via fine-grained manifold distillation.
\newblock \emph{Neural Information Processing Systems}, 2021.

\bibitem[Jin et~al.(2019)Jin, Peng, Wu, Liu, Liu, Liang, Yan, and Hu]{jin2019knowledge}
Xiao Jin, Baoyun Peng, Yichao Wu, Yu~Liu, Jiaheng Liu, Ding Liang, Junjie Yan, and Xiaolin Hu.
\newblock Knowledge distillation via route constrained optimization.
\newblock \emph{IEEE International Conference on Computer Vision}, 2019.
\newblock \doi{10.1109/ICCV.2019.00143}.

\bibitem[Jurafsky(2000)]{jurafsky2000speech}
Dan Jurafsky.
\newblock \emph{Speech \& language processing}.
\newblock Pearson Education India, 2000.

\bibitem[Kalimeris et~al.(2019)Kalimeris, Kaplun, Nakkiran, Edelman, Yang, Barak, and Zhang]{KalimerisKNEYBZ19}
Dimitris Kalimeris, Gal Kaplun, Preetum Nakkiran, Benjamin~L. Edelman, Tristan Yang, Boaz Barak, and Haofeng Zhang.
\newblock {SGD} on neural networks learns functions of increasing complexity.
\newblock In Hanna~M. Wallach, Hugo Larochelle, Alina Beygelzimer, Florence d'Alch{\'{e}}{-}Buc, Emily~B. Fox, and Roman Garnett (eds.), \emph{Advances in Neural Information Processing Systems 32: Annual Conference on Neural Information Processing Systems 2019, NeurIPS 2019, December 8-14, 2019, Vancouver, BC, Canada}, pp.\  3491--3501, 2019.
\newblock URL \url{https://proceedings.neurips.cc/paper/2019/hash/b432f34c5a997c8e7c806a895ecc5e25-Abstract.html}.

\bibitem[Kearns(1998)]{kearns1998efficient}
Michael Kearns.
\newblock Efficient noise-tolerant learning from statistical queries.
\newblock \emph{Journal of the ACM (JACM)}, 45\penalty0 (6):\penalty0 983--1006, 1998.

\bibitem[Kingma \& Ba(2014)Kingma and Ba]{kingma2014adam}
Diederik~P Kingma and Jimmy Ba.
\newblock Adam: A method for stochastic optimization.
\newblock \emph{arXiv preprint arXiv:1412.6980}, 2014.

\bibitem[Li et~al.(2022)Li, Su, Cai, Wang, and Liu]{li2022RAGsurvey}
Huayang Li, Yixuan Su, Deng Cai, Yan Wang, and Lemao Liu.
\newblock A survey on retrieval-augmented text generation.
\newblock \emph{arXiv preprint arXiv: 2202.01110}, 2022.

\bibitem[Lin et~al.(2023)Lin, Han, Ma, Huang, Lin, and Chang]{Lin2023CVPR}
Han Lin, Guangxing Han, Jiawei Ma, Shiyuan Huang, Xudong Lin, and Shih-Fu Chang.
\newblock Supervised masked knowledge distillation for few-shot transformers.
\newblock In \emph{Proceedings of the IEEE/CVF Conference on Computer Vision and Pattern Recognition (CVPR)}, pp.\  19649--19659, June 2023.

\bibitem[Liu et~al.(2022)Liu, Ash, Goel, Krishnamurthy, and Zhang]{liu2022shortcuts}
Bingbin Liu, J.~Ash, Surbhi Goel, A.~Krishnamurthy, and Cyril Zhang.
\newblock Transformers learn shortcuts to automata.
\newblock \emph{International Conference On Learning Representations}, 2022.
\newblock \doi{10.48550/arXiv.2210.10749}.

\bibitem[Liu \& Yao(2024)Liu and Yao]{liu2024augmenting}
Haoxiong Liu and Andrew Chi-Chih Yao.
\newblock Augmenting math word problems via iterative question composing.
\newblock \emph{arXiv preprint arXiv:2401.09003}, 2024.

\bibitem[Liu et~al.(2024)Liu, Wei, Liu, Si, Zhang, Rao, Zheng, Peng, Yang, Zhou, et~al.]{liu2024best}
Ruibo Liu, Jerry Wei, Fangyu Liu, Chenglei Si, Yanzhe Zhang, Jinmeng Rao, Steven Zheng, Daiyi Peng, Diyi Yang, Denny Zhou, et~al.
\newblock Best practices and lessons learned on synthetic data for language models.
\newblock \emph{arXiv preprint arXiv:2404.07503}, 2024.

\bibitem[Loshchilov \& Hutter(2016)Loshchilov and Hutter]{loshchilov2016sgdr}
Ilya Loshchilov and Frank Hutter.
\newblock Sgdr: Stochastic gradient descent with warm restarts.
\newblock \emph{arXiv preprint arXiv:1608.03983}, 2016.

\bibitem[Luo et~al.(2023)Luo, Sun, Xu, Zhao, Lou, Tao, Geng, Lin, Chen, and Zhang]{luo2023wizardmath}
Haipeng Luo, Qingfeng Sun, Can Xu, Pu~Zhao, Jianguang Lou, Chongyang Tao, Xiubo Geng, Qingwei Lin, Shifeng Chen, and Dongmei Zhang.
\newblock Wizardmath: Empowering mathematical reasoning for large language models via reinforced evol-instruct.
\newblock \emph{arXiv preprint arXiv:2308.09583}, 2023.

\bibitem[Menon et~al.(2021)Menon, Rawat, Reddi, Kim, and Kumar]{menon2021statistical}
Aditya~K Menon, Ankit~Singh Rawat, Sashank Reddi, Seungyeon Kim, and Sanjiv Kumar.
\newblock A statistical perspective on distillation.
\newblock In Marina Meila and Tong Zhang (eds.), \emph{Proceedings of the 38th International Conference on Machine Learning}, volume 139 of \emph{Proceedings of Machine Learning Research}, pp.\  7632--7642. PMLR, 18--24 Jul 2021.
\newblock URL \url{https://proceedings.mlr.press/v139/menon21a.html}.

\bibitem[Merrill et~al.(2022)Merrill, Sabharwal, and Smith]{merrill2022saturated}
William Merrill, Ashish Sabharwal, and Noah~A Smith.
\newblock Saturated transformers are constant-depth threshold circuits.
\newblock \emph{Transactions of the Association for Computational Linguistics}, 10:\penalty0 843--856, 2022.

\bibitem[Mirzadeh et~al.(2019)Mirzadeh, Farajtabar, Li, Levine, Matsukawa, and Ghasemzadeh]{mirzadeh2019improved}
Seyed~Iman Mirzadeh, Mehrdad Farajtabar, Ang Li, Nir Levine, Akihiro Matsukawa, and H.~Ghasemzadeh.
\newblock Improved knowledge distillation via teacher assistant.
\newblock \emph{AAAI Conference on Artificial Intelligence}, 2019.
\newblock \doi{10.1609/AAAI.V34I04.5963}.

\bibitem[Mitra et~al.(2024)Mitra, Khanpour, Rosset, and Awadallah]{mitra2024orca}
Arindam Mitra, Hamed Khanpour, Corby Rosset, and Ahmed Awadallah.
\newblock Orca-math: Unlocking the potential of slms in grade school math.
\newblock \emph{arXiv preprint arXiv:2402.14830}, 2024.

\bibitem[Mobahi et~al.(2020)Mobahi, Farajtabar, and Bartlett]{mobahi2020self}
Hossein Mobahi, Mehrdad Farajtabar, and Peter Bartlett.
\newblock Self-distillation amplifies regularization in hilbert space.
\newblock \emph{Advances in Neural Information Processing Systems}, 33:\penalty0 3351--3361, 2020.

\bibitem[Morwani et~al.(2023)Morwani, Edelman, Oncescu, Zhao, and Kakade]{morwani2023feature}
Depen Morwani, Benjamin~L. Edelman, Costin-Andrei Oncescu, Rosie Zhao, and Sham Kakade.
\newblock Feature emergence via margin maximization: case studies in algebraic tasks.
\newblock \emph{arXiv preprint arXiv: 2311.07568}, 2023.

\bibitem[Nagarajan et~al.(2024)Nagarajan, Menon, Bhojanapalli, Mobahi, and Kumar]{nagarajan2024studentteacher}
Vaishnavh Nagarajan, Aditya~Krishna Menon, Srinadh Bhojanapalli, Hossein Mobahi, and Sanjiv Kumar.
\newblock On student-teacher deviations in distillation: does it pay to disobey?, 2024.

\bibitem[Nichani et~al.(2022)Nichani, Bai, and Lee]{nichani2022identifying}
Eshaan Nichani, Yu~Bai, and Jason~D Lee.
\newblock Identifying good directions to escape the ntk regime and efficiently learn low-degree plus sparse polynomials.
\newblock \emph{Advances in Neural Information Processing Systems}, 35:\penalty0 14568--14581, 2022.

\bibitem[O'Donnell(2014)]{o2014analysis}
Ryan O'Donnell.
\newblock \emph{Analysis of boolean functions}.
\newblock Cambridge University Press, 2014.

\bibitem[Olsson et~al.(2022)Olsson, Elhage, Nanda, Joseph, DasSarma, Henighan, Mann, Askell, Bai, Chen, et~al.]{olsson2022context}
Catherine Olsson, Nelson Elhage, Neel Nanda, Nicholas Joseph, Nova DasSarma, Tom Henighan, Ben Mann, Amanda Askell, Yuntao Bai, Anna Chen, et~al.
\newblock In-context learning and induction heads.
\newblock \emph{arXiv preprint arXiv:2209.11895}, 2022.

\bibitem[Pareek et~al.(2024)Pareek, Du, and Oh]{pareek2024understanding}
Divyansh Pareek, Simon~S. Du, and Sewoong Oh.
\newblock Understanding the gains from repeated self-distillation.
\newblock \emph{arXiv preprint arXiv: 2407.04600}, 2024.

\bibitem[Quirke et~al.(2023)Quirke, Heindrich, Gurnee, and Nanda]{quirke2023training}
Lucia Quirke, Lovis Heindrich, Wes Gurnee, and Neel Nanda.
\newblock Training dynamics of contextual n-grams in language models.
\newblock \emph{arXiv preprint arXiv:2311.00863}, 2023.

\bibitem[Radford et~al.(2019)Radford, Wu, Child, Luan, Amodei, Sutskever, et~al.]{radford2019language}
Alec Radford, Jeffrey Wu, Rewon Child, David Luan, Dario Amodei, Ilya Sutskever, et~al.
\newblock Language models are unsupervised multitask learners.
\newblock \emph{OpenAI blog}, 1\penalty0 (8):\penalty0 9, 2019.

\bibitem[Reid et~al.(2024)Reid, Savinov, Teplyashin, Lepikhin, Lillicrap, Alayrac, Soricut, Lazaridou, Firat, Schrittwieser, et~al.]{reid2024gemini}
Machel Reid, Nikolay Savinov, Denis Teplyashin, Dmitry Lepikhin, Timothy Lillicrap, Jean-baptiste Alayrac, Radu Soricut, Angeliki Lazaridou, Orhan Firat, Julian Schrittwieser, et~al.
\newblock Gemini 1.5: Unlocking multimodal understanding across millions of tokens of context.
\newblock \emph{arXiv preprint arXiv:2403.05530}, 2024.

\bibitem[Ren et~al.(2022)Ren, Guo, and Sutherland]{ren2022better}
Yi~Ren, Shangmin Guo, and Danica~J. Sutherland.
\newblock Better supervisory signals by observing learning paths.
\newblock In \emph{International Conference on Learning Representations}, 2022.
\newblock URL \url{https://openreview.net/forum?id=Iog0djAdbHj}.

\bibitem[Sanh et~al.(2019)Sanh, Debut, Chaumond, and Wolf]{sanh2019distilbert}
Victor Sanh, Lysandre Debut, Julien Chaumond, and Thomas Wolf.
\newblock Distilbert, a distilled version of bert: smaller, faster, cheaper and lighter.
\newblock \emph{arXiv preprint arXiv:1910.01108}, 2019.

\bibitem[Shi et~al.(2021)Shi, Song, Zhou, Li, and Li]{shi2021follow}
Wenxian Shi, Yuxuan Song, Hao Zhou, Bohan Li, and Lei Li.
\newblock Follow your path: a progressive method for knowledge distillation, 2021.

\bibitem[Soltanolkotabi et~al.(2018)Soltanolkotabi, Javanmard, and Lee]{soltanolkotabi2018theoretical}
Mahdi Soltanolkotabi, Adel Javanmard, and Jason~D Lee.
\newblock Theoretical insights into the optimization landscape of over-parameterized shallow neural networks.
\newblock \emph{IEEE Transactions on Information Theory}, 65\penalty0 (2):\penalty0 742--769, 2018.

\bibitem[Soudry \& Hoffer(2017)Soudry and Hoffer]{soudry2017exponentially}
Daniel Soudry and Elad Hoffer.
\newblock Exponentially vanishing sub-optimal local minima in multilayer neural networks.
\newblock \emph{arXiv preprint arXiv: 1702.05777}, 2017.

\bibitem[Taori et~al.(2023)Taori, Gulrajani, Zhang, Dubois, Li, Guestrin, Liang, and Hashimoto]{taori2023alpaca}
Rohan Taori, Ishaan Gulrajani, Tianyi Zhang, Yann Dubois, Xuechen Li, Carlos Guestrin, Percy Liang, and Tatsunori~B Hashimoto.
\newblock Alpaca: A strong, replicable instruction-following model.
\newblock \emph{Stanford Center for Research on Foundation Models. https://crfm. stanford. edu/2023/03/13/alpaca. html}, 3\penalty0 (6):\penalty0 7, 2023.

\bibitem[Team et~al.(2024)Team, Mesnard, Hardin, Dadashi, Bhupatiraju, Pathak, Sifre, Rivière, Kale, Love, Tafti, Hussenot, Sessa, Chowdhery, Roberts, Barua, Botev, Castro-Ros, Slone, Héliou, Tacchetti, Bulanova, Paterson, Tsai, Shahriari, Lan, Choquette-Choo, Crepy, Cer, Ippolito, Reid, Buchatskaya, Ni, Noland, Yan, Tucker, Muraru, Rozhdestvenskiy, Michalewski, Tenney, Grishchenko, Austin, Keeling, Labanowski, Lespiau, Stanway, Brennan, Chen, Ferret, Chiu, Mao-Jones, Lee, Yu, Millican, Sjoesund, Lee, Dixon, Reid, Mikuła, Wirth, Sharman, Chinaev, Thain, Bachem, Chang, Wahltinez, Bailey, Michel, Yotov, Chaabouni, Comanescu, Jana, Anil, McIlroy, Liu, Mullins, Smith, Borgeaud, Girgin, Douglas, Pandya, Shakeri, De, Klimenko, Hennigan, Feinberg, Stokowiec, hui Chen, Ahmed, Gong, Warkentin, Peran, Giang, Farabet, Vinyals, Dean, Kavukcuoglu, Hassabis, Ghahramani, Eck, Barral, Pereira, Collins, Joulin, Fiedel, Senter, Andreev, and Kenealy]{team2024gemma}
Gemma Team, Thomas Mesnard, Cassidy Hardin, Robert Dadashi, Surya Bhupatiraju, Shreya Pathak, Laurent Sifre, Morgane Rivière, Mihir~Sanjay Kale, Juliette Love, Pouya Tafti, Léonard Hussenot, Pier~Giuseppe Sessa, Aakanksha Chowdhery, Adam Roberts, Aditya Barua, Alex Botev, Alex Castro-Ros, Ambrose Slone, Amélie Héliou, Andrea Tacchetti, Anna Bulanova, Antonia Paterson, Beth Tsai, Bobak Shahriari, Charline~Le Lan, Christopher~A. Choquette-Choo, Clément Crepy, Daniel Cer, Daphne Ippolito, David Reid, Elena Buchatskaya, Eric Ni, Eric Noland, Geng Yan, George Tucker, George-Christian Muraru, Grigory Rozhdestvenskiy, Henryk Michalewski, Ian Tenney, Ivan Grishchenko, Jacob Austin, James Keeling, Jane Labanowski, Jean-Baptiste Lespiau, Jeff Stanway, Jenny Brennan, Jeremy Chen, Johan Ferret, Justin Chiu, Justin Mao-Jones, Katherine Lee, Kathy Yu, Katie Millican, Lars~Lowe Sjoesund, Lisa Lee, Lucas Dixon, Machel Reid, Maciej Mikuła, Mateo Wirth, Michael Sharman, Nikolai Chinaev, Nithum Thain, Olivier Bachem,
  Oscar Chang, Oscar Wahltinez, Paige Bailey, Paul Michel, Petko Yotov, Rahma Chaabouni, Ramona Comanescu, Reena Jana, Rohan Anil, Ross McIlroy, Ruibo Liu, Ryan Mullins, Samuel~L Smith, Sebastian Borgeaud, Sertan Girgin, Sholto Douglas, Shree Pandya, Siamak Shakeri, Soham De, Ted Klimenko, Tom Hennigan, Vlad Feinberg, Wojciech Stokowiec, Yu~hui Chen, Zafarali Ahmed, Zhitao Gong, Tris Warkentin, Ludovic Peran, Minh Giang, Clément Farabet, Oriol Vinyals, Jeff Dean, Koray Kavukcuoglu, Demis Hassabis, Zoubin Ghahramani, Douglas Eck, Joelle Barral, Fernando Pereira, Eli Collins, Armand Joulin, Noah Fiedel, Evan Senter, Alek Andreev, and Kathleen Kenealy.
\newblock Gemma: Open models based on gemini research and technology.
\newblock \emph{arXiv preprint arXiv: 2403.08295}, 2024.

\bibitem[Tenney et~al.(2019)Tenney, Das, and Pavlick]{tenney2019bert}
Ian Tenney, Dipanjan Das, and Ellie Pavlick.
\newblock Bert rediscovers the classical nlp pipeline.
\newblock \emph{ACL}, 2019.

\bibitem[Touvron et~al.(2021)Touvron, Cord, Douze, Massa, Sablayrolles, and Jegou]{touvron21a}
Hugo Touvron, Matthieu Cord, Matthijs Douze, Francisco Massa, Alexandre Sablayrolles, and Herve Jegou.
\newblock Training data-efficient image transformers and distillation through attention.
\newblock In Marina Meila and Tong Zhang (eds.), \emph{Proceedings of the 38th International Conference on Machine Learning}, volume 139 of \emph{Proceedings of Machine Learning Research}, pp.\  10347--10357. PMLR, 18--24 Jul 2021.
\newblock URL \url{https://proceedings.mlr.press/v139/touvron21a.html}.

\bibitem[Touvron et~al.(2023)Touvron, Lavril, Izacard, Martinet, Lachaux, Lacroix, Rozi{\`e}re, Goyal, Hambro, Azhar, et~al.]{touvron2023llama}
Hugo Touvron, Thibaut Lavril, Gautier Izacard, Xavier Martinet, Marie-Anne Lachaux, Timoth{\'e}e Lacroix, Baptiste Rozi{\`e}re, Naman Goyal, Eric Hambro, Faisal Azhar, et~al.
\newblock Llama: Open and efficient foundation language models.
\newblock \emph{arXiv preprint arXiv:2302.13971}, 2023.

\bibitem[Vaswani et~al.(2017)Vaswani, Shazeer, Parmar, Uszkoreit, Jones, Gomez, Kaiser, and Polosukhin]{vaswani2017attention}
Ashish Vaswani, Noam Shazeer, Niki Parmar, Jakob Uszkoreit, Llion Jones, Aidan~N Gomez, {\L}ukasz Kaiser, and Illia Polosukhin.
\newblock Attention is all you need.
\newblock \emph{Advances in neural information processing systems}, 30, 2017.

\bibitem[Wei et~al.(2019)Wei, Lee, Liu, and Ma]{wei2019regularization}
Colin Wei, Jason~D Lee, Qiang Liu, and Tengyu Ma.
\newblock Regularization matters: Generalization and optimization of neural nets vs their induced kernel.
\newblock \emph{Advances in Neural Information Processing Systems}, 32, 2019.

\bibitem[Yang et~al.(2022)Yang, Hu, Babuschkin, Sidor, Liu, Farhi, Ryder, Pachocki, Chen, and Gao]{yang2022tensor}
Greg Yang, Edward~J. Hu, Igor Babuschkin, Szymon Sidor, Xiaodong Liu, David Farhi, Nick Ryder, Jakub Pachocki, Weizhu Chen, and Jianfeng Gao.
\newblock Tensor programs v: Tuning large neural networks via zero-shot hyperparameter transfer.
\newblock \emph{arXiv preprint arXiv: 2203.03466}, 2022.

\bibitem[Yu et~al.(2023)Yu, Jiang, Shi, Yu, Liu, Zhang, Kwok, Li, Weller, and Liu]{yu2023metamath}
Longhui Yu, Weisen Jiang, Han Shi, Jincheng Yu, Zhengying Liu, Yu~Zhang, James~T Kwok, Zhenguo Li, Adrian Weller, and Weiyang Liu.
\newblock Metamath: Bootstrap your own mathematical questions for large language models.
\newblock \emph{arXiv preprint arXiv:2309.12284}, 2023.

\bibitem[Yu et~al.(2022)Yu, Chen, Shen, Yuan, Tan, Yang, Liu, and Wang]{yu2022unified}
Shixing Yu, Tianlong Chen, Jiayi Shen, Huan Yuan, Jianchao Tan, Sen Yang, Ji~Liu, and Zhangyang Wang.
\newblock Unified visual transformer compression.
\newblock \emph{International Conference on Learning Representations}, 2022.
\newblock \doi{10.48550/arXiv.2203.08243}.

\bibitem[Yuan et~al.(2020)Yuan, Tay, Li, Wang, and Feng]{yuan2020revisiting}
Li~Yuan, Francis~EH Tay, Guilin Li, Tao Wang, and Jiashi Feng.
\newblock Revisiting knowledge distillation via label smoothing regularization.
\newblock In \emph{Proceedings of the IEEE/CVF conference on computer vision and pattern recognition}, pp.\  3903--3911, 2020.

\bibitem[Yue et~al.(2023)Yue, Qu, Zhang, Fu, Huang, Sun, Su, and Chen]{yue2023mammoth}
Xiang Yue, Xingwei Qu, Ge~Zhang, Yao Fu, Wenhao Huang, Huan Sun, Yu~Su, and Wenhu Chen.
\newblock Mammoth: Building math generalist models through hybrid instruction tuning.
\newblock \emph{arXiv preprint arXiv:2309.05653}, 2023.

\bibitem[Zhao et~al.(2023)Zhao, Panigrahi, Ge, and Arora]{zhao2023transformers}
Haoyu Zhao, A.~Panigrahi, Rong Ge, and Sanjeev Arora.
\newblock Do transformers parse while predicting the masked word?
\newblock \emph{Conference on Empirical Methods in Natural Language Processing}, 2023.
\newblock \doi{10.48550/arXiv.2303.08117}.

\bibitem[Zheng \& Yang(2024)Zheng and Yang]{zheng2024knowledge}
Kaixiang Zheng and En-Hui Yang.
\newblock Knowledge distillation based on transformed teacher matching.
\newblock \emph{arXiv preprint arXiv: 2402.11148}, 2024.

\bibitem[Zheng et~al.(2023)Zheng, Chiang, Sheng, Zhuang, Wu, Zhuang, Lin, Li, Li, Xing, Zhang, Gonzalez, and Stoica]{zheng2023judging}
Lianmin Zheng, Wei-Lin Chiang, Ying Sheng, Siyuan Zhuang, Zhanghao Wu, Yonghao Zhuang, Zi~Lin, Zhuohan Li, Dacheng Li, Eric.~P Xing, Hao Zhang, Joseph~E. Gonzalez, and Ion Stoica.
\newblock Judging llm-as-a-judge with mt-bench and chatbot arena, 2023.

\bibitem[Zhou et~al.(2020)Zhou, Song, Chen, Zhou, Wang, Yuan, and Zhang]{zhou2020rethinking}
Helong Zhou, Liangchen Song, Jiajie Chen, Ye~Zhou, Guoli Wang, Junsong Yuan, and Qian Zhang.
\newblock Rethinking soft labels for knowledge distillation: A bias--variance tradeoff perspective.
\newblock In \emph{International Conference on Learning Representations}, 2020.

\end{thebibliography}
\bibliographystyle{iclr2025_conference}

\newpage
\appendix
\onecolumn

\addcontentsline{toc}{section}{Appendix} %
\part{Appendix} %
\parttoc %

\section{Overview of the appendix}
\label{app:overview}

The appendix provides omitted proofs and additional empirical explorations,
which we outline below.

\paragraph{Omitted proofs}
We will start with the proof of \cref{thm:sample_complexity_informal} in \cref{app:theory}.
The main idea is to show that the teacher can develop stronger correlation to in-support variables than to off-support variables, which can then be utilized by the students to reduce sample complexity.

\paragraph{Additional empirical results on sparse parity} 
We present more experiments with MLP (\Cref{sec:app_sparse_parity_mlp}) and Transformers (\Cref{sec:parity_transformer}), as well as results on learning a hierarchical extension of sparse parity (\Cref{sec:hierarchical}).
For Transformer experiments, we study how scaling along different dimensions of the architecture, such as MLP width and number of attention heads, affects the search of support for sparse parity.
We discuss the effect of temperature in \Cref{fig:L1F6_compare_temp}.
For the hierarchical extension of sparse parity, we show that the implicit curriculum occurs in different phases, which suggests a natural choice for number of intermediate checkpoints used in progressive distillation.

\paragraph{Masked prediction on PCFGs}In 
\cref{sec:bert_details_appendix}, we provide a formal definition of probablistic context-free grammar (PCFG) and introduce the PCFGs that we use from \cite{allen2023physics}.
We then provide details of our experimental setup and conduct extensive ablation studies on training a BERT model using the masked prediction task with PCFG data.  
We experiment with variants of progressive distillation and confirm that they lead to improved performance on PCFGs, as measured by accuracy and the three progress measures introduced in \cref{sec:prog_measures}.
Furthermore, we investigate the effect of temperature, masking rate, and PCFG variation in \cref{app:pcfg_temperature}.

\paragraph{Next-token prediction on PCFGs}
In \cref{sec:gpt_autoregressive}, we conduct next-token prediction experiments using GPT-2 models on PCFG ``cfg3f'', i.e. the most complex PCFG in \cite{allen-zhu2023towards}.
We characterize conditions under which progressive distillation provides significant gains.

\subsection{Additional related works}
\label{app:related_work}

\paragraph{Understanding knowledge distillation}
There have been many works dedicated to understanding the effectiveness of knowledge distillation~\citep{hinton2015distilling,mobahi2020self,menon2021statistical,dao2021knowledge,nagarajan2024studentteacher,pareek2024understanding}.
For classification tasks, which are the focus of most knowledge distillation works, one intuitive explanation is that the teacher output provides a distribution over the class labels, which is more informative than the one-hot data labels.
\cite{menon2021statistical} formalizes this intuition and shows that a teacher that provides the Bayes class probabilities leads to a tighter generalization gap.
Motivated by their result and the observation that a high-accuracy teacher can be poorly calibrated, \citet{ren2022better} proposes to supervise the student using a moving average of the teacher across the training trajectory.
While \citet{ren2022better} uses information of trajectory, their student learns from a fixed target throughout training, which is a major difference from progressive distillation.
The teacher supervision also provides regularization benefits, such as controlling the bias-variance tradeoff~\citep{zhou2020rethinking}, encouraging sparsity~\citep{mobahi2020self}, or as a form of label smoothing~\citep{yuan2020revisiting}.
Finally,
prior work by \cite{mobahi2020self} and concurrent work by \cite{pareek2024understanding} have studied multi-step distillation, which also uses multiple teachers throughout training.
However, unlike the progressive distillation setting considered in this work, they study self-distillation where the student in the previous distillation round becomes the teacher in the next round, and the results are for regression rather than classification.

\paragraph{Learning sparse parity}
There are well established hardness results for learning sparse parity.
When given access to labels only, learning $(\dims, \featuresize)$-sparse parity with gradients from finite samples is an example of learning with statistical queries (SQ) ~\citep{kearns1998efficient}, for which a $\Omega(\dims^\featuresize)$ SQ computational lower bound applies~\citep{edelman2023pareto}.
When learning with a fully-connected network (MLP),
these parallel queries correspond to a combination of model width (i.e. neurons) and training steps,  
\footnote{More precisely, it is a combination of width and steps, as well as the batch size which affects the precision of the stochastic gradient. 
We omit the impact of batch size here since we keep the batch size unchanged in the experiments.}
and hence the SQ lower bound implies a fundamental trade-off between the width, the number of training steps, and the number of samples~\citep{edelman2023pareto}. In particular, given the same number of training steps, narrower models require more samples to learn parity.

\paragraph{Feature learning}
In this work, we use feature learning to refer to a learning process that recovers a low-dimensional ``feature'' which helps reduce sample complexity.
Sparse parity is a task that can benefit from feature learning, where the feature is the support.
For the special case of $\featuresize = 2$, \cite{Glasgow24} shows that feature learning using a jointly-optimized 2-layer neural network can reduce the sample complexity from $\Theta(\dims^2)$ (corresponding to learning with NTK~\citep{wei2019regularization,ghorbani2019linearized}) to $O(\dims\text{poly}\log\dims)$.
Sparse parity is an example of a single-/multi-index function, where the label is determined by a 1-dimensional/low-dimensional projection of the data.
These functions have also been studied on Gaussian inputs~\citep{nichani2022identifying,abbe2022mergedstaircase,abbe2023sgd,damian2024smoothing,damian2024computationalstatistical} and have known separation between neural networks~\citep{abbe2022mergedstaircase,abbe2023sgd} and non-feature-learning kernel methods~\citep{Hsu21}.

\paragraph{Benefit of width in optimization}
Prior work has shown that width plays an important role in the optimization difficulty, where wider networks are more optimized easily.
\cite{du2019width} shows that sufficient width is necessary for the optimization on deep linear networks.
Multiple works show that overparameterization leads to favorable optimization landscape, such as fewer sub-optimal local minima~\citep{soudry2017exponentially,soltanolkotabi2018theoretical} or guaranteed convergence at the limit~\citep{ChizatB18,chizat20a}.
Wider models also exhibit faster decaying loss empirically~\citep{yang2022tensor,BordelonAP24}.
Most related to our focus on learning sparse parity, \cite{edelman2023pareto} relates the width to the number of parallel statistical queries (SQs).
Combined with sparse parity's SQ lower bound, their result implies a trade-off where a larger width requires fewer optimization steps.
Our work also acknowledges the benefit of width in optimization, but takes a different perspective by demonstrating that a smaller student can inherit the optimization benefit when learning from a higher-width teacher.
Moreover, we consider the number of attention heads as another scaling dimension for Transformers, where the intuition is similar to having more ``paths''~\citep{DongCL21}.
There have been results on studying the limiting output distribution as the number of attention heads goes to infinity~\citep{hron2020infinite,bordelon2024infinite}, though to our knowledge, there are no quantitative descriptions for finite number of heads.

\section{Proofs of results in \cref{sec:theory}}
\label{app:theory}

We provide the formal version of \cref{thm:sample_complexity_informal} in this section.

Recall that the teacher model is defined as
\begin{align*}
    \teacher(\seq) = \sum_{i=1}^{m} a_i \sigma \left( \langle \vw_i, \seq \rangle + b_i \right).
\end{align*}
The student model is similarly defined as
\begin{align*}
    \student(\seq) = \sum_{i=1}^{\studentparam{m}} \studentparam{a}_i \sigma \left( \langle \studentparam{\vw}_i, \seq \rangle + \studentparam{b}_i \right).
\end{align*}

\paragraph{Setup}
We assume the data points are sampled at random from $\mathcal{U}(\{\pm 1\}^{\dims})$.
Without loss of generality, let the target $\featuresize$-sparse parity function be $y = x_1 x_2 \cdots x_{\featuresize}.$ 
\textit{Symmetric initialization:} Following \citep{BarakEGKMZ22}, we use the following symmetric initialization: for each $1 \le i \le \hid/2$,
\begin{align*}
    &\vw_{i} \sim \mathcal{U}(\{\pm 1\}^{\dims}), \quad b_i \sim \mathcal{U}(\{-1 + \featuresize^{-1}, \cdots, 1 - \featuresize^{-1}\}), \quad a_i \sim \mathcal{U}(\{\pm 1/\hid\}), \\&
    \vw_{i+\hid/2} =  \vw_{i}, \quad b_{i+\hid/2} = b_i, \quad a_{i+\hid/2} = -a_i.
\end{align*}

\textit{Two-stage training:}
Following prior work~\citep{BarakEGKMZ22,abbe2023sgd,abbe2024provable}, we adopt a two-stage batch gradient descent training, where we first train the first-layer weights $\{ \vw_1, \cdots, \vw_m \}$, keeping the output weights $\{a_i\}_{i=1}^{\hid}$ fixed.
In the second stage of training, we fit the output weights $\{a_i\}_{i=1}^{\hid}$ while keeping others fixed. 
We keep the biases $\{b_i\}_{i=1}^{\hid}$ fixed throughout training. Similar strategy for training the student model as well. 
The teacher is trained with hinge loss, given by $\CEloss(\seq, y) = \max(0, 1-\teacher(\seq) y).$ The student is trained with $\KLloss(\seq, y; \student, \teacher) = \max(0, 1-\student (\seq) \teacher (\seq))$.

The training process is summarized in \cref{alg:two_stage_training}.

\begin{algorithm}[t]
\caption{2-stage training}
\label{alg:two_stage_training}
\begin{algorithmic} 
\REQUIRE Stage lengths: $\firststagelen$, $\secondstagelen$, learning rates $\eta_1, \eta_2$, batch size $B_1,B_2$, weight decay $\lambda_1, \lambda_2$.
\FOR{$t \in [0, \firststagelen]$ and all $i \in [\hid]$}
\STATE Sample $B_1$-samples $\{ (\seq^{(j)}, y^{(j)}) \}_{j=1}^{B_1}$.
\STATE Update the weights $\vw_{i}$ as  $\vw_{i}^{(t)} \gets  \vw_{i}^{(t-1)} - \eta_1 \mathbb{E}_{ (\seq, y) \in  \{ (\seq^{(j)}, y^{(j)}) \}_{j=1}^{B_1} }  \nabla_{\vw_i} \left( \loss_{\theta^{(t)}} (\seq, y) + \lambda_1 \norm{\vw_i}^2\right)$.
\ENDFOR
\FOR{$t \in [0, \secondstagelen]$ and all $i \in [\hid]$}
\STATE Sample $B_2$-samples $\{ (\seq^{(j)}, y^{(j)}) \}_{j=1}^{B_2}$.
\STATE Update the outer layer weights $a_{i}$ as $a_{i}^{(t + \firststagelen)} \gets a_{i}^{(t + \firststagelen-1)} - \eta_2  \mathbb{E}_{ (\seq, y) \in  \{ (\seq^{(i)}, y^{(i)}) \}_{j=1}^{B_2} } \nabla_{a_i} \left( \loss_{\theta^{(t+\firststagelen-1)}} (\seq, y) + \lambda_2 a_i^2 \right).$ 
\ENDFOR
\end{algorithmic}
\end{algorithm}

\paragraph{Sample complexity benefits with progressive distillation for the student}
Our result is that progressive distillation provably reduces the sample complexity compared to (one-shot) distillation or no distillation.
The key is to establish a separation between the correlations with in-support and off-support variables, which happens with high probability as formalized in \cref{cor:phase1}.
Under such event, we show:
\begin{figure*}[!htbp] %
    \centering
    \fbox{ %
        \begin{minipage}{0.95\textwidth} %
            \centering
            \vspace{2pt}
            \textbf{\cref{cor:phase1}: conditions satisfied by the teacher after first phase} \\
            \vspace{4pt} %
            W.h.p. the output  of the teacher after the first phase satisfies the following condition for all $i$.
    \begin{align*}
         &\abs{\mathbb{E}_{\seq, y} \teacher^{(1)}(\seq) \cdot  \text{Maj}(\seq)  x_i}  \ge \Omega(\featuresize^{-1}), \quad \text{ if } i \in [\featuresize], \\
         & \abs{\mathbb{E}_{\seq, y} \teacher^{(1)}(\seq) \cdot  \text{Maj}(\seq)  x_i} \le o(\featuresize^{-1}), \quad \text{ if } i \notin [\featuresize].
    \end{align*} 
\end{minipage}
    }
\end{figure*}

\begin{theorem}[Sample complexity benefits with progressive distillation]
\label{thm:sample_complexity}
    Suppose the teacher model has been trained with 2-stage training in \cref{alg:two_stage_training},
    which satisfies the conditions in \cref{cor:phase1} at the end of first stage and achieves loss $\mathcal{O}(\dims^{-c})$ for some constant $c \ge 1$ at the end of the second stage.
    Suppose we train a student model $\student$ of size $\studentparam{\hid} = \Tilde{\Theta}(2^{\featuresize} \featuresize)$ using the following two strategies:
    \begin{enumerate}
        \item \textbf{Progressive distillation:} Train for the first $\firststagelen=1$ steps w.r.t. the teacher's logits at $\firststagelen$ checkpoint.
        Then, train with the final teacher checkpoint in the second stage.
        
        \item \textbf{Distillation:} Train with the final teacher checkpoint throughout training.
    \end{enumerate}
    
    Then, 
    \begin{enumerate}
        \item Under progressive distillation, the total sample complexity to reach a loss of $\epsilon$ with probability $1-\delta$ is 
        $$
        \Theta(\featuresize^2 \log(\dims \studentparam{m}/\delta) + 2^{\featuresize} \dims^2 \featuresize^4 \epsilon^{-2} \log (\featuresize/\delta)).
        $$
        \item The necessary sample complexity under distillation is at least $\Omega(\dims^{\min(2c, \featuresize-1)}).$
    \end{enumerate}
\end{theorem}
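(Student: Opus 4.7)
The plan is to leverage the separation structure established in Corollary~C.1: at the intermediate checkpoint the teacher has an $\Omega(1/\featuresize)$ vs.\ $o(1/\featuresize)$ gap between in-support and off-support degree-one correlations, whereas at the converged checkpoint $\teacher\approx y$ and, since the parity function is orthogonal to every monomial of degree $<\featuresize$, every such correlation collapses to $\mathcal{O}(\dims^{-c})$. The upper bound will exploit the gap to recover the support cheaply; the lower bound will exploit its absence.

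For the upper bound I would analyze stage~1 and stage~2 separately. In stage~1 (a single gradient step, $\firststagelen=1$), training $\student$ against $\teacher^{(\firststagelen)}$ under the symmetric initialization trick of \citet{BarakEGKMZ22} gives a population gradient whose $j$-th coordinate of $\studentparam{\vw}_i$ is proportional to $\mathbb{E}[\teacher^{(\firststagelen)}(\seq)\cdot\mathrm{Maj}(\seq)\cdot\seqind_j]$, so Corollary~C.1 supplies a $\Theta(1/\featuresize)$ separation between $j\in\support$ and $j\notin\support$. A Hoeffding bound then shows that a batch of $\mathcal{O}(\featuresize^2\log(\dims\studentparam{m}/\delta))$ samples preserves this gap empirically with probability $1-\delta$, so after this step the support coordinates of each $\studentparam{\vw}_i$ dominate the off-support ones and every neuron becomes effectively a function of $\seq_{\support}$ only. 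In stage~2 the student reduces to fitting a function of $\featuresize$ bits (at most $2^\featuresize$ patterns to realize), and standard concentration of the hinge-loss gradient against the $\mathcal{O}(\epsilon)$-accurate teacher shows that $\mathcal{O}(2^\featuresize\dims^2\featuresize^4\epsilon^{-2}\log(\featuresize/\delta))$ samples suffice to drive the population loss to $\epsilon$. Summing the two budgets yields the claimed total.

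For the lower bound, the converged teacher satisfies $\mathbb{E}|\teacher(\seq)-y|\le\mathcal{O}(\dims^{-c})$ (since the hinge loss is $\mathcal{O}(\dims^{-c})$ and $\teacher$ is bounded), so for any monomial $q$ of degree strictly less than $\featuresize$ one has $|\mathbb{E}[\teacher(\seq)q(\seq)]|=|\mathbb{E}[(\teacher(\seq)-y)q(\seq)]|\le\mathcal{O}(\dims^{-c})$ by orthogonality of parity to low-degree polynomials. Distinguishing such a vanishing bias from zero, which is what any gradient-based procedure must do to tilt a coordinate toward the support, requires $\Omega(\dims^{2c})$ samples by Chernoff, since each query has variance $\Theta(1)$. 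The only alternative signal is the degree-$\featuresize$ component carried by the labels themselves, for which the SQ lower bound of \citet{kearns1998efficient,edelman2023pareto} demands $\Omega(\dims^{\featuresize-1})$ samples. Taking the minimum of these two obstructions gives $\Omega(\dims^{\min(2c,\featuresize-1)})$.

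The hardest step will be making stage~2 of the upper bound tight: one must show that the neurons whose first-layer weights were steered toward $\support$ during stage~1 can jointly realize all $2^\featuresize$ output values through the outer layer, that the random bias initialization places enough of them at the correct thresholds for each pattern, and that outer-layer hinge regression against the noisy $\mathcal{O}(\epsilon)$-accurate teacher logits concentrates within the stated budget. The lower bound is conceptually cleaner but still requires formalizing the distillation oracle as statistical queries over $(\seq,\teacher(\seq))$ rather than $(\seq,y)$ before importing the SQ lower bound on the residual degree-$\featuresize$ component.
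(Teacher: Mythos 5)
Your proposal matches the paper's own argument essentially step for step: the first student step uses the $\Omega(1/\featuresize)$ vs.\ $o(1/\featuresize)$ gap in $\mathbb{E}[\teacher^{(\firststagelen)}(\seq)\,\mathrm{Maj}(\seq)\,x_j]$ from Corollary~\ref{cor:phase1} plus a Hoeffding bound with $\mathcal{O}(\featuresize^2\log(\dims\studentparam{m}/\delta))$ samples, the second stage is handled by the same convergence result used for the teacher (Lemma~\ref{lem:phase2}, giving the $2^{\featuresize}\dims^2\featuresize^4\epsilon^{-2}$ term), and the lower bound combines the $\mathcal{O}(\dims^{-c})$ correlation collapse at the converged checkpoint (Corollary~\ref{cor:phase2}, forcing $\Omega(\dims^{2c})$ samples) with the SQ-style $\Omega(\dims^{\featuresize-1})$ bound (Lemma~\ref{thm:optim_size_tradeoff}). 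The step you flag as hardest—making stage two tight—is exactly what the paper sidesteps by directly invoking the teacher's second-phase analysis, so no genuinely different route is taken.
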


The proof consists of two parts: 
1) showing that the teacher develops strong correlation with the in-support variables after the first stage of training (\cref{lem:phase1}, \cref{cor:phase1}),
and 2) showing that given the support, the second phase of training converges quickly (\Cref{cor:phase2}).
These two helper lemmas are proven in \Cref{sec:proof_first_stage} (first stage) and \Cref{sec:proof_second_stage} (second stage).
The proof of \Cref{thm:sample_complexity}
is given in \Cref{sec:proof_student}.

\paragraph{Notations}
\label{sec:notation_proof}
Before stating the proofs, we provide a list of necessary notations.
\begin{itemize}[leftmargin=*]
  \setlength\itemsep{-0.2em}
    \item At any training step $t$, $\teacher^{(t)}$ will refer to the teacher's output at that step. Its parameters are referred to as $\theta^{(t)} = \{  a^{(t)}_i, \vw^{(t)}_i, b^{(t)}_i \}_{i=1}^{\hid}.$ The loss for $\teacher^{(t)}$ is denoted by $\loss(\teacher^{(t)})$ or $\loss_{\theta^{(t)}}.$
    Notations for the student $\student$ are defined similarly. 
    \item Given a set $\tilde{S}$, $\chi_{\tilde{S}}$ denotes the Fourier function on $\tilde{S}$, 
    where $\chi_{\tilde{S}}(\seq) = \prod_{i\in \tilde{S}} \seqind_i$.
    We are particularly interested in $\tilde{S}=\support$, i.e. the support of the sparse parity.
    
    \item  $\text{Maj}: \{\pm 1\}^{\dims} \to \pm 1$ represents the majority function.
    On any $\seq$, $\text{Maj}$ returns the sign of $\sum_{i=1}^{\dims} \seq_i.$ $\zeta_{i}$ for $i \ge 1$ represents its $i$th fourier coefficient, i.e. $\zeta_{i} = \mathbb{E}_{\seq, y} \text{Maj} (\seq) \chi_{S} (\seq)$ for any $S \in \{0, 1\}^{\dims}$ with $|S|=i.$  $\zeta_i = 0$ when $i$ is even,
    and $\zeta_i = \Theta(i^{-1/3}/\binom{\dims}{i})$ when $i$ is odd~\citep{o2014analysis}.

    \item $\err$ denotes the error tolerance in the gradient estimate due to mini-batch gradient estimation: let $g$ be the population gradient and $\hat{g}$ be the estimated gradient with a few examples, $\err$ is defined such that $\|\hat{g} - g\|_\infty \leq \err$.
    A $\err$-error gradient estimate can be obtained using a batch size of $\tilde{\Omega}(1/\err^2)$.

\end{itemize}

\subsection{Analysis for the teacher}

\subsubsection{First stage analysis for the teacher}
\label{sec:proof_first_stage}

First, we show that with an appropriate learning rate, the magnitude of the weights $w_{ij}$ on coordinates $i \in \support$ increases to $\frac{1}{2\featuresize}$, while the coordinates $i \not\in \support$ stay $\mathcal{O}\left( \frac{1}{\featuresize \dims} \right)$ small.

\begin{lemma}[Single step gradient descent, adapted from Claims 1, 2 in \citet{BarakEGKMZ22}]\label{lem:phase1}
    Fix $\err, \delta > 0$. Set $T_1$ as $1$. Suppose the batch size $B_1 \ge \Omega(\err^{-2} \log (\hid \dims / \delta)).$
    For learning rate $\eta_1 = \frac{\hid}{\featuresize \abs{\zeta_{\featuresize-1}}}$ and $\lambda_1 = 1$, the following conditions hold true for all neurons $i \in [\hid]$ at the end of first stage of training w.p. at least $1-\delta$. 
    \begin{enumerate}
        \item $\abs{w_{ij}^{(1)} - \frac{ \sign(a_i^{(0)} \zeta_{\featuresize-1}) \sign( \chi_{[\featuresize] \setminus \{j\}} (\vw_i^{(0)}) )  }{2\featuresize}} \le \frac{\err}{\abs{\zeta_{\featuresize-1}}},$ for all $j \in [\featuresize].$ 
        \item $\abs{w_{ij}^{(1)} - \frac{\zeta_{\featuresize+1}}{\abs{ \zeta_{\featuresize-1} }} \frac{ \sign(a_i^{(0)} ) \sign( \chi_{[\featuresize] \cup \{j\}} (\vw_i^{(0)}) )  }{2\featuresize}} \le \frac{\err}{\abs{\featuresize \zeta_{\featuresize-1}}},$ for all $j > \featuresize$. 
    \end{enumerate}
    
\end{lemma}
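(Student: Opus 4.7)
The proof is a direct adaptation of the single-step gradient descent argument of \citet{BarakEGKMZ22} (their Claims 1--2). My plan breaks into three steps: (i) compute the population gradient at initialization in closed form via Fourier analysis on the Boolean cube, (ii) control the deviation of the mini-batch estimate via concentration, and (iii) read off the one-step update after substituting the prescribed $\eta_1$ and $\lambda_1$.

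\textbf{Population gradient.} The symmetric initialization forces $\teacher^{(0)}\equiv 0$, so the hinge $1 - y\teacher^{(0)}(\seq) \equiv 1$ is active on every input and the population gradient reduces to
\[
    \nabla_{w_{ij}} \mathbb{E}[\loss] \;=\; -a_i^{(0)}\,\mathbb{E}_{\seq}\!\left[\chi_{[\featuresize]}(\seq)\,\seqind_j\,\mathbbm{1}\!\left[\langle \vw_i^{(0)}, \seq\rangle + b_i^{(0)} > 0\right]\right],
\]
which is the Fourier coefficient of the halfspace indicator at the index set $[\featuresize]\triangle\{j\}$. The sign-flip change of variables $\seq\mapsto \vw_i^{(0)}\odot \seq$ (valid because $\vw_i^{(0)}\in\{\pm 1\}^{\dims}$) factors out $\chi_{[\featuresize]\triangle\{j\}}(\vw_i^{(0)})$ and reduces what remains to the $|[\featuresize]\triangle\{j\}|$-th Fourier coefficient of the biased threshold $\seq\mapsto \mathbbm{1}[\sum_k \seqind_k + b_i^{(0)} > 0]$. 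Invoking the computation of \citet{BarakEGKMZ22} that this threshold coefficient equals $\zeta_k/2$ up to lower-order corrections, uniformly over $b_i^{(0)}\in\{-1+\featuresize^{-1},\ldots,1-\featuresize^{-1}\}$, I obtain a population gradient of magnitude $\Theta(|a_i^{(0)}|\,|\zeta_{\featuresize-1}|)$ for $j\in[\featuresize]$ and $\Theta(|a_i^{(0)}|\,|\zeta_{\featuresize+1}|)$ for $j\notin[\featuresize]$, with signs matching those in the target expression.

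\textbf{Concentration and one-step update.} Each summand of the stochastic gradient lies in $[-1/\hid, 1/\hid]$ because $|a_i^{(0)}|=1/\hid$, so Hoeffding's inequality applied to the $B_1$-sample mean, combined with a union bound over $(i,j)\in[\hid]\times[\dims]$, shows that $B_1 = \Omega(\err^{-2}\log(\hid\dims/\delta))$ samples bring the empirical gradient within $\err/\hid$ of the population value in every coordinate, with probability at least $1-\delta$. Substituting $\eta_1 = \hid/(\featuresize|\zeta_{\featuresize-1}|)$ and $\lambda_1 = 1$ into the one-step update rule of \Cref{alg:two_stage_training}, I will verify that the contribution from the initial weight is absorbed into the stated tolerance and that the gradient contribution evaluates exactly to $\frac{\sign(a_i^{(0)}\zeta_{\featuresize-1})\sign(\chi_{[\featuresize]\setminus\{j\}}(\vw_i^{(0)}))}{2\featuresize}$ for $j\in[\featuresize]$, and to its analogue rescaled by $\zeta_{\featuresize+1}/|\zeta_{\featuresize-1}|$ for $j\notin[\featuresize]$. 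Propagating the $\err/\hid$ gradient error through the factor $\eta_1$ yields the $\err/(\featuresize|\zeta_{\featuresize-1}|)$ slack for $j\notin[\featuresize]$, which loosens to $\err/|\zeta_{\featuresize-1}|$ for $j\in[\featuresize]$ as stated.

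The main technical obstacle is the Fourier-coefficient reduction within step (i): quantifying how much shifting the threshold within $\{-1+\featuresize^{-1},\ldots,1-\featuresize^{-1}\}$ perturbs each low-degree coefficient of the majority function. Since this has already been carried out in \citet{BarakEGKMZ22} via Krawtchouk-polynomial bounds, I will invoke it as a black box rather than re-derive it.
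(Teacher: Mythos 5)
Your proposal follows essentially the same route as the paper's proof: compute the population gradient at initialization via Boolean Fourier analysis (obtaining magnitudes $\abs{\zeta_{\featuresize-1}}$ and $\abs{\zeta_{\featuresize+1}}$ with signs $\chi_{[\featuresize]\setminus\{j\}}(\vw_i^{(0)})$ and $\chi_{[\featuresize]\cup\{j\}}(\vw_i^{(0)})$, exactly the paper's claim on the gradient at initialization), apply Hoeffding plus a union bound over the $\hid\dims$ coordinates to justify $B_1 \ge \Omega(\err^{-2}\log(\hid\dims/\delta))$, and read off the one-step update with $\eta_1=\hid/(\featuresize\abs{\zeta_{\featuresize-1}})$, deferring the bias-shifted threshold Fourier estimates to \citet{BarakEGKMZ22} just as the paper does. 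One small correction: with $\lambda_1=1$ the weight decay is intended to cancel the Rademacher initialization exactly (the convention $w^{(1)}_{ij}=(1-\lambda_1)w^{(0)}_{ij}-\eta_1\hat g_{ij}$ from \citet{BarakEGKMZ22}); since $\abs{w^{(0)}_{ij}}=1$, the initial weight cannot be ``absorbed into the stated tolerance'' of order $\err/\abs{\zeta_{\featuresize-1}}$, so in the final verification the initialization term must be eliminated by the decay rather than bounded.
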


\begin{proof}
    The proof follows that of \citep{BarakEGKMZ22}, which we outline here for completeness.
    The proof has two major components:
    First, the magnitude of the population gradient at initialization reveals the support of the sparse parity.
    Second, the batch gradient and the population gradient can be made sufficiently close given a sufficiently large batch size.
    We will explain each step below. 

    \begin{claim}\label{claim:gradient_at_init}
        At initialization, the population gradient of the weight vector in neuron $i$ is given by $\mathbb{E}_{\seq, y} \nabla_{w_{ij}} \CEloss(\seq, y; \teacher^{(0)}) = - \mathbb{E}_{\seq, y} \nabla_{w_{ij}} \teacher^{(0)}(\seq) y$, which can be split across the coordinates as
        \begin{align*}
            &  \mathbb{E}_{\seq, y} \nabla_{w_{ij}} \teacher^{(0)}(\seq) y = -\frac{1}{2} a_i^{(0)} \zeta_{\featuresize-1} \chi_{[\featuresize] \setminus \{j\}} (\vw^{(0)}), \quad \text{ for all } j \in \support \\
            & \mathbb{E}_{\seq, y} \nabla_{w_{ij}} \teacher^{(0)}(\seq) y =  -\frac{1}{2} a_i^{(0)} \zeta_{\featuresize+1} \chi_{[\featuresize] \cup \{j\}} (\vw^{(0)}), \quad \text{ for all } j \not\in \support 
        \end{align*}
    \end{claim}
    Thus, the gradient of the weight coordinates $w_{ij}$ for any neuron $i$ and $j \in \support$ has magnitude $\abs{\zeta_{\featuresize-1}}$, while the gradients of the weight coordinates $w_{ij}$ for any neuron $i$ and $j \notin \support$ has magnitude $\abs{\zeta_{\featuresize+1}}$. The gap between the gradient in support and out of support is given by $\abs{\zeta_{\featuresize-1}} - \abs{\zeta_{\featuresize+1}} \ge 0.03 ((\dims-1)^{-(\featuresize-1)/2})$ (Lemma 2 in \cite{BarakEGKMZ22}). 
    
    The second component involves applying a hoeffding's inequality to show the gap between sample and population gradient.
    \begin{claim}\label{claim:hoeffding}
        Fix $\delta, \err > 0$. For all $i, j$, for a randomly sampled batch of size $B_1$, $\{ (\seq_k, y_k) \}_{k=1}^{B_1}$, with probability at least $1-\delta$, 
        \begin{align*}
            \abs{\mathbb{E}_{\seq, y \sim \mathcal{U}( \{\pm\}^{\dims} )} \nabla_{w_{ij}} \teacher^{(0)}(\seq) - \mathbb{E}_{\{ (\seq_k, y_k) \}_{k=1}^{B_1}} \nabla_{w_{ij}} \teacher^{(0)} (\seq) } \leq \err,
        \end{align*}
        provided $B_1 \ge \Omega(\err^{-2} \log (\hid \dims / \delta)).$
    \end{claim}
    Because we want the noise $\err$ to be smaller than the magnitude of the true gradients for the coordinates in the support $\support$, we want $\err$ to be smaller than $\abs{\zeta_{\featuresize-1}}$. We set this to get favorable condition for second phase of training (see \cref{lem:phase2}).
\end{proof}

On the other hand, we show that after the first phase, the output of the network has positive correlations to the individual variables in the support of the label function, and thus the checkpoint after the first phase can be used to speed up training of future models.

\begin{lemma}[Correlation with in-support variables]
\label{lem:correlation_phase1}
    Under the event that the conditions in \cref{lem:phase1} are satisfied by each neuron, which occurs with probability at least $1-\delta$ w.r.t. the randomness of initialization and sampling,
    the output of the model after the first phase satisfies the following conditions:
    \begin{enumerate}
        \item $\mathbb{E}_{\seq, y} \teacher^{(1)}(\seq) x_i \ge \frac{1}{8 \featuresize} + \mathcal{O}(\err \dims \abs{\zeta_{\featuresize-1}}^{-1}) + \mathcal{O}(\hid^{-1/2})$ for all $i \in \support$.
        \item  $\mathbb{E}_{\seq, y} \teacher^{(1)}(\seq) x_i \le \mathcal{O}(  (\featuresize \dims)^{-1} )$ for all $i \notin \support$.
        \item $\mathbb{E}_{\seq, y} \teacher^{(1)}(\seq) \chi_S(\seq) \leq  \mathcal{O}(\err \dims \abs{\zeta_{\featuresize-1}}^{-1})$ for all $S$ with even $\abs{S}$.
        \item $\norm{\teacher^{(1)}}^2_2 = \mathbb{E}_{\seq, y} [\teacher^{(1)}(\seq)]^2 \leq \mathcal{O}(\dims/\featuresize).$
    \end{enumerate}
\end{lemma}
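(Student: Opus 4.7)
The plan is to decompose the weights $\vw_i^{(1)}$ characterized in \cref{lem:phase1} into a ``clean'' dominant part $\bar{\vw}_i$ plus mini-batch noise $\epsilon_i$, and then exploit the structure inherited from the symmetric initialization to read off the Fourier coefficients of $\teacher^{(1)}$. Using $\chi_{\support\setminus\{j\}}(\vw_i^{(0)}) = \chi_\support(\vw_i^{(0)})\, w_{ij}^{(0)}$, the clean weights factor as $\bar{w}_{ij} = C_i^\support w_{ij}^{(0)}$ on support with $C_i^\support = \sign(a_i^{(0)}\zeta_{\featuresize-1})\,\chi_\support(\vw_i^{(0)})/(2\featuresize)$, and $\bar{w}_{ij} = C_i^{\support^c} w_{ij}^{(0)}$ off support with $|C_i^{\support^c}| = \Theta(1/(\featuresize \dims))$ since $|\zeta_{\featuresize+1}/\zeta_{\featuresize-1}| = \Theta(1/\dims)$ for the majority function. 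Crucially, the symmetric pair satisfies $\bar{\vw}_{i+\hid/2} = -\bar{\vw}_i$, $b_{i+\hid/2} = b_i$, and $a_{i+\hid/2}^{(0)} = -a_i^{(0)}$, so each pair's contribution to the clean teacher $\bar{f}(\seq) := \sum_{i\le \hid/2} a_i^{(0)}[\sigma(\langle\bar{\vw}_i,\seq\rangle + b_i) - \sigma(-\langle\bar{\vw}_i,\seq\rangle + b_i)]$ is an odd function of $\seq$.

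Items 3 and 4 follow quickly from this structure. For item 3, since $\bar{f}$ is odd in $\seq$, all its Fourier coefficients on even-sized subsets vanish identically. Passing from $\bar{f}$ to $\teacher^{(1)}$ incurs a pointwise error of at most $\sum_i |a_i^{(0)}|\,\|\epsilon_i\|_1 \le \dims\,\err/|\zeta_{\featuresize-1}|$ by $1$-Lipschitzness of ReLU and $\seq \in \{\pm 1\}^\dims$, bounding the even-degree correlations as claimed. For item 4, $\|\vw_i^{(1)}\|_1 \le \featuresize\cdot(2\featuresize)^{-1} + O(1) + \dims\,\err/|\zeta_{\featuresize-1}| = O(1)$, which together with $|b_i|\le 1$ gives $|\sigma(\cdot)|=O(1)$; combined with $\sum_i |a_i^{(0)}| = 1$ this yields $|\teacher^{(1)}(\seq)|=O(1)$ pointwise, hence $\|\teacher^{(1)}\|_2^2=O(1)$, well within the stated $\mathcal{O}(\dims/\featuresize)$.

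The main work lies in items 1 and 2. For $j\in\support$, substitute $x_j = w_{ij}^{(0)} y$ with $y$ uniform in $\{\pm 1\}$ and independent of $z_{-j} := \sum_{k\in\support\setminus\{j\}} w_{ik}^{(0)} x_k$ and $z' := \sum_{k\notin\support} w_{ik}^{(0)} x_k$. The ReLU identity $\sigma(u+c)-\sigma(u-c)=\mathrm{clip}(u+c,0,2c)$ (for $c>0$) reduces each pair's contribution to $\mathbb{E}_{\seq}[\bar{f}(\seq)\,x_j]$ to a term of the form $a_i^{(0)} w_{ij}^{(0)}\,\mathbb{E}[\mathrm{clip}(C_i^\support z_{-j}+C_i^{\support^c} z'+C_i^\support + b_i,\,0,\,2C_i^\support)]$, with signs flipped when $C_i^\support<0$. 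The $\sign(a_i^{(0)}\zeta_{\featuresize-1})$ factor built into $C_i^\support$ aligns the signs of all pair contributions so that they add constructively without cancellation. Since the bias grid has spacing $\featuresize^{-1}$ matching the scale $|C_i^\support|=1/(2\featuresize)$, the clip is active on a constant-probability event, yielding per-pair contribution of magnitude $\Omega(|a_i^{(0)}|/\featuresize)$. Summing over $\hid/2$ pairs and applying Hoeffding over $\{a_i^{(0)},\vw_i^{(0)},b_i\}$ yields the $\tfrac{1}{8\featuresize}+\mathcal{O}(\hid^{-1/2})$ lower bound, with the extra $\mathcal{O}(\dims\,\err/|\zeta_{\featuresize-1}|)$ term absorbing the $\epsilon_i$-noise from replacing $\bar{f}$ by $\teacher^{(1)}$. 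Item 2 follows from the same computation with $C_i^\support$ replaced by $C_i^{\support^c}$, which is $\Theta(1/\dims)$ smaller in magnitude. The main technical obstacle is proving the constant $\tfrac{1}{8\featuresize}$ in item 1: one must verify that the alignment between the bias grid and $|C_i^\support|$ makes the clipped expectation $\Omega(|C_i^\support|)$ rather than $o(|C_i^\support|)$, which requires a careful case analysis of the ReLU over the joint law of $(b_i, z_{-j}, z')$.
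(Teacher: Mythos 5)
Your overall route is the same as the paper's: decompose $\vw_i^{(1)}$ into the clean one-step value from \cref{lem:phase1} plus mini-batch noise, use the symmetric pair $(i,i+\hid/2)$ so that the clean part is odd (giving item 3, with the $\mathcal{O}(\err\dims\abs{\zeta_{\featuresize-1}}^{-1})$ terms coming from an $\ell_1$/Lipschitz noise bound), bound the output norm crudely for item 4 (your pointwise $\ell_1$ bound is in fact slightly stronger than the paper's Cauchy--Schwarz estimate, under the same implicit smallness of $\err$), and for items 1--2 reduce the correlation with $x_j$ to the expected difference of the paired ReLUs over $x_j=\pm1$, then aggregate over neurons with Hoeffding. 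Your clip identity is a repackaging of the paper's observation that the two ReLU arguments differ by $1/\featuresize$, and item 2 is safe in both write-ups because it is an absolute-value upper bound, so neuron-wise signs are irrelevant there.

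The genuine gap is the assertion that the $\sign(a_i^{(0)}\zeta_{\featuresize-1})$ factor in $C_i^{\support}$ makes all pair contributions add ``constructively without cancellation.'' Under your own factorization $\bar w_{ij}=C_i^{\support}w_{ij}^{(0)}$ with $C_i^{\support}=\sign(a_i^{(0)}\zeta_{\featuresize-1})\chi_{\support}(\vw_i^{(0)})/(2\featuresize)$, the pair contributes $a_i^{(0)}\sign(\bar w_{ij})$ times a nonnegative clipped expectation to $\mathbb{E}[\bar f(\seq)x_j]$; the $\sign(a_i^{(0)})$ does cancel, but the residual sign is $\sign(\zeta_{\featuresize-1})\,\chi_{\support\setminus\{j\}}(\vw_i^{(0)})$, a uniform $\pm1$ varying independently across neurons, while the clipped expectation depends only on the weight magnitudes and $b_i$ and hence cannot correlate with it. As written, nothing stops the neuron sum from cancelling down to order $1/(\featuresize\sqrt{\hid})$ rather than $\Omega(1/\featuresize)$; this cross-neuron sign coherence, not the size of the clipped expectation, is the crux of item 1. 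Note the paper does not truly resolve it either: it imposes the favorable pattern via the ``W.L.O.G.\ $\sign(w_{ij}^{(0)})=\sign(a_i^{(0)}\zeta_{\featuresize-1})$ for all $j\in[\featuresize]$'' normalization and then carries out exactly the step you call the ``main technical obstacle'' in a few lines (three independent events, each of probability at least $1/2$, give the $\tfrac{1}{8\featuresize}$), so the clip lower bound is the easy part. To make your argument complete you must either justify the claimed alignment (for instance by conditioning on, or restricting the sum to, neurons with the favorable initialization pattern and tracking the effect on the constant and the Hoeffding step) or state explicitly that you adopt the same per-neuron sign normalization the paper uses, acknowledging that this is an assumption on the initialization rather than a consequence of \cref{lem:phase1}.
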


\begin{proof}
    Consider a neuron $i \in [\hid/2]$ and its symmetric counterpart $i+\hid/2$. W.L.O.G., we assume $\sign(w_{ij}^{(0)}) = \sign(a_i^{(0)} \zeta_{\featuresize-1})$ for all $j \in [\featuresize]$, and $\sign(a_i^{(0)}) = 1$.
    Recall that $\featuresize$ is assumed to be even, hence $\sign(\chi_{[\featuresize]} (\vw_{i}^{(0)})) = 1$.
    Then, the condition in \cref{lem:phase1} can be simplified as
    \begin{align*}
        w_{ij}^{(1)} = \frac{1}{2\featuresize} + v_{ij}, &\quad  w_{i+\hid/2, j}^{(1)} = -\frac{1}{2\featuresize} - v_{ij}, \text{ for all } j \in [\featuresize], \\
        w_{ij}^{(1)} = \frac{1}{2\featuresize}  \frac{\zeta_{\featuresize+1}}{\abs{\zeta_{\featuresize-1}}} \sign(w_{ij}^{(0)}) + v_{ij}, &\quad w_{i+\hid/2, j}^{(1)} = - \frac{1}{2\featuresize}  \frac{\zeta_{\featuresize+1}}{\abs{\zeta_{\featuresize-1}}} \sign(w_{ij}^{(0)}) + v_{ij},  \text{ for all } j \ge \featuresize,
    \end{align*}
    where $v_{ij}$ satisfies the following conditions.
    \begin{align*}
        \abs{v_{ij}} &\le \frac{\err}{\abs{\zeta_{\featuresize-1}}}, \text{ for all } j \in [\featuresize],\\
        \abs{v_{ij}} & \le \frac{\err}{\abs{\featuresize\zeta_{\featuresize-1}}}, \text{ for all } j \ge \featuresize.
    \end{align*}
    
    Then, the sum of the output of the neurons $i$ and $i+\hid/2$ on an input $\seq$ (ignoring the magnitude of $a_i$) is given by
    \begin{align*}
          (\teacher^{(1)})_{i}(\seq)  &=  \sigma \left( \frac{1}{2\featuresize}  \sum_{j=1}^{\featuresize} x_j + \frac{1}{2\featuresize} \frac{\zeta_{\featuresize+1}}{\abs{\zeta_{\featuresize-1}}} \sum_{j=\featuresize+1}^{\dims} \sign(w_{ij}^{(0)}) x_j + \langle \vv_i, \seq \rangle + b_i \right)  \nonumber \\&
        - \sigma \left( - \frac{1}{2\featuresize} \sum_{j=1}^{\featuresize} x_j - \frac{1}{2\featuresize} \frac{\zeta_{\featuresize+1}}{\abs{\zeta_{\featuresize-1}}} \sum_{j=\featuresize+1}^{\dims} \sign(w_{ij}^{(0)}) x_j + \langle \vv_i, \seq \rangle  + b_i \right),
    \end{align*}
    and $$\teacher^{(1)}(\seq) =  \sum_{i=1}^{\hid/2} a_i (\teacher^{(1)})_{i}(\seq) = \frac{1}{\hid} \sum_{i=1}^{\hid/2}  (\teacher^{(1)})_{i}(\seq).$$

    \paragraph{1. In-support correlations:} We are interested in the correlation of this function to a variable $x_u$ for $u \in \support$. We argue for $u=1$, as the similar argument applies for others. Thus, we are interested in 
    \begin{align}
         \mathbb{E}_{\seq, y}  (\teacher^{(1)})_{i}(\seq) x_1 =  \mathbb{E}_{\seq, y} &\sigma \left( \frac{1}{2\featuresize}  \sum_{j=1}^{\featuresize} x_j + \frac{1}{2\featuresize} \frac{\zeta_{\featuresize+1}}{\abs{\zeta_{\featuresize-1}}} \sum_{j=\featuresize+1}^{\dims} \sign(w_{ij}^{(0)}) x_j + \langle \vv_i, \seq \rangle + b_i \right) x_1 \nonumber \\&
        - \sigma \left( - \frac{1}{2\featuresize} \sum_{j=1}^{\featuresize} x_j - \frac{1}{2\featuresize} \frac{\zeta_{\featuresize+1}}{\abs{\zeta_{\featuresize-1}}} \sum_{j=\featuresize+1}^{\dims} \sign(w_{ij}^{(0)}) x_j + \langle \vv_i, \seq \rangle  + b_i \right) x_1. \label{eq:corr_neuroni} 
    \end{align}
    We focus on the first term; argument for the second term is similar. First of all, we can ignore $\langle \vv_i , \seq \rangle$ incurring an error of $\mathcal{O}(\err \dims \abs{\zeta_{\featuresize-1}}^{-1}).$

    \begin{align*}
         \mathbb{E}_{\seq, y} & \sigma  \left( \frac{1}{2\featuresize}  \sum_{j=1}^{\featuresize} x_j + \frac{1}{2\featuresize} \frac{\zeta_{\featuresize+1}}{\abs{\zeta_{\featuresize-1}}} \sum_{j=\featuresize+1}^{\dims} \sign(w_{ij}^{(0)}) x_j +  b_i \right) x_1 \\&
         = \mathbb{E}_{\seq, y: x_1=+1}  \sigma  \left( \frac{1}{2\featuresize} + \frac{1}{2\featuresize}  \sum_{j=2}^{\featuresize} x_j + \frac{1}{2\featuresize} \frac{\zeta_{\featuresize+1}}{\abs{\zeta_{\featuresize-1}}} \sum_{j=\featuresize+1}^{\dims} \sign(w_{ij}^{(0)}) x_j +  b_i \right) \\& -  \mathbb{E}_{\seq, y: x_1=-1}  \sigma  \left( -\frac{1}{2\featuresize} + \frac{1}{2\featuresize}  \sum_{j=2}^{\featuresize} x_j + \frac{1}{2\featuresize} \frac{\zeta_{\featuresize+1}}{\abs{\zeta_{\featuresize-1}}} \sum_{j=\featuresize+1}^{\dims} \sign(w_{ij}^{(0)}) x_j +  b_i \right) \\&
         \ge \frac{1}{2 \featuresize} \mathbb{E}_{\seq, y} \mathbb{I}   \left( \frac{1}{2\featuresize}  \sum_{j=2}^{\featuresize} x_j + \frac{1}{2\featuresize} \frac{\zeta_{\featuresize+1}}{\abs{\zeta_{\featuresize-1}}} \sum_{j=\featuresize+1}^{\dims} \sign(w_{ij}^{(0)}) x_j +  b_i \ge 0 \right).
    \end{align*}
    The final step follows from the observation that the argument of $\sigma$ in the first term is $\frac{1}{\featuresize}$ higher than the argument of $\sigma$ in the second term. This implies that when the first term is non-zero, it's at least $\frac{1}{2\featuresize}$ higher than the second term. Hence, we lower bound by considering one scenario where the first term is non-zero.

    Continuing, we can further split the indicator function into cases when each term in the argument of the indicator function is positive.
    \begin{align*}
         \mathbb{E}_{\seq, y} & \sigma  \left( \frac{1}{2\featuresize}  \sum_{j=1}^{\featuresize} x_j + \frac{1}{2\featuresize} \frac{\zeta_{\featuresize+1}}{\abs{\zeta_{\featuresize-1}}} \sum_{j=\featuresize+1}^{\dims} \sign(w_{ij}^{(0)}) x_j +  b_i \right) x_1 \\&
         \ge \frac{1}{2 \featuresize} \mathbb{E}_{\seq, y} \mathbb{I}   \left( \frac{1}{2\featuresize}  \sum_{j=2}^{\featuresize} x_j + \frac{1}{2\featuresize} \frac{\zeta_{\featuresize+1}}{\abs{\zeta_{\featuresize-1}}} \sum_{j=\featuresize+1}^{\dims} \sign(w_{ij}^{(0)}) x_j +  b_i \ge 0 \right) \\&
         \ge \frac{1}{2 \featuresize} \mathbb{E}_{\seq, y} \mathbb{I}   \left( \sum_{j=2}^{\featuresize} x_j \ge 0 \right)\mathbb{I}   \left( \sum_{j=\featuresize+1}^{\dims} x_j \ge 0 \right)\mathbb{I}   \left( b_i \ge 0 \right) \\&
         \ge \frac{1}{8 \featuresize}  \mathbb{I}   \left( b_i \ge 0 \right).
    \end{align*}

    From \cref{eq:corr_neuroni}, we then have
    \begin{align*}
        \mathbb{E}_{\seq, y} (\teacher^{(1)})_{i}(\seq) x_1 \ge \frac{1}{4 \featuresize}  \mathbb{I}   \left( b_i \ge 0 \right) + \mathcal{O}(\err \dims \abs{\zeta_{\featuresize-1}}^{-1}).
    \end{align*}

    As $b_i$ has been kept at random initialization and thus is a random variable selected from the set $\{ -1 + \frac{1}{\featuresize}, \cdots, 1 - \frac{1}{\featuresize} \}$, with probability $\frac{1}{2}$, $ \mathbb{I}   \left( b_i \ge 0 \right)$. This implies, w.p. atleast $1/2$ w.r.t. a neuron's bias initialization, $\mathbb{E}_{\seq, y} (\teacher^{(1)})_{i}(\seq) x_1 \ge \frac{1}{4 \featuresize} + \mathcal{O}(\err \dims \abs{\zeta_{\featuresize-1}}^{-1})$. The final bound comes from the fact that $\mathbb{E}_{\seq, y} \teacher(\seq) x_1 = \mathbb{E}_{\seq, y} \frac{1}{\hid} \sum_{i=1}^{\hid} (\teacher^{(1)})_{i}(\seq) x_1 \ge \frac{1}{8 \featuresize} + \mathcal{O}(\err \dims \abs{\zeta_{\featuresize-1}}^{-1}) + \mathcal{O}(\hid^{-1/2})$, where the error term is bounded using Hoeffding's inequality.

    \paragraph{2. Out-of-support correlations:} Similar to the \cref{eq:corr_neuroni}, we have for $u \notin \support$,
    \begin{align}
        \mathbb{E}_{\seq, y}  (\teacher^{(1)})_{i}(\seq) x_u =  \mathbb{E}_{\seq, y} &\sigma \left( \frac{1}{2\featuresize}  \sum_{j=1}^{\featuresize} x_j + \frac{1}{2\featuresize} \frac{\zeta_{\featuresize+1}}{\abs{\zeta_{\featuresize-1}}} \sum_{j=\featuresize+1}^{\dims} \sign(w_{ij}^{(0)}) x_j + \langle \vv_i, \seq \rangle + b_i \right) x_u \nonumber \\&
        - \sigma \left( - \frac{1}{2\featuresize} \sum_{j=1}^{\featuresize} x_j - \frac{1}{2\featuresize} \frac{\zeta_{\featuresize+1}}{\abs{\zeta_{\featuresize-1}}} \sum_{j=\featuresize+1}^{\dims} \sign(w_{ij}^{(0)}) x_j + \langle \vv_i, \seq \rangle  + b_i \right) x_u. \label{eq:corr_neuronfeat_i} 
    \end{align}

    However, we observe that the influence of $x_u$ in each of the terms is bounded by $\frac{1}{\featuresize} \frac{\zeta_{\featuresize+1}}{\abs{\zeta_{\featuresize-1}}}$. Consider the first term; the argument for the second term is similar. We can again ignore $\langle \vv_i , \seq \rangle$ incurring an error of $\mathcal{O}(\err \dims \abs{\zeta_{\featuresize-1}}^{-1}).$

    \begin{align*}
         \mathbb{E}_{\seq, y} & \sigma  \left( \frac{1}{2\featuresize}  \sum_{j=1}^{\featuresize} x_j + \frac{1}{2\featuresize} \frac{\zeta_{\featuresize+1}}{\abs{\zeta_{\featuresize-1}}} \sum_{j=\featuresize+1}^{\dims} \sign(w_{ij}^{(0)}) x_j +  b_i \right) x_u \\&
         = \mathbb{E}_{\seq, y: x_u=+1}  \sigma  \left(\frac{1}{2\featuresize} \frac{\zeta_{\featuresize+1}} {\abs{\zeta_{\featuresize-1}}}  \sign(w_{iu}^{(0)}) + \frac{1}{2\featuresize}  \sum_{j=1}^{\featuresize} x_j + \frac{1}{2\featuresize} \frac{\zeta_{\featuresize+1}}{\abs{\zeta_{\featuresize-1}}}  \sum_{j=\featuresize+1 \to \dims; j \ne u} \sign(w_{ij}^{(0)}) x_j +  b_i \right) \\& -  \mathbb{E}_{\seq, y: x_u=-1}  \sigma  \left( -\frac{1}{2\featuresize} \frac{\zeta_{\featuresize+1}}{\abs{\zeta_{\featuresize-1}}} \sign(w_{iu}^{(0)})  + \frac{1}{2\featuresize}  \sum_{j=1}^{\featuresize} x_j + \frac{1}{2\featuresize} \frac{\zeta_{\featuresize+1}}{\abs{\zeta_{\featuresize-1}}} \sum_{j=\featuresize+1 \to \dims; j \ne u}  \sign(w_{ij}^{(0)}) x_j +  b_i \right) \\&
         = \mathbb{E}_{\seq, y} \frac{C(\seq)}{\featuresize} \frac{\zeta_{\featuresize+1}} {\abs{\zeta_{\featuresize-1}}} \sign(w_{iu}^{(0)}) \mathbb{I}  \left(\frac{1}{2\featuresize}  \sum_{j=1}^{\featuresize} x_j + \frac{1}{2\featuresize} \frac{\zeta_{\featuresize+1}}{\abs{\zeta_{\featuresize-1}}}  \sum_{j=\featuresize+1 \to \dims; j \ne u} \sign(w_{ij}^{(0)}) x_j +  b_i  \ge 0 \right),
    \end{align*}
    where $C(\seq) \in \{1, 2\}$ denotes a function that depends on $\seq$. The final step follows from a first order taylor expansion of $\sigma$.
    The magnitude can hence be bounded by $\frac{1}{\featuresize} \frac{\abs{\zeta_{\featuresize+1}}} {\abs{\zeta_{\featuresize-1}}}$. This can be bounded by $\frac{1}{\featuresize \dims}$ (section 5.3, \cite{o2014analysis}). The final bound comes from the fact that $\mathbb{E}_{\seq, y} \teacher(\seq) x_u = \mathbb{E}_{\seq, y} \frac{1}{\hid} \sum_{i=1}^{\hid} (\teacher^{(1)})_{i}(\seq) x_u \leq \mathcal{O}((\featuresize \dims)^{-1}).$

    \paragraph{3. Correlations to support of an even size:} 
    The function $(\teacher^{(1)})_i$ is given by
    \begin{align*}
        (\teacher^{(1)})_{i}(\seq)  =&  \sigma \left( \frac{1}{2\featuresize}  \sum_{j=1}^{\featuresize} x_j + \frac{1}{2\featuresize} \frac{\zeta_{\featuresize+1}}{\abs{\zeta_{\featuresize-1}}} \sum_{j=\featuresize+1}^{\dims} \sign(w_{ij}^{(0)}) x_j + \langle \vv_i, \seq \rangle + b_i \right)  \nonumber \\&
        - \sigma \left( - \frac{1}{2\featuresize} \sum_{j=1}^{\featuresize} x_j - \frac{1}{2\featuresize} \frac{\zeta_{\featuresize+1}}{\abs{\zeta_{\featuresize-1}}} \sum_{j=\featuresize+1}^{\dims} \sign(w_{ij}^{(0)}) x_j + \langle \vv_i, \seq \rangle  + b_i \right) \\
        =& \sigma \left( \frac{1}{2\featuresize}  \sum_{j=1}^{\featuresize} x_j + \frac{1}{2\featuresize} \frac{\zeta_{\featuresize+1}}{\abs{\zeta_{\featuresize-1}}} \sum_{j=\featuresize+1}^{\dims} \sign(w_{ij}^{(0)}) x_j + b_i \right)  \nonumber \\&
        - \sigma \left( - \frac{1}{2\featuresize} \sum_{j=1}^{\featuresize} x_j - \frac{1}{2\featuresize} \frac{\zeta_{\featuresize+1}}{\abs{\zeta_{\featuresize-1}}} \sum_{j=\featuresize+1}^{\dims} \sign(w_{ij}^{(0)}) x_j  + b_i \right) + \mathcal{O}(\err \dims \abs{\zeta_{\featuresize-1}}^{-1})
        \\
        :=& g(\seq) +  \mathcal{O}(\err \dims \abs{\zeta_{\featuresize-1}}^{-1}).
    \end{align*}
    One can observe that $g(\seq)$ is a symmetric function and so an odd function. Thus, $\mathbb{E}_{\seq, y} g(\seq) \chi_S(\seq) = 0$ (exercise 1.8, \cite{o2014analysis}) and so, $\mathbb{E}_{\seq, y} (\teacher^{(1)})_{i}(\seq) \chi_S(\seq) = \mathcal{O}(\err \dims \abs{\zeta_{\featuresize-1}}^{-1}).$

    \paragraph{4. Output norm:}
    Focusing on function $(f_{\teacher}^{(1)})_i$:
    \begin{align*}
        \norm{(\teacher^{(1)})_{i}}_2^2 &=  \mathbb{E}_{\seq, y} (\teacher^{(1)})_{i}(\seq)^2 \\&= \mathbb{E}_{\seq, y}  \left(\sigma( \langle w_{ij}^{(1)}, \seq \rangle + b_i) - \sigma( \langle w_{i+\hid/2, j}^{(1)}, \seq \rangle + b_i) \right)^2 
        \\&\leq \mathbb{E}_{\seq, y} \min\left(\norm{\vw_{i}^{(1)}}_2^2 + b_i^2, \norm{\vw_{i+\hid/2}^{(1)}}_2^2 + b_i^2\right)  \norm{\seq}_2^2 = \mathcal{O}\left(\frac{1}{\featuresize}\right) \cdot \dims.
    \end{align*}
    The intermediate step uses Cauchy-Schwartz inequality, and the final step uses the values of $w_{ij}^{(1)}, w_{i+\hid/2, j}^{(1)}$. As $\teacher^{(1)}(\seq) = \frac{1}{\hid} \sum_{i=1}^{\hid/2}  (\teacher^{(1)})_{i}(\seq)$, we have $\norm{(\teacher^{(1)})}_2^2 \leq \frac{2}{\hid} \sum_{i=1}^{\hid/2} \norm{(\teacher^{(1)})_{i}}_2^2 = \mathcal{O}\left(\frac{\dims}{\featuresize}\right).$
\end{proof}

\begin{corollary}\label{cor:phase1}
     Under the event that the conditions in \cref{lem:phase1} are satisfied by each neuron, which occurs with probability at least $1-\delta$ w.r.t. the randomness of initialization and sampling,
    the output of the model after the first phase can be given as:
    \begin{align*}
        \teacher^{(1)}(\seq) = \sum_{j=1}^{\featuresize} c_j x_j + \sum_{j=\featuresize+1}^{\dims} c_j x_j + \sum_{S \subseteq [\dims]: |S|\%2=1, |S| \geq 3} c_S \chi_S(\seq) + \sum_{S \subseteq [\dims]: |S|\%2=0} c_S \chi_S(\seq),
    \end{align*}
    where 
    \begin{align*}
        |c_j| \geq \Omega(\featuresize^{-1}), & \quad \text{for all } 1 \leq j \leq \featuresize, \\
        |c_j| \leq \mathcal{O}((\featuresize \dims)^{-1}), & \quad \text{for all } j > \featuresize, \\
        |c_S| \leq \mathcal{O}(\err \dims \abs{\zeta_{\featuresize-1}}^{-1}), & \quad \text{for all } S \subseteq [\dims] \text{ with } |S|\%2=0, \\
        |c_S| \leq \mathcal{O}(\dims/\featuresize), & \quad \text{for all } S \subseteq [\dims] \text{ with } |S|\%2=1.
    \end{align*}

    As such, the following correlations hold true for all $i$.
    \begin{align*}
         \mathbb{E}_{\seq, y} \teacher^{(1)}(\seq) \cdot  \text{Maj}(\seq)  x_i  = \frac{1}{2} c_{i} + \mathcal{O}(\err \dims^{5/3} \abs{\zeta_{\featuresize-1}}^{-1}).
    \end{align*}
    If batch size $B_1$ is set $\ge \Omega(\featuresize^{2} \dims^{10/3} \zeta_{\featuresize-1}^{-2})$, such that $\err \leq \mathcal{O}(\featuresize^{-1} \dims^{-5/3} \abs{\zeta_{\featuresize-1}})$, then the following holds for all $i$.
    \begin{align*}
         &\abs{\mathbb{E}_{\seq, y} \teacher^{(1)}(\seq) \cdot  \text{Maj}(\seq)  x_i}  \ge \Omega(\featuresize^{-1}), \quad \text{ if } i \in [\featuresize], \\
         & \abs{\mathbb{E}_{\seq, y} \teacher^{(1)}(\seq) \cdot  \text{Maj}(\seq)  x_i} \le o(\featuresize^{-1}), \quad \text{ if } i \notin [\featuresize],
    \end{align*}
\end{corollary}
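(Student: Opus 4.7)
The first half of the corollary just repackages the unique Fourier expansion $\teacher^{(1)}(\seq) = \sum_{S\subseteq[\dims]} c_S\,\chi_S(\seq)$, with coefficients $c_S = \mathbb{E}_\seq[\teacher^{(1)}(\seq)\chi_S(\seq)]$, by splitting off the degree-1 modes and grouping the rest by parity of $\abs{S}$. The four coefficient bounds then follow directly from the four items of \cref{lem:correlation_phase1}: items (1)--(2) supply the in- and off-support degree-1 bounds, item (3) bounds the even-$\abs{S}$ modes, and item (4) together with Parseval's identity $\sum_S c_S^2 = \norm{\teacher^{(1)}}_2^2 \leq \mathcal{O}(\dims/\featuresize)$ controls the remaining odd-degree modes.

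For the correlation identity, I would expand $\text{Maj}(\seq)$ in its Fourier/character basis and apply orthogonality on the Boolean hypercube to decompose the target as
\begin{align*}
\mathbb{E}_\seq[\teacher^{(1)}(\seq)\,\text{Maj}(\seq)\,x_i] = \tfrac{1}{2}c_i + \tfrac{1}{2}\sum_{S\subseteq[\dims]} c_S\,\zeta_{\abs{S\triangle\{i\}}}.
\end{align*}
Since $\zeta_k = 0$ for even $k$, only even-$\abs{S}$ modes of $\teacher^{(1)}$ survive in the residual sum, so item (3) supplies a uniform bound $\abs{c_S} \leq \mathcal{O}(\err\dims\abs{\zeta_{\featuresize-1}}^{-1})$ throughout. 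The key combinatorial step is then to show $\sum_{\abs{S}\,\text{even}}\abs{\zeta_{\abs{S\triangle\{i\}}}} = \Theta(\dims^{2/3})$ by splitting the sum according to whether $i\in S$ or $i\notin S$, substituting the closed form $\abs{\zeta_k} = \Theta(k^{-1/3}/\binom{\dims}{k})$ for odd $k$, and simplifying the resulting binomial ratios; combining this with the uniform bound on $\abs{c_S}$ yields a residual of $\mathcal{O}(\err\dims^{5/3}\abs{\zeta_{\featuresize-1}}^{-1})$, matching the claimed error.

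The separation between in- and off-support coordinates is then immediate. The stated batch size $B_1 = \Omega(\featuresize^2\dims^{10/3}\zeta_{\featuresize-1}^{-2})$ forces $\err$, via \cref{claim:hoeffding}, to a level at which the residual is $o(\featuresize^{-1})$, while the coefficient bounds from the first half supply $\abs{c_i} = \Omega(\featuresize^{-1})$ for $i \in \support$ and $\abs{c_i} = \mathcal{O}((\featuresize\dims)^{-1})$ for $i \notin \support$, giving the claimed $\Omega(\featuresize^{-1})$ versus $o(\featuresize^{-1})$ gap. I expect the main obstacle to be nailing down the precise $\dims^{5/3}$ tail exponent, since this hinges on a careful interplay between the binomial denominators appearing in $\zeta_k$ and the enumeration of even-sized subsets; once this combinatorial estimate is in hand, the rest of the argument is a mechanical application of \cref{lem:correlation_phase1} and elementary Fourier analysis on the hypercube.
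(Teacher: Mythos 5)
Your proposal is correct and follows essentially the same route as the paper: read off the degree-1 and even-degree Fourier coefficients of $\teacher^{(1)}$ from \cref{lem:correlation_phase1}, use the oddness of $\text{Maj}$ to annihilate the odd-degree modes in the correlation, and bound the remaining even-degree contribution by the uniform coefficient bound times the $\ell_1$ mass of $\text{Maj}$'s Fourier spectrum, $\Theta(\dims^{2/3})$, giving the $\mathcal{O}(\err \dims^{5/3}\abs{\zeta_{\featuresize-1}}^{-1})$ residual and the stated separation under the batch-size condition. Your explicit $i\in S$ versus $i\notin S$ split and the Parseval bound on the odd degree-$\geq 3$ modes are just spelled-out versions of steps the paper treats implicitly (and your $\tfrac12 c_i$ bookkeeping matches the paper's intended indicator form $\tfrac12+\tfrac12\text{Maj}$), so there is no substantive difference.
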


\begin{proof}
    The form of $\teacher^{(1)}$ follows from the fourier coefficient analysis in \cref{lem:correlation_phase1}. 

    Now, we can use the formulation to derive
    \begin{align*}
        &\mathbb{E}_{\seq, y} \teacher^{(1)}(\seq) \cdot  \text{Maj}(\seq)  x_i
        \\
        =& \mathbb{E}_{\seq, y} \sum_{j=1}^{\dims} c_j x_j \cdot \text{Maj}(\seq) \cdot x_i + \mathbb{E}_{\seq, y}  \sum_{S \subseteq [\dims]: |S|\%2=1, |S| \geq 3} c_S \text{Maj}(\seq) \chi_S(\seq) \cdot x_i  \\&+ \mathbb{E}_{\seq, y}  \sum_{S \subseteq [\dims]: |S|\%2=0} c_S \chi_S(\seq) \cdot  \text{Maj}(\seq) x_i \\
        =& \mathbb{E}_{\seq, y} \sum_{j=1}^{\dims} c_j x_j \cdot \text{Maj}(\seq) \cdot x_i + \mathbb{E}_{\seq, y}  \sum_{S \subseteq [\dims]: |S|\%2=0} c_S \text{Maj}(\seq) \chi_S(\seq) \cdot  x_j \\
        =& c_i \mathbb{E}_{\seq, y} \text{Maj}(\seq) + \mathbb{E}_{\seq, y} \sum_{j, j \ne k} c_j \text{Maj}(\seq) x_j x_i + \mathbb{E}_{\seq, y} \sum_{S \subseteq [\dims]: |S|\%2=0} c_S \text{Maj}(\seq) \chi_S(\seq) \cdot  x_i \\
        =& \frac{1}{2} c_i  + \mathbb{E}_{\seq, y} \sum_{S \subseteq [\dims]: |S|\%2=0} c_S \text{Maj}(\seq) \chi_S(\seq) \cdot x_i . 
    \end{align*}
    The second step removes $\mathbb{E}_{\seq, y} \sum_{S \subseteq [\dims]: |S|\%2=0} c_S \chi_S(\seq) \cdot \text{Maj}(\seq)  x_i$ because $\text{Maj}(\seq)$ is an odd function, and so $\mathbb{E}_{\seq, y} \text{Maj}(\seq) \chi_S(\seq) x_i$ will be $0$ for odd sized $S$. Similar argument holds for removing $\mathbb{E}_{\seq, y} \sum_{j, j \ne i} c_j \text{Maj}(\seq) x_j x_i$ in the final step.
    We finish the proof by bounding $\mathbb{E}_{\seq, y} \sum_{S \subseteq [\dims]: |S|\%2=0} c_S \text{Maj}(\seq) \chi_S(\seq) \cdot x_i$.

    As $\abs{c_S} \leq \mathcal{O}(\err \dims \abs{\zeta_{\featuresize-1}}^{-1})$ for all $S$ with $|S| \%2 =0$, we can bound it as 
    \begin{align*}
        &\abs{ \mathbb{E}_{\seq, y}  \sum_{S \subseteq [\dims]: |S|\%2=0} c_S \text{Maj}(\seq) \chi_S(\seq) \cdot x_i} \\
        \leq& \mathcal{O}(\err \dims \abs{\zeta_{\featuresize-1}}^{-1}) \cdot \left(\sum_{S \subseteq [\dims]: |S|\%2=0} \abs{ \mathbb{E}_{\seq, y}  \text{Maj}(\seq) \chi_S(\seq) x_i} \right) \\
        \leq& \mathcal{O}(\err \dims \abs{\zeta_{\featuresize-1}}^{-1}) \cdot \left(\sum_{S \subseteq [\dims]} \abs{\mathbb{E}_{\seq, y} \text{Maj}(\seq) \chi_S(\seq)}\right) \\
        \leq& \mathcal{O}(\err \dims \abs{\zeta_{\featuresize-1}}^{-1}) \cdot \left(\sum_{S \subseteq [\dims]} \abs{\mathbb{E}_{\seq, y} \text{Maj}(\seq) \chi_S(\seq)}\right) \\
        =& \mathcal{O}(\err \dims \abs{\zeta_{\featuresize-1}}^{-1}) \cdot \sum_{S \subseteq [\dims]}  \Theta\left(\frac{\abs{S}^{-1/3}}{\binom{\dims}{\abs{S}}} \right) \\
        =& \mathcal{O}(\err \dims^{5/3} \abs{\zeta_{\featuresize-1}}^{-1}).
    \end{align*}
    Here the pre-final step follows from the bounds on the Fourier coefficients of $\text{Maj}$ outlined in \cref{sec:notation_proof}. Finally, we set $B_1 \ge \Omega(\err^{-2})$ is set such that $\err \le \mathcal{O}(\featuresize^{-1} \dims^{-5/3} \zeta_{\featuresize-1})$. This makes
    $\mathcal{O}(\err \dims^{5/3} \abs{\zeta_{\featuresize-1}}^{-1}) = o(1/\featuresize).$ Hence, with appropriate batch size $B_1$,
    \begin{align*}
        \mathbb{E}_{\seq, y} \teacher^{(1)}(\seq) \cdot  \text{Maj}(\seq)  x_i = \frac{1}{2} c_i + o(1/\featuresize).
    \end{align*}
    The proof follows from the magnitude of $c_i$ derived above.
\end{proof}

\subsubsection{Second stage analysis for the teacher}
\label{sec:proof_second_stage}

\begin{lemma}[Second stage Training, cf. Theorem 4 in \citep{BarakEGKMZ22}]
\label{lem:phase2}
    Fix $\epsilon, \delta > 0$. Suppose $\hid \ge \Omega(2^{\featuresize} \featuresize \log (\featuresize/\delta))$, $\dims \ge \Omega \left( \featuresize^4 \log (\featuresize \dims/\epsilon) \right)$. Furthermore, suppose $B_1 \ge \Omega(\abs{\zeta_{\featuresize-1}}^2 \featuresize^2 \log(\featuresize \dims / \epsilon) )$ s.t. the weights satisfy the conditions in \cref{lem:phase1} with $\err = \mathcal{O}(\abs{\zeta_{\featuresize-1}} \featuresize^{-1} )$ after the first phase. Then after $\secondstagelen = \Omega(\hid \dims^2 \featuresize^3 / \epsilon^2)$ steps of training with batch size $B_2=1$, learning rate $\eta_2 = 4 \featuresize^{1.5} / (\dims \sqrt{\hid (\secondstagelen-1)})$ and decay $\lambda_2 = 0$, we have  with expectation over the randomness of the initialization and the sampling of the batches:
    \begin{align*}
        \min_{t \in [\secondstagelen]} \mathbb{E} \left[ \loss_{\theta^{(t)}} (\seq, y) \right] \leq \epsilon.
    \end{align*}
    Thus, the minimal sample complexity to reach a loss of $\epsilon$ is given by
    \begin{align*}
        \firststagelen \times B_1 + \secondstagelen \times B_2 &= \Theta(\abs{\zeta_{\featuresize-1}}^2 \featuresize^2 \log(\featuresize \dims / \epsilon) ) + \Theta(\hid \dims^2 \featuresize^3 / \epsilon^2) \\&= \Theta(\dims^{\featuresize-1} \featuresize^2 \log(\dims \featuresize/\epsilon) + 2^{\featuresize} \dims^2 \featuresize^4 \epsilon^{-2} \log (\featuresize/\delta)).
    \end{align*}
\end{lemma}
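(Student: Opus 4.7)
The plan is to reduce Phase 2 to a standard online convex optimization problem, closely following the template of \citet{BarakEGKMZ22}. After Phase 1 the first-layer weights $\{\vw_i^{(1)}\}$ and biases $\{b_i\}$ are frozen, so the map $\va\mapsto\teacher(\seq)=\sum_{i=1}^{\hid} a_i\sigma(\langle\vw_i^{(1)},\seq\rangle + b_i)$ is linear in $\va$, and the per-example hinge loss is convex and $1$-Lipschitz in the model output. The argument therefore decomposes into three steps: (i) exhibit a reference weight vector $\va^\star$ of controlled $\ell_2$-norm whose population hinge loss on fresh samples is at most $\epsilon/2$; (ii) apply the classical $O(\|\va^\star\|_2\,G/\sqrt{\secondstagelen})$ regret bound for online SGD on convex $G$-Lipschitz losses, for which the stated learning rate $\eta_2 = 4\featuresize^{1.5}/(\dims\sqrt{\hid(\secondstagelen-1)})$ is the canonical choice $\|\va^\star\|_2/(G\sqrt{\secondstagelen})$; (iii) convert the resulting average-iterate regret bound into a bound on $\min_t \mathbb{E}[\loss_{\theta^{(t)}}]$ using $\min\le\mathrm{avg}$.

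For step (i), the key structural fact is provided by \cref{lem:correlation_phase1}: each symmetrized pair of neurons $(i, i+\hid/2)$ computes an odd piecewise-linear function of the in-support normalized sum $\tfrac{1}{2\featuresize}\sum_{j\in[\featuresize]} s_{ij}\,x_j$ with threshold set by $b_i$, plus an off-support perturbation of magnitude $\mathcal{O}(\err\dims/|\zeta_{\featuresize-1}|)$. The calibration $B_1\ge\Omega(|\zeta_{\featuresize-1}|^{-2}\featuresize^2\log(\featuresize\dims/\epsilon))$ enforces $\err=\mathcal{O}(|\zeta_{\featuresize-1}|/\featuresize)$, so this perturbation is $\mathcal{O}(1/\featuresize)$. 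Because the biases $b_i$ are uniform on a $1/\featuresize$-spaced grid in $[-1+1/\featuresize,1-1/\featuresize]$ while the in-support sum takes only $\featuresize+1$ distinct normalized values, $\hid = \tilde\Omega(2^\featuresize\featuresize)$ neurons suffice (with high probability over initialization) to guarantee that every combination of bias bucket and in-support sign pattern $\sign(a_i^{(0)}\zeta_{\featuresize-1})\cdot\sign(\chi_{[\featuresize]\setminus\{j\}}(\vw_i^{(0)}))$ is represented. A telescoping-indicator construction then produces $\va^\star$, supported on a small subset of neurons with alternating coefficients of magnitude $O(\featuresize)$ (to realize the oscillations of $\chi_{[\featuresize]}$ as a function of the in-support sum), such that the corresponding network has population hinge loss at most $\epsilon/4$.

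For step (ii), the per-sample subgradient with respect to $\va$ equals $-y\cdot\phi(\seq)$, where $\phi(\seq) := \bigl(\sigma(\langle\vw_i^{(1)},\seq\rangle + b_i)\bigr)_{i=1}^{\hid}$. Using $\|\vw_i^{(1)}\|_1 = O(1)$ (under the conclusions of \cref{lem:correlation_phase1} and the chosen $B_1$) together with $\|\seq\|_\infty = 1$, one obtains $\|\phi(\seq)\|_2 \leq O(\sqrt{\hid})$, giving a Lipschitz constant $G$ of the appropriate order. The standard online-to-batch regret bound
\[
    \frac{1}{\secondstagelen}\sum_{t=1}^{\secondstagelen}\mathbb{E}\!\left[\loss_{\theta^{(t)}}\right] \;\le\; \mathbb{E}[\loss(\va^\star)] + \frac{\|\va^\star\|_2\,G}{\sqrt{\secondstagelen}}
\]
with the stated choice of $\eta_2$ then bounds the right-hand side by $\epsilon/2 + O(\|\va^\star\|_2\,G/\sqrt{\secondstagelen})$, and the prescribed $\secondstagelen = \Omega(\hid\dims^2\featuresize^3/\epsilon^2)$ forces this below $\epsilon$.

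The main obstacle is step (i): constructing $\va^\star$ with a tight norm bound despite the off-support perturbation in $\vw^{(1)}$. This requires the careful calibration of $B_1$ so that $\err\,\dims\,|\zeta_{\featuresize-1}|^{-1} = o(1/\featuresize)$, together with a high-probability coverage argument over the $O(\featuresize)$ bias buckets and the $O(2^\featuresize)$ in-support sign patterns; the latter is exactly what drives the exponential dependence $\hid = \tilde\Omega(2^\featuresize)$. Once $\va^\star$ is in hand, step (ii) is a mechanical application of the online convex optimization regret bound, and the final sample-complexity accounting $\firststagelen B_1 + \secondstagelen B_2$ follows immediately by substituting the chosen $B_1$, $\secondstagelen$, and $B_2=1$.
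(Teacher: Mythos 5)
Your plan is essentially the same route the paper takes: the paper supplies no independent proof of \cref{lem:phase2} but imports Theorem 4 of \citet{BarakEGKMZ22}, whose argument is precisely your three steps — the second stage is convex (linear in the outer weights $\va$ with the Phase-1 first layer frozen), a coverage argument over bias buckets and in-support sign patterns yields a low-norm reference vector $\va^\star$ with small hinge loss (this is where $\hid = \tilde{\Omega}(2^{\featuresize}\featuresize)$ enters), and the online-SGD regret bound with the stated $\eta_2$, online-to-batch conversion, and $\min \le \mathrm{avg}$ give $\min_t \mathbb{E}[\loss_{\theta^{(t)}}] \le \epsilon$, after which the sample count is bookkeeping. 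The one place to be careful is your claim $\norm{\vw_i^{(1)}}_1 = O(1)$: under the tolerance $\err = \mathcal{O}(\abs{\zeta_{\featuresize-1}}\featuresize^{-1})$ the off-support error can contribute up to $\mathcal{O}(\dims \featuresize^{-2})$ in $\ell_1$, a constants-level looseness in the lemma's stated tolerances that the paper itself acknowledges and tightens in \cref{cor:phase2_modified}, not a flaw in your approach.
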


\begin{corollary} \label{cor:phase2}
    Under the conditions outlined in \cref{lem:phase2}, after $\secondstagelen$ steps of training in the second phase, if $t^{\dagger}$ denote the time step at which the model achieves the minimum loss, i.e. $t^{\dagger} := \arg\min_{t \in [\secondstagelen]} \mathbb{E} \left[ \loss_{\theta^{(t)}} (\seq, y) \right]$, then
    \begin{align*}
        \mathbb{E} \left[ \teacher^{(t^{\dagger})} (\seq) x_i \right] \leq \epsilon, \text{ for all } i \in [\dims].
    \end{align*}
\end{corollary}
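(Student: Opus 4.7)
The approach exploits Fourier orthogonality of the parity target: since $y = \chi_\support$ with $|\support| = \featuresize \ge 2$, we have $\mathbb{E}[y\,x_i] = 0$ for every $i \in [\dims]$. Informally, any model that tracks $y$ well in an $L^1$ sense must then inherit near-zero correlation with each singleton $x_i$, and this is the only structural fact about the target that the argument needs beyond \cref{lem:phase2}.

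Concretely, I would first use $y^2 = 1$ and $|x_i|=1$ to reduce the claim to an $L^1$-style estimate:
\begin{align*}
  \big|\mathbb{E}[\teacher^{(t^\dagger)}(\seq)\,x_i]\big|
  = \big|\mathbb{E}[(\teacher^{(t^\dagger)}(\seq) - y)\,x_i]\big|
  = \big|\mathbb{E}[\,y\,(1 - \teacher^{(t^\dagger)}(\seq)\,y)\,x_i\,]\big|
  \le \mathbb{E}\big[\,|1 - \teacher^{(t^\dagger)}(\seq)\,y|\,\big].
\end{align*}
Split $|1-\teacher^{(t^\dagger)} y| = (1-\teacher^{(t^\dagger)} y)^+ + (\teacher^{(t^\dagger)} y - 1)^+$. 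The first term is exactly the expected hinge loss, which is $\le \epsilon$ by \cref{lem:phase2}, so the whole task reduces to controlling the one-sided ``overshoot'' $\mathbb{E}[(\teacher^{(t^\dagger)}(\seq)\,y - 1)^+]$.

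The main obstacle is this overshoot, since the hinge loss does not by itself penalize $\teacher y > 1$. To handle it, I would import boundedness of the Phase~2 iterate from the structure already established: by \cref{lem:phase1}, $\|\vw_j^{(1)}\|_1 \le \tfrac12 + O(1/\featuresize)$ and $|b_j|\le 1$, so each hidden feature $\phi_j(\seq) = \sigma(\langle \vw_j^{(1)},\seq\rangle+b_j)$ is uniformly $O(1)$; the OGD-style regret argument underpinning \cref{lem:phase2} then pins $\|a^{(t^\dagger)}\|_2$ to the scale of a comparator $a^\star$ realizing $y$, yielding $|\teacher^{(t^\dagger)}(\seq)| = O(1)$ uniformly in $\seq$. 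A first-order optimality / one-dimensional scaling argument along $a \mapsto \alpha a$ near $\alpha = 1$ (or, more cleanly, working with the Polyak--Ruppert averaged iterate if the proof of \cref{lem:phase2} uses one) upgrades this boundedness to $\mathbb{E}[\teacher^{(t^\dagger)}(\seq)\,y] \le 1 + O(\epsilon)$. Combined with the identity $\mathbb{E}[(\teacher y-1)^+] = \mathbb{E}[(1-\teacher y)^+] + \mathbb{E}[\teacher y] - 1$, this gives $\mathbb{E}[(\teacher^{(t^\dagger)}y-1)^+] = O(\epsilon)$ and hence the corollary, up to rescaled constants. The hardest step is this iterate boundedness upgrade, because it has to exploit the specific Phase~2 dynamics rather than the loss bound alone.
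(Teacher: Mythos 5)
Your opening reduction is correct, and it takes a genuinely different (and in some ways sharper) route than the paper: the paper's own argument for \cref{cor:phase2} is a two-line Parseval-style sketch --- hinge loss $\le \epsilon$ forces $\mathbb{E}[\teacher^{(t^\dagger)}(\seq)\,y] \ge 1-\epsilon$, hence the Fourier mass on the singletons must be small ``depending on how saturated the model is'' --- whereas your identity $\abs{\mathbb{E}[\teacher^{(t^\dagger)}(\seq)x_i]} \le \mathbb{E}\abs{1-\teacher^{(t^\dagger)}(\seq)y} = \mathbb{E}[(1-\teacher^{(t^\dagger)}(\seq)y)^+] + \mathbb{E}[(\teacher^{(t^\dagger)}(\seq)y-1)^+]$ works in $L^1$ and would give an $\epsilon$-scale bound directly rather than the $\sqrt{\epsilon}$-scale bound that a naive Parseval computation yields. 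Both arguments, however, hinge on the same unproven ingredient: an upper bound on the model's ``saturation,'' i.e.\ on $\mathbb{E}[\teacher^{(t^\dagger)}(\seq)y]$ (equivalently, on your overshoot term). You correctly identify this as the crux; the paper simply hedges on it.

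The gap is that neither of the two mechanisms you propose for the overshoot control works as stated. First, the SGD analysis behind \cref{lem:phase2} is a regret bound against a comparator; it does not ``pin'' $\norm{a^{(t^\dagger)}}_2$ to the comparator's scale. The generic iterate bound one actually gets is $\norm{a^{(t)}}_2 \le \norm{a^{(0)}}_2 + \eta_2 t G$, and with $\eta_2 = 4\featuresize^{1.5}/(\dims\sqrt{\hid(\secondstagelen-1)})$, $G = O(\sqrt{\hid})$ and $\secondstagelen = \Omega(\hid\dims^2\featuresize^3/\epsilon^2)$ this scales like $\featuresize^{3}\sqrt{\hid}/\epsilon$, so after converting to $\norm{a}_1$ the best uniform bound is $\abs{\teacher^{(t^\dagger)}(\seq)} = O(\featuresize^3\hid/\epsilon)$, nowhere near $O(1)$. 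Second, the first-order-optimality/scaling step is unavailable: $t^\dagger$ is merely the best iterate along a stochastic trajectory, not a stationary point, and even at an exact minimizer the hinge loss is flat on the overshoot region, so optimality cannot force $\mathbb{E}[\teacher y]\le 1+O(\epsilon)$. Indeed, the function $f(\seq)=2\chi_{\support}(\seq)+x_1$ has zero hinge loss yet correlation $1$ with $x_1$, so no argument that uses only the value of the loss (or only stationarity) can yield the corollary; any complete proof must exploit the specific structure of the phase-2 iterate (its closeness in $L^2$ to $\chi_{\support}$, bounded second-layer norm along the trajectory, or the like), which is precisely what the paper's hedge assumes and what your proposal does not establish. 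As written, the proof attempt therefore does not prove \cref{cor:phase2}.
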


The proof follows from the fact that if the correlation along $y = \prod_{i \in \support} x_i$ is large ($\ge 1-\epsilon$ as hinge loss is below $\epsilon$), the correlations along other Fourier basis functions will be small. Hence, depending on how saturated the model is, the signal along the support elements are small.

We will use a slightly modified version of \cref{lem:phase2} with higher sample complexity in the first phase, to ensure the stronger conditions of \cref{cor:phase1} hold true as well. This will be necessary to get improved signal to teach a smaller student.\footnote{We haven't optimized the error bounds in \cref{cor:phase1}. Our sample complexity bounds are likely loose in  \cref{cor:phase2_modified}}
\begin{corollary}[Modified Version of \cref{lem:phase2}]
\label{cor:phase2_modified}
    Fix $\epsilon, \delta > 0$. Suppose $\hid \ge \Omega(2^{\featuresize} \featuresize \log (\featuresize/\delta))$, $\dims \ge \Omega \left( \featuresize^4 \log (\featuresize \dims/\epsilon) \right)$. Furthermore, suppose $B_1 \ge \Omega(\abs{\zeta_{\featuresize-1}}^2 \featuresize^2 \dims^{10/3} \log(\featuresize \dims / \epsilon) )$ s.t. the weights satisfy the conditions in \cref{cor:phase1} with $\err = \mathcal{O}(\abs{\zeta_{\featuresize-1}} \featuresize^{-1} \dims^{-5/3})$ after the first phase. Then after $\secondstagelen = \Omega(\hid \dims^2 \featuresize^3 / \epsilon^2)$ steps of training with batch size $B_2=1$, learning rate $\eta_2 = 4 \featuresize^{1.5} / (\dims \sqrt{\hid (\secondstagelen-1)})$ and decay $\lambda_2 = 0$, we have  with expectation over the randomness of the initialization and the sampling of the batches:
    \begin{align*}
        \min_{t \in [\secondstagelen]} \mathbb{E} \left[ \loss_{\theta^{(t)}} (\seq, y) \right] \leq \epsilon.
    \end{align*}
    Thus, the minimal sample complexity to reach a loss of $\epsilon$ is given by
    \begin{align*}
        \firststagelen \times B_1 + \secondstagelen \times B_2 &= \Theta(\abs{\zeta_{\featuresize-1}}^2 \dims^{10/3} \featuresize^2 \log(\featuresize \dims / \epsilon) ) + \Theta(\hid \dims^2 \featuresize^3 / \epsilon^2) \\&= \Theta(\dims^{\featuresize+7/3} \featuresize^2 \log(\dims \featuresize/\epsilon) + 2^{\featuresize} \dims^2 \featuresize^4 \epsilon^{-2} \log (\featuresize/\delta)).
    \end{align*}
\end{corollary}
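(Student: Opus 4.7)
The plan is to treat \cref{cor:phase2_modified} as a direct strengthening of \cref{lem:phase2}: the hypothesis and conclusion differ only in that the first-phase batch size $B_1$ has been inflated by a factor of $\dims^{10/3}$, which drives the Hoeffding gradient-estimation tolerance $\err$ down from $\mathcal{O}(\abs{\zeta_{\featuresize-1}}\featuresize^{-1})$ to $\mathcal{O}(\abs{\zeta_{\featuresize-1}}\featuresize^{-1}\dims^{-5/3})$. This tightening is precisely what is needed so that the weights after phase one satisfy the stronger \emph{correlation} hypotheses of \cref{cor:phase1}, not just the weight-coordinate hypotheses of \cref{lem:phase1}. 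Once that is in place, the second phase is word-for-word the argument in \cref{lem:phase2} (itself inherited from \citet{BarakEGKMZ22}), and the final sample complexity is obtained by summing $\firststagelen B_1 + \secondstagelen B_2$ with $\firststagelen = B_2 = 1$ and substituting $\abs{\zeta_{\featuresize-1}}^{-2} = \Theta(\dims^{\featuresize-1})$.

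For the first phase I would reuse \cref{claim:gradient_at_init} verbatim and invoke a sharper instance of \cref{claim:hoeffding}: a batch of size $B_1 \geq \Omega(\err^{-2}\log(\hid\dims/\delta))$ keeps every coordinate of the empirical gradient at initialization within $\err$ of its population value. Choosing $\err = \mathcal{O}(\abs{\zeta_{\featuresize-1}}\featuresize^{-1}\dims^{-5/3})$ gives $B_1 = \Omega(\abs{\zeta_{\featuresize-1}}^{-2}\featuresize^2\dims^{10/3}\log(\hid\dims/\delta))$. One step of gradient descent with $\eta_1 = \hid/(\featuresize\abs{\zeta_{\featuresize-1}})$ and $\lambda_1 = 1$ then fires \cref{lem:phase1}, and the sharper $\err$ activates the last line of the proof of \cref{cor:phase1} (the cancellation $\mathcal{O}(\err\dims^{5/3}\abs{\zeta_{\featuresize-1}}^{-1}) = o(\featuresize^{-1})$), establishing $\abs{c_j}\geq \Omega(\featuresize^{-1})$ on $\support$ and $\abs{c_j}\leq \mathcal{O}((\featuresize\dims)^{-1})$ off $\support$, with all even-$\abs{S}$ coefficients $o(\featuresize^{-1})$.

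The second phase is then copied unchanged from \cref{lem:phase2}. With phase-one features frozen, optimizing $\{a_i\}$ under hinge loss is a convex problem; items 1--3 of \cref{lem:correlation_phase1} certify that some $\ell_2$-small linear combination achieves loss $\mathcal{O}(\epsilon)$ after absorbing the $\mathcal{O}(\hid^{-1/2})$ concentration slack via the width assumption $\hid = \tilde{\Omega}(2^{\featuresize}\featuresize)$, while item 4 bounds the feature $\ell_2$-norm by $\mathcal{O}(\dims/\featuresize)$. A Zinkevich-type online SGD rate then gives an iterate of expected loss $\leq \epsilon$ after $\secondstagelen = \Omega(\hid\dims^2\featuresize^3/\epsilon^2)$ steps with the prescribed $\eta_2$. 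Adding the two contributions and simplifying yields the stated $\Theta(\dims^{\featuresize+7/3}\featuresize^2\log(\dims\featuresize/\epsilon) + 2^{\featuresize}\dims^2\featuresize^4\epsilon^{-2}\log(\featuresize/\delta))$.

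The main obstacle, and the reason for the specific exponent $10/3$ rather than something weaker, is verifying that the tolerance $\err = \mathcal{O}(\abs{\zeta_{\featuresize-1}}\featuresize^{-1}\dims^{-5/3})$ is the sharpest one that suppresses the even-degree Fourier cross-terms in \cref{cor:phase1}. This reduces to the $\ell_1$-Fourier bound $\sum_{S\subseteq[\dims]}\abs{\mathbb{E}_{\seq}[\text{Maj}(\seq)\chi_S(\seq)]} = \mathcal{O}(\dims^{2/3})$ for the majority function; combined with the uniform coefficient bound $\abs{c_S}\leq \mathcal{O}(\err\dims\abs{\zeta_{\featuresize-1}}^{-1})$ from item 3 of \cref{lem:correlation_phase1}, this produces the $\dims^{5/3}$ factor in the even-$\abs{S}$ remainder. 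I would verify this spectral sum carefully, since a looser bound would propagate into a worse $\dims$-scaling in the first-phase sample complexity. Everything else is mechanical transfer from \cref{lem:phase1,lem:correlation_phase1,cor:phase1,lem:phase2}.
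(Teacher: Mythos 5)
Your proposal is correct and matches the paper's own (largely implicit) argument: the paper obtains \cref{cor:phase2_modified} exactly by rerunning \cref{lem:phase2} with the first-phase Hoeffding tolerance tightened to $\err = \mathcal{O}(\abs{\zeta_{\featuresize-1}}\featuresize^{-1}\dims^{-5/3})$ — i.e.\ $B_1$ inflated by $\dims^{10/3}$ so that the even-degree remainder $\mathcal{O}(\err\dims^{5/3}\abs{\zeta_{\featuresize-1}}^{-1})$ in \cref{cor:phase1} becomes $o(\featuresize^{-1})$ — while the second-phase analysis and the summation $\firststagelen B_1 + \secondstagelen B_2$ are unchanged. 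Your reading of the batch-size condition as $B_1 = \Omega(\abs{\zeta_{\featuresize-1}}^{-2}\featuresize^2\dims^{10/3}\log(\cdot))$ (inverse power of $\zeta_{\featuresize-1}$) is the one consistent with the paper's final arithmetic, so no gap there.
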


\subsection{Analysis for the student}
\label{sec:proof_student}

\begin{proof}[Proof of \cref{thm:sample_complexity}]
We will first prove the sample complexity upper bound for progressive distillation, followed by a sample complexity lower bound for distillation.

   \textbf{Sample complexity for Progressive distillation:} Under progressive distillation, the label is given by $\teacher^{(\firststagelen)}$ for the first $\firststagelen$ steps. We will follow similar steps as \cref{lem:phase1}, where the label is replaced by $\teacher^{(\firststagelen)}$. \cref{claim:gradient_at_init} changes, while \cref{claim:hoeffding} stays the same. We will showcase the change in \cref{claim:gradient_at_init} here.

    At initialization, the population gradient of the weight vector in neuron $i$ at coordinate $j$ is given by
    \begin{align*}
        &  \mathbb{E}_{\seq, y} \nabla_{\studentparam{w}^{(0)}_{ij}} \KLloss(\seq, y; \student^{(0)}, \teacher) \\
        & = - \mathbb{E}_{\seq, y} \nabla_{\studentparam{w}^{(0)}_{ij}} \student^{(0)}(\seq) \teacher^{(\firststagelen)}(\seq)  \\
        & = - a_i \mathbb{E}_{\seq, y} \mathbb{I}\left[ \langle \studentparam{\vw}^{(0)}_i, \seq \rangle + \studentparam{b}_i \ge 0 \right] \teacher^{(\firststagelen)}(\seq) x_j \\&
        = - a_i \mathbb{E}_{\seq, y}  \left( \frac{1}{2} + \frac{1}{2}\text{Maj} (\studentparam{\vw}^{(0)}_i, \seq) \right) \teacher^{(\firststagelen)}(\seq) x_j \\&
        = - a_i \frac{1}{2} \mathbb{E}_{\seq, y}  \teacher^{(\firststagelen)}(\seq) x_j - a_i\frac{1}{2} \mathbb{E}_{\seq, y}  \text{Maj} (\studentparam{\vw}^{(0)}_i, \seq) \teacher^{(\firststagelen)}(\seq) x_j, 
    \end{align*}
    where the relation between $\mathbb{I}\left[ \langle \studentparam{\vw}^{(0)}_i, \seq \rangle + \studentparam{b}_i \ge 0 \right]$ and $\text{Maj} (\studentparam{\vw}^{(0)}_i, \seq)$ follows because of $\abs{\studentparam{b}_i} < 1$ at initialization. From \cref{cor:phase1}, 
    \begin{align*}
        &\abs{\mathbb{E}_{\seq, y} \nabla_{\studentparam{w}^{(0)}_{ij}} \KLloss(\seq, y; \student^{(0)}, \teacher)} \ge \Omega(\featuresize^{-1}), \quad \text{ if } j \in [\featuresize], \\&
        \abs{\mathbb{E}_{\seq, y} \nabla_{\studentparam{w}^{(0)}_{ij}} \KLloss(\seq, y; \student^{(0)}, \teacher)} \le o(\featuresize^{-1}), \quad \text{ if } j \notin [\featuresize].
    \end{align*}
    Thus, a fourier gap exists between the population gradients on in-support and out-of-support coordinates in the gradients. We can then apply \cref{claim:hoeffding} to show that a finite batch size of $B_1 \ge \Omega(\featuresize^2 \log (\dims \studentparam{\hid}/\delta))$ is sufficient to maintain this gap between the coordinates in support and out of support. Thus, the change in the necessary sample complexity comes from the reduced sample complexity in the first phase. The proof for the second phase training is exactly equal to the proof for the teacher in \cref{thm:sample_complexity}.

    \textbf{Sample complexity for Distillation:} On the other hand, for the  teacher checkpoint with loss $\mathcal{O}(\dims^{-c})$, the correlation to the monomial terms in the support is bounded by $\mathcal{O}(\dims^{(-c)})$ (by \cref{cor:phase2}). If we want to learn from the correlations to the support, we need the number of samples to be at least $\Omega(\dims^{2c})$ as the gradient noise needs to be lower than $\mathcal{O}(\dims^{-c})$ (by \cref{claim:hoeffding}).
    To learn the support from the true label, we need the number of samples to be at least $\Omega(\dims^{\featuresize-1})$,
    by the following result:
    \begin{lemma}[Width-optimization trade-off, cf. Proposition 3 in \citep{edelman2023pareto}]
    \label{thm:optim_size_tradeoff}
    For $\delta > 0$, gradient noise $\err > 0$, 
    and model width $\hid > 0$,
    if $T \le \frac{1}{2} { \dims \choose \featuresize } \frac{ \delta \err^2 }{ \hid }$,
    then there exists a $(\dims, \featuresize)$-sparse parity such that w.p. at least $1-\delta$ over the randomness of initialization and samples,
    the loss is lower bounded as $\loss(\teacher^{(t)}) \ge 1-\err$ for all $t \in \{1 \cdots T\}$.
    \end{lemma}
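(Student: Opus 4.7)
The plan is to prove this as a statistical query (SQ) lower bound, adapting the classical SQ hardness of parity learning \citep{kearns1998efficient} to the gradient-based training setting. The key observation is that one step of SGD on a width-$\hid$ network can be simulated by $\hid$ (vector-valued) statistical queries: for each neuron, the gradient of the per-example loss with respect to its weights is the expectation of an input-dependent quantity. Thus an execution of training for $T$ steps with $\err$-gradient noise corresponds to $\hid T$ $\err$-tolerant SQ calls, and the final model $\teacher^{(t)}$ is a deterministic function of the returned (noisy) responses together with the initialization.

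The core Fourier-analytic argument is as follows. For each neuron query, the gradient ``signal'' that distinguishes the sparse parity on support $\support$ from a label-free distribution is governed by the Fourier coefficient $\hat{\psi}(\support)$ of a scalar base function $\psi : \{\pm 1\}^{\dims} \to [-1, 1]$ (essentially the label times the neuron's post-activation derivative). Since $\sum_{S} \hat{\psi}(S)^2 \le \mathbb{E}[\psi^2] \le 1$ by Parseval, the number of supports $\support$ on which the noiseless vector gradient deviates from the uniform-label gradient by more than $\err$ in any coordinate is at most $1/\err^2$. For all other supports, an adversary can return the uniform-label gradient, and the neuron's query reveals no information about $\support$.

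Counting then gives the desired bound. Across $\hid T$ queries, at most $\hid T / \err^2$ of the $\binom{\dims}{\featuresize}$ supports can be ruled out by the trajectory. Under the hypothesis $T \le \tfrac{1}{2} \binom{\dims}{\featuresize} \delta \err^2 / \hid$, this quantity is $\le \tfrac{\delta}{2} \binom{\dims}{\featuresize}$; thus for at least a $1 - \tfrac{\delta}{2}$ fraction of supports, the trajectory is statistically indistinguishable from the trajectory on a label-free distribution. For such supports the output $\teacher^{(t)}$ has negligible correlation with $y$, so $\mathbb{E}[\teacher^{(t)}(\seq) y] \le \err$, and the hinge loss satisfies $\mathbb{E}[\max(0, 1 - \teacher^{(t)}(\seq) y)] \ge 1 - \err$. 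A standard averaging argument then exhibits a specific $\support^\star$ for which, with probability $\ge 1 - \delta$ over the randomness of initialization and samples, the loss remains $\ge 1 - \err$ throughout $t \in \{1, \dots, T\}$.

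The main obstacle I anticipate is making the adaptive-SQ accounting tight: each query depends adaptively on previously received noisy answers, so the ``one query rules out at most $1/\err^2$ supports'' bound must be composed along the trajectory, most cleanly via an adversarial strategy that, at each step, returns the uniform-label answer whenever it lies within $\err$ of the true answer. A secondary subtlety is that the $\dims$-dimensional per-neuron gradient should be treated as deriving from a single scalar base query rather than $\dims$ independent scalar queries; this is what prevents a spurious factor of $\dims$ in the bound and is where the vector-valued nature of SGD gradients must be handled with care.
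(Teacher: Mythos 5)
The paper does not actually prove this lemma: it is imported as a known result (cf.\ Proposition 3 of \citet{edelman2023pareto}), so there is no in-paper proof to compare against. Your sketch follows essentially the same route as the cited source — viewing each SGD step of a width-$\hid$ network as $\hid$ $\err$-tolerant statistical queries, using Parseval to bound by $1/\err^2$ the number of parity supports any single query can distinguish from the label-free answer, counting over $\hid T$ queries, and averaging to fix a hard support — so it is the standard argument rather than a different one. The one step you gloss is that the final hinge-loss bound also needs the \emph{trained model's} correlation with the unseen parity to be small, which requires either counting the model/loss evaluations among the queries or applying Parseval together with a bound on $\| \teacher^{(t)} \|_2$ (and, relatedly, the adversarial-noise SQ model must be argued to subsume the bounded stochastic gradient noise in the lemma statement); both points are handled in the cited proof and are worth making explicit.
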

    This result implies that for a fixed batch size (and hence a fixed $\err$),
    we either require a bigger width, or more number of gradient steps (which translates to sample complexity since we are using fresh samples each batch).
    Hence, for the model to learn the support from a combination of the two components, it needs a sample complexity at least $\Omega(\dims^{\min(2c, \featuresize-1)}/\studentparam{\hid})$.
\end{proof}

\section{Results on sparse parity and its generalization}
\label{app:parity}

\begin{figure}
    \centering
    \includegraphics[width=0.7\textwidth]{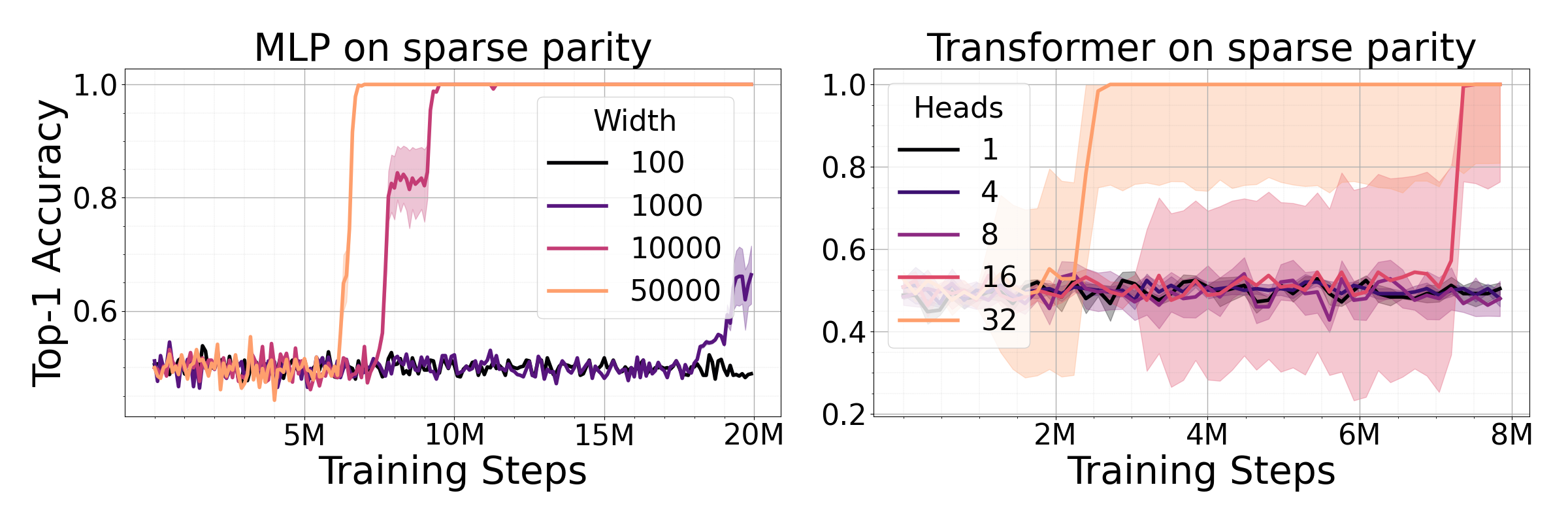}
    \caption{\textbf{Larger models learn sparse parity faster.}
    A larger model has more width (MLP, \textit{left}) or more attention heads (Transformers, \textit{right}).
    The results are for $(100, 6)$-parity, aggregated over 5 runs for each setup.
    }
    \label{fig:6parity_compare_sizes}
\end{figure}

\subsection{Additional results on sparse parity with MLP}
 \label{sec:app_sparse_parity_mlp}

We take both the teacher and student models to be 1-hidden-layer MLPs with ReLU activations.
The teacher has a hidden width of $5 \times 10^{4}$, and the students are of widths $10^2$ or $10^3$.
All models are trained using SGD with batch size 1 for $20M$ steps on sparse parity data with $n=100$ and $\featuresize =6$ (\Cref{def:sparse_parity}).
The support is set to be the first 6 coordinates of the input vector without loss of generality.
The learning rate is searched over $\{10^{-2}, 5 \times 10^{-3}, 10^{-3}\}$.
Evaluation is based on a held-out set consisting of $4096$ examples, and
we report the average across $3$ different training seeds. 
For one-shot distillation, we use the teacher checkpoint at the end of training (20M checkpoint), at which point the teacher has fully saturated. For progressive distillation, we use $\nTeachers=200$ equally spaced teacher checkpoints that are 0.1M steps apart.

\begin{figure*}[htbp]%
    \centering
    \begin{subfigure}{\textwidth}
    \centering
    \includegraphics[width=\textwidth]{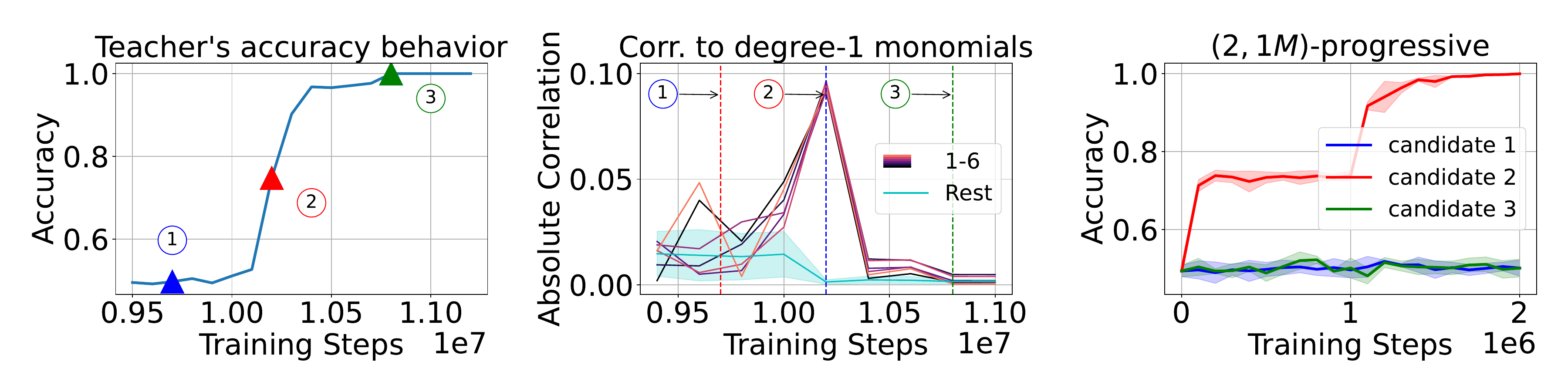}
    \end{subfigure}\hfill
    \caption{\looseness-1 Repeated experiments from \cref{fig:interm_ckpt_6feat} for a student of width $1000$.}
    \label{fig:interm_ckpt_6feat_width100}
\end{figure*}

\begin{figure*}[htbp]%
    \centering
    \begin{subfigure}{0.7\textwidth}
    \centering
    \includegraphics[width=\textwidth]{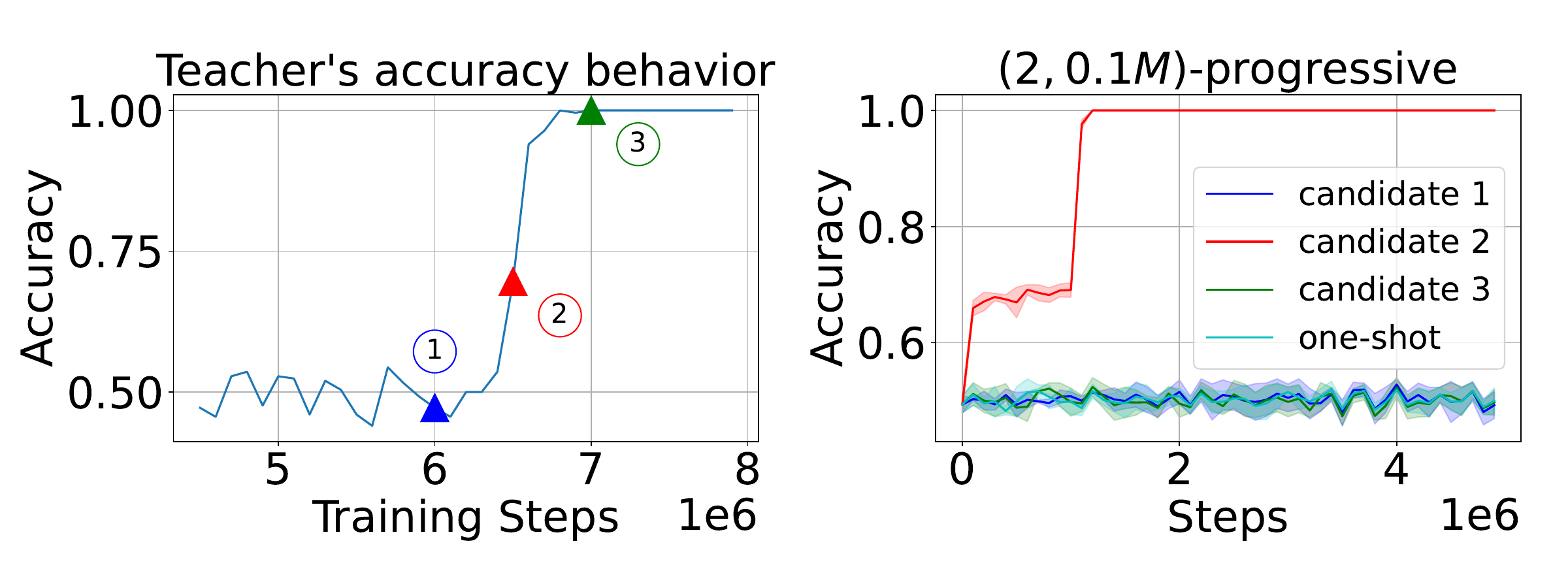}
    \end{subfigure}\hfill
    \begin{subfigure}{\textwidth}
    \centering

    \includegraphics[width=\textwidth]{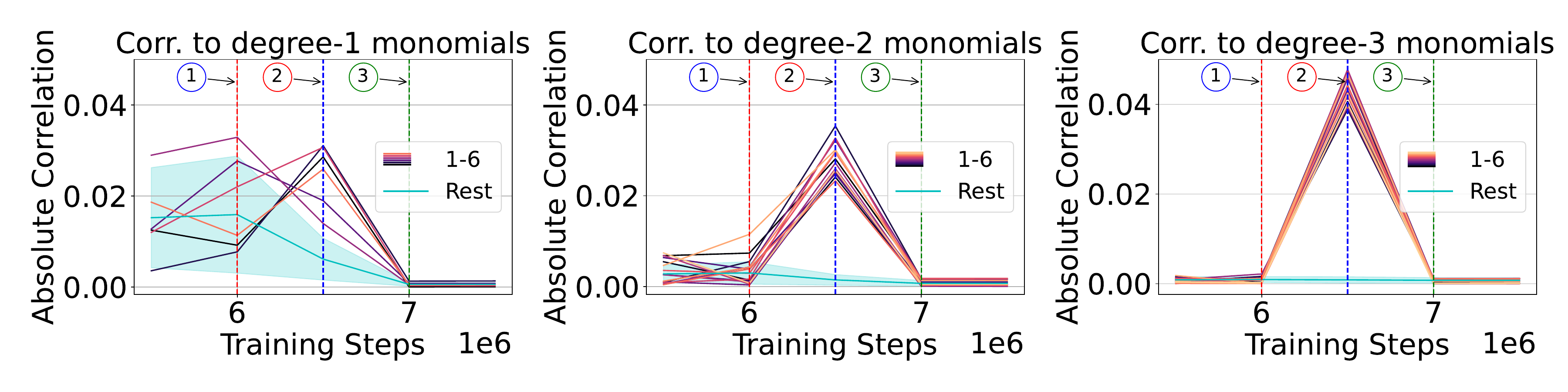}
    \end{subfigure}
    \caption{\looseness-1 Repeated experiments from \cref{fig:interm_ckpt_6feat} but for a different teacher. For $(2, 0.1M)$ progressive distillation, the checkpoint that lies in the middle of the second phase accelerates training the most.
    In \cref{fig:interm_ckpt_6feat}, we used a teacher that was trained with learning rate $5 \times 10^{-3}$. The correlation plot for the teacher to degree-1 monomials had a clear gap for degree-1 monomials in-support and out-of-support at the middle of the second phase (indicated by candidate 2). However, for a teacher that is trained with a higher learning rate $10^{-2}$, we didn't find such a clean gap in correlations for degree-1 monomials. 
    On the other hand, correlations to degree-2 and degree-3 monomials  showed a clean gap between in-support and off-support variables at the middle of the phase transition. Hence, the student needn't learn only from degree-1 monomials to get training acceleration, any low degree monomials suffice to teach the student about the support. Rest for degree-2 monomials refers to all monomials of the form $x_i x_j$ where atleast one of $i, j \notin \support$. Similar definition for degree-3 monomials.}
    \label{fig:interm_ckpt_6feat_behavior_lr0.01}
\end{figure*}

\subsection{Learning with Transformers: parallel search with attention heads}
\label{sec:parity_transformer}

The benefit of progressive distillation and the implicit curriculum is not specific to MLP.
This section presents similar results with Transformers~\citep{vaswani2017attention}.
The $\dims$-dimensional input vector is now treated as a length-$\dims$ sequence, and the label is predicted using the last token's output.
We fix the support $\support$ to be the first 6 coordinates of the sequence. Note that unlike MLP, Transformer's learning is not permutation-invariant to the location of $\support$ due to the causal mask.
Nevertheless, given the same $\support$, the comparison on learning speed is still meaningful.

For Transformers, the parallel queries come from both the MLP width and also the \textit{number of attention heads}.
To illustrate this, consider the following two solutions (which we formalize in \Cref{app:parity_transformer_proof})
to sparse parity:
The first solution uses attention to locate the support and then uses MLP to compute the product of the in-support variables.
The second solution copies over all variables to the final position, whose MLP is then responsible for both identifying the support and computing the product.
The second solution is less interesting as it reduces to an MLP, so we focus on the first solution in the following, which utilizes the attention mechanism unique to Transformers.

\begin{result}[More attention heads helps with the search for support]
Our experiments are based on 2-layer Transformers
\footnote{We use 2 layers since 1-layer Transformers are hard to train empirically, despite being representationally sufficient to solve sparse parity.} with 8 dimensions per attention head.
As shown in \Cref{fig:6parity_compare_sizes} (right), increasing the number of heads makes learning faster.
There are clear phase transitions similar to the MLP case.
\end{result}

\paragraph{Ablation with other ways to vary the model size}
Most Transformer experiments in this work keep the per-head dimension to be fixed and vary the number of attention heads between the teacher and the student.
The MLP input dimension is the sum of the attention head dimensions, so a student with fewer heads will have a smaller MLP than the teacher, which is preferable in terms of efficiency.
Fixing the per-head dimension is a widely adopted setup in practice, such as in the Llama series~\citep{touvron2023llama}.
We now additionally consider two other ways to vary the model size.
In particular, we vary the number of heads, while
1) fixing the hidden dimension (i.e. the total dimension of all heads concatenated) to be 256,
or 2) fixing the dimension of each head to be 256 and averaging the output from each head, in which case the hidden dimension is also 256.
These two setups are less common in practice but nevertheless serves as complementary evidence: the performance difference comes solely from the number of attention heads, as the MLP dimension is kept the same.
As shown in \Cref{fig:L1F6_2layer_CE_vary_sizes} (b,c), increasing the number of attention also increases the training speed in these two setups.

\begin{figure}[!htbp]
    \centering
    \begin{subfigure}{0.32\textwidth}
    \centering
    \includegraphics[width=\textwidth]{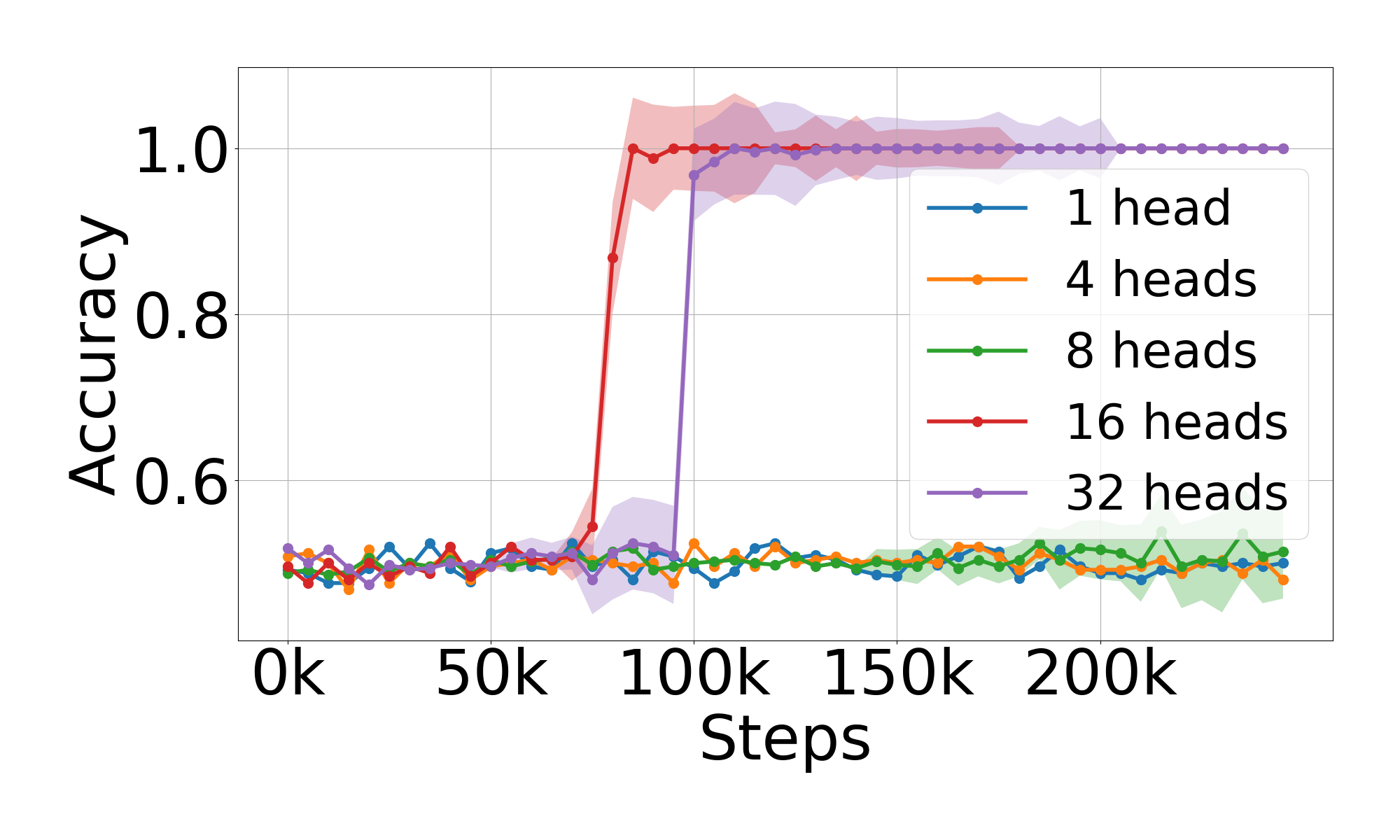}
    \subcaption{Per-head dimension = 8}
    \label{fig:L1F6_2layer_headDim8}
    \end{subfigure}\hfill
    \centering
    \begin{subfigure}{0.32\textwidth}
    \centering
    \includegraphics[width=\textwidth]{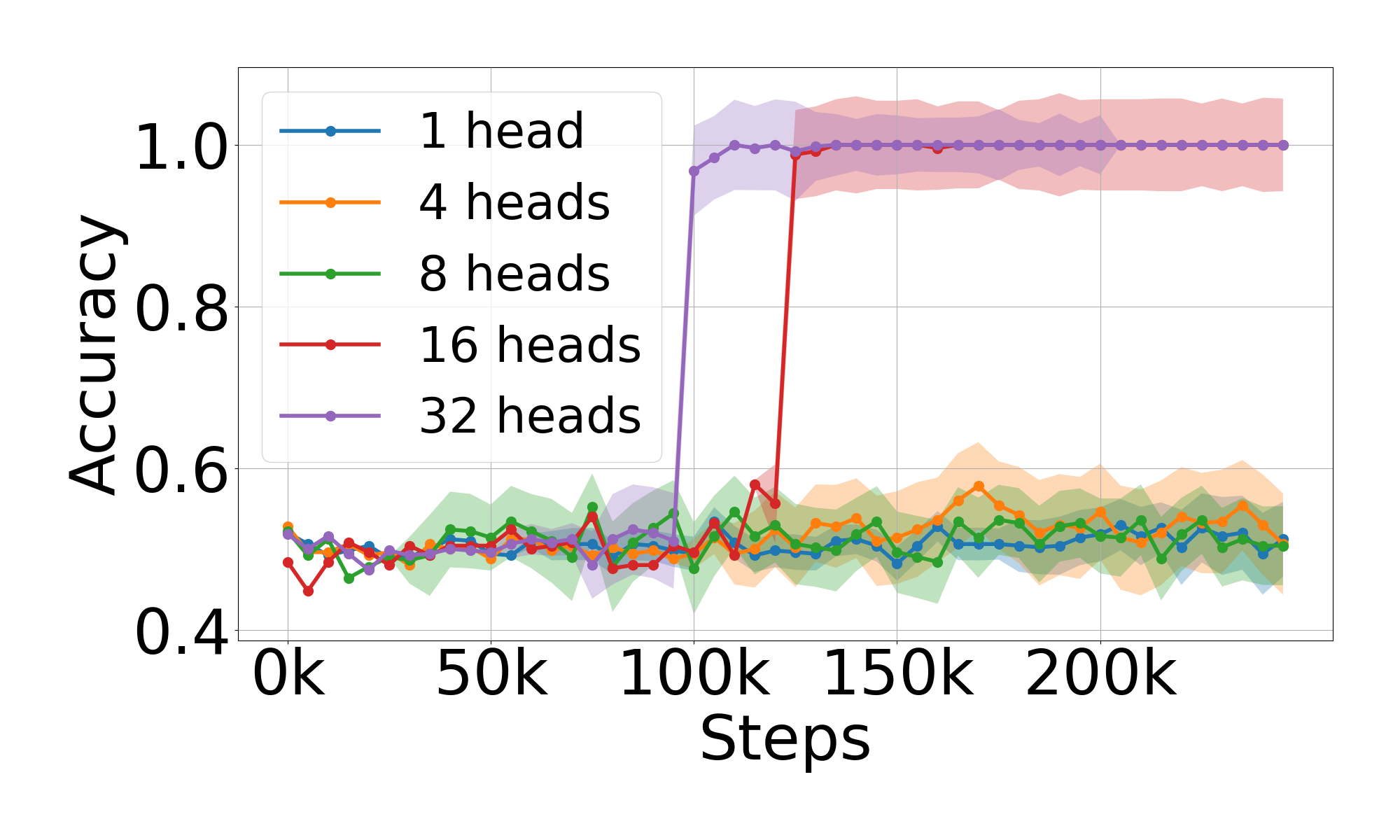}
    \subcaption{Hidden dimension = 256}
    \label{fig:L1F6_2layer_hidden256}
    \end{subfigure}\hfill
    \centering
    \begin{subfigure}{0.32\textwidth}
    \centering
    \includegraphics[width=\textwidth]{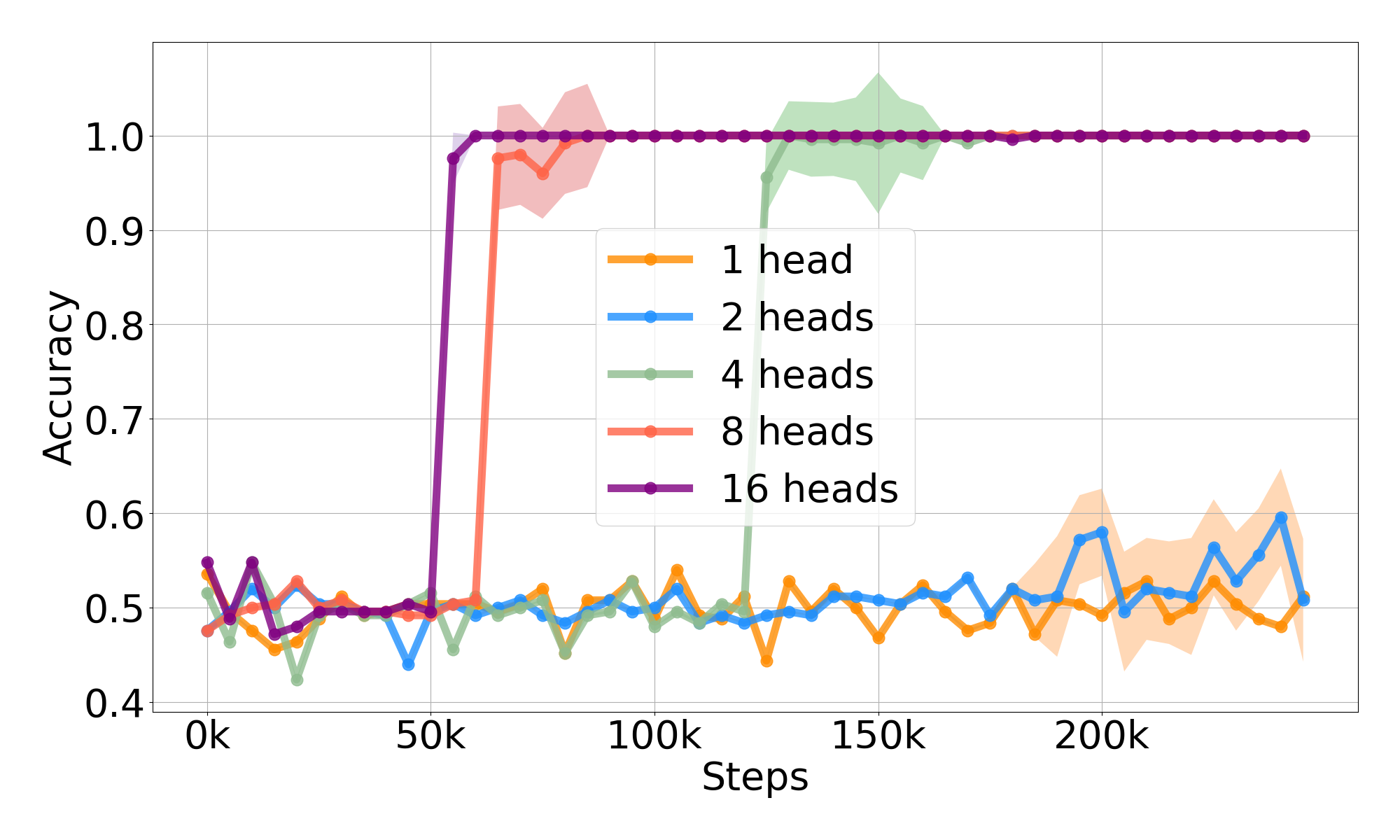}
    \subcaption{Dimension 256, averaging heads}
    \label{fig:L1F6_2layer_avg_heads}
    \end{subfigure}\hfill
    \caption{Increasing the number of attention heads speeds up training.
    Each plot compares the accuracy throughout training for 2-layer models with various heads, while fixing:
    (a) the per-head dimension to 8;
    (b) the MLP hidden dimension to 256;
    (c) both the per-head and MLP hidden dimension to 256, by averaging (rather than concatenating) the heads.
    We report runs with the learning rate that has the highest mean accuracy and break tie with training speeds.
    The shadows show the variances of the runs.
    }
    \label{fig:L1F6_2layer_CE_vary_sizes}
\end{figure}

\begin{figure}[h]
\centering
    \makebox[\textwidth]{\makebox[1.2\textwidth]{
    \begin{minipage}{.6\textwidth}
      \centering
        \includegraphics[width=\textwidth]{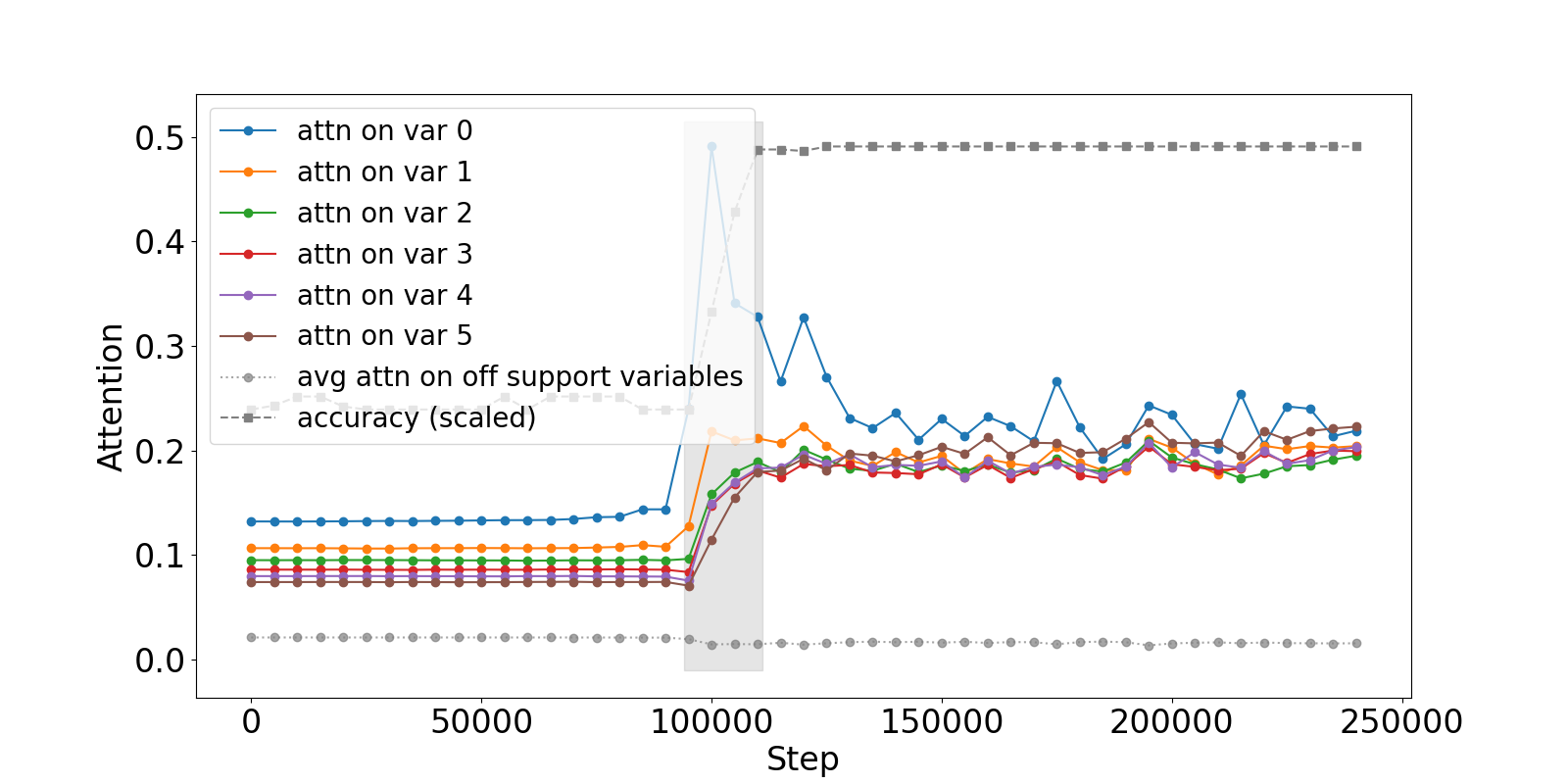}
    \end{minipage}%
    \begin{minipage}{.6\textwidth}
      \centering
        \includegraphics[width=\textwidth]{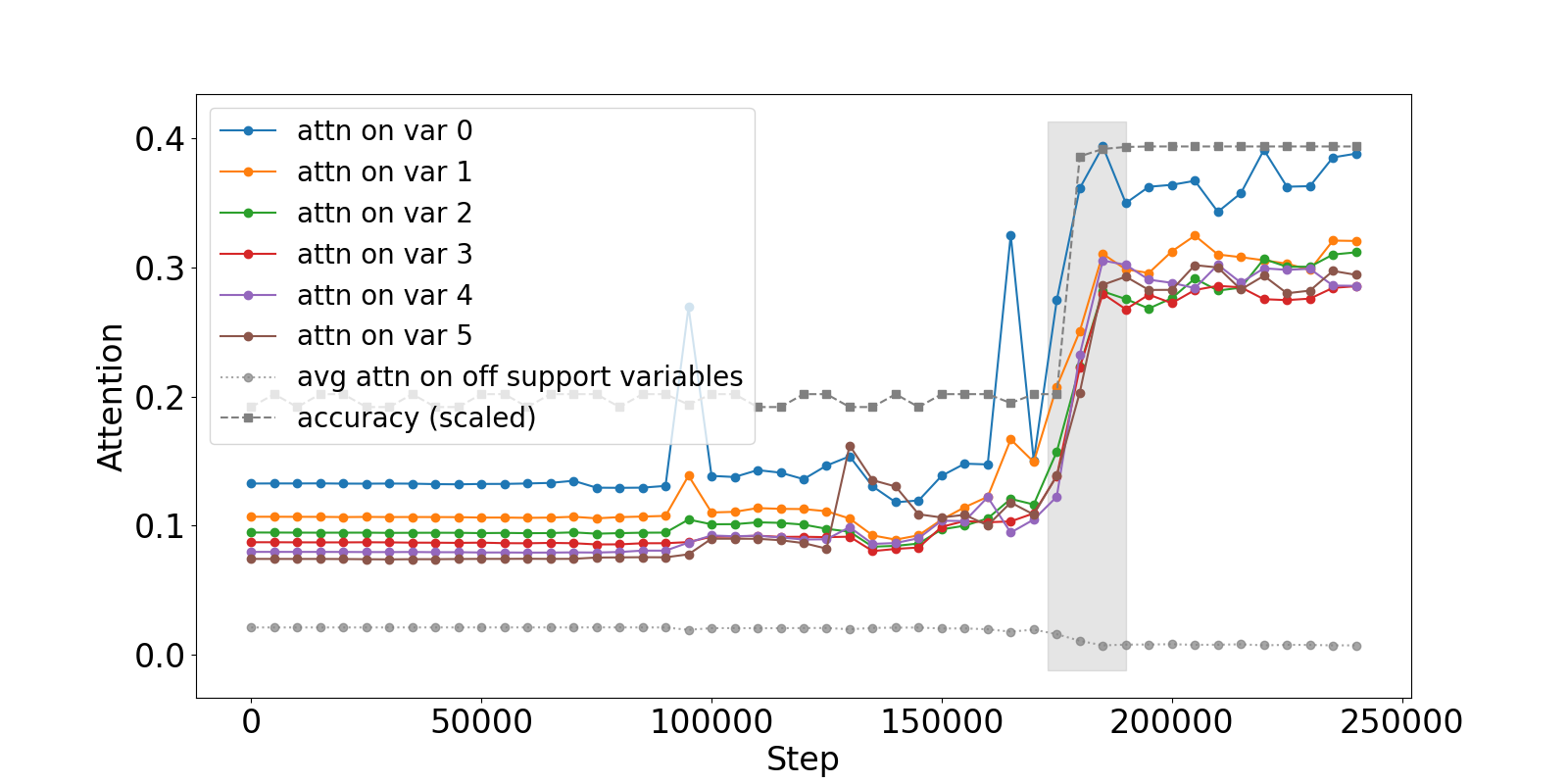}
    \end{minipage}%
    }}
    \caption{
    \textbf{In-support attention growth co-occurs with accuracy increase}
    Attention on individual coordinates on or off the support of the sparse parity,
      taking the median of 1024 random binary input sequences.
      The shade highlights the teacher's phase transition period.
      The model accuracy is marked by the gray dashed line, with scale adjusted for better display.
      The two subfigures show the same type of results but with different randomness seeds.
      }
    \label{fig:L1H6_attn_sum}
\end{figure}

\paragraph{Ablation with 2-shot distillation}
We repeat the 2-shot distillation ablation for MLP.
We first confirm that the low-degree curriculum described in \cref{sec:low_degree} is also observed in Transformers.
As shown in \cref{fig:transformer_parity_correlation}, the 2-layer 32-head teacher model exhibits significantly higher correlation with the in-support monomials (i.e. $\{x_i\}_{i \in \support}$ than with off-support monomials during the phase transition.
\footnote{Note that the upper right subplot in \cref{fig:transformer_parity_correlation} has a second correlation spike with the in-support variables. However, supervising with this second checkpoint does not provide acceleration. 
This suggests that there might be mechanisms other than the low-degree curriculum at play.}
Then, we show in \Cref{fig:L1F6_2layer_distill_headDim8_with2shot} that using as few as 1 intermediate checkpoint suffices to significantly speeds up the training of the student.

\begin{figure}
    \centering
    \includegraphics[width=0.48\linewidth]{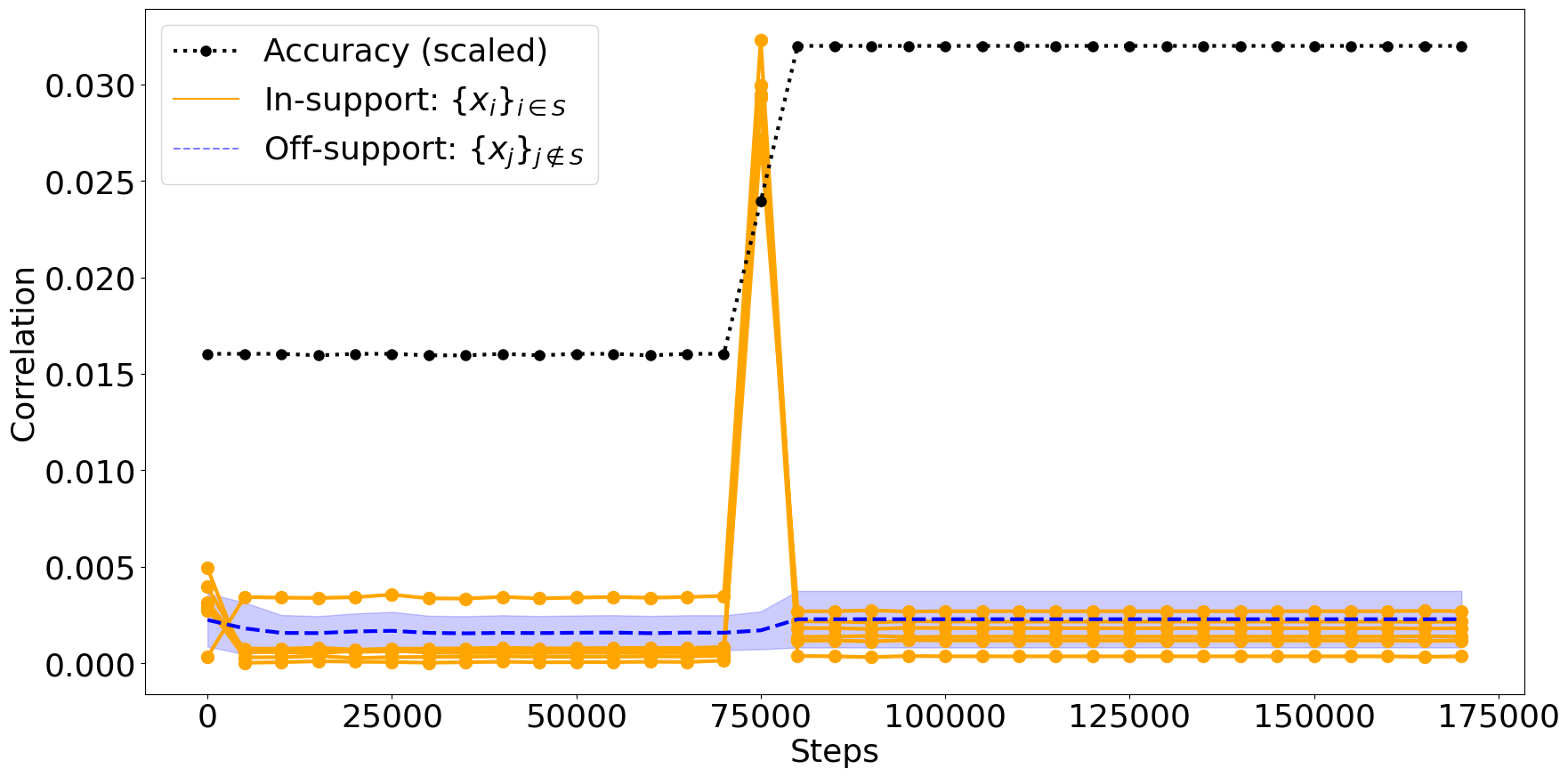}
    \includegraphics[width=0.48\linewidth]{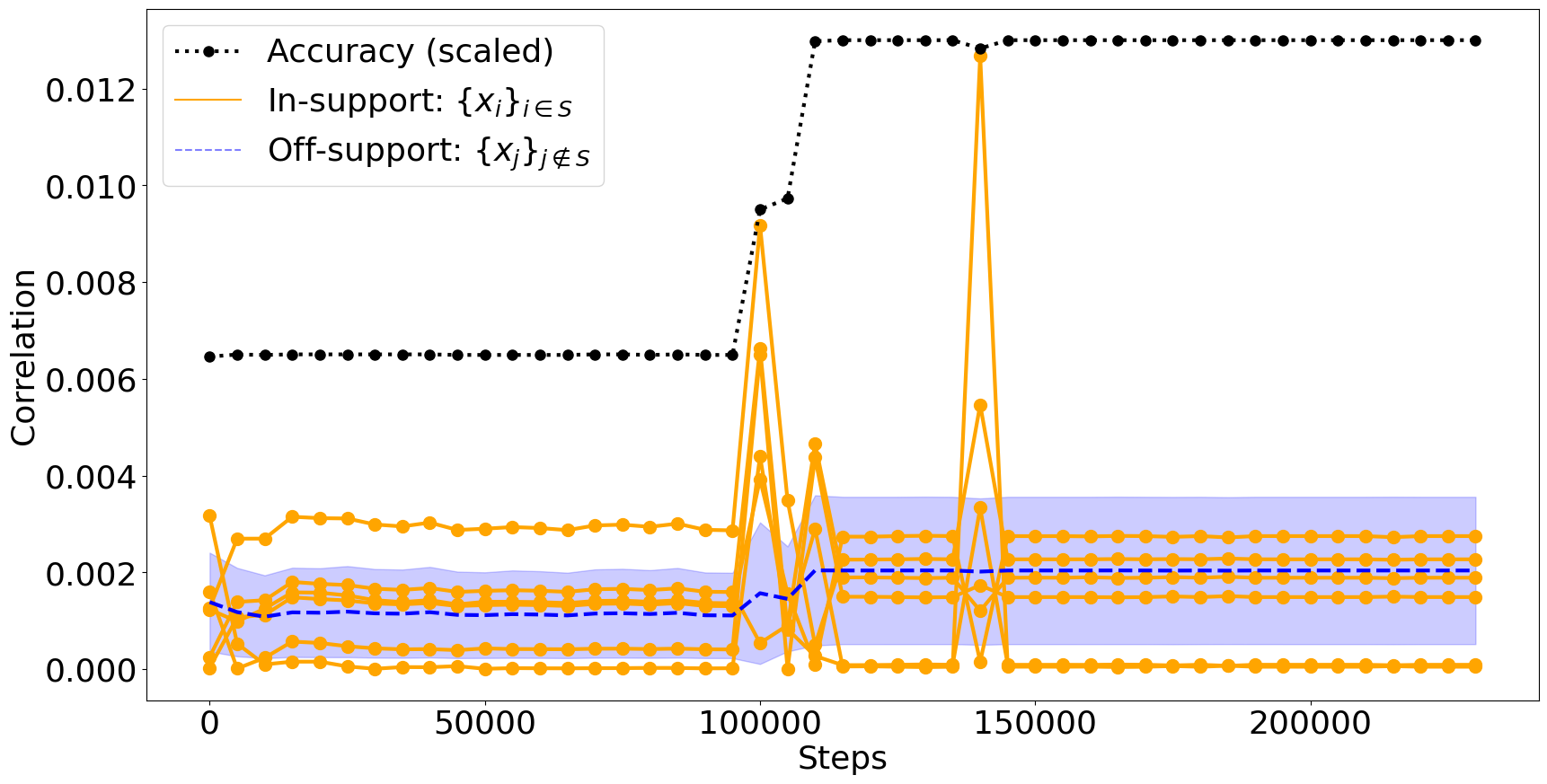}
    \includegraphics[width=0.48\linewidth]{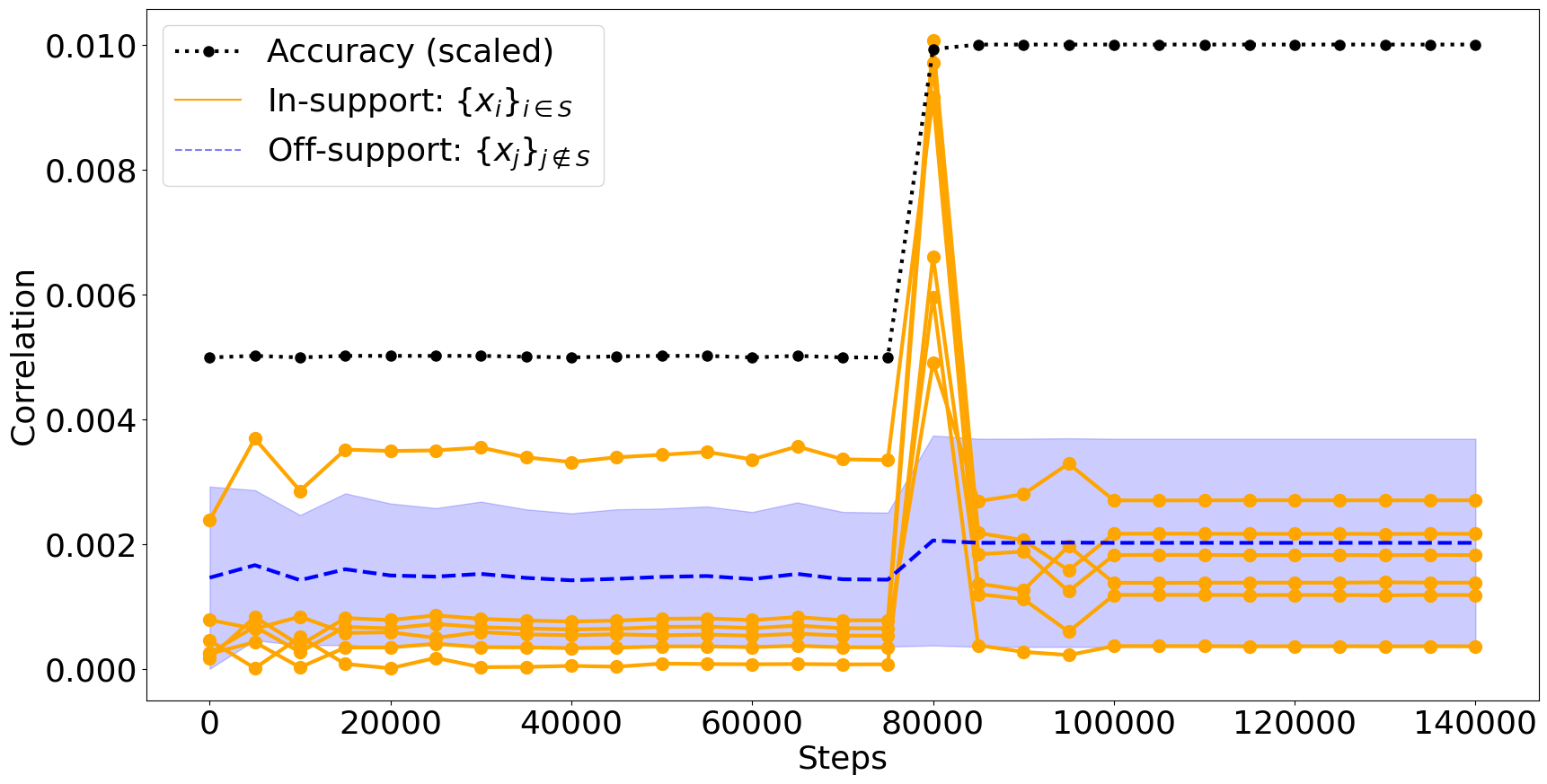}
    \includegraphics[width=0.48\linewidth]{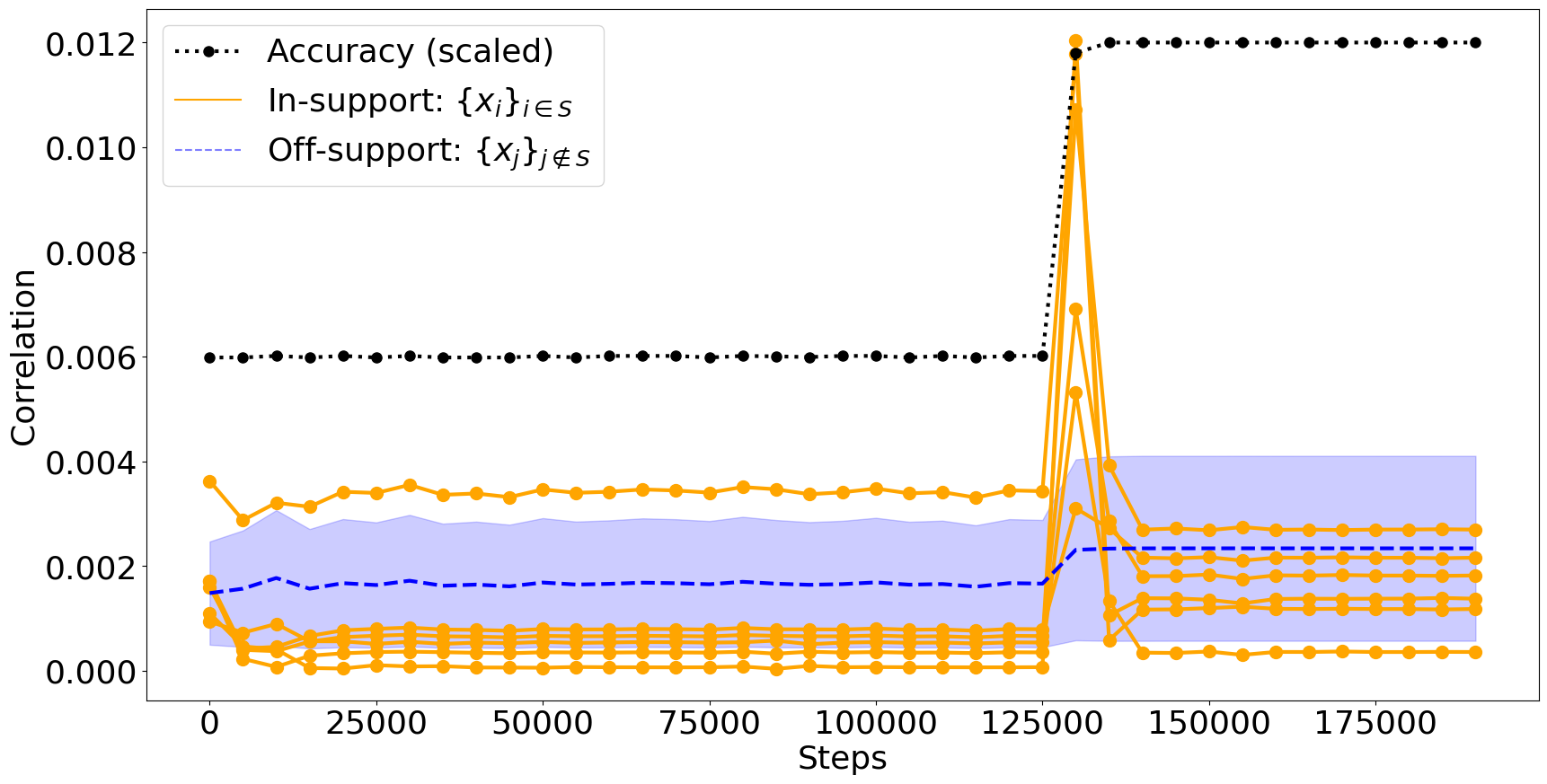}
    \caption{\textbf{Low-degree curriculum in Transformers on (100, 6)-sparse parity.}
    The $x$-axis shows the training steps, and $y$-axis shows the 2-layer 32-head teacher's correlation with in-support (orange lines) vs off-support (blue lines, aggregated into mean and standard deviation) degree-1 monomials.
    The black dotted lines mark the accuracy, scaled for better display.
    The correlation values are calculated using 100k randomly drawn sequences.
    The 4 subplots correspond to models trained using 4 random seeds.
    }
    \label{fig:transformer_parity_correlation}
\end{figure}

\begin{figure}
    \centering
    \includegraphics[width=0.5\linewidth]{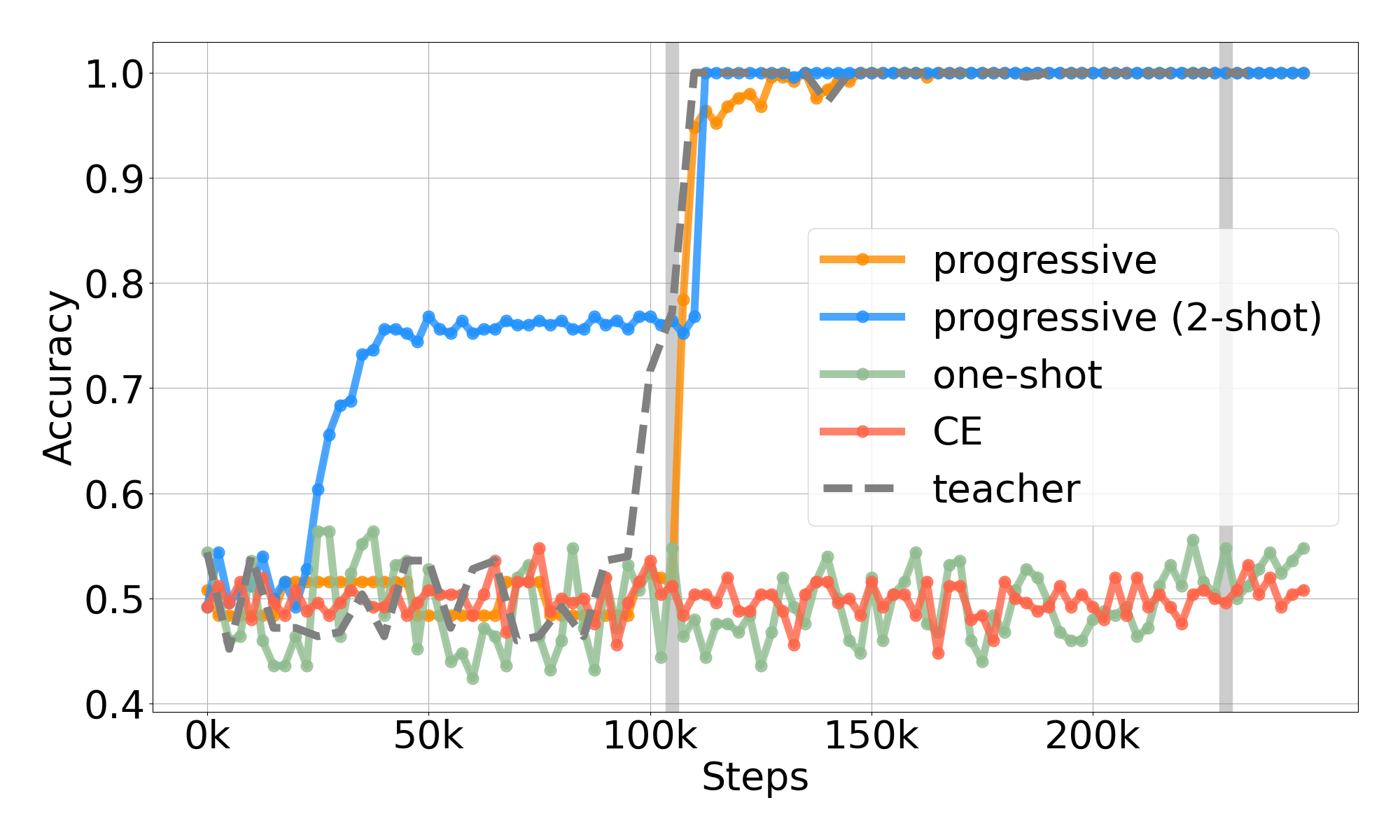}
    \caption{
    \textbf{2-shot progressive distillation with Transformers}:
    Compared to cross-entropy training or one-shot distillation,
    transformers learn faster with progressive distillation, where the intermediate checkpoints are taken either at regular 10k intervals (``progressive''), or during the phase transition (``progressive (2-shot)'').
    The two vertical lines show the teacher training steps at which the two checkpoints for 2-shot distillation are chosen.
    We set the teacher temperature to be $\tau=10^{-4}$ for progressive distillation,
    and $\tau=1$ for one-shot distillation.
    }
    \label{fig:L1F6_2layer_distill_headDim8_with2shot}
\end{figure}

\paragraph{Ablation with various temperatures}
As mentioned in \Cref{paragraph:temp_main}, our progressive distillation results use a low temperature in order to remove potential favorable regularization effects from soft labels.
We chose a temperature of $\tau=10^{-4}$ for sparse parity, where the output dimension is 2.
In \Cref{fig:L1F6_compare_temp}, we empirically confirm that setting the temperature to be below 0.01 is sufficient to get results that are qualitatively similar to using $\tau=0$ (i.e. taking the argmax).
Note that using a higher temperature such as $\tau=1$ can make learning slower 
despite potentially having more regularization effects from softer labels.
We leave understanding the exact effect of temperature to future work.

\begin{figure}
    \centering
    \includegraphics[width=0.48\textwidth]{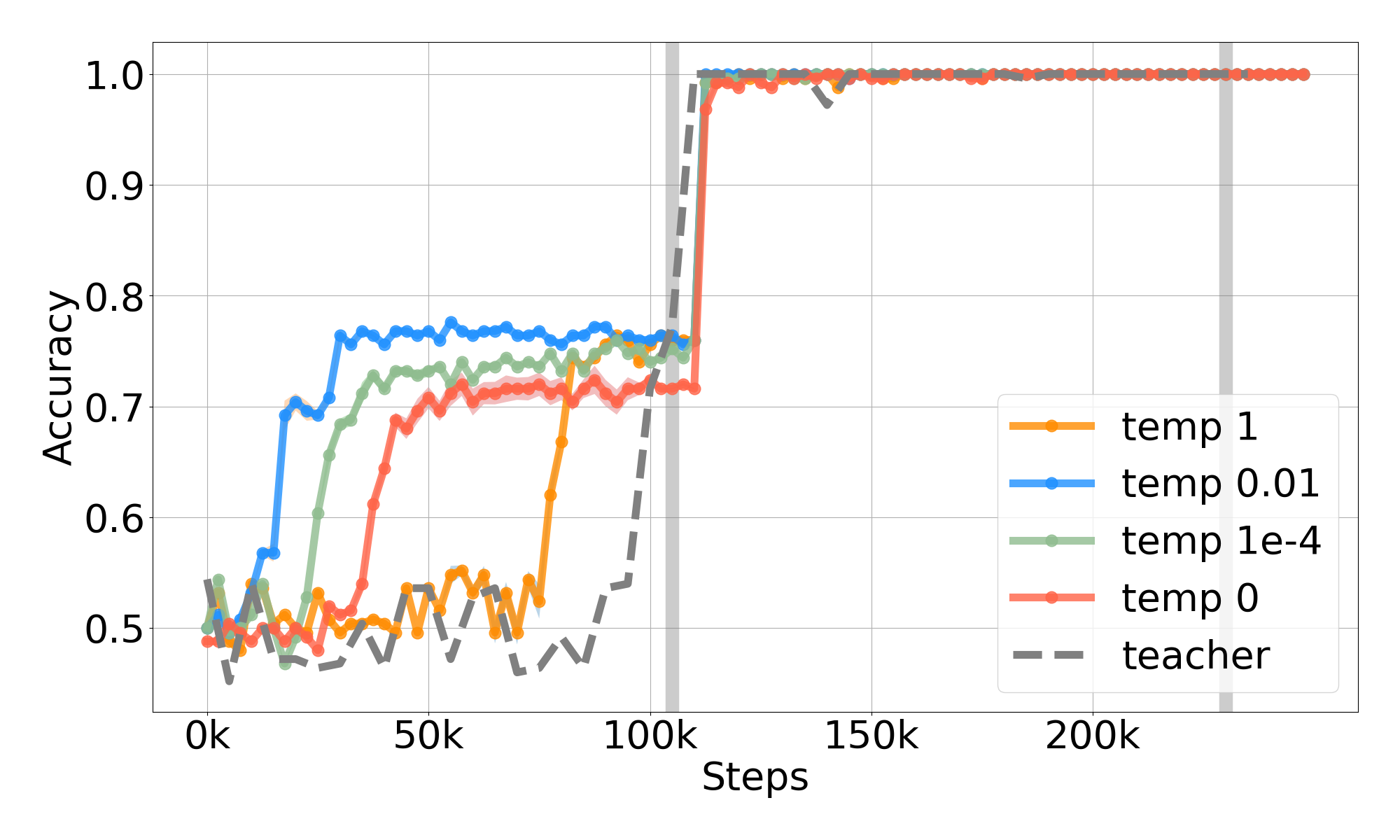}
    \caption{\textbf{The benefit of progressive distillation holds with hard labels},
    as shown by comparing 2-shot progressive distillation with different temperatures.
    The two gray vertical lines mark the training steps at which the teacher checkpoints are taken.
    }
    \label{fig:L1F6_compare_temp}
\end{figure}

\subsubsection{Two Transformer solutions for sparse parity (\Cref{prop:sol_attn} and \Cref{prop:sol_mlp})}
\label{app:parity_transformer_proof}

We consider a simplified version of a Transformer block, without the residual connection or the layernorm:
\[f_\mathrm{block} = f_\mathrm{mlp}^{(L)} \circ f_\mathrm{attn},\]
where
\[ f_\mathrm{attn}(X; W_Q, W_K, W_V) := \mathsf{CausalAttn}( X W_Q W_K^\top X^\top ) X W_V, \]
with $W_Q, W_K, W_V$ being the query, key, value matrices,
and $f_\mathrm{mlp}^{(L)}(x; \{W_l, b_l\}_{l\in[L]})$ is a $L$-layer MLP that recursively apply $f_\mathrm{mlp}^{(l+1)}(x) = \sigma(W_{l+1} f_\mathrm{mlp}^{(l)}(x) + b_{l+1})$ position-wise.
$\sigma$ is the relu function for $l\in[L-1]$, and is the identity function for $l=L$.

\begin{proposition}[Attention support selection]
\label{prop:sol_attn}
    $(\dims,\featuresize)$-sparse parity can be solved by a 1-layer Transformer with a 2-layer MLP, whose attention weights satisfy $\alpha_{i,\dims} \propto \exp(c\mathbbm{1}[i \in \support])$ for some large constant $c > 0$.
    The MLP has hidden dimension $4(\featuresize+1)$, $L_\infty$ norm bounded by $4\featuresize(\featuresize+1)$.
\end{proposition}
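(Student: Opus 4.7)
The plan is to construct an explicit 1-layer Transformer that factors the task into two stages: the single attention head aggregates the in-support coordinates into the final position, and the 2-layer MLP computes the parity from that aggregate. The attention acts as a hard selector of $\support$ in the large-$c$ limit, which leaves the MLP to implement a one-dimensional parity-from-sum function over only $\featuresize+1$ discrete values.

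First I would design the Q/K/V maps via a positional encoding. Represent each token at position $i$ as the pair $(\seqind_i, e_i)$, where $e_i$ is a one-hot indicator of the position. Choose $W_Q$ so that the query at position $\dims$ reads out the positional slots of $\support$ (with magnitude $c$), and choose $W_K$ so that the key at position $i$ is $e_i$; the resulting attention logits are $c$ for $i \in \support$ and $0$ otherwise, giving the required $\alpha_{i,\dims} \propto \exp(c\mathbbm{1}[i \in \support])$. Setting $W_V$ to extract the scalar $\seqind_i$ makes the attention output at position $\dims$ equal $s/\featuresize + \delta(c)$, where $s := \sum_{i\in\support} \seqind_i$ and $|\delta(c)| \to 0$ as $c \to \infty$.

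Next I would build the MLP to compute $\prod_{i \in \support} \seqind_i$ as a function of $s/\featuresize$. Since $s$ takes values in $\{-\featuresize, -\featuresize+2, \ldots, \featuresize\}$, and on $s = \featuresize - 2k$ the parity equals $(-1)^k$, it suffices to interpolate $\featuresize+1$ target points. I would implement this by placing an approximate-indicator ``bump'' at each target location, where each bump is built from $4$ ReLUs (two to ramp up and two to ramp down around the target), yielding a total hidden dimension of $4(\featuresize+1)$ as claimed. The minimum gap between adjacent targets in the $s/\featuresize$ coordinate is $2/\featuresize$, so each bump has width $\Theta(1/\featuresize)$; to make each bump reach height $1$ over this width, the input-layer slopes are $O(\featuresize)$ and the output-layer combination weights are $O(1)$, and a careful accounting yields the stated $L_\infty$ bound of $4\featuresize(\featuresize+1)$.

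The main technical obstacle is controlling the residual attention error $\delta(c)$ so that the MLP's bump evaluations still land in the correct region. This is handled by choosing $c$ large enough that $|\delta(c)|$ is a small constant fraction of the bump width $1/\featuresize$; since $\seq$ ranges over the finite set $\{\pm 1\}^\dims$, only finitely many values of $s/\featuresize$ ever arise, so a single finite $c$ (depending on $\featuresize$) suffices to guarantee exact parity outputs on every input.
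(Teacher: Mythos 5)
Your construction is correct and follows essentially the same route as the paper's proof: position-based query/key maps make the attention a near-hard selector of $\support$ (the paper idealizes this as saturated attention with $c\to\infty$), the value map reduces the attention output to an affine encoding of $\sum_{i\in\support}\seqind_i$, and the 2-layer MLP interpolates the parity-from-sum function on $\featuresize+1$ discrete values. The only cosmetic difference is that you build the $4$-ReLU-per-point bump interpolation (and its robustness to the finite-$c$ error) by hand, whereas the paper imports exactly this construction, including the noise tolerance $\Delta/4$, as a cited interpolation lemma.
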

\begin{proof}
    The idea is that the attention selects the $\featuresize$ in-support variables, and the MLP computes the product of these variables.

    To select the in-support variables, we want the attention weight $\alpha_{i,\dims} \propto \exp(c\mathbbm{1}[i \in \support])$, for some large constant $c$.
    This can be achieved by having the projection matrices $W_Q, W_K$ focus only on the position and ignore the tokens.
    In particular, let $z$ denote the input sequence, and let the embedding of a token be $\vx_i = \vv_{z_i} + p_i$, where $\{\vv_0, \vv_1\}$ are embeddings for the binary token 0 or 1, and $p_i$ is the position encoding for position $i$.
    Take $\{\vv_0, \vv_1\}$, $\{p_i\}_i$ such that $\vv_{z_i} \bot p_i$.
    Let $c>0$ be a large enough constant.
    Choose $W_Q, W_K$ such that for any $i \in [\dims]$, $\vx_i^\top W_Q^\top W_K \vx_{\dims} = p_i^\top W_Q^\top W_K p_\dims = c \cdot \mathbbm{1}[i \in \support]$.
    This ensures that $\alpha_{i,\dims} \propto \exp(c\mathbbm{1}[i \in \support])$.

    Then, the role of attention is to average over the in-support tokens.
    For simplicity of exposition, let's take $c \rightarrow \infty$ for now (i.e. using saturated attention~\citep{merrill2022saturated}), so that $\alpha_{i,n} \rightarrow \frac{\mathbbm{1}[i \in \support]}{\featuresize}$; that is, the attention weights at the last position average over the in-support variables.
    Take $W_V$ to be a vector, such that $W_V$ ignores the positional information and the input token 0, and only preserves the input token 1, i.e. $W_V p_i = 0$, $\forall i\in [\dims]$,
    $W_V \vv_0 = 0$,
    and $W_V \vv_1 = 1$.
    
    Next, the MLP needs to compute the parity function over the $\featuresize$ in-support variables.
    The input to the MLP is hence proportional to $(\sum_{i \in\support} \mathbbm{I}[z_i = 1]) \vv_1$, and the size of the set of inputs is $\featuresize+1$.
    To determine the size of the MLP, we use the following lemma:
    \begin{lemma}[1D discrete function interpolation with an MLP (Lemma 1 in \cite{liu2022shortcuts})]
    \label{lem:mlp-1d}
    Let $\mathcal{X}$ be a finite subset of $\mathbb{R}$, such that $|x| \leq B_x$ for all $x \in \gX$, and $|x - x'| \geq \Delta$ for all $x \neq x' \in \mathcal{X}$.
    Let $f:\mathcal{X} \rightarrow \mathbb{R}^d$ be such that $\norm{f(x)}_\infty \leq B_y$ for all $x \in \gX$. Then, there is a 2-layer ReLU network for which
    \[f_\mathrm{mlp}(x + \xi; \theta_\mathrm{mlp}) = f(x) \qquad \forall x \in \gX, \quad |\xi| \leq \Delta/4.\]
    The inner dimension is $d' = 4 |\gX|$, and the weights satisfy
    \[\norm{W_1}_\infty \leq \frac{4}{\Delta}, \quad \norm{b_1}_\infty \leq \frac{4B_x}{\Delta} + 2, \quad \norm{W_2}_\infty \leq B_y, \quad b_2 = 0. \]
    \end{lemma}
    
    Setting $B_x = 1$, $B_y = 1$, and $\Delta = \frac{1}{|\support|}$, 
    the parity function over these $\featuresize+1$ input values can be approximated by a 2-layer MLP with inner dimension $4(\featuresize+1)$, with norm bounded by $4\featuresize(\featuresize+1)$.

    As a concrete example, one way to satisfy the requirements above is to set the attention weights to $\vv_1 = W_V = \ve_1 := [1, 0, 0, 0]$, $\vv_0 = \ve_2 := [0, 1, 0, 0]$.
    Set $p_i = p_n = \ve_3$ for $i \in \support$, and $p_{i} = \ve_4$ for $i \not\in \support$.
    Set $W_Q = W_K = c\begin{bmatrix}0&0&1&0\\0&0&0&0\end{bmatrix}$ for some sufficiently large $c>0$.

\end{proof}

\begin{proposition}[No attention selection]
\label{prop:sol_mlp}
    There exists a 1-layer Transformer with 3-layer MLP that computes $k$-sparse parity, whose attention weights satisfy $\alpha_{i,\dims} = \frac{1}{\dims}$.
    Consequently, the MLP computes the sparse parity function given the full set of variables.
\end{proposition}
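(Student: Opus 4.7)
The plan is to construct the Transformer by pushing all computation onto the MLP while making the attention ``do nothing'' beyond aggregating the positional information. The three ingredients are (i) forcing uniform attention, (ii) embedding each input token into a distinct subspace so that the uniform average still preserves all $\dims$ input bits, and (iii) computing the $\featuresize$-parity over the support from this aggregated representation with a $3$-layer MLP.

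First I would set $W_Q = W_K = 0$, so that every inner product $\vx_i^\top W_Q^\top W_K \vx_\dims$ vanishes and the softmax produces $\alpha_{i,\dims} = 1/\dims$ exactly. This already discharges the attention-weight condition in the statement. Next, analogously to the proof of \Cref{prop:sol_attn}, I would take token embeddings $\vv_0, \vv_1$ and position encodings $\{p_i\}_{i\in[\dims]}$ lying in orthogonal subspaces, and choose $W_V$ so that the image of $\vv_{z_i} + p_i$ encodes $z_i \in \{0,1\}$ in a coordinate dedicated to position $i$. Concretely, let $W_V$ be such that $W_V(\vv_{z_i}+p_i) = z_i \, \ve_i \in \sR^\dims$; then the attention output at the final position equals
\begin{equation*}
    \vh \;=\; \frac{1}{\dims}\sum_{i=1}^{\dims} z_i\, \ve_i,
\end{equation*}
which is a lossless (up to the global $1/\dims$ scaling) representation of the full input sequence.

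It then remains to show that a $3$-layer MLP applied to $\vh$ can compute $\prod_{i \in \support}(2 z_i - 1)$. Here the first linear layer multiplies by $\dims$ and projects onto the support coordinates, producing the vector $(z_i)_{i \in \support} \in \{0,1\}^\featuresize$. The remaining two layers must compute the parity of $\featuresize$ bits. Since the parity depends only on the sum $s = \sum_{i\in\support} z_i \in \{0,1,\dots,\featuresize\}$, I would collapse to this scalar with one more linear map and then invoke \Cref{lem:mlp-1d} with $\gX = \{0,1,\dots,\featuresize\}$, $B_x=\featuresize$, $B_y=1$, $\Delta = 1$ to interpolate the parity function $s \mapsto (-1)^s$ by a $2$-layer ReLU network of width $4(\featuresize+1)$. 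Composing this with the support-selection layer yields a $3$-layer MLP as required, and applied position-wise at the final token it outputs the correct sparse-parity label.

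The only mild obstacle is bookkeeping: one must check that the orthogonality of $\vv_{z_i}$ and $p_i$, combined with the choice of $W_V$, really yields the clean $\vh = \dims^{-1}\sum_i z_i \ve_i$ used above, and that the $3$-layer composition still falls within the architectural template $f_{\mathrm{mlp}}^{(L)}\circ f_{\mathrm{attn}}$ with $L=3$. The latter is immediate from the definition, and the former is a direct linear-algebra construction. No feature-learning or optimization reasoning is needed; the entire claim is a representational existence result, highlighting that without attention-based support selection the MLP is forced to absorb both the search for $\support$ and the parity computation, which is why the construction uses one extra MLP layer compared to \Cref{prop:sol_attn}.
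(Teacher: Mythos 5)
Your overall plan (force uniform attention, then make a 3-layer MLP both select the support and compute the parity) matches the paper's intent, and your MLP stage is in fact a cleaner route than the paper's: collapsing to the support-sum $s=\sum_{i\in\support} z_i$ and invoking \Cref{lem:mlp-1d} costs width $O(\featuresize)$, whereas the paper applies the multi-dimensional interpolation lemma (\Cref{lem:mlp-nd}) and pays width $2^{\featuresize}$. The layer count also works out once you merge the ``project onto the support'' map with the first layer of the \Cref{lem:mlp-1d} network (the intermediate values are nonnegative, so the ReLU after the first layer is harmless). Setting $W_Q=W_K=0$ to get $\alpha_{i,\dims}=1/\dims$ exactly is also fine.

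The genuine gap is the value-map step. With additive embeddings $\vx_i=\vv_{z_i}+p_i$ (token embeddings shared across positions, as in the proof of \Cref{prop:sol_attn}), there is no linear $W_V$ satisfying $W_V(\vv_{z_i}+p_i)=z_i\ve_i$: taking $z_i=0$ forces $W_V p_i=-W_V\vv_0$ to be the \emph{same} vector for every $i$, and then $z_i=1$ would require $W_V\vv_1+W_V p_i=\ve_i$ to vary with $i$, a contradiction for $\dims\ge 2$. More fundamentally, if each value is (token-dependent part, identical map at all positions) plus (position-dependent part, independent of the token), then the uniform average $\frac{1}{\dims}\sum_i W_V\vx_i$ equals $\frac{n_1}{\dims}W_V\vv_1+\frac{\dims-n_1}{\dims}W_V\vv_0+\mathrm{const}$ with $n_1=\sum_i z_i$, i.e. it retains only the \emph{global count} of ones, which does not determine the parity of the support bits. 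So the representation $\vh=\frac{1}{\dims}\sum_i z_i\ve_i$ cannot be produced the way you claim; you need the token value to be written into a position-specific coordinate already at the embedding/value level (a joint, position-indexed encoding of $z_i$), which is exactly what the paper implicitly assumes when it says the uniform attention ``copies the tokens into a length-$\dims$ embedding vector.'' With that amendment the rest of your argument goes through (and yields a more width-efficient MLP than the paper's construction); as written, however, the ``direct linear-algebra construction'' you defer to is impossible, not merely routine.
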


\begin{proof}
    The idea is for the uniform attention to copy all tokens to the last position.
    However, unlike in \Cref{prop:sol_attn}, the attention needs to copy the tokens into a length-$\dims$ embedding vector, as we need to preserve the position information in this embedding vector.
    We need to generalize \Cref{lem:mlp-1d} accordingly to handle multi-dimensional inputs:
    \begin{lemma}[General discrete function interpolation with an MLP; (Lemma 2 in \cite{liu2022shortcuts})]
    \label{lem:mlp-nd}
    Let $\mathcal{X}$ be a finite subset of $\mathbb{R}^{d_\mathrm{in}}$, such that $\norm{x}_\infty \leq B_x$ for all $x \in \gX$, and $\norm{x - x'}_\infty \geq \Delta$ for all $x \neq x' \in \mathcal{X}$.
    Let $f:\mathcal{X} \rightarrow \mathbb{R}^{d_\mathrm{out}}$ be such that $\norm{f(x)}_\infty \leq B_y$ for all $x \in \gX$. Then, there is a 3-layer ReLU network for which
    \[f_\mathrm{mlp}(x + \xi; \theta_\mathrm{mlp}) = f(x) \qquad \forall x \in \gX, \quad |\xi| \leq \Delta/4.\]
    Letting $\gX_i$ denote the set of unique values in coordinate $i$,
    the inner MLP dimensions are as follows:
    \[ d_1 = 4 \sum_{i \in [d_\mathrm{in}]} |\mathcal{X}_i|, \quad d_2 = |\gX|.\]
    The weights satisfy
    \[\norm{W_1}_\infty \leq \frac{4}{\Delta}, \quad \norm{b_1}_\infty \leq \frac{4B_x}{\Delta} + 2,
    \quad \norm{W_2}_\infty \leq 1, \quad \norm{b_2}_\infty \leq d_\mathrm{in},
    \quad \norm{W_3}_\infty \leq B_y, \quad b_3 = 0. \]
    \end{lemma}
    
    Then, the MLP at the last position computes the sparse parity over the $\featuresize$ coordinates while ignoring the others.
    Hence the effective input set is $|\mathcal{X}| = 2^k$.
    Setting $B_x=1$, $B_y = 1$, and $\Delta=1$, 
    there exists a 3-layer MLP with width $2^{k}$ and norm bound $2^{k+2}$ by \Cref{lem:mlp-nd}.
\end{proof}

\paragraph{Preliminary interpretability analysis: Transformer does utilize attention in practice}
We observe that the model focuses attention on relevant tokens and that the amount of attention weights put on the support is tightly correlated with the accuracy, which suggests that the model indeed utilizes the attention mechanism in learning sparse parity.

Specifically, \Cref{fig:L1H6_attn_sum} shows the results on 2-layer 16-head GPT-2 models.
The attention weights are for the final position,
whose logits are used for computing the binary parity label for the entire sequence.
We track the attention weights along \textit{length-2 paths} from the first and the second layer.
For example, for a single-head model,
let $\mathbf{a}^{(l)}_i \in \Delta^{\dims-1}$ denote the $l_{th}$-layer attention vector at the $i_{th}$ position;
then, the on-support attention for a given sample is computed as $\langle \mathbf{a}^{(2)}_\dims), \mathbf{v}^{(1)}_{\features} \rangle$,
where $[\mathbf{v}^{(1)}_{\features}]_i := \sum_{j \in \features} \mathbf{a}^{(1)}_{i}[j]$ is the total amount of 
first-layer attention weights that the $i_{th}$ position puts on the support $\features$.
For multi-head models, $\mathbf{a}^{(l)}_i \in \Delta^{\dims-1}$ is defined as the sum of attention vectors from all heads, and the rest is computed similarly.

\subsection{A hierarchical generalization of sparse parity} 
\label{sec:hierarchical}

This section considers an extension of sparse parity, where the labels are given by a decision tree.
Sparse parity can be considered as a special case with tree depth 1.

\begin{figure}
    \centering
    \begin{subfigure}[t]{0.45\textwidth}
        \centering
        \includegraphics[width=\textwidth]{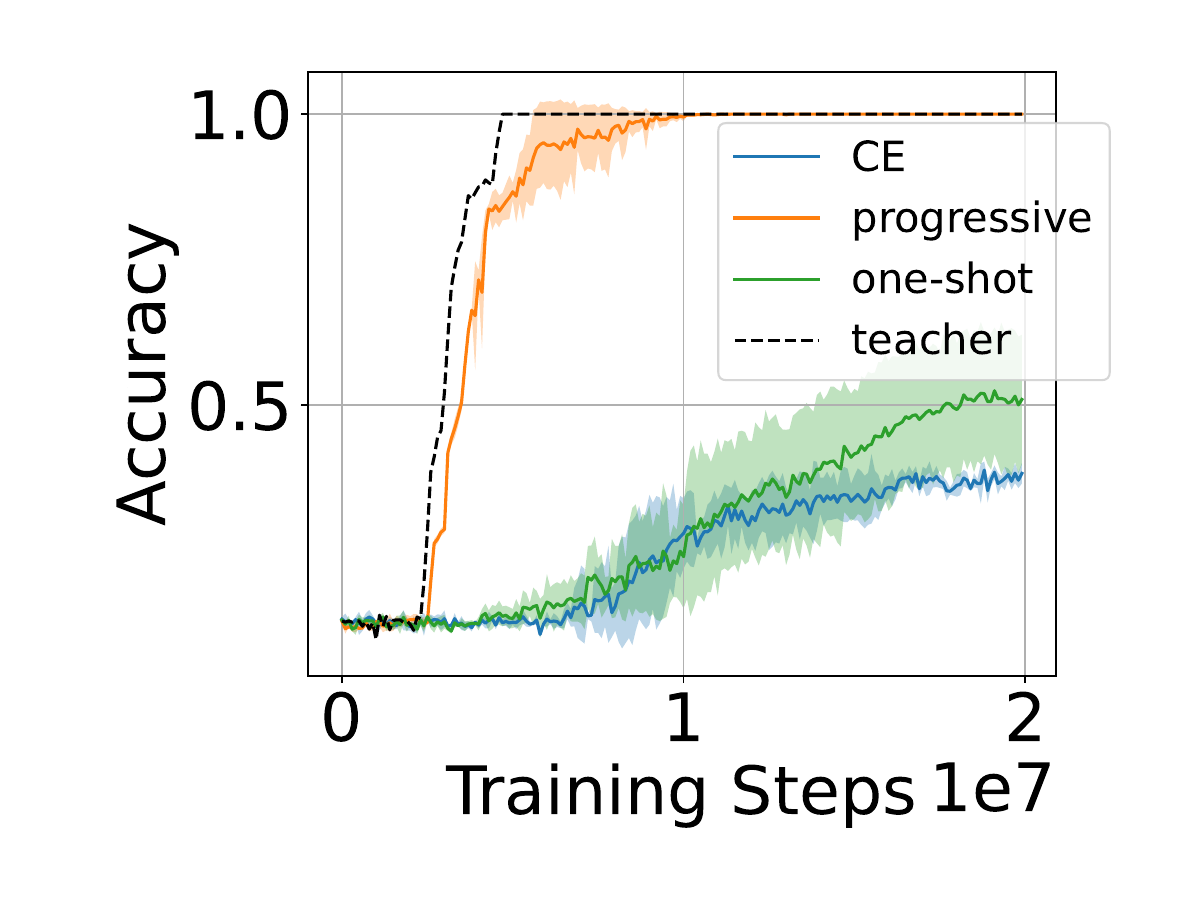}
        \caption{Width-100 student.}
        \label{fig:prog_vs_final_w100_5feat}
    \end{subfigure}
    \begin{subfigure}[t]{0.45\textwidth}
        \centering
        \includegraphics[width=\textwidth]{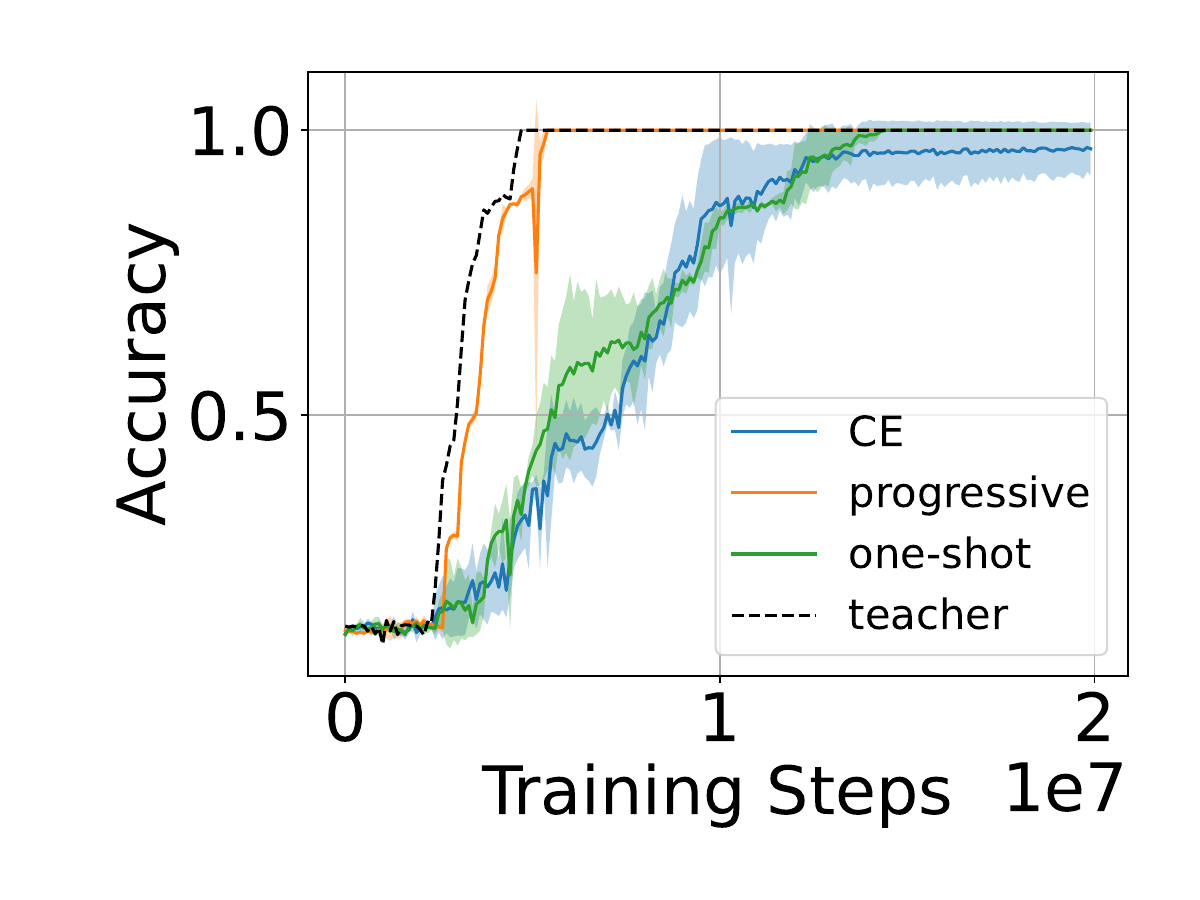}
        \caption{Width-1000 student.}
        \label{fig:prog_vs_final_w1000_5feat}
    \end{subfigure}
    \caption{$8$-way classification using a hierarchical decision tree of depth $3$, with each node represented by $5$-sparse parity.
    Progressive distillation helps student learn faster from a width-50k teacher, compared to one-shot distillation from the final checkpoint.}
    \label{fig:prog_vs_oneshot_appendix_5feat}
\end{figure}

\begin{figure}
    \centering
    \includegraphics[width=0.75\textwidth]{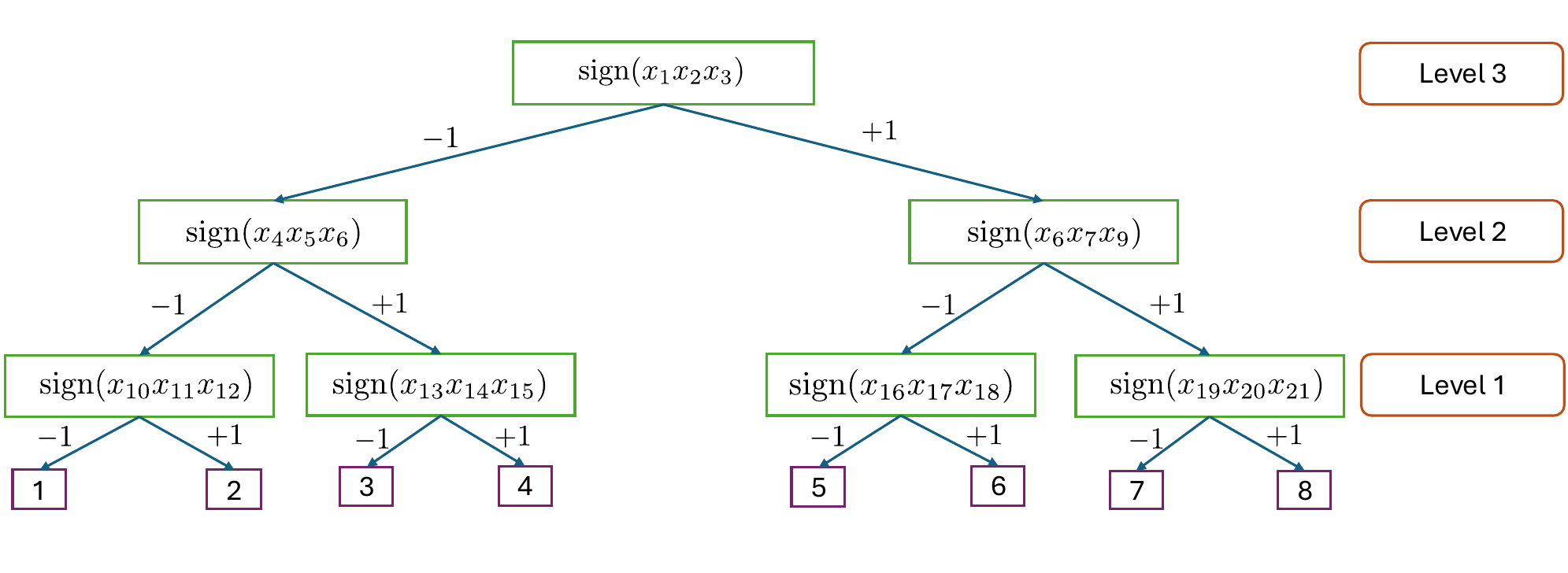}
    \caption{An illustration of hierarchical data generation, for a $3$-level tree with $3$ variables per feature.
    A feature corresponds to a tree node, each marked by a rectangle. The product of the binary variables in a feature determines which child to take: the left child is chosen if the product evaluates to $-1$, and the right child is chosen if the product is $+1$. The final label for an example is decided based on the tree leaf reached.}
    \label{fig:tree}
\end{figure}

\paragraph{Definition:}
The input $\seq$ is a boolean vector picked uniformly at random from the $\dims$-dimensional hypercube $\{\pm 1\}^\dims$,
and the label $y \in [\numlabels]$ where $\numlabels := 2^\depth$ for some fixed $\depth \in \naturals$. 
The underlying labeling function for $y$ follows a binary decision tree of depth $\depth$,
whose leaves correspond to class labels.
The branching at a node depends on a sparse parity problem.
An example visualization is provided in \Cref{fig:tree}. 

More formally, the nodes in the decision tree are represented by a set of sparse parity problems $\featureset = \{\features_1, \features_2, \cdots, \features_{\numlabels-1}\}$, where $\features_j$ is determined by product of a subset of size $\featuresize$ variables selected from the dimensions of the input $\seq$ (e.g. $x_1 x_2 \cdots x_5$ for $\featuresize=5$). An input $\seq$ belongs to the class $i \in [\numlabels]$ iff 
\begin{align*}
     &[ \prod_{j=1}^{\depth} \mathbb{I} \left[ c(i, j)  \features_{v^{(i)}_j} (\seq) > 0 \right]   > 0, \quad \text{where}\\&
     c(i, j) = 
     \begin{cases}
        1 ,&  \text{ if } i \ge 2^{\depth-j}\\
        -1, & \text{otherwise}
    \end{cases}
\end{align*}
Here, $v^{(i)}_1, \cdots v^{(i)}_{\depth}$ denote the features in $\featureset$ that lie on the path joining the root of the decision tree to the leaf representing the label $i$. An example is given in \cref{fig:tree}.

\begin{figure*}[htbp]%
    \centering
    \begin{subfigure}{\textwidth}
    \centering
    \includegraphics[width=\textwidth]{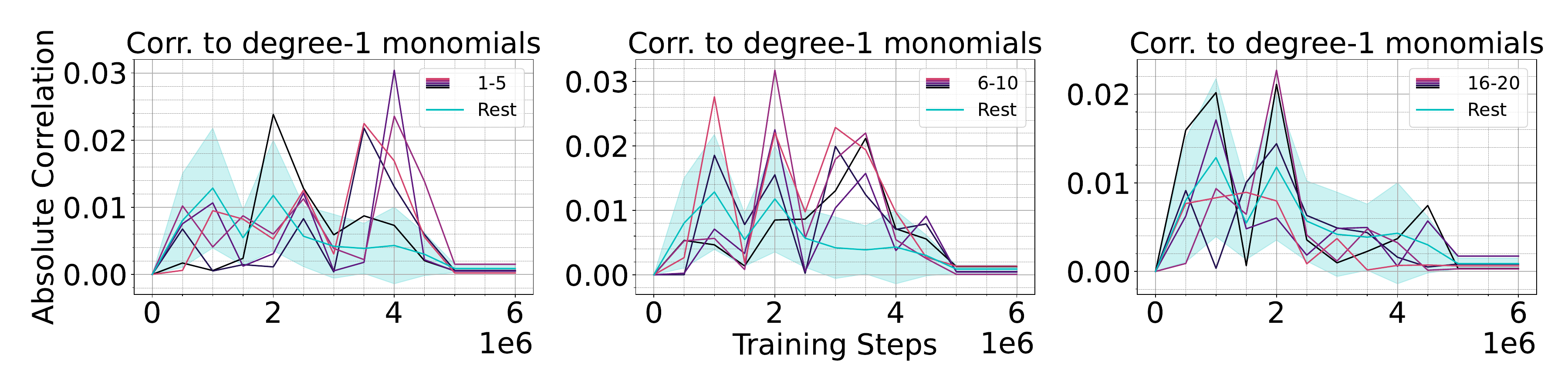}
    \end{subfigure}\hfill
    \begin{subfigure}{\textwidth}
    \centering
    \includegraphics[width=\textwidth]{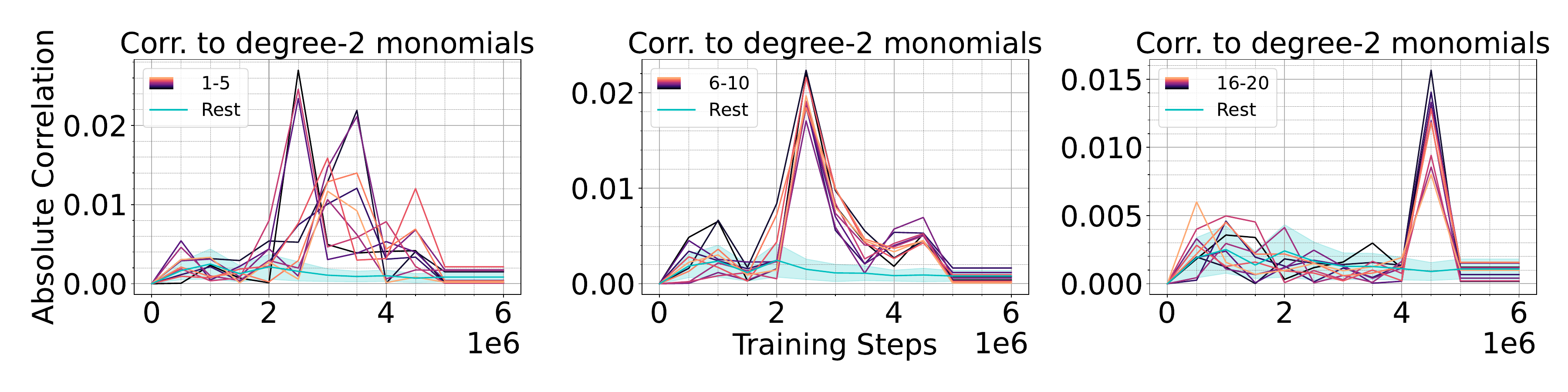}
    \end{subfigure}
    \caption{\looseness-1 Setting: 8-way classification using a hierarchical decision tree of depth 3, with each node represented by 5-sparse parity. The relevant features for class $y=1$ are $\seqind_1 \cdots \seqind_5, \seqind_6 \cdots \seqind_{10}, \seqind_{16} \cdots \seqind_{20}$ at tree levels $3, 2, $ and $1$ respectively (\cref{fig:tree}). The irrelevant features are $\seqind_{36}, \cdots, \seqind_{100}$. Here we plot the magnitude of correlation to degree-1 monomials $\mathbb{E}_{\seq, y}  [\prob_{\subTeacher} (\seq)]_1 \seqind_i$ for each $i$ in the relevant feature groups for class $0$. Because the degree-1 monomials show noisy correlations, we also report the magnitude of correlation to degree-2 monomials $\mathbb{E}_{\seq, y} [\prob_{\subTeacher} (\seq)]_1 \seqind_i \seqind_j$ for each $i, j$ in the relevant feature groups for class $1$. For degree-2 monomials, rest refers to correlation to monomials of the form $\seqind_i \seqind_j$ where atleast one variable is outside support variables ($\seqind_36, \cdots, \seqind_100$). The correlations to degree-1 (or 2) monomials on the relevant features spike at different training steps. 
    }
    \label{fig:projection_behavior}
\end{figure*}

\begin{figure}[t]
    \centering
    \begin{subfigure}{0.32\textwidth}
    \centering
    \includegraphics[width=\textwidth]{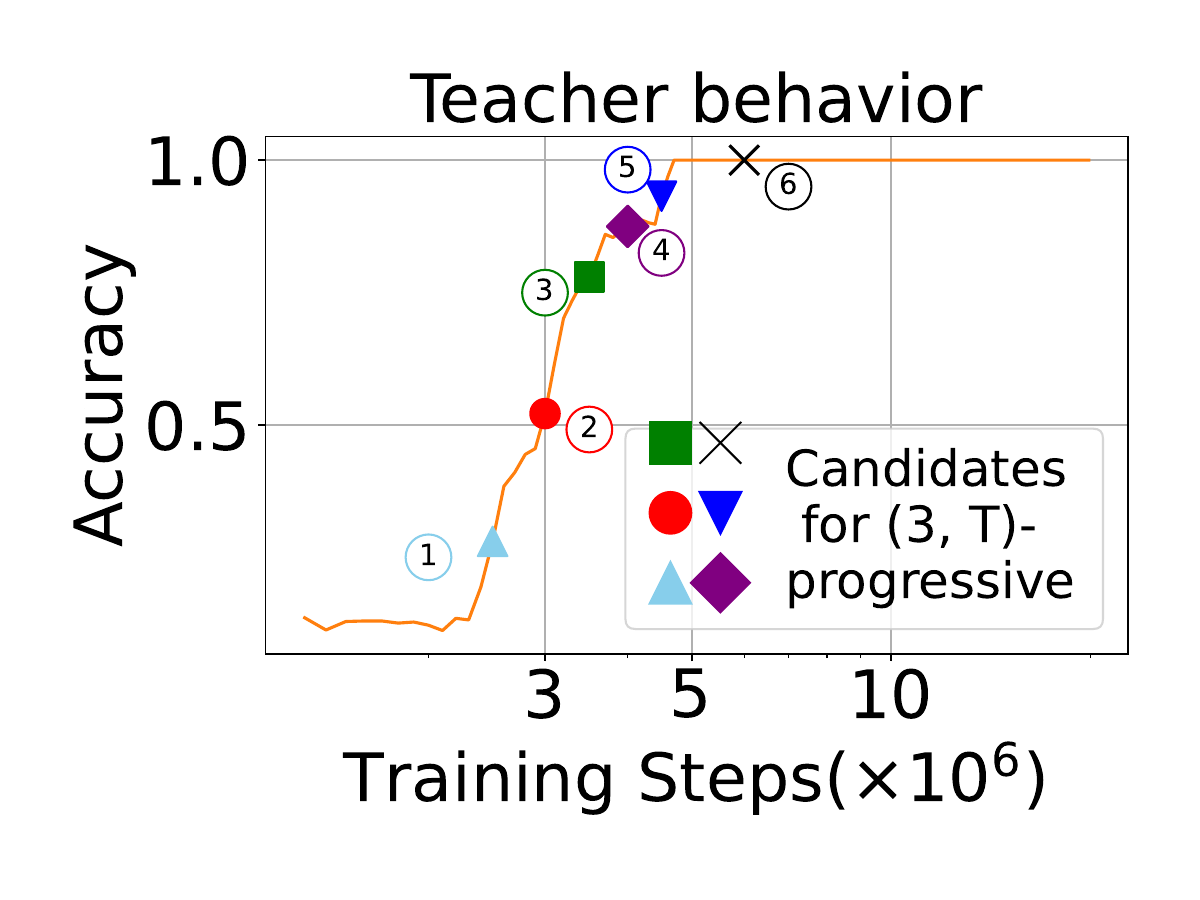}
    \end{subfigure}\hfill
    \centering
    \begin{subfigure}{0.32\textwidth}
    \centering
    \includegraphics[width=\textwidth]{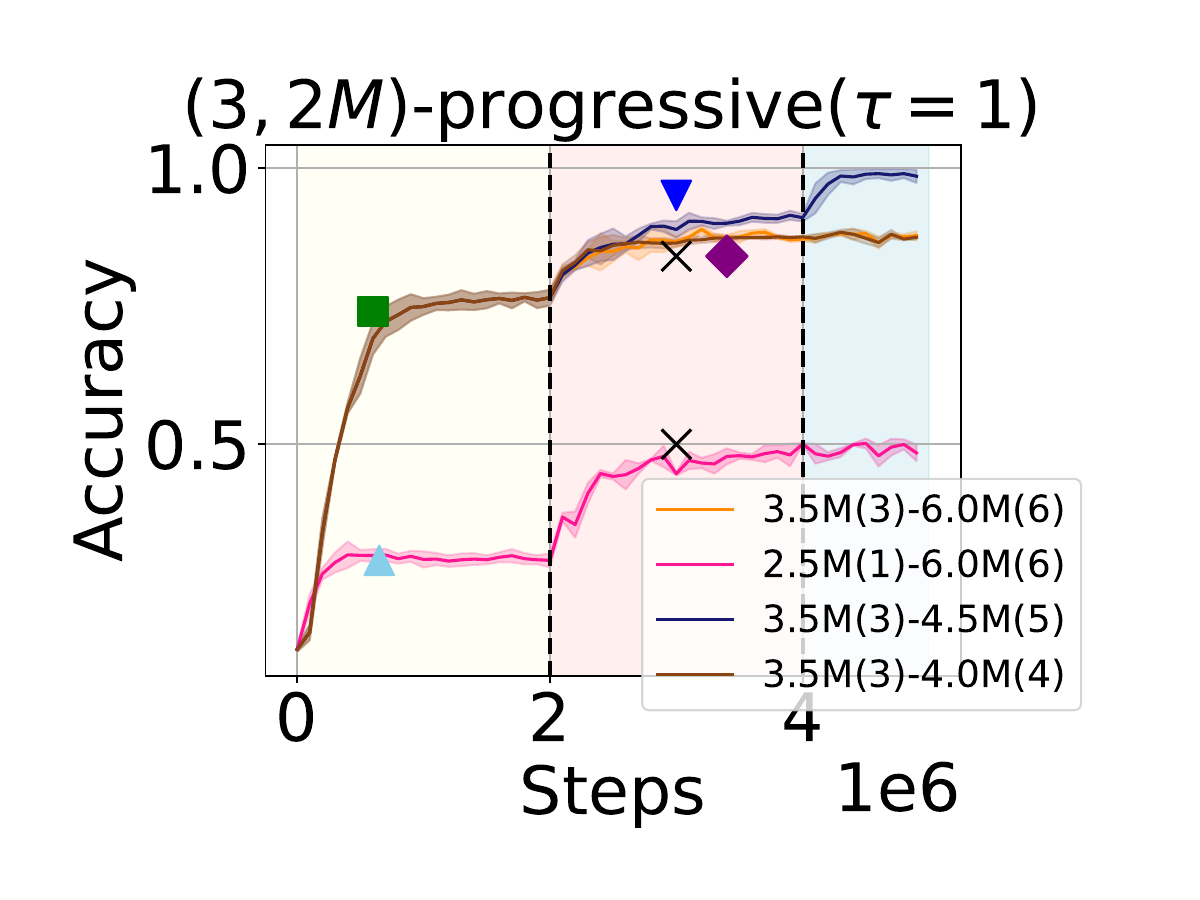}
    \end{subfigure}
    \centering
     \begin{subfigure}{0.32\textwidth}
    \centering
    \includegraphics[width=\textwidth]{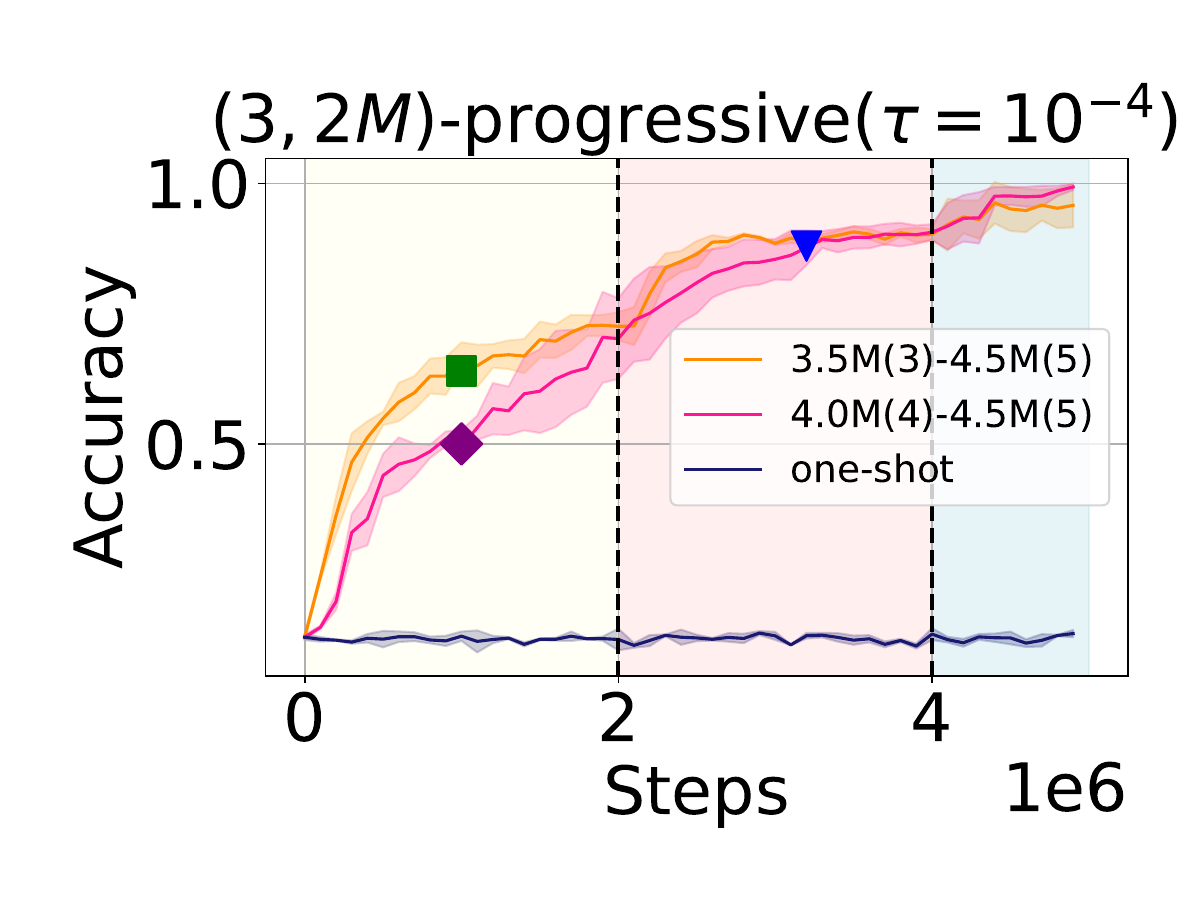}
    \end{subfigure}
    \caption{\looseness-1 Setting: $8$-way classification using a hierarchical decision tree of depth $3$, with each node
represented by $5$-sparse parity. $(3, 2M)$-progressive distillation from $3$ checkpoints on a 1000 width student; $2$ intermediate teacher checkpoints are used each for $2M$ steps, and then the final checkpoint is used till end of training. \textbf{Observations:}
    (a) Teacher shows a phase transition in accuracy during training. $6$ candidate checkpoints for $(3, 2M)$-progressive distillation  have been marked, out of which $2$ are selected in each setting. The checkpoint at $6M$ lies outside the phase transition of the teacher. (b): We show the behavior of a few representative settings. Two main observations: (1) Selecting only a single checkpoint during the phase transition of the teacher is sub-optimal, as shown by plots that contain $6M$ checkpoint as an intermediate checkpoint, (2) $2$ checkpoints during the stage transition suffice to train the student to $100\%$ accuracy, however the performance can heavily depend on their selection. \cref{fig:projection_behavior} shows that the teacher learns the low-level features at $4.5M$ checkpoint, making it crucial for distillation. (c): Even with extremely low temperature, the benefit of the phase transition checkpoint persists, suggesting that the monomial curriculum, not regularization, is the key to the success of progressive distillation.
    }
    \label{fig:interm_ckpt_5feat}
\end{figure}

\begin{figure}[t]
    \centering
    \begin{subfigure}{0.32\textwidth}
    \centering
    \includegraphics[width=\textwidth]{plots_acc/5feat_teacher.pdf}
    \end{subfigure}\hfill
    \centering
    \begin{subfigure}{0.32\textwidth}
    \centering
    \includegraphics[width=\textwidth]{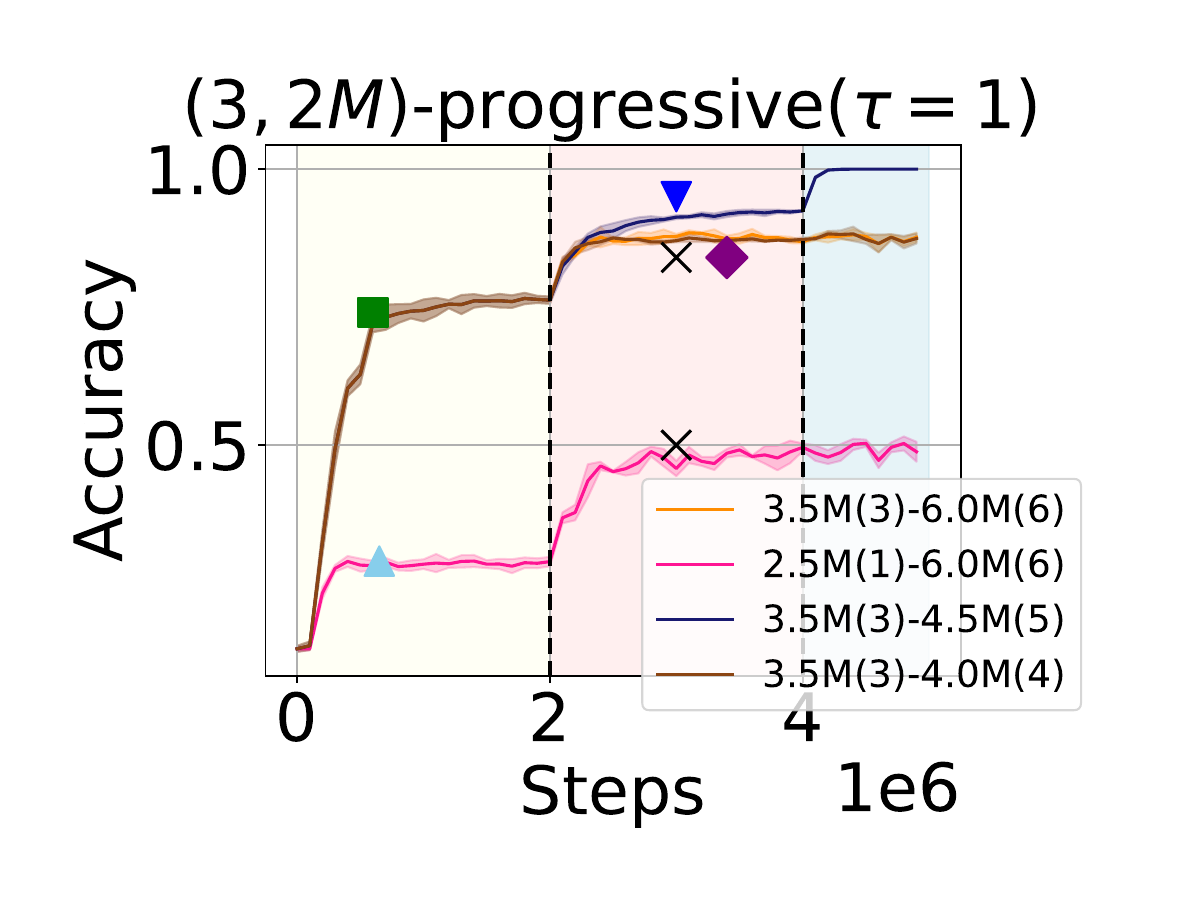}
    \end{subfigure}\hfill
    \begin{subfigure}{0.32\textwidth}
    \centering
    \includegraphics[width=\textwidth]{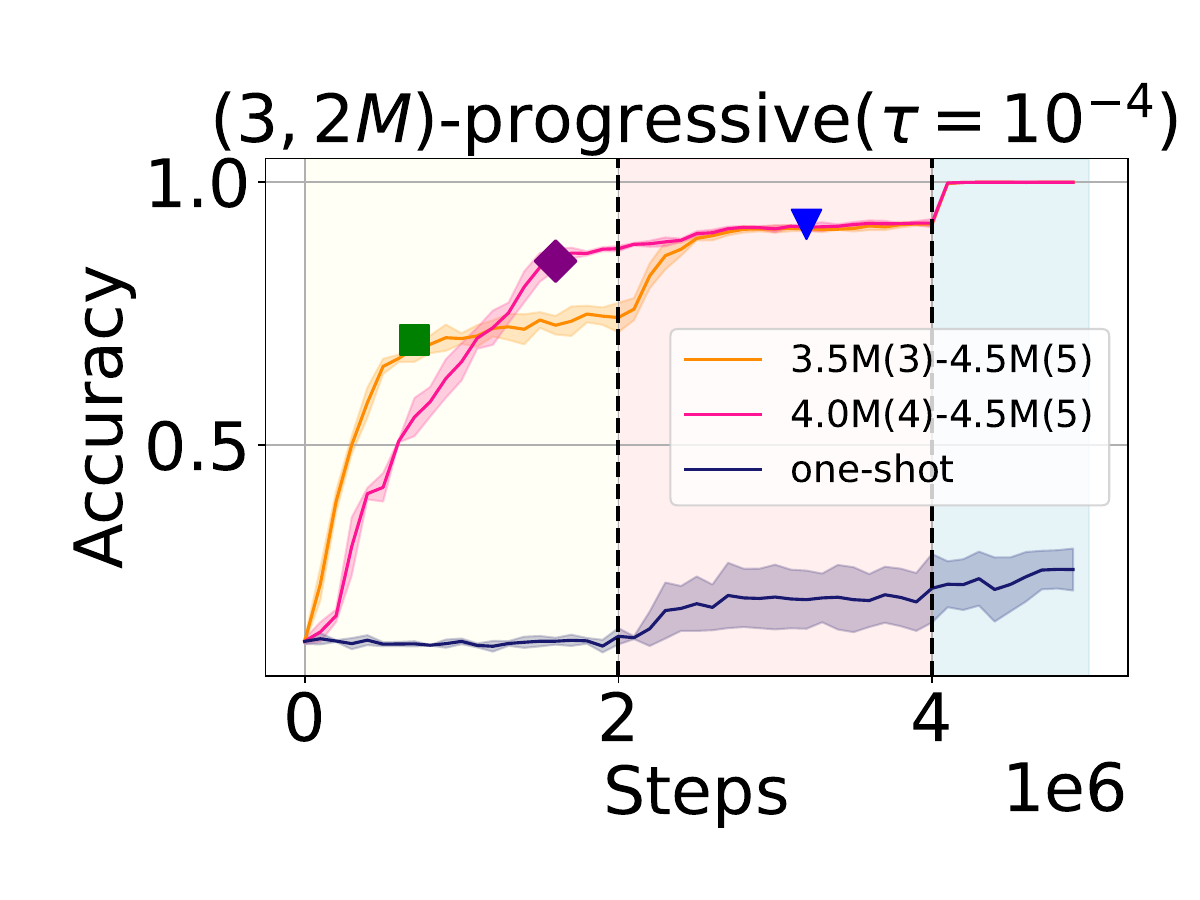}
    \end{subfigure}
    \caption{\looseness-1 Same experiments as \cref{fig:interm_ckpt_5feat} for a width-100 student.
    }
    \label{fig:interm_ckpt_5feat_1000width}
\end{figure}

\paragraph{Experiment Setup:} In this section, we focus on $8$-way classification, where the data is generated by a tree of depth 
$3$.
Each feature in $\featureset$ is given by a product of $5$ variables.
We keep the variables distinct in each feature, i.e.,  $\features_1 = x_1 x_2 \cdots x_5$, $\features_2 = x_6 x_7 \cdots x_{10}$ and so on.

\paragraph{Experiments and Observations:} We conduct similar experiments as our sparse parity experiments.
In \cref{fig:prog_vs_oneshot_appendix_5feat}, we show that progressive distillation helps train a smaller student as fast as the teacher, and even reach $100\%$ accuracy. 

\paragraph{Low-degree curriculum:} We show the correlations of the teacher's logits for a particular label and its relevant features in \cref{fig:projection_behavior}. We observe similar spikes in the degree-1 monomials involving the support of the features. However, because there are multiple features defining a label class, with features at level $1$ being shared among multiple labels, we see a difference in the time-frames at which the spikes appear in the degree-1 monomials of the features. As such, a single teacher checkpoint won't give information of entire support to a student to learn from.

\paragraph{Effectiveness of $(3, T)$-progressive distillation:} We consider progressive distillation with $3$ checkpoints, where the student only uses $2$ intermediate teacher checkpoint in addition to the final one. We show in \cref{fig:interm_ckpt_5feat} that there exists a $(3, 2M)$-progressive distillation that can help train a student successfully.  Furthermore, we demonstrate that these two intermediate checkpoints must be positioned within the phase transition to achieve $100\%$ accuracy in training the student. This supports the hypothesis that a low-degree curriculum is crucial for progressive distillation since the correlations with degree-1 monomials are high only during the phase transition period. Additionally, we find that a distillation strategy with only a single intermediate checkpoint and the final checkpoint is insufficient for the student to achieve $100\%$ accuracy, which aligns with our observation that degree-1 monomials for different features emerge at different steps. However, we also note that even within the phase transition, the optimal selection of the two checkpoints can significantly impact the student model's performance.

\section{Extensive study on PCFGs}
\label{sec:pcfg_app}
\subsection{A formal description of PCFGs}
\label{sec:pcfg_def}

We study progressive distillation using probabilistic context free grammar (PCFG).
Compared to sparse parity and hierarchical data, PCFG is a more realistic proxy for natural languages and has been commonly used as a sandbox for mechanistically understanding the training of language models \citep{zhao2023transformers,allen2023physics}.
A PCFG consists of a set of non-terminals (NTs) and grammar rules involving the non-terminals that specify the generation process of a sentence.
For example, for the sentence \textit{The cat ran away}, the grammatical structure dictates words \textit{the}, \textit{cat}, \textit{ran}, \textit{away} as \texttt{determinant}, \texttt{noun}, \texttt{verb}, and \texttt{adverb}. \textit{ran} and \textit{away} together represent a \texttt{verb phrase}, and \textit{the}, \textit{cat} together represent a \texttt{noun phrase} (see \cref{fig:parse_example}). 
For a language model to generate grammatically correct sentences, it needs to learn the underlying grammatical rules.

A probabilistic context-free grammar (PCFG) 
is defined as a 4-tuple $\gG = (\nonterminals, \vocab, \rules, \probability)$, where
\begin{itemize}
[leftmargin=*]
\setlength\itemsep{0.05em}
    \item $\nonterminals$ is the set of non-terminals, which can be considered as internal nodes of a parse tree. There is a special non-terminal $S$, known as the start symbol.
    \item $[\vocab]$ is the set of all possible words, corresponding to parse tree leaves.
    \item  $\rules$ denotes a set of rules. For all $A,B,C\in\nonterminals$, there is a rule $A\to BC$ in $\rules$. Furthermore, there are rules $A\to w$  for all $A\in\nonterminals, w\in [\vocab]$. 
    \item $\probability$ specifies the probability of each rule to be used in the generation process. For a rule $r \in \rules$, if $\probability[r] = 0$, then the rule is an invalid rule under the generation process. Furthermore, for each non-terminal $A \in \nonterminals$, on all rules $r \in \rules$ of the form $A \to \cdot$, $\sum_{r \in \rules: r = A \to \cdot} \probability(r) = 1$. We denote $\rules(A)$ as the set of all non-zero rules from $A$.
\end{itemize}

A concrete example of PCFGs is to model grammars of natural languages \citep{jurafsky2000speech}. In this case, language tokens form the vocabulary of PCFG, while parts of speech such as nouns, verbs or noun phrases, verb phrases form the non-terminals.
Rules like noun phrases being composed of a determinant and a noun form the core of such PCFG, while the probability of each rule is determined by their occurrences across sentences in the language.

\looseness=-1 \paragraph{Data generation from PCFG} Given a PCFG $\gG = (\nonterminals, \vocab, \rules, \probability)$, a string is generated in a recursive fashion as follows:
we start with $s_1 = \text{ROOT}$ at step 1, and maintain a string $s_t \in ([\vocab]\cup\nonterminals)^*$ at step $t$.
At step $t$, if all characters in $s_t$ belong to $[\vocab]$, the generation process terminates, and $s_t$ is the resulting string. Otherwise, for each character $A\in s_t$, if $A\in\nonterminals$, we sample a rule $r \in \rules$ of the form $A \to \cdot$ with probability $\probability(r)$ and replace $A$ by characters given by $r(A).$

\paragraph{Tracking $\ngram$-grams} As outlined in \cref{sec:pcfg_probing}, we track the behavior of trained models by measuring the behavior of their output on the neighboring $\ngram$-gram context.
In the context of PCFGs and masked language modeling for BERT, \cite{zhao2023transformers} theoretically demonstrate that one of the optimal algorithms for predicting masked tokens is a dynamic programming algorithm based on the inside-outside algorithm (textbook reference: \cite{jurafsky2000speech}).
This algorithm computes ``inside probabilities'' for spans of tokens of various lengths, representing pairwise token dependencies within those spans. For example, in the setting of \cref{fig:parse_example}, the inside probability for the span ``The cat'' indicates the likelihood that these two tokens co-occur. The dynamic programming approach calculates these inside probabilities hierarchically, with smaller spans forming the basis for larger spans. The model's performance ultimately depends on how accurately it represents span probabilities across different lengths.  For instance, if the token ``cat'' is masked in the sentence ``The cat ran away'', the success of the model depends on the representation of the likelihood of the spans ``The cat'', ``cat ran'', ``The cat ran'', and ``The cat ran away''. We denote the neighboring tokens in the $\ngram$-gram window span of a token as its $\ngram$-gram context.

\subsection{Variants of progressive distillation}
\label{sec:progressive_variants}

\textbf{Comparisons at different lengths} We follow common practices for training self-attention models for both one-shot distillation and progressive distillation. We use Adam optimizer~\citep{kingma2014adam}, $512$ batch size training (to imitate large batch training), and a cosine learning rate schedule \citep{loshchilov2016sgdr} which is generally used to train large language models. 
As cosine learning rate depends on the total training horizon, in order to show that progressive distillation converges faster than one-shot distillation,
we compare the two algorithms by varying the number of training samples for the student. That is, we train the teacher model with $4 \times 10^{6}$ training samples (equal to $8000$ steps), and compare the two algorithms for a student model at $\{1, 2, 4, 8\} \times 10^{6}$ training samples (equal to $\{2000, 4000, 8000, 16000\}$ steps).

\textbf{Progressive Distillation choices} Because we are considering comparisons at different training lengths for the student, we have to consider a more general version of progressive distillation introduced in \cref{def:general_progressive}. In \cref{def:general_progressive}, progressive distillation is defined by two parameters, (a) number of teacher checkpoints ($\nTeachers$) for supervision, and (b) training steps per checkpoint. We define our selection criteria for the $\nTeachers$ checkpoints later. However, after selecting the $\nTeachers$ checkpoints, we have the following two variants of progressive distillation.

\begin{enumerate}
    \item $\nTeachers$-shot Equal-split distillation: Here, we simply split the entire student's training length into $\nTeachers$ equal intervals, where the student is supervised by the $i$\textit{th} teacher checkpoint in interval $i \in [\nTeachers]$. 

    \item $\nTeachers$-shot $\kappa T_0$-Equal-split distillation: Here $\kappa \in (0, 1]$, and $T_0$ refers to the total training length of the teacher.
    The idea is to decide the allocation on the basis of the training length of the teacher, instead of the training length for the student.
    We train the student under the supervision of each checkpoint for $\kappa T_0 / \nTeachers$ training steps.
    Teacher checkpoints that fail to fit into the student's supervision schedule are ignored (corresponding to a large $\kappa$), and the final checkpoint is kept till the end of training if the student is trained for longer than $\kappa T_0$. We can view $\frac{1}{\kappa}$ as the amount of ``speed up''; for instance, we recover one-shot distillation with $\kappa \rightarrow 0$.
    Our experiments (\Cref{sec:progressive_variants}) suggest that $\kappa=1/2$ is a reasonable rule of thumb that can help the student learn faster than the teacher at any given training length.
\end{enumerate}

In the main paper, in  \cref{fig:comparison_across_methods,fig:bert_wiki_teacher,fig:bert_ablations}, we have reported performance on PCFG and Wikipedia for $\nTeachers$-shot $T_0$-Equal-split distillation as progressive distillation. We conduct more ablation studies on $\kappa$ in \Cref{sec:progressive_variants}.
We keep the exploration of optimal strategies of progressive distillation to future work.

\textit{Selection criteria for $\nTeachers$ teacher checkpoints:} While there are multiple ways in which one can pick the reference checkpoints to train the student model, we use a simple strategy which is sufficient to demonstrate the benefit of progressive distillation. Similar to our observation of transition phase for parity in \cref{sec:parity}, we search for transition phases in the loss behavior of the teacher and select the first teacher checkpoint roughly in the middle of the transition phase. The rest are picked at multiples of this initial checkpoint.

\subsection{Details on Non-terminal prediction with
Multi-head linear probing} 
\label{sec:probing_details}
Following \cite{allen2023physics}, we train a position-based linear attention on the model's embeddings to predict the non-terminals at each level of underlying PCFG. We consider a set of linear functions $f_r: \mathbb{R}^{\dims} \to \mathbb{R}^{|\nonterminals|}$, where $r \in [H]$ and $H$ is the number of “heads" in the linear attention model. If $\ve_1, \cdots, \ve_{L}$ denote the model's output embeddings for a sequence $\vx_1, \cdots, \vx_{L}$, then the prediction of the model at each index $i \in [L]$ is given by
\begin{align*}
    G_i(x) &= \sum_{r \in [H], k \in [L]} w_{r,i \to k} f_r(\ve_k), \\&
    w_{r, i \to k} = 
\frac{exp(\langle P_{i,r},P_{k,r}\rangle)} 
{\sum_{k'\in[L]} exp(\langle P_{i,r},P_{k'
,r}\rangle)},
\end{align*}
for trainable parameters $P_{i, r} \in \mathbb{R}^{\dims}$. We train the parameters with logistic regression on $51200$ examples and test on a validation set of $1024$ examples.

\subsection{Details on the synthetic PCFGs} \label{label:synth_pcfg}

We use $5$ synthetic PCFGs considered by \cite{allen-zhu2023towards} (please see \cref{fig:synthetic_cfg} for the rules involved in the PCFGs). These $5$ PCFGs differ in difficulty, based on the number of rules per non-terminal and the ambiguities in the rules per non-terminal. Under a PCFG, each string is generated by generation trees of depth $7$. We give differences in the PCFGs, as outlined by \cite{allen-zhu2023towards} below.

\begin{itemize}
    \item In cfg3b, the PCFG is constructed such that the degree $\abs{\rules(A)} = 2$ for every non-terminal $A$. In any generation rule, consecutive pairs of  symbols on the generated symbols are distinct. The $25\%, 50\%, 75\%,$ and $95\%$ percentile string lengths generated by the PCFG are $251, 278, 308, 342$ respectively.
    
    \item In cfg3i, $\abs{\rules(A)} = 2$ for every non-terminal $A$. However, the consecutive pairs of symbols needn't be distinct in generation rules. he $25\%, 50\%, 75\%,$ and $95\%$ percentile string lengths generated by the PCFG are  $276, 307, 340, 386$ respectively.
    
    \item In cfg3h, $\abs{\rules(A)} \in \{2, 3\}$ for every non-terminal $A$.
    he $25\%, 50\%, 75\%,$ and $95\%$ percentile string lengths generated by the PCFG are  $202, 238, 270, 300$ respectively.
    
    \item In cfg3g, $\abs{\rules(A)} = 3$ for every non-terminal $A$. he $25\%, 50\%, 75\%,$ and $95\%$ percentile string lengths generated by the PCFG are  $212, 258, 294, 341$ respectively.
    
    \item In cfg3f, $\abs{\rules(A)} \in \{3, 4\}$ for every non-terminal $A$. he $25\%, 50\%, 75\%,$ and $95\%$ percentile string lengths generated by the PCFG are  $191, 247, 302, 364$ respectively.
\end{itemize}

\begin{figure*}[!htbp]
    \centering
    \begin{subfigure}{\textwidth}
    \centering
    \includegraphics[width=\textwidth]{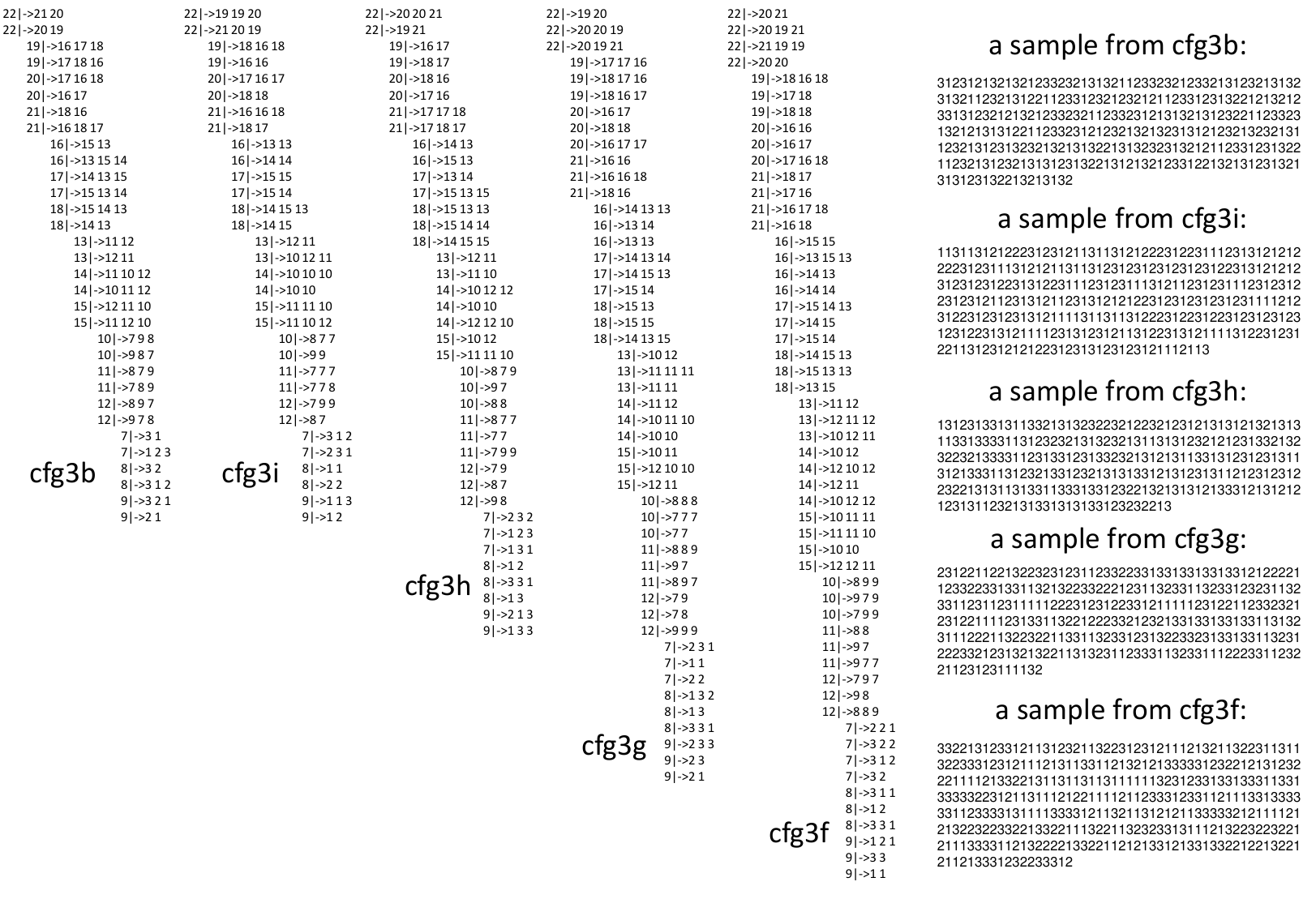}
    \label{fig:synthetic_cfg_subfig}
    \end{subfigure}
    \caption{\looseness-1  The synthetic PCFGs considered from \cite{allen2023physics}. Vocabulary is $\{1, 2, 3\}$ in each setting. More details on the differences between the PCFGs are in \cref{label:synth_pcfg}.
    }
    \label{fig:synthetic_cfg}
\end{figure*}

\subsection{Extensive experiments on BERT} \label{sec:bert_details_appendix}
We first give some details on the architecture of BERT and its pre-training loss function.

\subsubsection{A primer on BERT} \label{sec:bert_primer}

BERT~\citep{devlin2018bert} is an encoder-only transformer that is trained with masked language modeling (MLM) (\cref{fig:bert_primer}). In encoder-only architecture, the contextual information are shared across the tokens using bidirectional self-attention layers. During pre-training, the model is trained with MLM loss, that perturbs certain fraction of the tokens in the input at random  and the model is trained to predict the original tokens at positions of the perturbed tokens. The pre-training recipe follows a 80-10-10 principle, where tokens at $80\%$ of the perturbed positions are replaced by a special $\langle mask \rangle$ token, while tokens at $10\%$ of the perturbed positions are replaced by random tokens from the vocabulary, while remaining positions are filled with the original tokens themselves. We stick to this principle, while creating data for training from different PCFGs.

\paragraph{Model architecture considered:} We train depth-4 BERT models with $\{8, 16, 32\}$ attention heads, each of which operates on $8$ dimensions, using a $30\%$ masking rate. The head dimension is fixed to $8$, with the corresponding width of the $4$ models being $\{64, 128, 256\}$ respectively.

\begin{figure*}[!htbp]
    \centering
    \begin{subfigure}{\textwidth}
    \centering
    \includegraphics[width=0.6\textwidth]{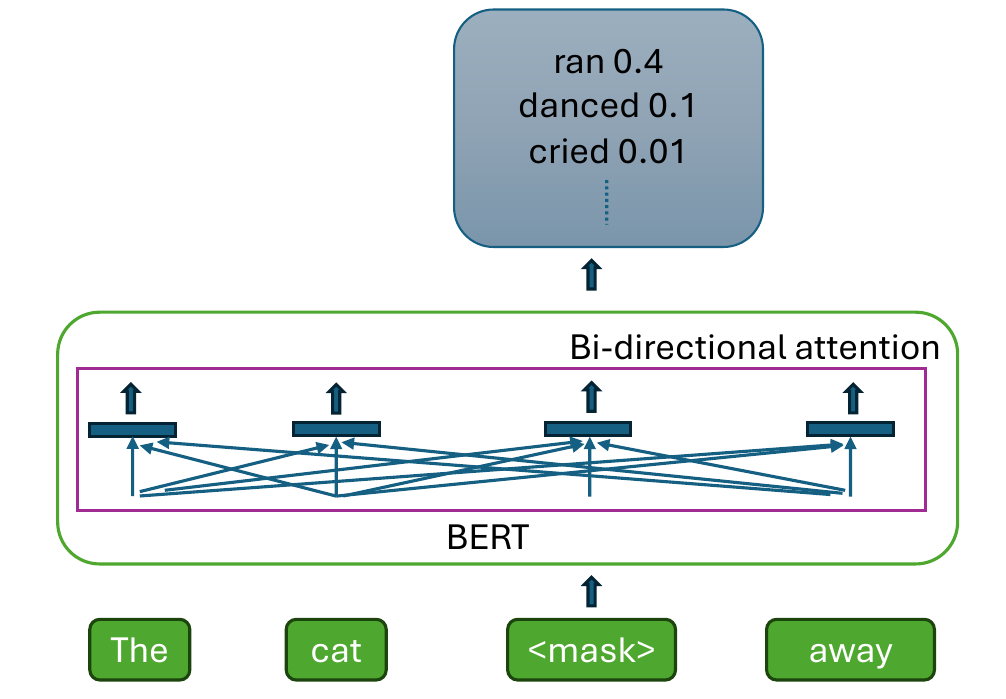}
    \label{fig:bert_primer_subfigure}
    \end{subfigure}
    \caption{\looseness-1  An informal representation of  BERT \citep{devlin2018bert}. The model uses  bidirectional attention layers to share contextual information across the tokens. During pre-training, few of input tokens are replaced by special $<mask>$ tokens, and the model is trained to predict the masked tokens.}
    \label{fig:bert_primer}
\end{figure*}

\subsubsection{Data Generations}
\textbf{Data for masked language modeling:}
We generate $8 \times 10^{6}$ random sequences for each PCFG.
We follow \cite{devlin2018bert} to create masked input sequences and output labels, i.e. for each sampled sequence we mask $\maskrate \%$ of tokens for input and the labels are given by the tokens in the masked positions of the original sequence. We also follow the 80-10-10 principle, where for input, the tokens in $80\%$ of the masked positions are represented by a special mask token $\mask$, while $10\%$ of the masked positions are represented by a randomly sampled token from the vocabulary and the remaining $10\%$ are represented by tokens from the original sequence.

\subsection{Hyperparameter details} \label{sec:app_pcfg_hyperparam}
We use a batch size of $512$ in each setting. We use Adam \citep{kingma2014adam} optimizer with $0$ weight decay, $\beta_1, \beta_2 = (0.9, 0.95)$. We use cosine decay learning rate. We extensively tune the learning rate in the grid $\{10^{-2}, 7.5 \times 10^{-3}, 5 \times 10^{-3}, 2.5 \times 10^{-3}, 10^{-3}\}$ in each setting. We train the teacher on $4 \times 10^{6}$ training samples (equal to $8 \times 10^3$ steps). 

\paragraph{Distillation experiments at different training horizons:} To thoroughly compare the sample complexity requirements of one-shot and progressive distillation, we evaluate both algorithms using a smaller student model across various training sample sizes. The smaller student is trained with $\{1, 2, 4, 8\} \times 10^{6}$ training samples (equal to $\{2\times 10^3, 4 \times 10^3, 8 \times 10^{3}, 16 \times 10^3\}$ training steps) and the performance is compared in each horizon. For example, \cref{fig:comparison_across_methods} (right) plot contains $4$ distinct points for each method which represents the performance of the smaller model under the $4$ different training steps (sample sizes).  

\paragraph{Training split for $(2, T)$-progressive distillation for PCFGs:} We report the performance in \cref{fig:bert_cfg3b_0.3_teacher} for $4000$  training steps. We find the best training time split $T$ between the intermediate checkpoint and the final checkpoint in the grid $\{500, 1000, 15000, 2000\}$, i.e. the student is trained with the logits of the first intermediate teacher checkpoint till  step $T$ and then the teacher is switched to the final teacher checkpoint.

\textit{Low-temperature distillation:}
We focus on distillation with a small temperature of $\tau=10^{-4}$ (in \cref{eq:loss}), for the following reasons.
First, as discussed in \cref{sec:parity}, it removes any potential regularization effects induced by soft labels. 
Moreover, using such a small temperature corresponds to training with the top-$1$ predictions of the teacher model, which is more memory-efficient compared to training with the full teacher logits, especially when the vocabulary size is large.

\subsection{Additional Curriculum probing on the teacher's checkpoints}

\begin{figure}
    \centering
    \begin{subfigure}[t]{0.42\textwidth}
        \centering
        \includegraphics[width=0.9\textwidth]{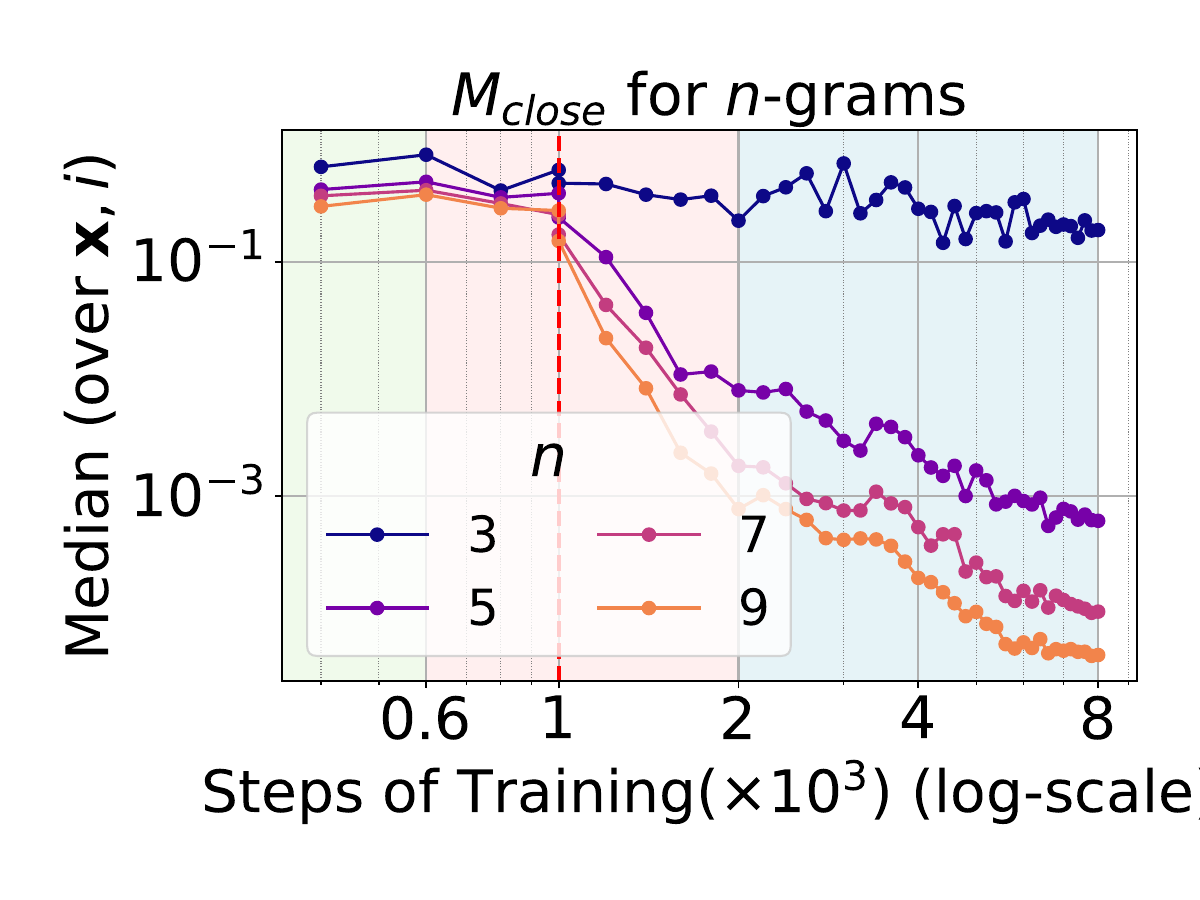}
    \end{subfigure}
    \begin{subfigure}[t]{0.42\textwidth}
        \centering
        \includegraphics[width=0.9\textwidth]{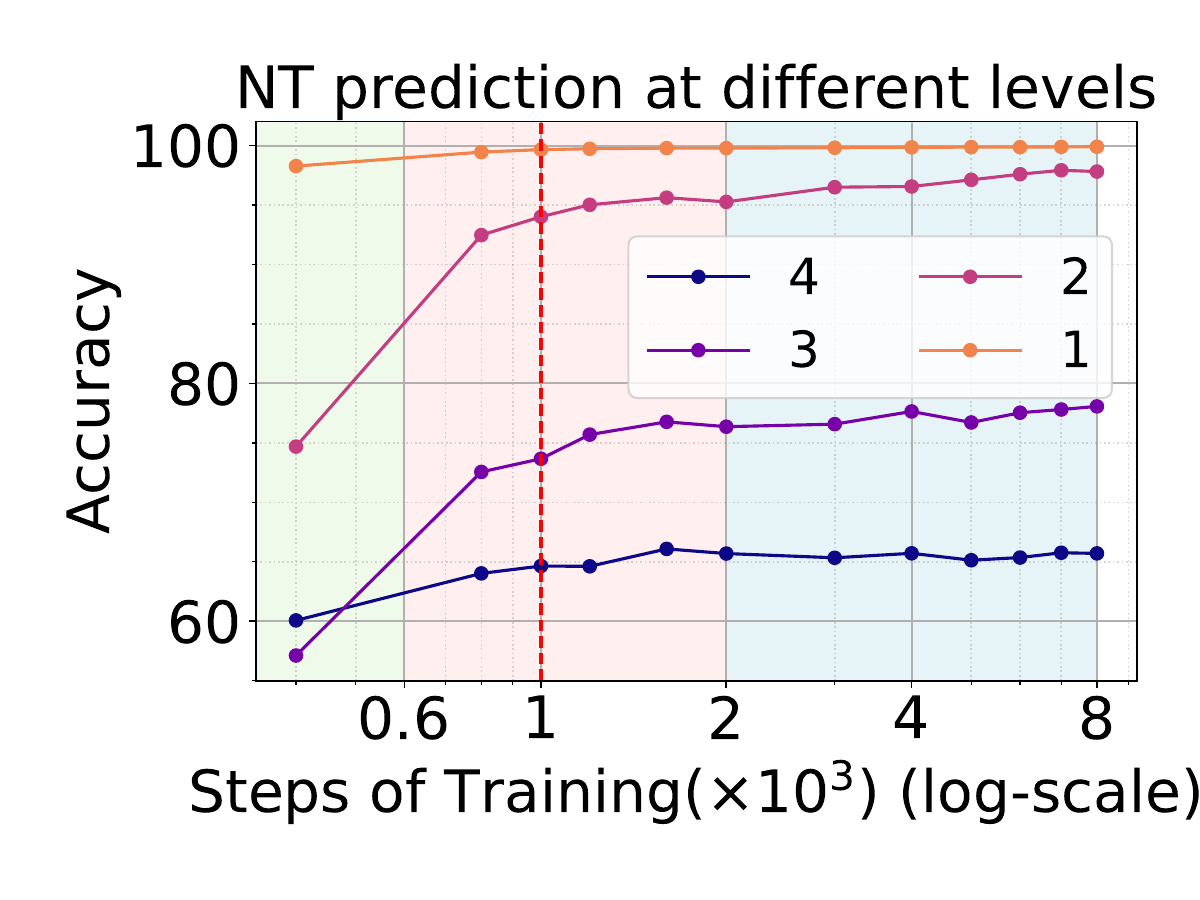}
    \end{subfigure}
    \caption{We conduct additional probing experiments on the teacher's ($4$ layer, $32$ attention head BERT) logits during training to indicate curriculum learning. (left)  TV distance between model's predictions with full context and context with only $\ngram$-gram tokens ($\close$). We observe that the teacher's logits get closer to higher $\ngram$-gram context predictions, and the inflection appears at the middle of the second phase (our first selected checkpoint for progressive distillation) (right) Performance of linear classifier probe on teacher's intermediate checkpoints to predict the non-terminals at different levels of the PCFG generation tree. We observe that the probe's performance is $>95\%$ of the final probe performance by the middle of the second phase, indicating the model has almost learned the underlying PCFG features by this time.}
    \label{fig:curr_teacher_cfg3b}
\end{figure}

In this section, we study the performance of different progressive distillation variants and compare them to one-shot distillation. As per our experiments in \cref{fig:bert_cfg3b_0.3_teacher}, we use the $8$ teacher checkpoints selected for supervision. In \cref{fig:distil_cfg3b_0.3_appendix}, we compare one-shot distillation to the two variants of progressive distillation, i.e. $8$-shot Equal-split and $8$-shot $\frac{T_0}{2}$-Equal-split distillation.
We observe that both variants of progressive distillation help the student learn faster than one-shot distillation, and the gap diminishes as the students are trained for longer.
The optimal strategy for progressive distillation depends on the training budget for the student.
For training steps lower than the teacher's $T_0$ budget, $\frac{T_0}{2}$-Equal-split distillation slightly performs better than $T_0$-Equal-split distillation, which changes as we train longer. \textbf{To keep things simple, we focus on $\frac{T_0}{2}$-Equal-split progressive distillation in all of our subsequent experiments.}

\begin{figure*}[htbp]%
    \centering
    \begin{subfigure}{0.45\textwidth}
    \centering
    \includegraphics[width=\textwidth]{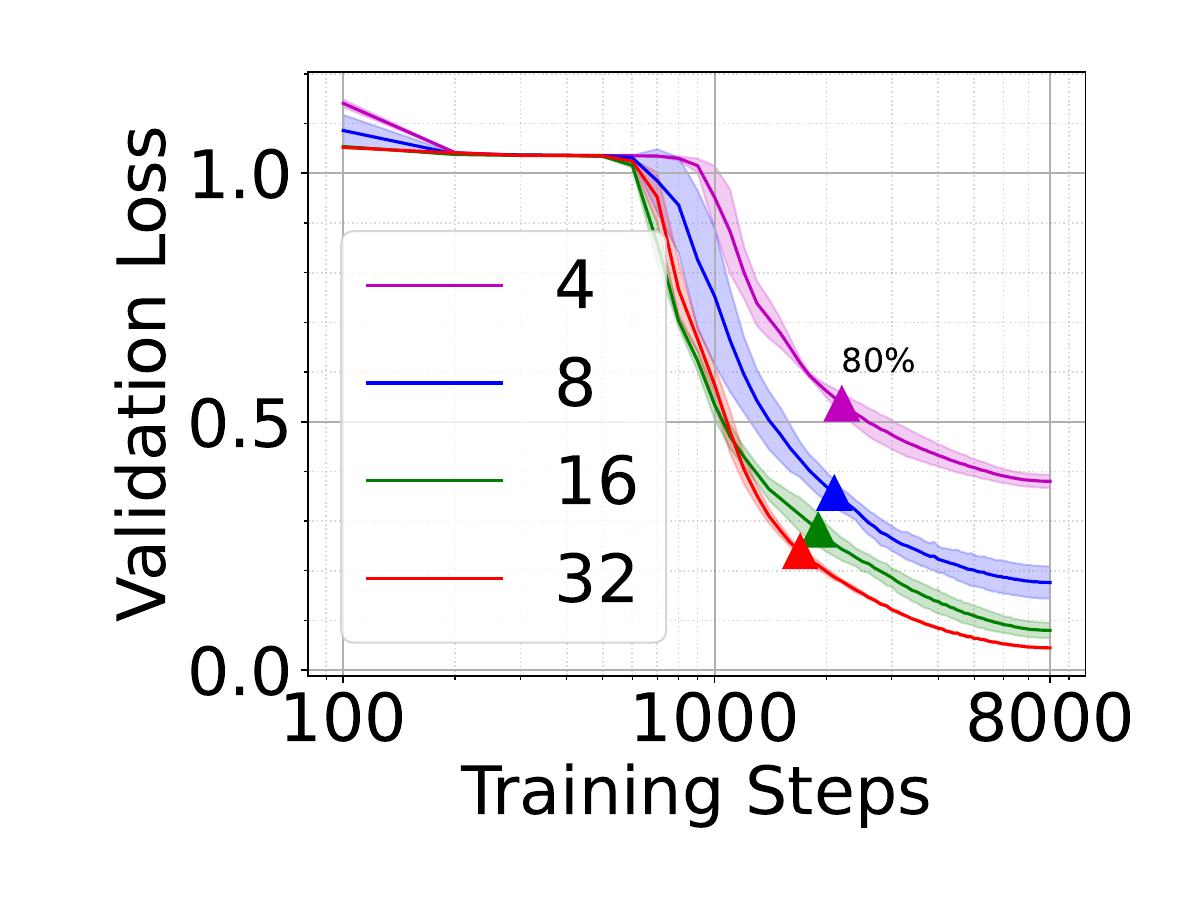}
    \label{fig:loss_comp_attnhead}
    \end{subfigure}\hfill
    \begin{subfigure}{0.45\textwidth}
    \centering
    \includegraphics[width=\textwidth]{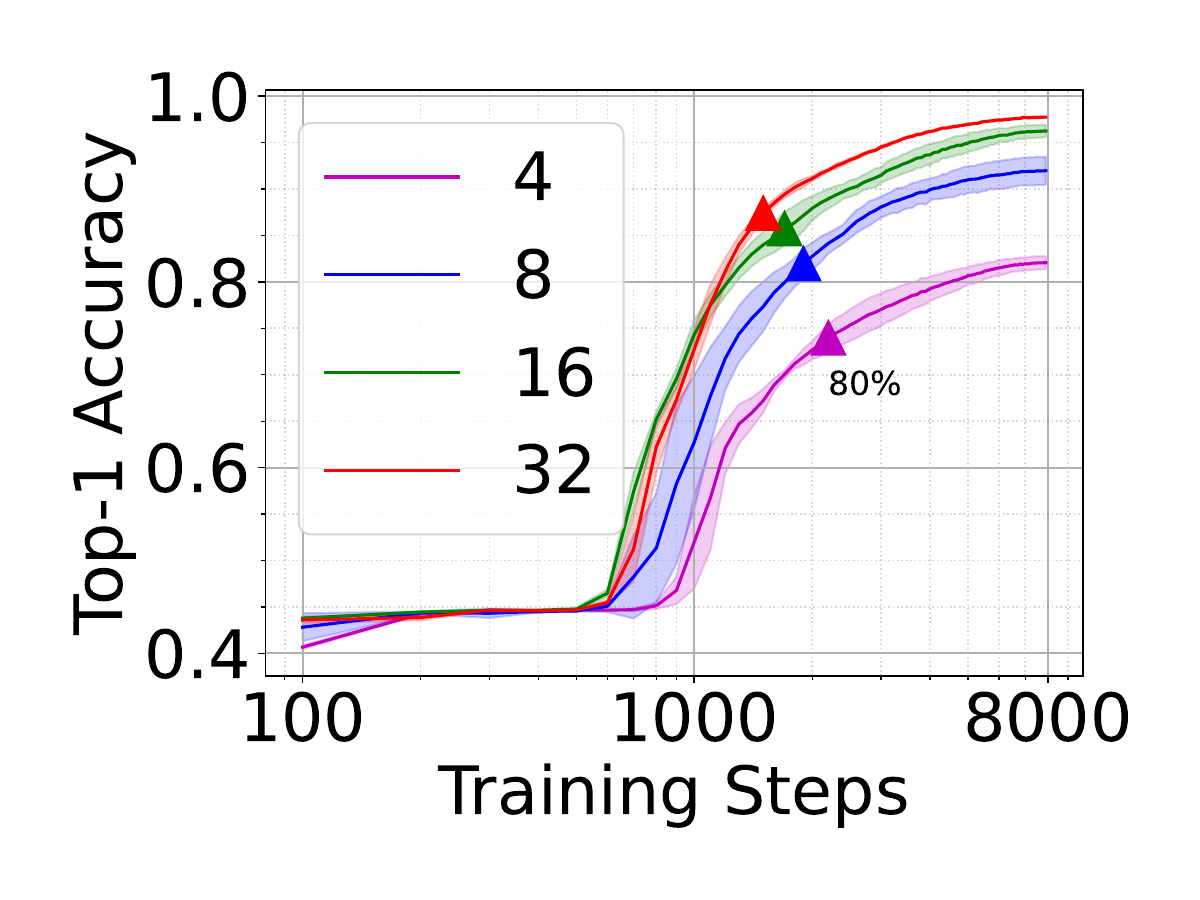}
    \label{fig:top1_comp_attnhead}
    \end{subfigure}
    \caption{\looseness-1  Comparison of BERT's training behavior on \cfg{b} with varying numbers of attention heads (where the embedding dimension scales linearly with the number of attention heads) over $8\times 10^3$ training steps. The x-axis represents the number of training steps and is in log scale. Larger BERT models show an earlier and more pronounced drop in loss/increase in accuracy compared to smaller models. For reference, each training curve is annotated at the point where the model reaches $80\%$ of its performance at the final step.
    }
    \label{fig:comp_cfg3b_attnhead_appendix}
\end{figure*}

\begin{figure*}[htbp]%
    \centering
    \begin{subfigure}{0.4\textwidth}
    \centering
    \includegraphics[width=\textwidth]{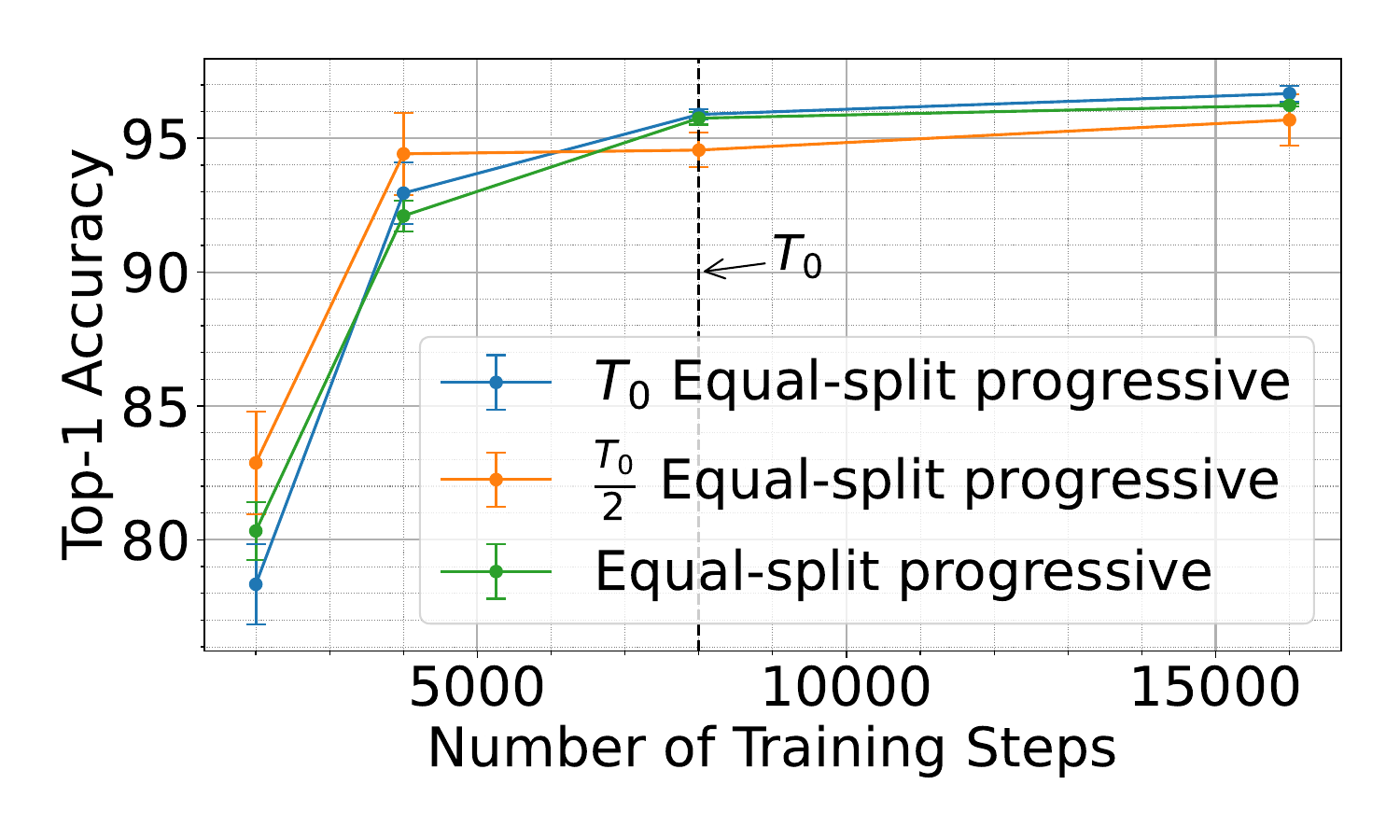}
    \label{fig:acc_cfg3b_0.3_H8}
    \end{subfigure}\hfill
    \begin{subfigure}{0.4\textwidth}
    \centering
    \includegraphics[width=\textwidth]{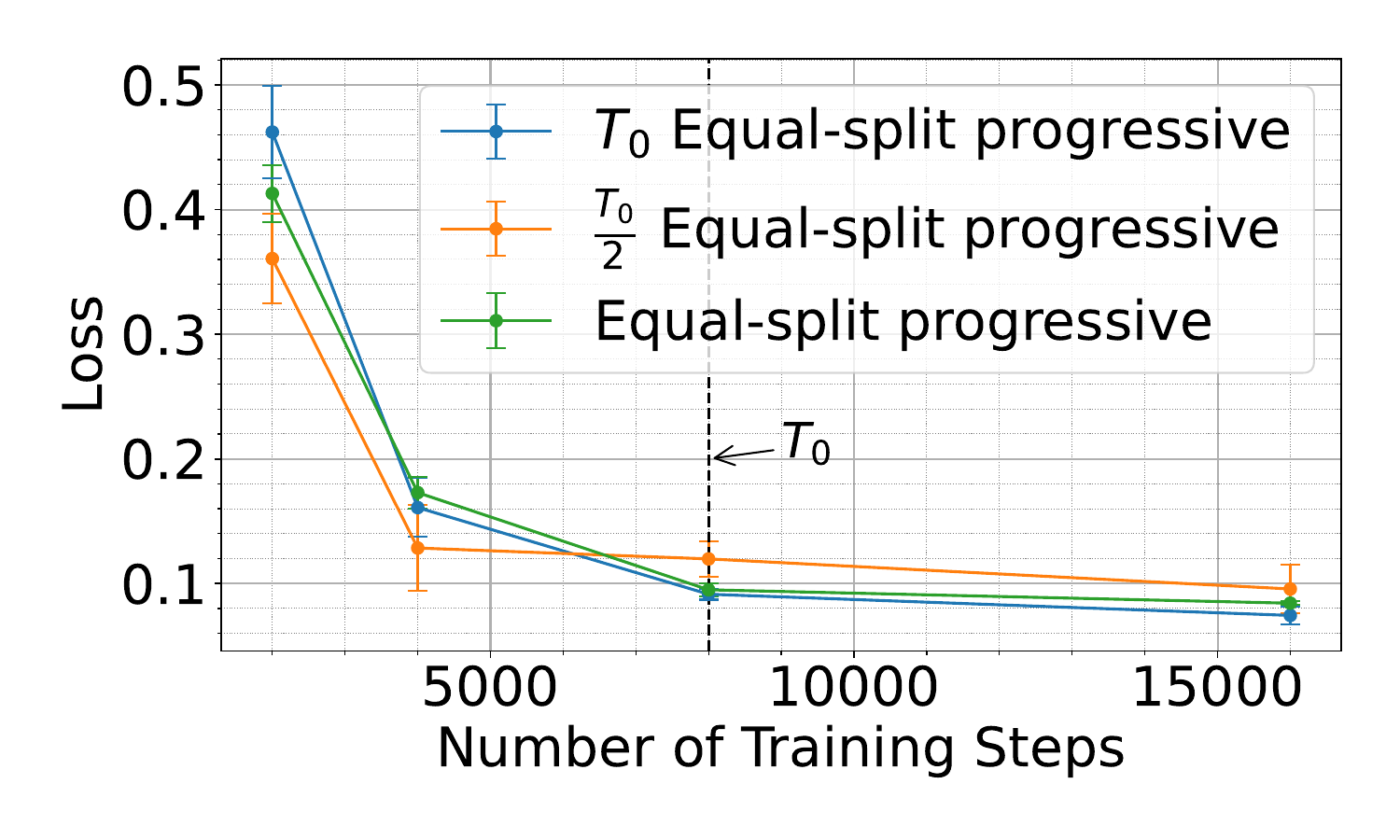}
    \label{fig:loss_cfg3b_0.3_H8}
    \end{subfigure}\hfill
    \centering
    \begin{subfigure}{0.4\textwidth}
    \centering
    \includegraphics[width=\textwidth]{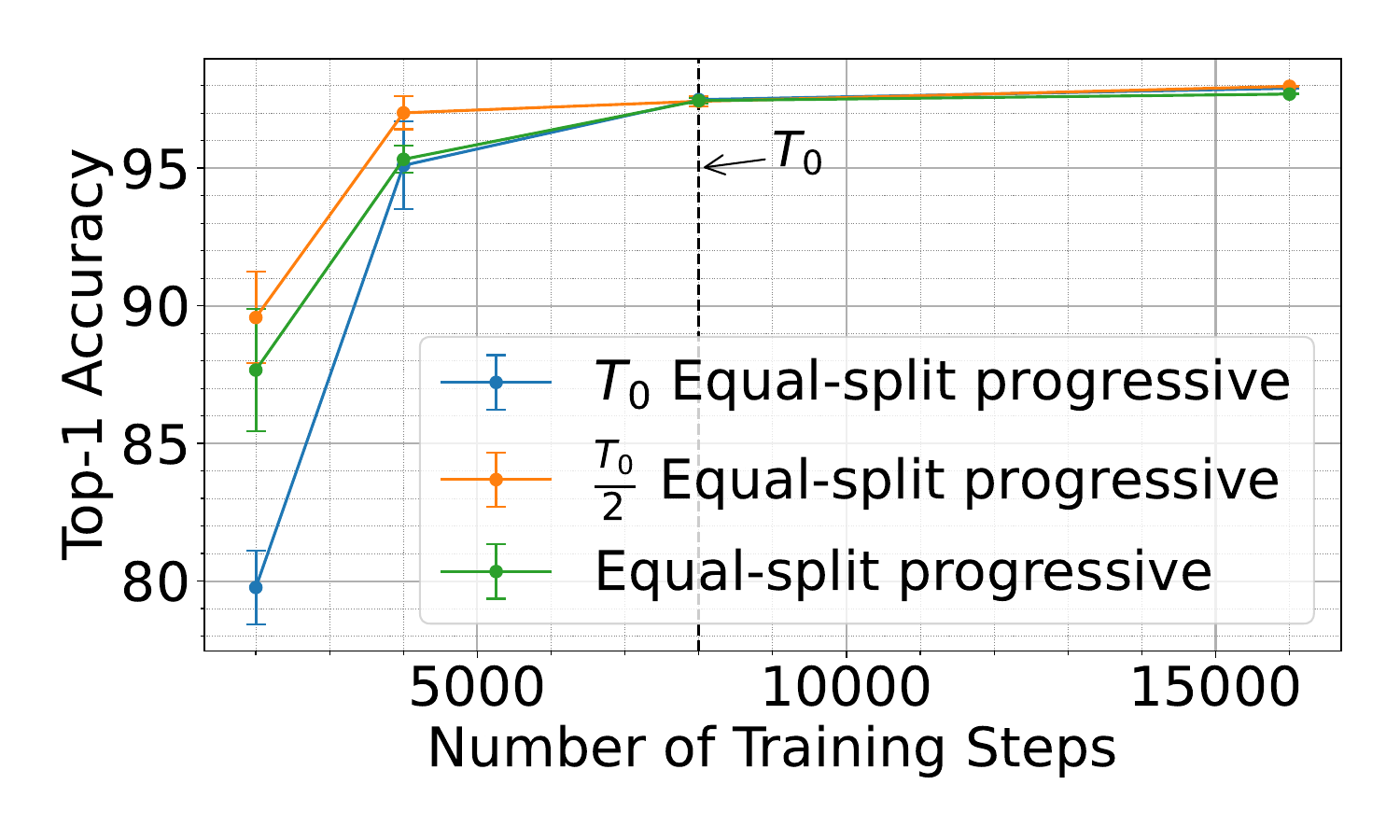}
    \label{fig:acc_cfg3b_0.3_H16}
    \end{subfigure}\hfill
    \begin{subfigure}{0.4\textwidth}
    \centering
    \includegraphics[width=\textwidth]{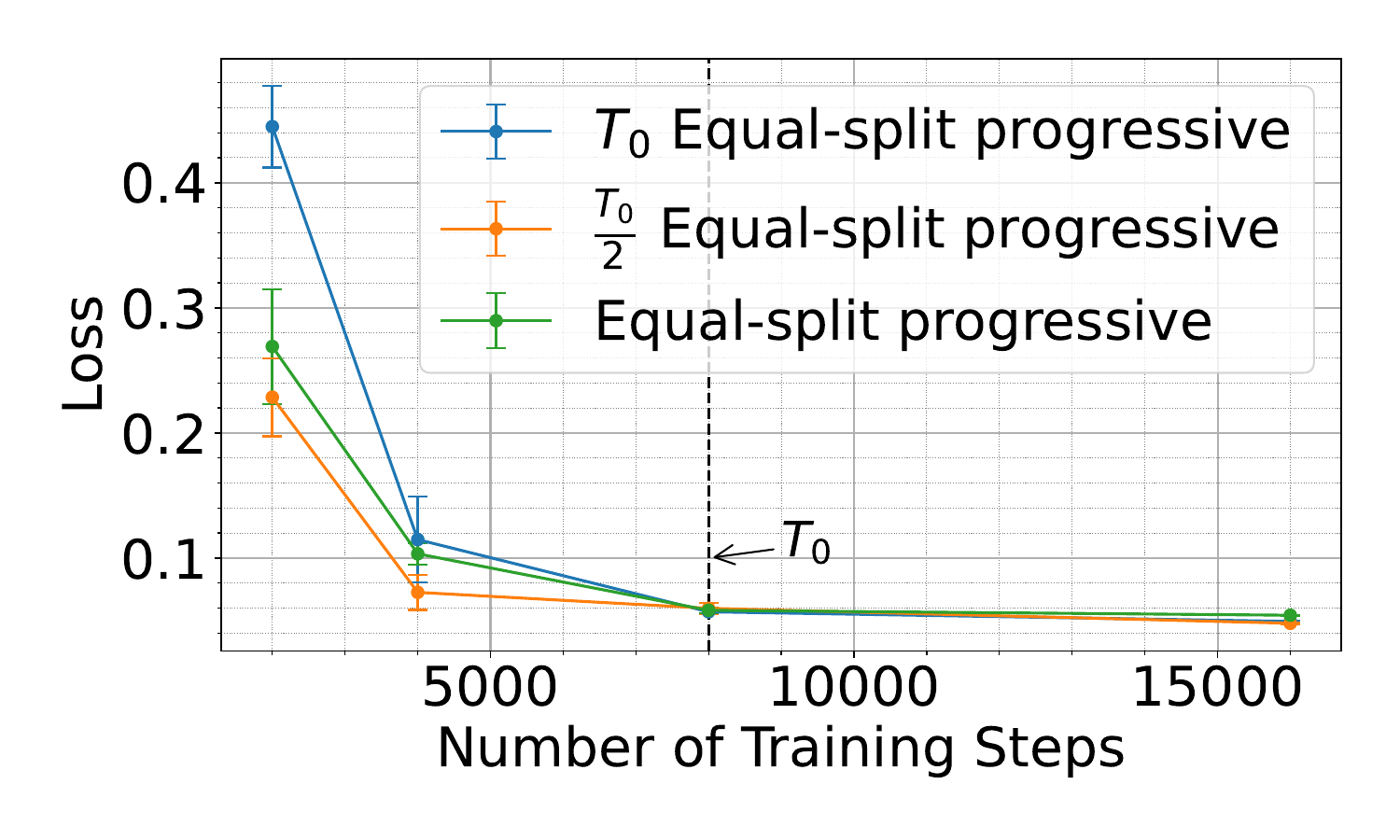}
    \label{fig:loss_cfg3b_0.3_H16}
    \end{subfigure}\hfill
    \caption{\looseness-1  Experiments on BERT (Left to right/top to bottom): (a), (b) show the comparisons for an 8-attention head student, (c), (d) show the comparisons for a 16-attention head student. We observe differences between the different variants of progressive distillation at different training steps.
    For training steps lower than the teacher's (marked by $T_0$), $T_0/2$-Equal-split progressive distillation is better, implying that for shorter training, we shouldn't try to fit all the teacher's checkpoints. The trend reverses as the training sample budget approaches $T_0$ and beyond. 
    }
    \label{fig:distil_cfg3b_0.3_appendix}
\end{figure*}

\subsection{Ablations with hyperparameters}\label{app:pcfg_temperature}

\paragraph{Ablation with temperature} Here, we compare progressive distillation and one-shot distillation at temperature $1$ and temperature $10^{-4}$ (representing hard label supervision) (\cref{fig:cfg3b_0.3_tempcomp}). We observe that progressive distillation at temperature $10^{-4}$ performs better than one-shot distillation at both temperatures. However, progressive distillation at temperature $1$ can perform worse than one-shot distillation for a stronger student. We keep explorations on the effect of temperature on the algorithms as future work.

\begin{figure*}[htbp]%
    \centering
    \begin{subfigure}{0.48\textwidth}
    \centering
    \includegraphics[width=\textwidth]{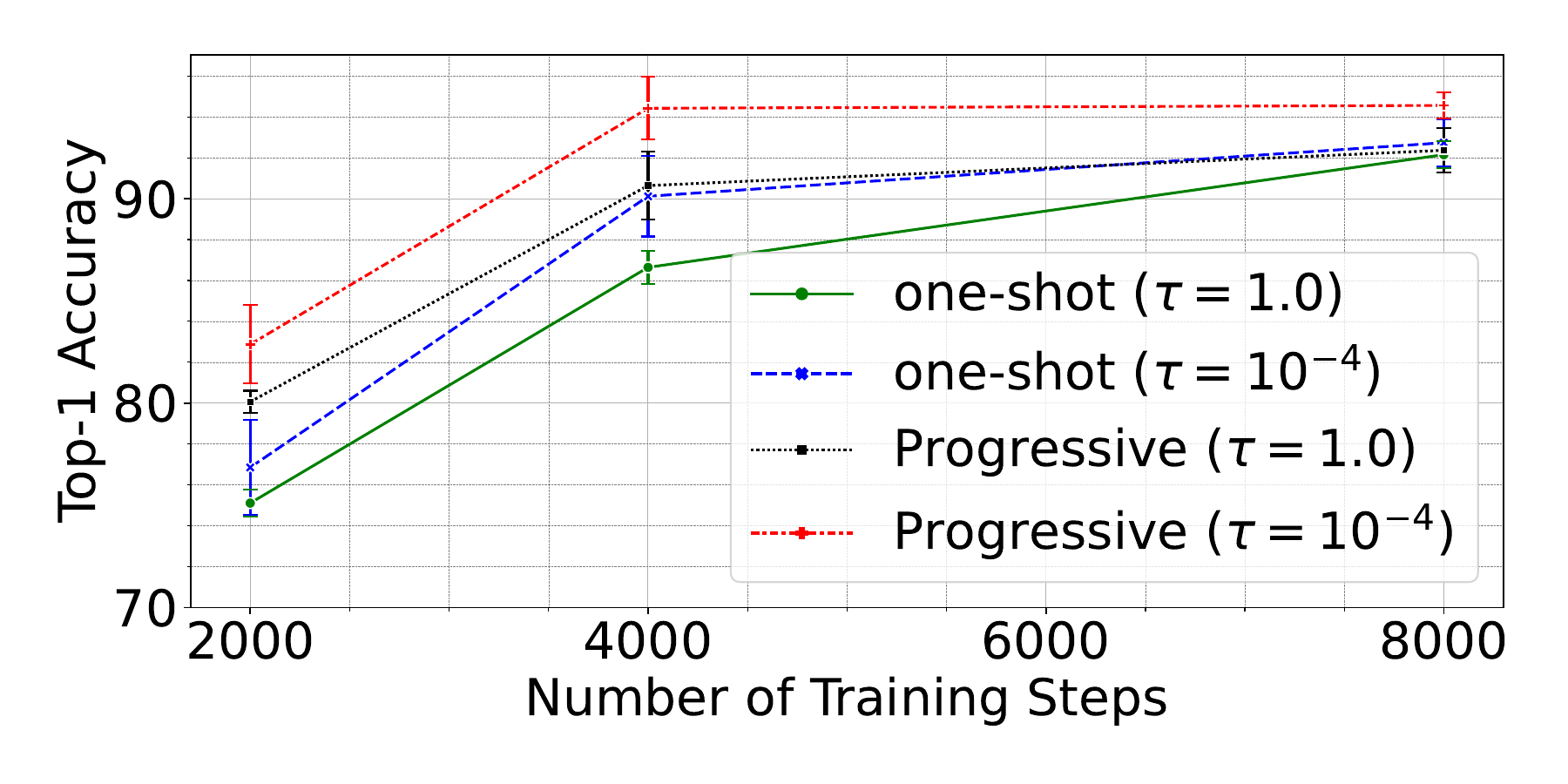}
    \caption{Number of heads=8}
    \label{fig:cfg3b_0.3_tempcomp_H8}
    \end{subfigure}\hfill
    \centering
    \begin{subfigure}{0.48\textwidth}
    \centering
    \includegraphics[width=\textwidth]
    {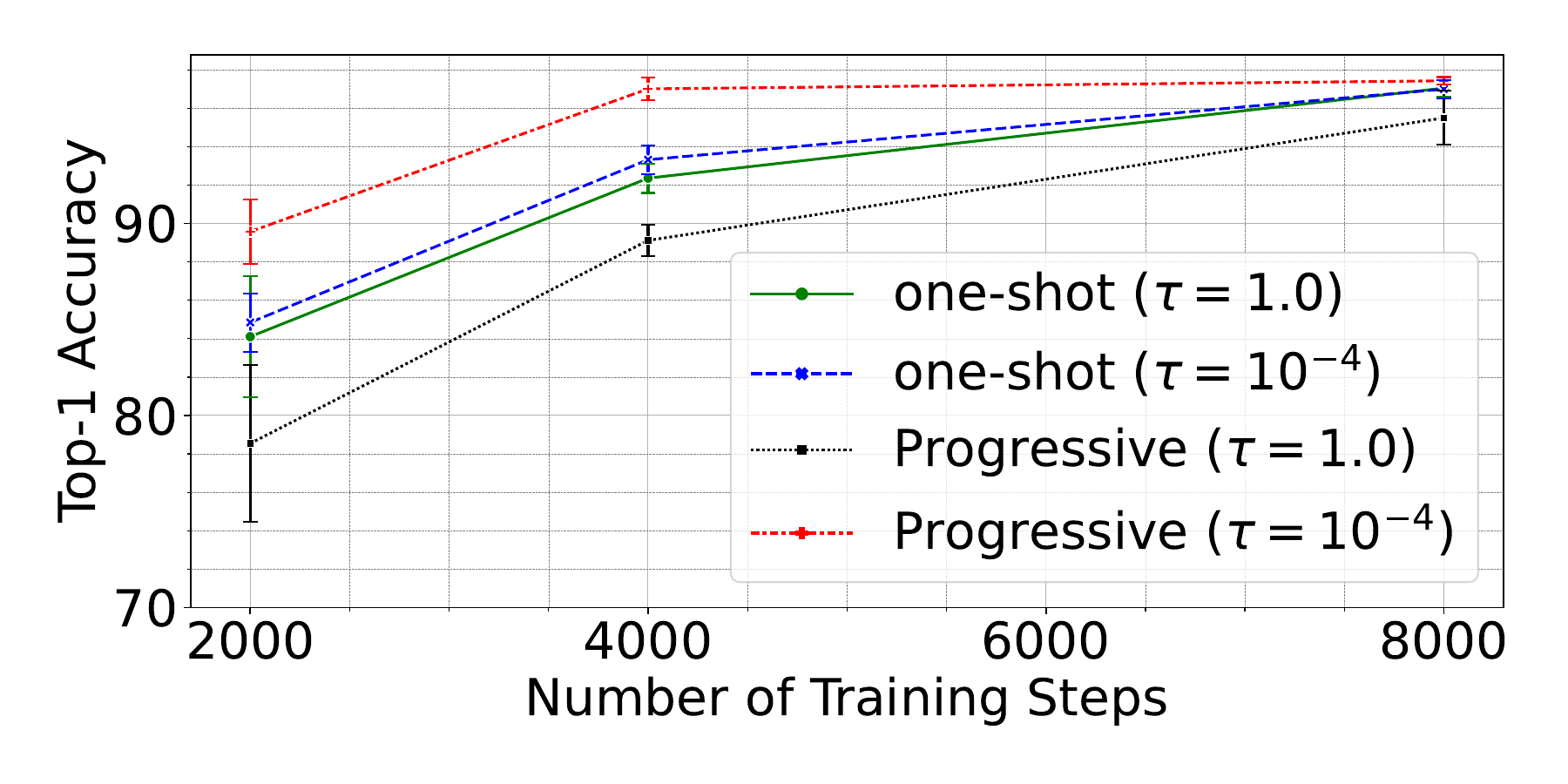}
    \caption{Number of heads=16}
    \label{fig:cfg3b_0.3_tempcomp_H16}
    \end{subfigure}
    \caption{\looseness-1  The experiments above compare progressive distillation and one-shot distillation at temperature $1$ and $10^{-4}$ (representing hard label supervision) for PCFGs \cfg{b} at masking rate $30\%$ using a BERT model with $8$ attention heads (left)/ $16$ attention heads (right), per head dimension $8$, and $4$ layers. We observe that progressive distillation with hard labels performs better than one-shot distillation at temperatures $1$ and $10^{-4}$. However, progressive distillation at temperature $1$ can perform worse than one-shot distillation for stronger student. We keep explorations on the effect of temperature on the algorithms as future work. Here, we use $\frac{T_0}{2}$-Equal-split progressive distillation as progressive distillation, where $T_0 = 8000$ is the total number steps used for teacher training.
    }
    \label{fig:cfg3b_0.3_tempcomp}
\end{figure*}

 \paragraph{Ablation with mask rate} In \cref{fig:bert_maskrate_ablate}, we compare progressive distillation with one-shot distillation at different masking rates. We observe that at all masking rates, progressive distillation performs better than one-shot distillation.

\begin{figure*}[!htbp]
    \centering
    \begin{subfigure}{\textwidth}
    \centering
    \includegraphics[width=\textwidth]{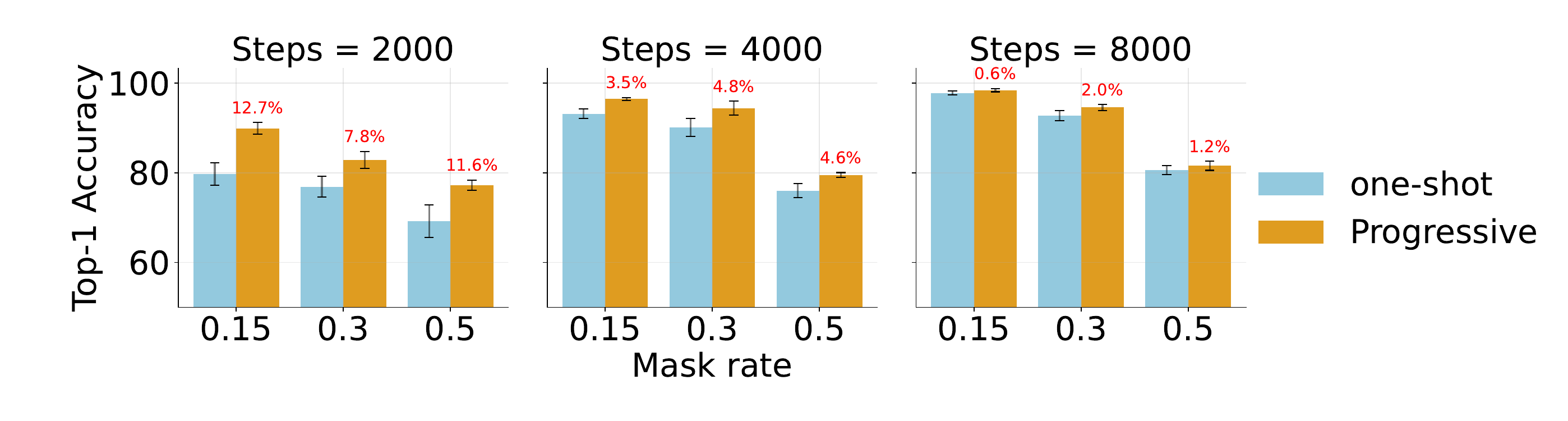}
    \label{fig:bert_maskrate_ablate_subfigure}
    \end{subfigure}
    \caption{\looseness-1 The experiments above compare progressive distillation and one-shot distillation for PCFGs \cfg{b} at different masking rates using a BERT model with $8$ attention heads, per head dimension $8$, and $4$ layers. The relative gap between the performance of progressive distillation and one-shot distillation have been reported on the bar plots. We observe that progressive distillation performs better than one-shot distillation at all masking rates, with the gap diminishing with the number of training steps. Here, we use $\frac{T_0}{2}$-Equal-split progressive distillation as progressive distillation, where $T_0 = 8000$ is the total number steps used for teacher training.
    }
    \label{fig:bert_maskrate_ablate}
\end{figure*}

\paragraph{Ablation with difficulty of PCFG} In \cref{fig:bert_cfgtype_ablate}, we compare progressive distillation with one-shot distillation with increasing difficulty of the underlying PCFG. The benefit of progressive distillation over one-shot distillation is influenced by the model's capacity and the specific PCFG being trained.

\begin{figure*}[htbp]
    \centering
    \begin{subfigure}{\textwidth}
    \centering
    \includegraphics[width=\textwidth]{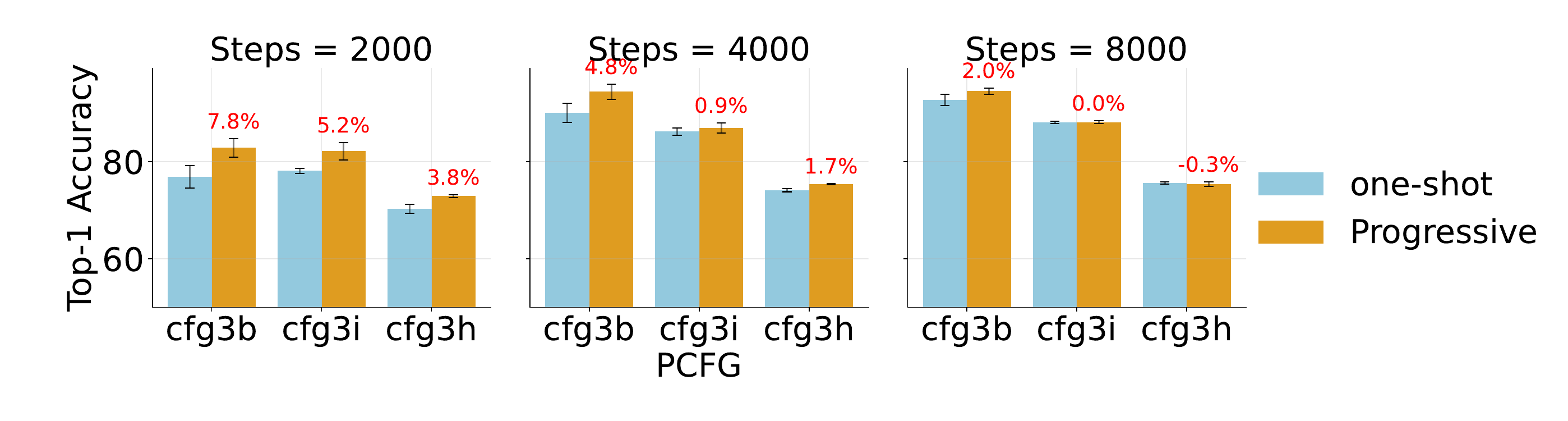}
    \caption{Number of attention heads=$8$}
    \label{fig:bert_cfgtype_ablate_subfigure_H8}
    \end{subfigure}\hfill
    \begin{subfigure}{\textwidth}
    \centering
    \includegraphics[width=\textwidth]{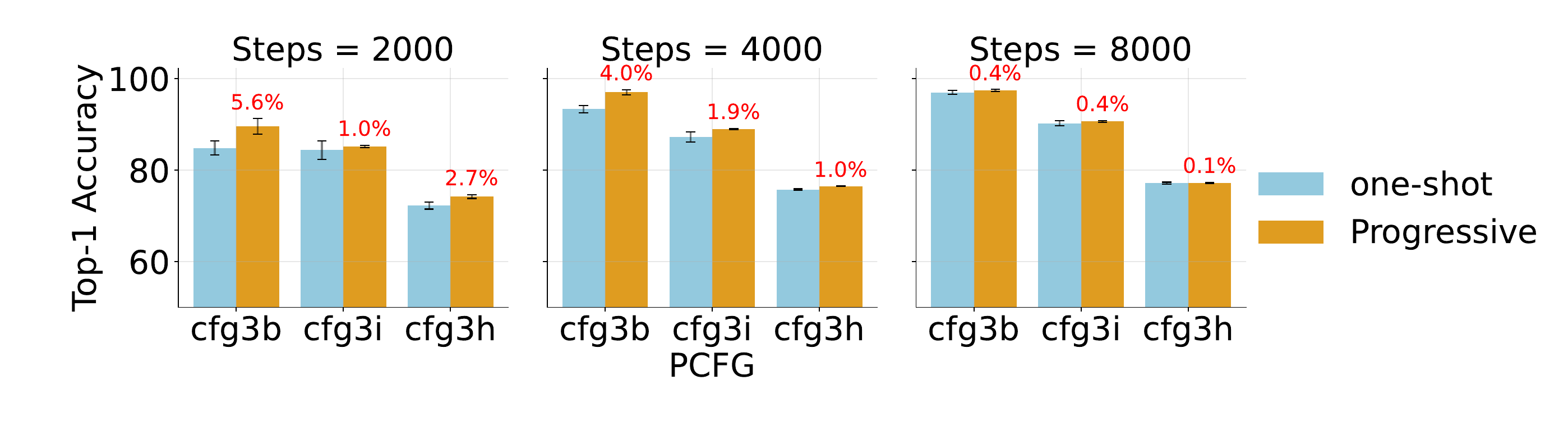}
    \caption{Number of attention heads=$16$}
    \label{fig:bert_cfgtype_ablate_subfigure_H16}
    \end{subfigure}
    \caption{\looseness-1 The experiments above compare progressive distillation and one-shot distillation for PCFGs \cfg{b}, \cfg{h}, and \cfg{i} at masking rate $30\%$ using BERT models with $8/16$ attention heads, per head dimension $8$, and $4$ layers. The relative gap between the performance of progressive distillation and one-shot distillation have been reported on the bar plots. The benefit of progressive distillation over one-shot distillation is influenced by the model's capacity and the specific PCFG being trained. For instance, on \cfg{i}, the student model can only achieve a top-1 accuracy of $75\%$. Progressive distillation reaches this within $2000$ steps but fails to improve further, resulting in minimal gains over one-shot distillation when compared with \cfg{b}. The comparisons are at temperature $\tau=10^{-4}$. Here, we use $\frac{T_0}{2}$-Equal-Split Progressive Distillation as Progressive Distillation, where $T_0 = 8000$ is the total number steps used for teacher training. Our teacher is a BERT model with $32$ attention heads, per head dimension $8$, and $4$ layers, which doesn't train on \cfg{g} and \cfg{f}, hence we don't report the performance of the student on these PCFGs.
    }
    \label{fig:bert_cfgtype_ablate}
\end{figure*}

\section{Autoregressive training with GPT2}
\label{sec:gpt_autoregressive}

\textbf{Setting:} Similar to experiments on BERT, we train GPT2 models of depth $4$ with $\{8, 16, 32\}$ attention heads, while keeping the dimension per attention head fixed at $8$.

\textbf{A brief introduction into GPT models:}
GPT models are trained with the auto-regressive loss. The teacher and student models operate on sequences of input domain $\teacher:\gX^{\seqlen} \to \sR^{\numCls}$ and $\student:\gX^{\seqlen} \to\sR^{\numCls}$, where the input sequence length $\seqlen$ can be arbitrary. 
Denote the length-$\seqlen$ input sequence as $\seq := [\seqind_1, \cdots, \seqind_{\seqlen}]$,
and denote $\seq_{i:j}$ as the subsequence $[\seqind_i, \cdots, \seqind_j]$ (i.e. the indexing is inclusive on both ends).
The cross entropy loss for next-token prediction training on $\seq$ is given by 
$$
\frac{1}{\seqlen} \sum_{i=1}^{\seqlen} \text{KL}(\ve_{\seqind_i} \| \prob_{\subStudent}(\seq_{1:i-1}))),
$$ 
where $\ve_{\seqind_i}$ denotes a one-hot vector with $1$ in $x_i$th coordinate.
We take a different approach, where we compare the algorithms at different difficult levels, by training on a subset of tokens in each sequence. The subsets that we consider are the boundary tokens at different levels of PCFG generation (recall \Cref{fig:parse_example}).

Formally, if $\boundarysubset{\level} (\vx)$ represents the set of level-$\level$ boundary tokens, then we define the cross entropy loss and the distillation loss corresponding to boundary tokens at any level $\level$ of the PCFG as
\begin{align}
\label{eq:loss_leveli}
    \CEloss^{(\level)}(\seq; \student) =& ~ \frac{1}{\abs{\boundarysubset{\level} (\vx)}} \sum_{ i: \seqind_i \in \boundarysubset{\level} (\vx) } \text{KL}(\ve_{\seqind_i} \| \prob_{\subStudent}(\seq_{1:i-1})); \\
    \KLloss^{(\level)}(\seq; \student, \teacher) =& ~ \frac{1}{\abs{\boundarysubset{\level} (\vx)}} \sum_{ i: \seqind_i \in \boundarysubset{\level} (\vx) } \text{KL}(  \prob_{\subTeacher}(\seq_{1:i-1}; \tau) \| \prob_{\subStudent}(\seq_{1:i-1})).
\end{align}
There are a few remarks that need to be made about the above loss function. First, note that the subsets satisfy the condition $\boundarysubset{\level_1} (\vx) \subseteq \boundarysubset{\level_2} (\vx)$ for all $\level_1 \geq \level_2$.
Hence, the loss $\loss^{(\level_2)}$ includes loss $\loss^{(\level_1)}$ for all  $\level_1 \geq \level_2$ and losses $\loss \in \{\CEloss, \KLloss\}$. Second, $\loss^{(1)}$ will average the losses at all tokens, which is the standard auto-regressive loss used in practice to train large language models. 

We focus on \cfg{f} that has 6 levels in the generation process, and we report the behavior of the models when trained with losses $\loss^{(2)}, \loss^{(3)}, \loss^{(4)}$, with $\loss \in \{\CEloss, \KLloss\}$.
We focus on $\frac{T_0}{2}$-Equal-split progressive distillation.

\paragraph{Definitions for $\robust$ and $\close$.}
Similar to our experiments on BERT, we track the change in the model's predictions with and without the $\ngram$-gram context tokens.
However, as the model is trained autoregressively, we need to change our definitions of $\robust$ and $\close$ from \cref{eq:loss_robust,eq:loss_close}, as well as the definition of $\ngram$-grams.

For a $\seqlen$ length sentence $\vx \in \vocab^{\seqlen}$ and for $i\in[h]$, we define the \textit{$\ngram$-gram neighboring context} around the $i_{th}$ token as the set of tokens at positions within $\ngram-1$ distance to the left from $i$, i.e. the set $\{x_j\}$ for $i - \ngram  < j < i$.

For $\close$ on a teacher $\teacher$ and ngram length $\ngram$, we measure the TV distance between the model's probability distributions of the model at any position $i$ when all the tokens at positions $1, 2, \cdots, i-1$ are available, and when only the tokens in the neighboring $\ngram$-gram context window are available (i.e. at positions $i-\ngram+2, \cdots, i-1$)\footnote{others are simply masked out during attention score computation to avoid shifts in position embeddings.}
\begin{align}
    \close(\teacher, \seq, i, \ngram) &=  \TV(\prob_{\subTeacher}(\seq_{1:i-1}), \prob_{\subTeacher}(\seq_{i-n+1:i-1})).
    \label{eq:loss_close_gpt}
\end{align}

For $\robust$ on a teacher $\teacher$ and an n-gram length $\ngram$, we measure the total variation (TV) distance between the model's probability distributions at any position $i$, considering two scenarios: one where all tokens at positions $1, 2, \dots, i-1$ are available, and another where the tokens within the $\ngram$-gram context window are masked. However, since the attention mechanism in GPT requires a token at position $i-1$ before it can predict $\seqind_i$ and we don't have a special token to replace the masked tokens, we cannot remove that specific token from the context. Therefore, we keep the token at position $i-1$ intact while masking the other tokens within the $\ngram$-gram context window. We refer to this modified approach as ``skip $\ngram$-gram.''
\begin{align}
    \robust(\teacher, \seq, i, \ngram) &=  \TV(\prob_{\subTeacher}(\seq_{1:i})), \prob_{\subTeacher}(\seq_{\{1,\cdots, i-\ngram+1, i\}}))).
    \label{eq:loss_robust_gpt}
\end{align}

\subsection{Observations}

\textbf{Teacher's behavior during training} \cref{fig:gpt_cfg3f_4.0_teacher} shows the loss behavior of a teacher run.
We observe $2$ distinct phases of training: a rapid loss drop phase in the first $~10\%$ of training, and a final phase of slow loss drop till end of training. In \cref{fig:gpt_cfg3f_compare_acrosssizes}, we compare the training accuracy behavior across models of different sizes. At log scale, we observe a very small dormant phase in the training behavior at the start of training. Larger models transition to the rapid loss drop phase faster than smaller models and also show a more prominent change in this phase.

\textbf{Teacher's checkpoint selection for progressive distillation} As outlined in the previous section, we select the first supervision checkpoint at roughly the middle of the first phase  ($1/20_{\text{th}}$ fraction of training), and the other checkpoints are selected at $\{i/20\}_{i=2}^{20}$ fractions of training.

\textbf{Similar inflection points in loss as BERT and an implicit curriculum:}

We observe inflection points in the model's behaviors at the first selected checkpoint. Similar to our observations on BERT, we observe a curriculum on the reliance of the model's predictions on $3$-gram predictions (\cref{fig:gpt_cfg3f_4.0_teacher}). Hence, we check whether progressive distillation can help train a smaller model faser.

\begin{result}[Progressive distillation helps train smaller model faster]
In \cref{fig:distil_gpt_H8,fig:distil_gpt_H16}, we compare one-shot distillation to $\frac{T_0}{2}$-Equal-split progressive distillation.
We observe that progressive distillation help the student learn faster than one-shot distillation, and the gap diminishes as the students are trained for longer. However, the gap between progressive distillation and distillation decreases as more tokens are involved in the loss function i.e. the gap is smaller for loss $\loss_{0}^{(2)}$ compared to loss  $\loss_{0}^{(4)}$. We conjecture that auto-regressive training with all tokens involved provides a strong curriculum for the model to learn the structure of the language. We keep a thorough study of this analysis to future work.
\end{result}

\begin{figure*}[t]%
    \centering
    \begin{subfigure}{0.95\textwidth}
    \centering
    \includegraphics[width=\textwidth]{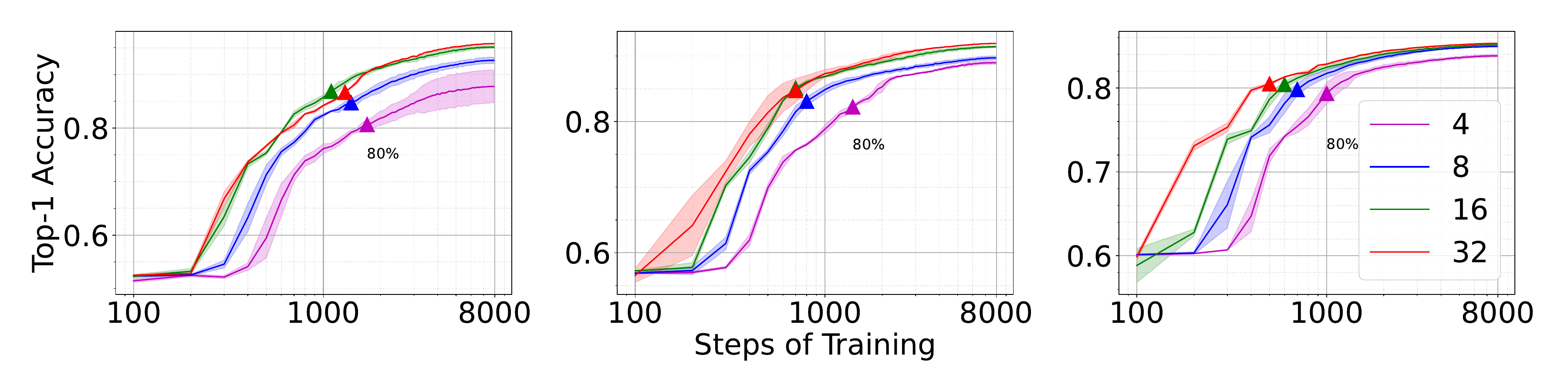}
    \end{subfigure}\hfill
    \caption{\looseness-1 (left to right) Models are trained with the cross entropy loss $\CEloss^{(4)}, \CEloss^{(3)}, \CEloss^{(2)}$ respectively. Here, we compare GPT's training behavior with cross entropy loss on \cfg{f} with varying numbers of attention heads (where the embedding dimension scales linearly with the number of attention heads) over $8\times 10^3$ training steps.  Larger  models show an earlier and more pronounced increase in performance compared to smaller models. For reference, each training curve is annotated at the point where the model reaches $80\%$ of its performance at the final step.
    }
    \label{fig:gpt_cfg3f_compare_acrosssizes}
\end{figure*}

\begin{figure*}[t]%
    \centering
    \begin{subfigure}{0.32\textwidth}
    \centering
    \includegraphics[width=\textwidth]{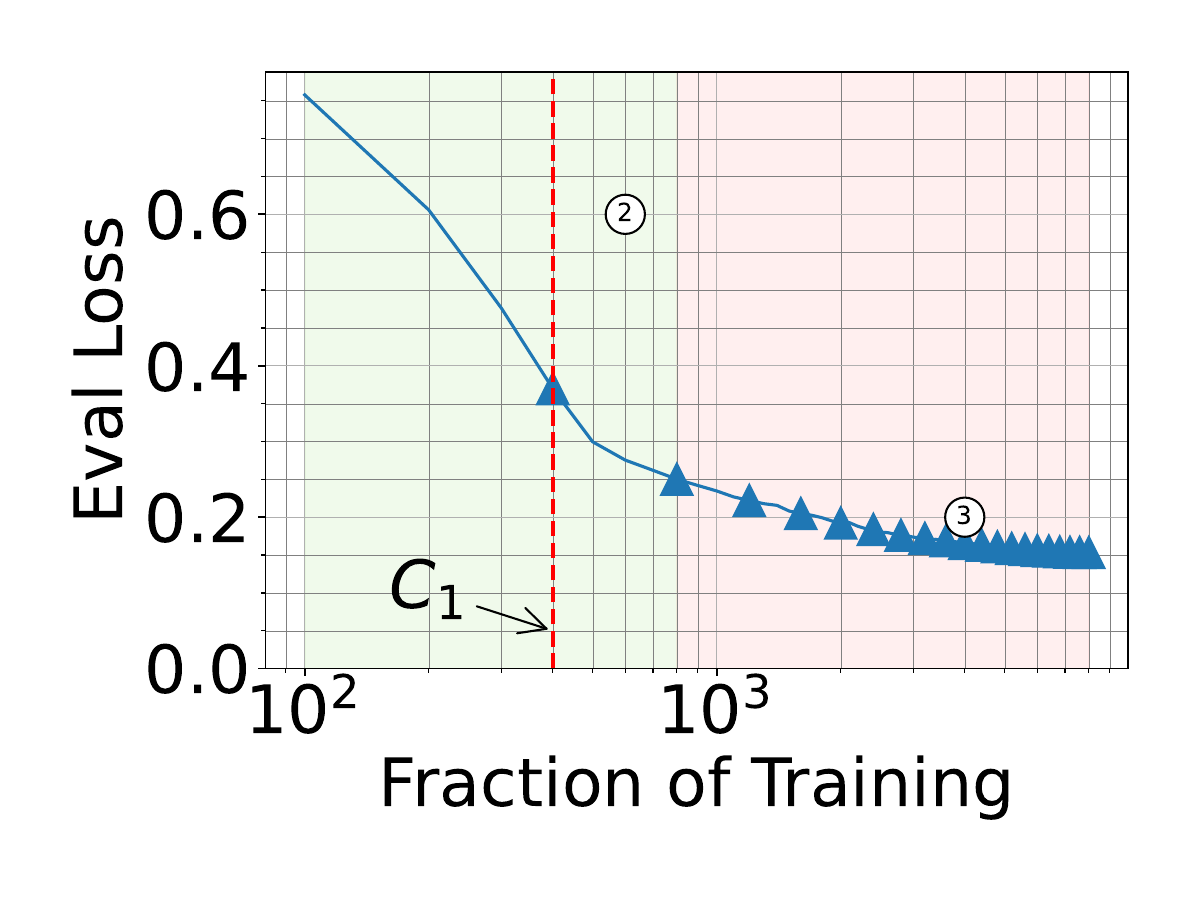}
    \label{fig:acc_cfg3f_4.0_H32}
    \caption{Loss behavior of teacher model}
    \end{subfigure}\hfill
    \centering
    \begin{subfigure}{0.32\textwidth}
    \centering
    \includegraphics[width=\textwidth]{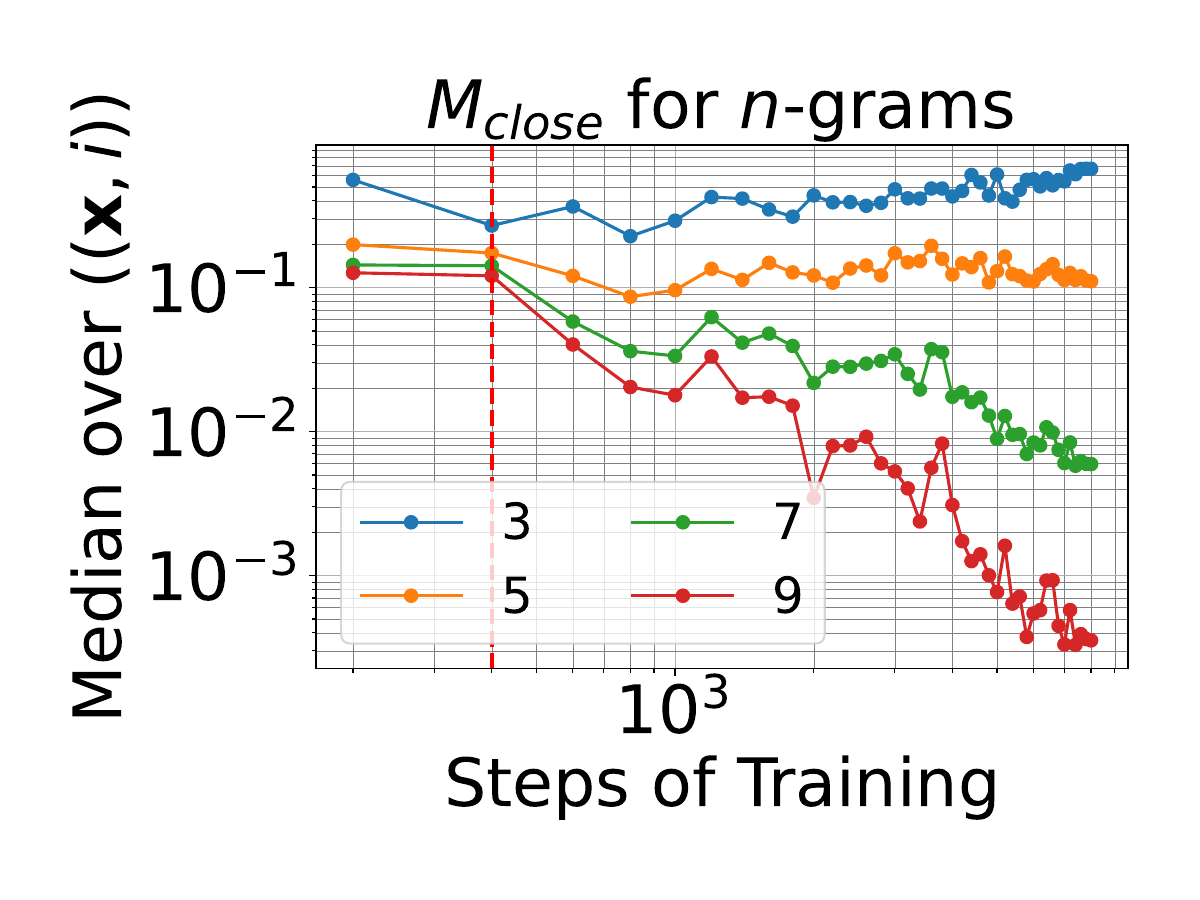}
    \caption{Median of $\close(\teacher, \seq, i,  \ngram)$ over $\seq, i$ for different $\ngram$}
    \label{fig:ngramclose_cfg3f_4.0_H32}
    \end{subfigure}\hfill
    \begin{subfigure}{0.32\textwidth}
    \centering
    \includegraphics[width=\textwidth]{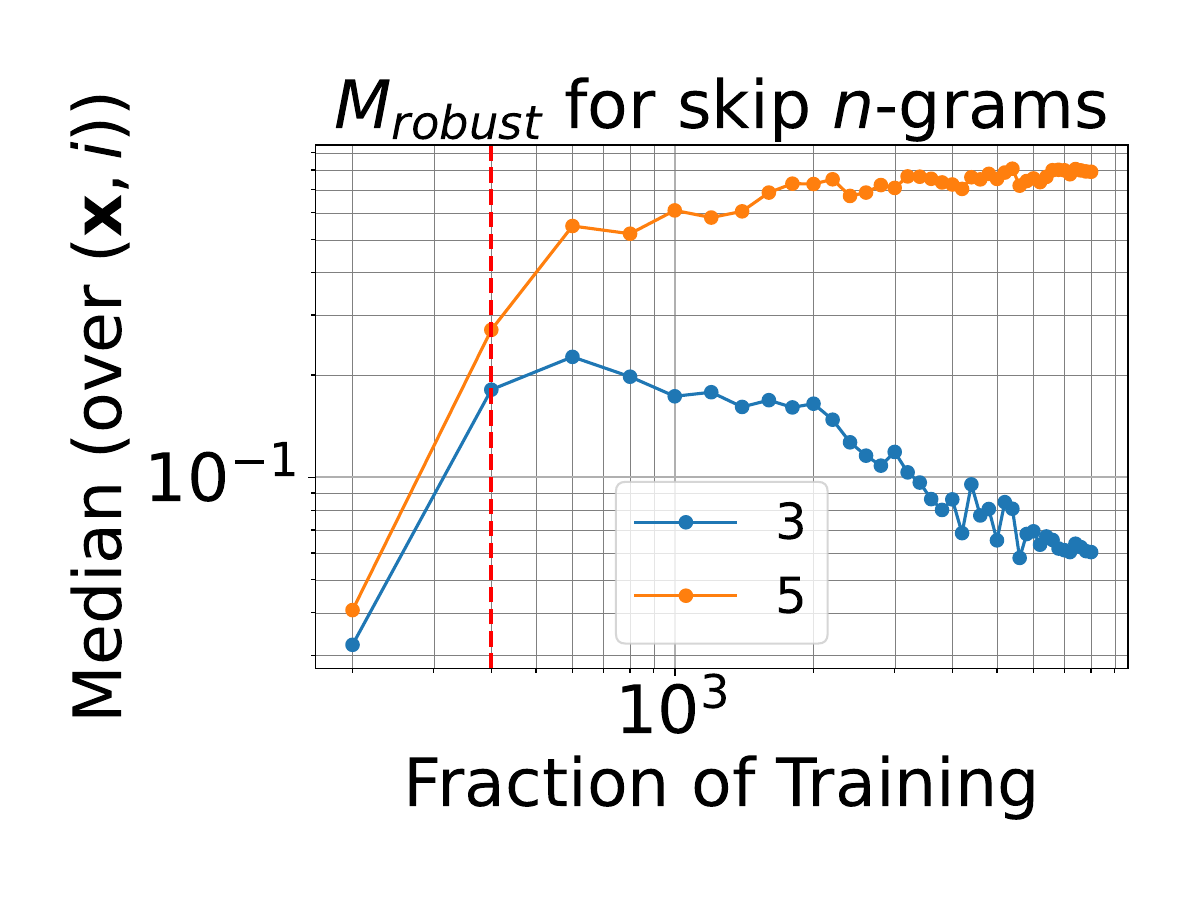}
    \caption{Median of $\robust(\teacher, \seq, i, \ngram)$ over $\seq, i$ for different $\ngram$}
    \label{fig:ngramnegative_cfg3f_4.0_H32}
    \end{subfigure}\hfill
    \caption{\looseness-1 Experiments on GPT: Behavior of teacher model when trained on \cfg{f} with cross entropy loss: $\CEloss^{(3)}$. We observe two distinct phases; (2) a rapid drop in loss phase, and (3) slow drop in loss till end of training. The rapid loss drop phase signifies a transition phase for the model, similar to one we observed for hierarchical boolean data (\cref{sec:parity}). All selected checkpoints for progressive distillation are marked by triangles. The first teacher checkpoint  is roughly picked at the center of the second phase. The rest of the checkpoints are picked at training steps that are multiples of the first one. (b) and (c) show inflection points in the teacher's predictions with full context and with/without $\ngram$-gram contexts at the selected checkpoint. 
    }
    \label{fig:gpt_cfg3f_4.0_teacher}
\end{figure*}

\begin{figure*}[!htbp]
    \centering
    \begin{subfigure}{0.32\textwidth}
        \centering
        \includegraphics[width=\textwidth]{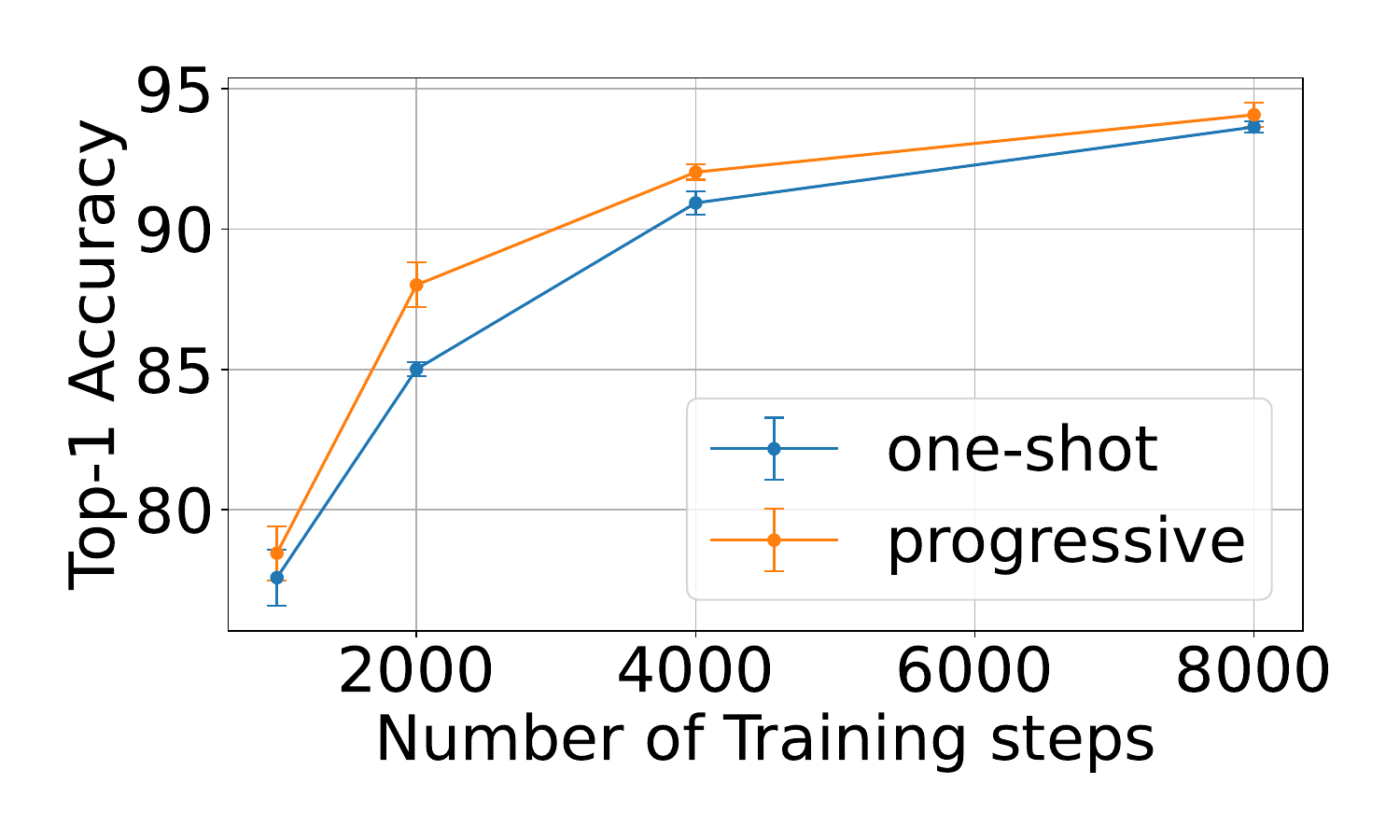}
        \caption{Loss: $\KLloss^{(4)}$}
        \label{fig:acc_cfg3f_l3_H8_gpt}
    \end{subfigure}\hfill
    \begin{subfigure}{0.32\textwidth}
        \centering
        \includegraphics[width=\textwidth]{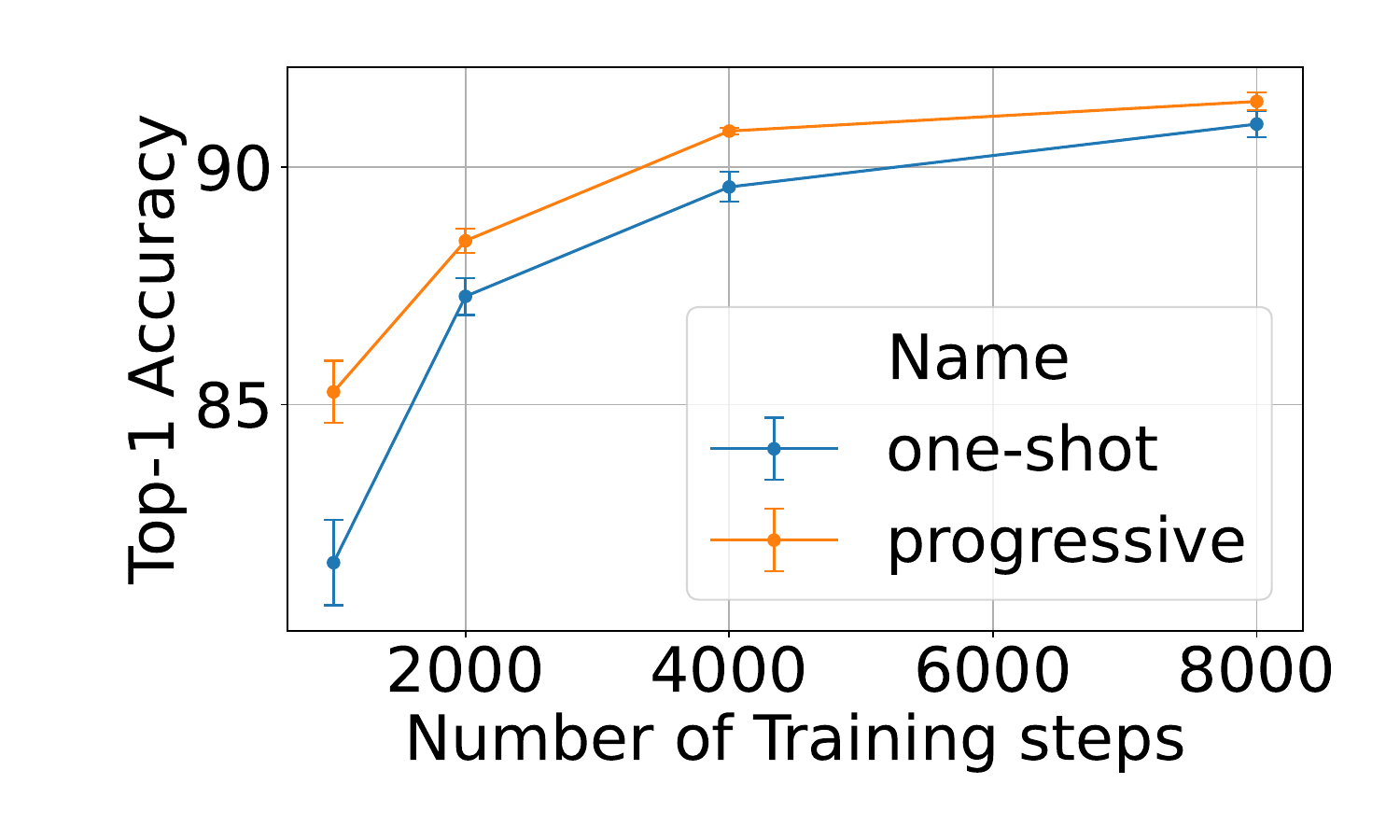}
        \caption{Loss: $\KLloss^{(3)}$}
        \label{fig:acc_cfg3f_l4_H8_gpt}
    \end{subfigure}\hfill
    \begin{subfigure}{0.32\textwidth}
        \centering
        \includegraphics[width=\textwidth]{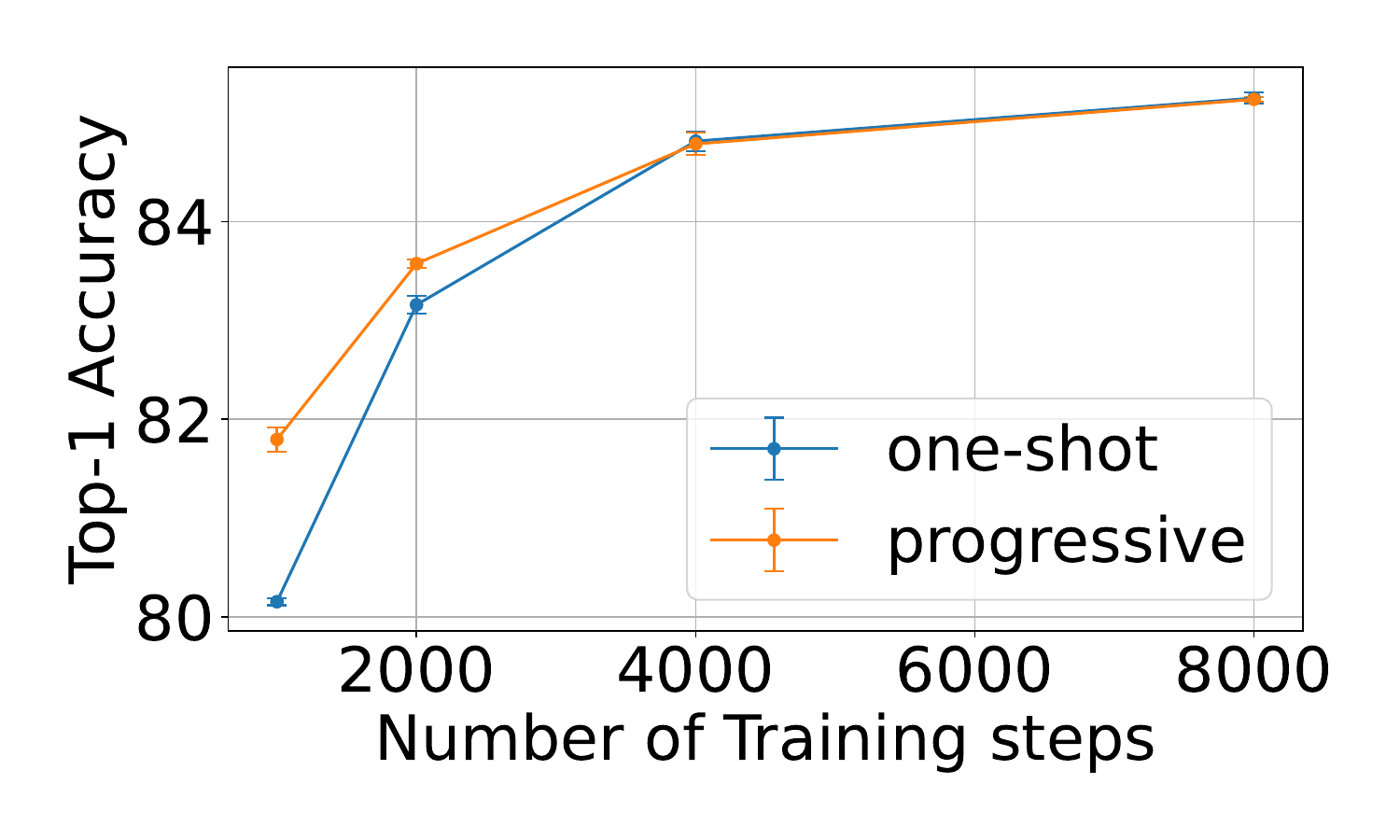}
        \caption{Loss: $\KLloss^{(2)}$}
        \label{fig:acc_cfg3f_l5_H8_gpt}
    \end{subfigure}
    \caption{\looseness-1 Experiments on GPT (Left to right) for an 8- attention head model at different losses.
    Here, progressive distillation refers to $\frac{T_0}{2}$-Equal-split progressive distillation. 
    We observe that progressive distillation outperforms one-shot distillation at all training sample budgets, with the gap diminishing with increasing training sample budget. The gap between progressive distillation and distillation decreases as the number of tokens involved in the loss function increases i.e. the gap is smaller for loss $\loss_{0}^{(2)}$ compared to loss  $\loss_{0}^{(4)}$.}
    \label{fig:distil_gpt_H8}
\end{figure*}

\begin{figure*}[!htbp]
    \centering
    \begin{subfigure}{0.32\textwidth}
    \centering
    \includegraphics[width=\textwidth]{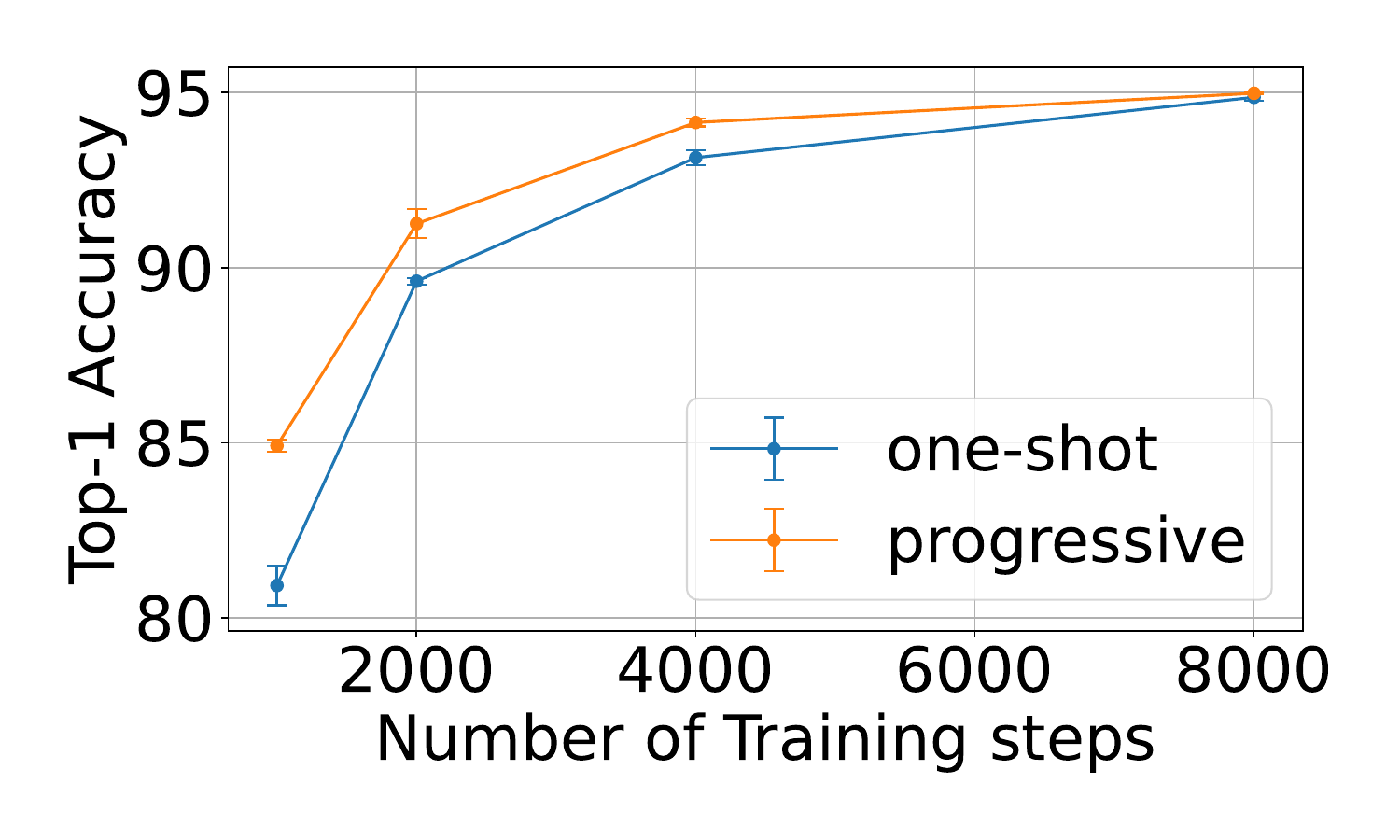}
        \caption{Loss: $\KLloss^{(4)}$}
    \label{fig:acc_cfg3f_l3_H16_gpt}
    \end{subfigure}\hfill
    \centering
    \begin{subfigure}{0.32\textwidth}
    \centering
    \includegraphics[width=\textwidth]{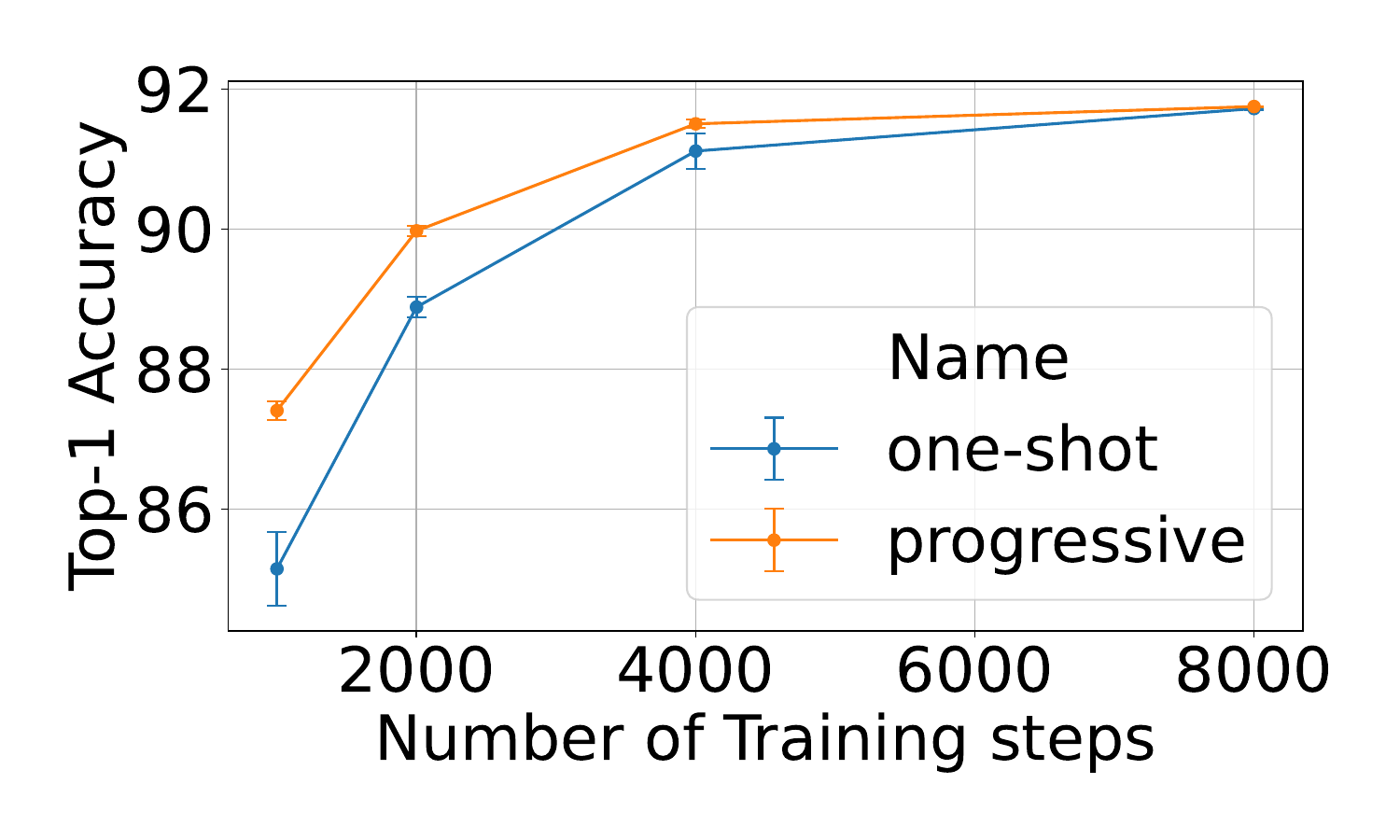}
    \caption{Loss: $\KLloss^{(3)}$}
    \label{fig:acc_cfg3f_l4_H16_gpt}
    \end{subfigure}\hfill
    \begin{subfigure}{0.32\textwidth}
    \centering
    \includegraphics[width=\textwidth]{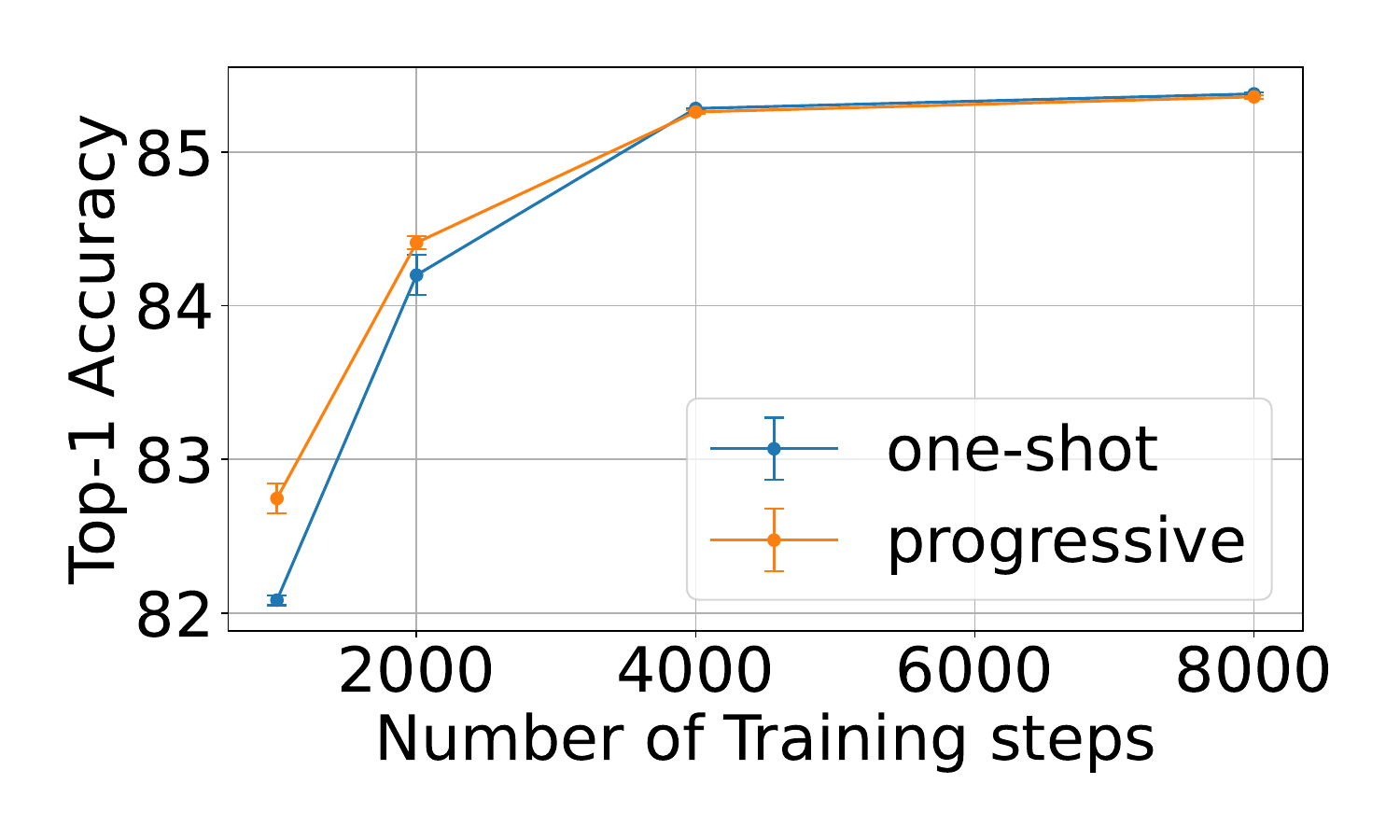}
    \caption{Loss: $\KLloss^{(2)}$}
    \label{fig:acc_cfg3f_l5_H16_gpt}
    \end{subfigure}
    \caption{\looseness-1 Experiments on GPT (Left to right) for a 16-attention head model at different losses. Here, progressive distillation refers to $\frac{T_0}{2}$-Equal-split progressive distillation. 
    We observe that progressive distillation outperforms one-shot distillation at all training sample budgets, with the gap diminishing with increasing training sample budget. The gap between progressive distillation and distillation decreases as the number of tokens involved in the loss function increases i.e. the gap is smaller for loss $\loss_{0}^{(2)}$ compared to loss  $\loss_{0}^{(4)}$.}
    \label{fig:distil_gpt_H16}
\end{figure*}

\begin{figure*}[!htbp]
    \centering
    \begin{subfigure}{\textwidth}
    \centering
    \includegraphics[width=\textwidth]{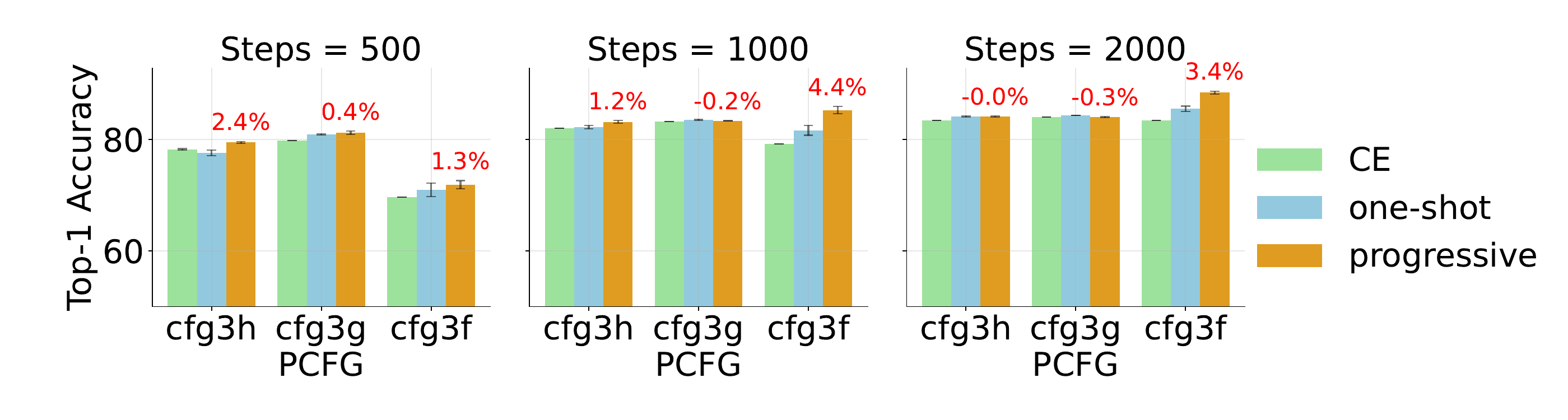}
    \end{subfigure}
    \caption{\looseness-1 The experiments above compare progressive distillation and one-shot distillation for PCFGs \cfg{h}, \cfg{g}, and \cfg{f} on GPT models with 8 attention heads (each head having a dimension of 8) and 4 layers. The models were trained using the distillation loss $\loss_{0}^{(3)}$. The relative performance gap between progressive and one-shot distillation is presented in the bar plots. Notably, the advantage of progressive distillation over one-shot distillation depends on the specific PCFG being trained. For example, with \cfg{f}, the student model can achieve beyond $90\%$ top-1 accuracy, and progressive distillation allows it to reach this more quickly. In contrast, for \cfg{g}, the student model's top-1 accuracy plateaus at $84\%$, and after $500$ steps, progressive distillation shows only marginal gains over one-shot distillation. All comparisons were made at a temperature $\tau = 10^{-4}$. Here, progressive distillation refers to $\frac{T_0}{2}$-Equal-split progressive distillation, where $T_0 = 8000$ denotes the total number of steps for teacher training. The teacher model is a GPT with 32 attention heads, each with a dimension of 8, and 4 layers.}
    \label{fig:cfg_gpt_vs_cfgtype_H8_fixed}
\end{figure*}

\section{Details on Wikipedia + Books experiments} \label{app:wiki_books}

We use the same hyperparameters for Adam training as our experiments on BERT and PCFG in \cref{sec:app_pcfg_hyperparam}. However, we fix the peak learning rate to $10^{-4}$ \citep{devlin2018bert} in each case to minimize computation costs.

\end{document}